\newcommand{\hw}[1]{{\color{brown}#1}}
\newcommand{\am}[1]{{\color{red}#1}}
\title{Finite-Time Analysis of On-Policy Heterogeneous Federated Reinforcement Learning}
\author{%
  Chenyu Zhang\\
  Data Science Institute\\
  Columbia University\\
  New York, NY 10025, USA \\
  \texttt{cz2736@columbia.edu} \\
  \And
  Han Wang\\
  Department of Electrical Engineering \\
  Columbia University\\
  New York, NY 10025, USA \\
  \texttt{hw2786@columbia.edu} \\
  \And
  Aritra Mitra\\
  Department of Electrical and Computer Engineering \\
  NC State University\\
  Raleigh, NC 27695, USA \\
  \texttt{amitra2@ncsu.edu} \\
  \And
  James Anderson \\
  Department of Electrical Engineering \\
  Columbia University\\
  New York, NY 10025, USA \\
  \texttt{james.anderson@columbia.edu}
}
\begin{document}
\maketitle

\begin{abstract}
  Federated reinforcement learning (FRL) has emerged as a promising paradigm for reducing the sample complexity of reinforcement learning tasks by exploiting information from different agents. However, when each agent interacts with a potentially different environment, little to nothing is known theoretically about the non-asymptotic performance of FRL algorithms. The lack of such results can be attributed to various technical challenges and their intricate interplay: Markovian sampling, linear function approximation, multiple local updates to save communication, heterogeneity in the reward functions and transition kernels of the agents' MDPs, and continuous state-action spaces. Moreover, in the on-policy setting, the behavior policies vary with time, further complicating the analysis. In response, we introduce \fedsarsa, a novel federated on-policy reinforcement learning scheme, equipped with linear function approximation, to address these challenges and provide a comprehensive finite-time error analysis. Notably, we establish that \fedsarsa converges to a policy that is near-optimal for all agents, with the extent of near-optimality proportional to the level of heterogeneity. Furthermore, we prove that \fedsarsa leverages agent collaboration to enable linear speedups as the number of agents increases, which holds for both fixed and adaptive step-size configurations.
\end{abstract}

\section{Introduction}

Federated reinforcement learning (FRL) \citep{qi2021FederatedReinforcement,nadiger2019Federatedreinforcement,zhuo2019Federateddeep}, a distributed learning framework that unites the principles of reinforcement learning (RL) \citep{sutton2018Reinforcementlearninga} and federated learning (FL) \citep{mcmahan2023CommunicationEfficientLearning}, is rapidly gaining prominence for its wide range of real-world applications, spanning areas such as
edge computing \citep{wang2019Inedgeai},
robot autonomous navigation \citep{liu2019Lifelongfederated},
and Internet of Things \citep{lim2020Federatedreinforcement}. This paper poses an FRL problem, where
multiple agents independently explore their own environments and collaborate to find a near-optimal universal policy accounting for their differing environmental models.
FRL leverages the collaborative nature of FL to address the data efficiency and exploration challenges of RL.
Specifically, we expect \emph{linear speedups} in the convergence rate and increased overall exploration ability due to federated collaboration.
We use FRL in autonomous driving \citep{liang2022Federatedtransfer} as a simple example to demonstrate our motivations and associated theoretical challenges.
In this scenario, the objective is to determine a strategy (policy) that minimizes collision probability. In contrast to the single-agent setting, where a policy is found by letting one vehicle interact with its environment, the federating setting coordinates multiple vehicles to interact with their distinct environments---comprising different cities and traffic patterns. Despite their aligned objectives, the environmental heterogeneity will produce distinct optimal strategies for each vehicle. Our goal is to find a universal robust strategy that performs well across all environments.

Tailored for such tasks, we propose a novel algorithm, \fedsarsa, integrating SARSA, a classic on-policy temporal difference (TD) control algorithm \citep{rummery1994OnlineQlearning,singh1996Reinforcementlearning}, into a federated learning framework.
On one hand, we want to leverage the power of federated collaboration to collect more comprehensive information and expedite the learning process.
On the other hand, we want to utilize the robustness and adaptability of on-policy methods.
To elaborate, within off-policy methods, such as Q-learning, agents select their actions according to a fixed \emph{behavior} policy while seeking the optimal policy. In contrast, on-policy methods, such as SARSA, employ learned policies as behavior policies and constantly update them.
By doing so, on-policy methods tend to learn safer policies, as they collect feedback through interaction following learned policies, and are more robust to environmental changes compared to off-policy methods (see \citet[Chapter 6]{sutton2018Reinforcementlearninga}).
Additionally, when equipped with different \emph{policy improvement operators}, on-policy SARSA is more versatile and can learn a broader range of goals than off-policy Q-learning (see \cref{sec:alg-local,sec:apx-exp}). 
Formally analyzing our federated learning algorithm poses several multi-faceted challenges.  We outline the most significant below.

\begin{itemize}
  \item \emph{Time-varying behavior policies.}
      In off-policy FRL with Markovian sampling \citep{woo2023BlessingHeterogeneity,khodadadian2022FederatedReinforcement,wang2023FederatedTemporal},  agents' observations are not i.i.d.; they are generated from a \emph{time-homogeneous} ergodic Markov chain as agents follow a \emph{fixed} behavior policy.
      Such an ergodic Markov chain converges rapidly to a steady-state distribution, enabling off-policy methods to inherit the theoretical guarantees for i.i.d. and mean-path cases \citep{bhandari2018FiniteTime,wang2023FederatedTemporal}.
      In contrast, on-policy methods update agents' behavior policies dynamically, rendering their trajectories \emph{nonstationary}.
      Therefore, previous analyses for off-policy methods, whether involving Markovian sampling or not, do not apply to our setting.
      Specifically, it remains unknown if the trajectories generated by on-policy FRL methods converge, and if they do, how this nonstationarity affects the convergence performance.
  \item \emph{Environmental heterogeneity in on-policy planning.}
        In an FRL instance, it is impractical to assume that all agents share the same environment \citep{khodadadian2022FederatedReinforcement,woo2023BlessingHeterogeneity}.
        In a planning task, this heterogeneity results in agents having distinct optimal policies.
        Thus, to affirm the advantages of federated collaboration, it is crucial to precisely characterize the disparities in optimality.
    Only two FRL papers have considered heterogeneity: \citet{jin2022Federatedreinforcement} explored heterogeneity in transition dynamics without linear speedup, and \citet{wang2023FederatedTemporal} considered heterogeneity in a prediction task (policy evaluation). 
    Beyond these studies, other research has addressed heterogeneity primarily within the domains of control design~\citep{wang2023model} and system identification~\citep{wang2023fedsysid}.
    Unfortunately, neither the characterizations nor analyses of heterogeneity from the previous work apply to on-policy FRL.
    Specifically, heterogeneity in agents' optimal policies implies heterogeneity in the behavior policies, which could lead to drastically different local updates across agents,  negating the benefits of collaboration.
  \item \emph{Multiple local updates and client drift.}
      In the federated learning framework, agents communicate with a central server periodically to reduce communication cost, and conduct local updates between communication rounds.
      However, these local updates push agents to local solutions at the expense of the overall \emph{federated} performance, a phenomenon known as \emph{client drift} \citep{scaffold}.
        Uniquely within our setting, client drift and nonstationarity amplify each other.
  \item \textit{Continuous state-action spaces and linear function approximation.}
        To better model real-world scenarios, we consider continuous state-action spaces and employ a linear approximation for the value function.
        Unfortunately, RL methods with linear function approximation (LFA) are known to exhibit less stable convergence when compared to tabular methods \citep{sutton2018Reinforcementlearninga,gordon1996ChatteringSARSA}.
        Besides, the parameters associated with value function approximation no longer maintain an implicit magnitude bound.
        This concern is particularly relevant in on-policy FRL, where the client drift and the bias from nonstationarity both scale with the parameter magnitude.
\end{itemize}

\renewcommand\theadfont{\bfseries}
\renewcommand{\check}{\CheckmarkBold}
\newcommand{\uncheck}{\XSolidBrush}
\begin{table}[ht]
    \caption{Comparison of finite-time analysis for value-based FRL methods. LSP and LFA represent linear speedup and linear function approximation under the Markovian sampling setting; Pred and Plan represent prediction (policy evaluation) and  planning (policy optimization) tasks, respectively.
    } 
  \label{tb:comp}%
  \centering%
  \resizebox{\textwidth}{!}{%
      \begin{tabular}{lllllll}
        \thead{Work}                                  &\thead{Hetero-\\geneity} & \thead{LSP} & \thead{LFA} & \thead{Markovian                                                             \\Sampling}& \thead{Task} & \thead{Behavior \\ Policy} \\
        \midrule                                                                                                                                                
        \citet{doan2019FiniteTimeAnalysis} & \uncheck& \uncheck & \check & \uncheck & Pred & Fixed \\
        \citet{jin2022Federatedreinforcement}         & \check     & \uncheck            & \uncheck    & \uncheck         & Plan               & Fixed               \\
        \citet{khodadadian2022FederatedReinforcement} & \uncheck   & \check      & \check      & \check           & Pred \& Plan & Fixed               \\
        \citet{shen2023towards}                       & \uncheck   & \check\footnotemark{}    & \check      & \check         & Plan               & Adaptive               \\
        \citet{wang2023FederatedTemporal}             & \check     & \check      & \check      & \check           & Pred             & Fixed               \\
        \citet{woo2023BlessingHeterogeneity}          & \uncheck   & \check      & \uncheck    & \check         & Plan               & Fixed               \\
        \rowcolor{green!30}\bf Our work               & \check     & \check      & \check      & \check           & Pred \& Plan & Adaptive
      \end{tabular}
}
\end{table}

\footnotetext{Considered  i.i.d. and Markovian sampling, but only established  linear speedup result for the  i.i.d. case.}

Given these motivations and challenges, we ask
\begin{quote}
  \textit{Can an agent expedite the process of learning its own near-optimal policy by leveraging information from other agents with potentially different environments?}
\end{quote}

We provide a complete non-asymptotic analysis of \fedsarsa, resulting in the first positive answer to the above question. 
We situate our work with respect to prior work in \cref{tb:comp}. A summary of our contributions is provided below:
\begin{itemize}
  \item \textit{Heterogeneity in FRL optimal policies.} We formulate a practical FRL planning problem in which agents operate in heterogeneous environments, leading to heterogeneity in their optimal policies as agents pursue different goals. We provide an explicit bound on this heterogeneity in optimality, validating the benefits of collaboration (\cref{thm:fix-drift}).
    \item \textit{Federated SARSA and its finite-sample complexity.} We introduce the \fedsarsa algorithm for the proposed FRL planning problem and establish a finite-time error bound achieving a state-of-the-art sample complexity (\cref{thm}). At the time of writing, \fedsarsa is the first provably sample-efficient on-policy algorithm for FRL problems.
  \item \textit{Convergence region characterization and linear speedups via collaboration.} We demonstrate that when a constant step-size is used, federated learning enables \fedsarsa to exponentially converge to a small region containing agents' optimal policies, whose radius tightens as the number of agents grows (\cref{cor:err-con}).
        For a linearly decaying step-size, the learning process enjoys linear speedups through federated collaboration: the finite-time error reduces as the number of agents increases (\cref{cor:err-dec}).
        We validate these findings via numerical simulations.
\end{itemize}


\section{Related Work}

\paragraph{Federated reinforcement learning.}
A comprehensive review of FRL techniques and open problems was recently provided by \cite{qi2021FederatedReinforcement}. FRL planning algorithms can be broadly categorized into two groups: policy- and value-based methods. In the first category, \citet{jin2022Federatedreinforcement,xie2023fedkl} considered tabular methods but did not demonstrate any linear speedup.
    \citet{fan2021fault} considered homogeneous environments and showed a \emph{sublinear} speedup property. 
    In the second category, \citet{khodadadian2022FederatedReinforcement,woo2023BlessingHeterogeneity} investigated federated Q-learning and demonstrated linear speedup under Markovian sampling. However, these studies did not examine the impact of environmental heterogeneity, a pivotal aspect in FRL. To bridge this gap, \cite{wang2023FederatedTemporal} presented a finite time analysis of federated TD(0) that can handle environmental heterogeneity.
    To take advantage of both policy- and value-based methods, \cite{shen2023towards}
    analyzed distributed actor-critic algorithms, but only established the linear speedup result under i.i.d. sampling.
    \cref{tb:comp} summarizes the key features of these value-based methods, including our work.
    There are also some works developed for studying the distributed version of RL algorithms: \cite{doan2019FiniteTimeAnalysis} and \cite{liu2023distributed} provided a finite-time analysis of distributed variants of TD(0); however, their analysis is limited to the i.i.d sampling model. 


\paragraph{SARSA with linear function approximation.}
Single-agent SARSA is an on-policy TD control algorithm proposed by \citet{rummery1994OnlineQlearning} and \citet{singh1996Reinforcementlearning}. 
To accommodate large or even continuous state-action spaces, \citet{rummery1994OnlineQlearning} proposed function approximation.
We refer to SARSA with and without LFA as linear SARSA and tabular SARSA respectively
The asymptotic convergence result of tabular SARSA was first demonstrated by \cite{singh2000Convergenceresults}.
However, linear SARSA may suffer from chattering behavior within a region \citep{gordon1996ChatteringSARSA,gordon2000Reinforcementlearning,bertsekas1996Neurodynamicprogramming}.
With a \textit{smooth} policy improvement strategy, \citet{perkins2002convergentform} and \citet{melo2008analysisreinforcement} established the asymptotic convergence guarantee for linear SARSA. Recently, the finite-time analysis for linear SARSA was provided by \citet{zou2019Finitesampleanalysis}.

\section{Preliminaries}


\subsection{Federated Learning}
Federated Learning (FL) is a distributed machine learning framework designed to train models using data from multiple clients while preserving privacy, reducing communication costs, and accommodating data heterogeneity. We adopt the server-client model with periodic aggregation, akin to well-known algorithms like \textsf{FedAvg} \citep{mcmahan2023CommunicationEfficientLearning} and \textsf{FedProx} \citep{sahu}. Agents (clients) perform multiple \emph{local updates} (iterations of a learning algorithm) between communication rounds with the central server. 
During a communication round, agents synchronize their local parameters with those aggregated by the server.
However, this procedure introduces \emph{client-drift} issues \citep{scaffold,charles2021convergence}, which can hinder the efficacy of federated training. This problem is particularly pronounced in our on-policy FRL setting, where client drift is amplified due to the interplay with other factors.

\subsection{Markov Decision Process and Environmental Heterogeneity} \label{sec:pre-mdp}

We consider $N$ agents that explore within the same state-action space but with potentially different environment models. Specifically, agent $i$'s environment model is characterized by a Markov decision process (MDP) denoted by $\mathcal{M}\i = \left(\S, \A, r\i, P\i, \gamma \right)$.
Here, $\mathcal{S}$ denotes the state space, $\mathcal{A}$ is the action space, $r\i: \S\times \A\to[0,R]$ is a bounded reward function, $\gamma \in (0,1)$ is the discount factor, and $P\i$ is the Markov transition kernel such that $P\i_a(s,s')$ is the probability of agent $i$'s transition from state $s$ to $s'$ following action $a$.
While all agents share the same state-action space, their reward functions and state transition kernels can differ.
Agents select actions based on their \textit{policies}. A policy $\pi$ maps a state to a distribution over actions,  $\pi(a|s)$ denotes the probability of an agent taking action $a$ at state $s$.



\begin{assumption}[Uniform ergodicity] \label{asmp:steady}
	For each $i \in [N]$, the Markov chain induced by any policy $\pi$ and state transition kernel $P\i$ is ergodic with a uniform mixing rate.
	In other words, for any MDP $\mathcal{M}\i$ and candidate policy $\pi$, there exists a steady-state distribution $\eta\i_{\pi}$, as well as constants $m_i \ge 1$ and $\rho_i\in(0,1)$, such that
	\[
		\sup_{s\in\S}~\sup_{\pi} \left\|P_{\pi}\left(S_t^{(i)}=\cdot \given S_0^{(i)}=s\right) -  \eta\i_{\pi}\right\|_{\mathrm{TV}} \leq m_i \rho_i^t,
	\]
    where $\|\cdot \|_{\mathrm{TV}}$ is the total variation distance.\footnotemark{}
	\footnotetext{We use the functional-analytic definition of the total variation, which is twice the quantity $\sup_{A\in \mathcal{F}}|p(A)|$ for any signed measure $p$ on $\mathcal{F}$.
 }
\end{assumption}

\cref{asmp:steady} is a standard assumption in the RL literature needed to provide finite-time bounds under Markovian sampling~\citep{bhandari2018FiniteTime,zou2019Finitesampleanalysis,srikant2019Finitetimeerror}.



Agents operate in their own environments and may have their own goals.  We collectively refer to the differences in the transition kernels and rewards as environmental heterogeneity.
Intuitively, collaboration among agents is advantageous when the heterogeneity is small, but can become counterproductive when the heterogeneity is large.
We now provide two natural definitions for measuring environmental heterogeneity. 

\begin{definition}[Transition kernel heterogeneity] \label{asmp:ker-het}
	We capture the transition kernel heterogeneity using the total variation induced norm:
	\[
		\epsilon_{p} \coloneqq \max_{i,j\in[N]}\bigl\| P\i - P\j \bigr\|_{\mathrm{TV}},
	\]
	where with a slight abuse of notation, we define
	\[
		\|P\|_{\mathrm{TV}} \coloneqq \sup_{\substack{q\in \mathcal{P}(\mathcal{S}\times \mathcal{A})\\\|q\|_{\mathrm{TV}}=1}}\|qP\|_{\mathrm{TV}}
		= \sup_{\substack{q\in \mathcal{P}(\mathcal{S}\times \mathcal{A})\\\|q\|_{\mathrm{TV}}=1}}\left\| \int_{\mathcal{S}\times \mathcal{A}}q(s,a)P_{a}(s,\cdot )\d s\d a \right\| _{\mathrm{TV}},
	\]
	where $\mathcal{P}(\mathcal{S}\times \mathcal{A})$ is the set of probability measures on $\mathcal{S}\times \mathcal{A}$. By the triangle inequality and the uniform bound on rewards, $R$,  we have $\epsilon_{p}\le 2$.
\end{definition}

\begin{definition}[Reward heterogeneity] \label{asmp:r-het}
	We capture the reward heterogeneity using the infinity norm:
	\[
		\epsilon_{r} \coloneqq \max_{i,j\in[N]}\frac{\bigl\| r\i - r\j \bigr\|_\infty}{R},
	\]
	where $\|r\|_\infty = \sup_{s,a\in\S\times \A}|r(s,a)|$. By the triangle inequality, we have $\epsilon_{r}\le 2$.
\end{definition}


\subsection{Value Function and SARSA}

An RL planning task aims to maximize the expected \textit{return}, defined as the accumulated reward of a trajectory.
For a given policy $\pi$, the expected return of a state-action pair~$(s,a)$
is captured by the Q-value function:
\begin{equation}\label{eq:q}
	q_{\pi}(s,a) = \EE_{\pi} \left[\sum_{t=0}^{\infty}\gamma^{t}r(s_t,a_t)\Big|S_0 = s, A_0 = a\right]
	= \underbrace{r(s,a) + \gamma\EE_{\pi} \left[q_{\pi}(S_{1}, A_{1})|S_0 = s, A_0 = a\right]}_{T_\pi q_\pi (s,a) },
\end{equation}
where the expectation is taken with respect to a transition kernel that follows the policy~$\pi$ (except for the initial action, which is fixed to $a$).
For any MDP, there exists an optimal policy~$\pi_{*}$ such that $q_{\pi_{*}}(s,a) \ge q_{\pi}(s,a)$ for any other policy~$\pi$ and state-action pair~$(s,a)$.
\emph{This paper focuses on an FRL problem where all agents aim to find a universal policy that is near-optimal for all MDPs under a low-heterogeneity regime.} 

To find such an optimal policy for a single agent, SARSA updates the estimated Q-value function based on \eqref{eq:q} by sampling and bootstrapping.
With the updated estimation of the value function, SARSA improves the policy via a policy improvement operator. By alternating policy evaluation and policy improvement, SARSA finds the optimal policy within the policy space.
The tabular SARSA for a single agent can be described by the following update rules:
\begin{equation}\label{eq:sarsa}
	\begin{cases}
		Q\left(s_t, a_t\right) & \leftarrow Q\left(s_t, a_t\right)+\alpha\left(r(s_t,a_t) +\gamma Q\left(s_{t+1}, a_{t+1}\right)-Q\left(s_t, a_t\right)\right), \\
		\pi(a_{t+1}|s_{t+1})   & \leftarrow \Gamma(Q(s_{t+1}, a_{t+1})),
	\end{cases}
\end{equation}
where $Q$ is the estimated Q-value function, $\alpha$ is the learning step-size, and $\Gamma$ is the policy improvement operator.
We provide further discussion on the policy improvement operator in \cref{sec:alg-local}.

\subsection{Linear Function Approximation and Nonlinear Projected Bellman Equation}

When the state-action space is large or continuous, tabular methods are intractable.
Therefore, we employ a linear approximation for the Q-value function \citep{rummery1994OnlineQlearning}. For a given feature extractor $\phi: \S\times \A \to \R^{d}$, we approximate the Q-value function as $Q_{\theta}(s,a) = \phi(s,a)^T \theta$, where $\theta \in \Theta \subseteq \R^d$ is a parameter vector to be learned. Without loss of generality, we assume that $\|\phi(s,a)\|_{2} \le 1$ for every  state-action pair~$(s,a)$.
Linear function approximation  translate the task of finding the optimal policy to that of identifying the optimal parameter $\theta$ that solves the nonlinear projected Bellman equation:
\begin{equation}\label{eq:bellman}
	Q_{\theta} = \Pi_{\pi} T_{\pi}Q_{\theta},
\end{equation}
where $T_{\pi}$ is the Bellman operator defined by the right-hand side of \eqref{eq:q}, and $\Pi_{\pi}$ is the orthogonal projection onto the
linear subspace spanned by the range of the $\phi$
using the inner product $\left<x,y\right>_{\pi} = \EE_{S\sim \eta_{\pi}, A\sim \pi(S)}[x(S,A)^Ty(S,A)]$.
Equation~\cref{eq:bellman} reduces to the linear Bellman equation used in policy evaluation when the policy $\pi$ is fixed \citep{tsitsiklis1997analysistemporaldifference,bhandari2018FiniteTime}, and to the Bellman optimality equation used in Q-learning when the policy improvement operator is the greedy selector \citep{watkins1992Qlearning,melo2008analysisreinforcement}.

\section{Algorithm}

We now develop \fedsarsa; a federated version of linear SARSA.
In \fedsarsa, each agent explores its own environment and improves its policy using its observations, which we refer to as local updating. Periodically, agents send the parameter progress to the central server, where the parameters get aggregated and sent back to each agent.
We present \fedsarsa in Algorithm~\ref{alg}.

\paragraph{Local update.} \label{sec:alg-local}
Locally, agent $i$ updates its parameter using the SARSA update rule. With linear function approximation, the Q-value function update in \eqref{eq:sarsa} becomes
\[
	\theta\itt = \theta\it + \alpha_{t}g\it\left(\theta\it; s\it,a\it\right),
\]
where $\alpha_t$ is the step-size%
\footnote{For ease of presentation, we assume all agents share the same step-size. Our analysis handles agents using their own step-size schedule, as long as each agent's step-size falls within the specified range.}
and $g\it$ is defined as
\begin{equation}\label{eq:g}
	g\it\left(\theta;s,a\right) = \phi(s,a) r\i(s,a) + \phi(s,a)\left(\gamma\phi(s',a')\!-\!\phi(s,a)\right)^T\!\theta,
	\quad s'\!\sim\!P\i_{a}(s,\cdot ),\ a'\!\sim\!\pi_{\theta\it}\!\left(\cdot |s'\right).
\end{equation}
We refer to $g_t$ as a \emph{semi-gradient} as it resembles a stochastic gradient but does not represent the true gradient of any static loss function \citep{barnard1993Temporaldifferencemethods}.
Also, we introduce a subscript $t$ to the semi-gradient to indicate that it depends on the policy $\pi_{\theta\it}$ at time step $t$.



\paragraph{Policy improvement.}
We assume that all agents use the same policy improvement operator $\Gamma$, which returns a policy $\pi$ for any Q-value function. Since we consider linearly approximated Q-value functions, we can view the policy improvement operator as acting on the parameter space: $\Gamma: \theta \mapsto \pi$. We denote the policy resulting from the parameter $\theta$ as $\pi_{\theta} = \Gamma(\theta)$.
To ensure the convergence of the algorithm, we need the following assumption on the policy improvement operator's smoothness.



\begin{assumption}[Lipschitz continuous policy improvement operator] \label{asmp:lip}
	The policy improvement operator is Lipschitz continuous in TV distance with constant $L$: 
	\[
		\left\|\pi_{\theta_1}(\cdot |s) -  \pi_{\theta_2}(\cdot |s)\right\|_{\mathrm{TV}} \le L\|\theta_1 - \theta_2\|_{2},\quad \forall\theta_1,\theta_2\in\Theta, s\in\S.
	\]
	Furthermore, $L\le w/(H\sigma)$, where $H$, $\sigma$, and $w$ are problem constants to be defined in Appendix.
\end{assumption}
When the action space is of finite measure, Assumption~\ref{asmp:lip} is equivalent to that in \citet{zou2019Finitesampleanalysis}.
This assumption is standard for linear SARSA \citep{zou2019Finitesampleanalysis,perkins2002convergentform,melo2008analysisreinforcement}. As shown in \citep{defarias2000existencefixed,perkins2002existencefixed,zhang2022chatteringsarsa}, linear SARSA with noncontinuous policy improvement may diverge.


An example policy improvement operator satisfying \cref{asmp:lip} is the softmax function with suitable temperature parameter \citep{gao2018PropertiesSoftmax}.
In contrast, the deterministic greedy policy improvement employed in Q-learning is an illustrative case where \cref{asmp:lip} does not hold.
Additionally, when the policy improvement operator maps to a fixed point $\pi$, SARSA reduces to TD learning, which evaluates the policy $\pi$.
Generally, SARSA searches the \textit{optimal} policy within the policy space $\Gamma(\Theta)$ determined by the policy improvement operator and the parameter space.


\paragraph{Server side aggregation.} \label{alg:agg}
\fedsarsa adds an additional aggregation step to parallelize linear SARSA. During this step, agents communicate with a central server by sending their parameters or parameter progress over a given period. The central server then aggregates these local parameters and returns the updated parameters to the agents. Intuitively, if the agents' MDPs are similar, i.e., the level of heterogeneity is low, then exchanging information via the server should benefit each agent. This is precisely the rationale behind the server-aggregation step. In general, \( K \) is selected to strike a balance between the communication cost and the accuracy in FL.

	Besides averaging, we add a projection step to ensure stability of the parameter sequence.
	This technique is commonly used in the literature on stochastic approximation and RL  \citep{zou2019Finitesampleanalysis,bhandari2018FiniteTime,qiu2021finite,wang2023FederatedTemporal}.
	In practice, it is anticipated that an \textit{implicit} bound on the parameters exists without requiring explicit projection. 

\begin{algorithm}[!th]
	\caption{\fedsarsa} \label{alg}
	\Input{Initial parameter $\theta_{0}\i= \bar{\theta}_{0}$}\;
	\For{$t = 0,\dots, T-1$} {
		\ForPar{each agent $i = 1, \ldots , N$} {
			$\pi\it = \Gamma(\theta\it)$ \tcp*[t]{policy improvement}
			Sample observation $(s\it, a\it, r\it, s_{t+1}\i, a\itt)$ following policy $\pi\i_{t}$\;
			$\theta\itt = \theta\it + \alpha_t g\it$, where $g\it$ is defined in \eqref{eq:g} \tcp*[t]{local update}
		}

		\If (\tcp*[f]{every $K$ iterations}) {$t+1 \equiv 0 \pmod{K}$} {
			$\bar{\theta}_{t+1} = \Pi_{\bar{G}}\left( \frac{1}{N}\sum_{i=1}^{N}\theta_{t+1}\i\right)$ \tcp*[t]{federated aggregation}
			Set $\theta\itt = \bar{\theta}_{t+1}$ for each agent $i\in [N]$ 
		}
	}
\end{algorithm}

\section{Analysis}\label{sec:anlys}

We begin our  analysis of \fedsarsa by establishing a perturbation bound on the solution to \eqref{eq:bellman}, which captures the near-optimality of the solution under reward and transition heterogeneity.
We then provide a finite-time error bound of \fedsarsa, which enjoys the linear speedup achieved by the federated collaboration.
Building on this, we discuss the parameter selection of our algorithm.

\subsection{Near Optimality under Heterogeneity} 

We consider an FRL task where all agents collaborate to find a universal policy.
However, due to environmental heterogeneity, each agent has a potentially different optimal policy.
Therefore, it is essential to determine the convergence region of our algorithm, and how it relates to the optimal parameters of the agents.
To show that we find a near-optimal parameter for all agents, we need to characterize the difference between the optimal parameters of agents.
Given the  operator $\Gamma$, we denote by $\theta\i_{*}$ the unique solution to \eqref{eq:bellman} for MDP $\mathcal{M}\i$.
The next theorem bounds the distance between agents' optimal parameters as a function of reward- and transition kernel heterogeneity.

\begin{theorem}[Perturbation bounds on SARSA fixed points] \label{thm:fix-drift}
	There exist positive problem dependent constants $w$, $H$, and $\sigma$ such that
	\[
		\max_{i,j\in[N]}\left\{\left\| \theta\i_{*} - \theta\j_{*} \right\|_{2}\right\}
		\le \frac{R\epsilon_r + H\sigma\epsilon_p}{w} \eqqcolon \frac{\Lambda(\epsilon_p,\epsilon_r)}{w},
	\]
	where $\epsilon_p$ and $\epsilon_r$ are the perturbation bounds on environmental models defined in \cref{asmp:ker-het,asmp:r-het}. 
\end{theorem}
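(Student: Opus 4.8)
The plan is to characterize each agent's fixed point $\theta\i_*$ as the root of the associated \emph{mean-path} semi-gradient field, and then use a strong-monotonicity (negative-definiteness) property of that field to convert a bound on the \emph{mismatch between the two fields} into a bound on the distance between their roots. Concretely, define the steady-state state-action distribution $\mu\i_\theta$ induced by the shared policy $\pi_\theta=\Gamma(\theta)$ together with agent $i$'s kernel $P\i$, and set
\[
  \bar g\i(\theta) \coloneqq \EE_{(s,a)\sim\mu\i_\theta,\; s'\sim P\i_a(s,\cdot),\; a'\sim\pi_\theta(\cdot|s')}\bigl[g\i(\theta;s,a)\bigr] = b\i(\theta) - A\i(\theta)\,\theta,
\]
where $b\i(\theta)=\EE_{\mu\i_\theta}[\phi\, r\i]$ and $A\i(\theta)=\EE_{\mu\i_\theta}[\phi(\phi-\gamma\phi')^T]$. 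By construction the solution of the projected Bellman equation \eqref{eq:bellman} for $\mathcal{M}\i$ is exactly the root $\bar g\i(\theta\i_*)=0$. The key structural observation is that at a \emph{common} parameter $\theta$ both agents apply the \emph{same} operator $\Gamma$, hence share the policy $\pi_\theta$; therefore every discrepancy between $\bar g\i$ and $\bar g\j$ originates from the reward and kernel differences alone, and not from any difference in policies.

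\paragraph{Reduction via strong monotonicity.}
Under \cref{asmp:lip} (Lipschitz policy improvement with a sufficiently small constant $L$) together with the uniform positive-definiteness of the feature matrix, the field $-\bar g\i$ is strongly monotone about its root with modulus $w$, i.e. $\langle \bar g\i(\theta),\,\theta-\theta\i_*\rangle \le -w\,\|\theta-\theta\i_*\|_2^2$ for all $\theta$. Evaluating this at $\theta=\theta\j_*$, using $\bar g\j(\theta\j_*)=0$, and applying Cauchy--Schwarz yields
\[
  w\,\bigl\|\theta\i_* - \theta\j_*\bigr\|_2 \;\le\; \bigl\|\bar g\i(\theta\j_*) - \bar g\j(\theta\j_*)\bigr\|_2 .
\]
This isolates the entire problem into bounding the drift mismatch of the two MDPs evaluated at the single parameter $\theta\j_*$, and it is the step that produces the denominator $w$ in the claimed bound.

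\paragraph{Bounding the drift mismatch.}
Writing $\bar g\i(\theta)-\bar g\j(\theta) = \bigl[b\i(\theta)-b\j(\theta)\bigr] - \bigl[A\i(\theta)-A\j(\theta)\bigr]\theta$, I split each bracket into (i) a term in which the reward or transition inside the expectation differs while the integrating measure is held fixed, and (ii) a term in which a common integrand is integrated against the two distinct steady-state distributions $\mu\i_\theta,\mu\j_\theta$. Using $\|\phi\|_2\le 1$ and \cref{asmp:r-het}, the pure reward-difference contribution is bounded by $R\epsilon_r$; crucially this term lives in $b$ and carries no factor of $\theta$. The remaining contributions are all driven by the kernel difference: the term where $\phi(s',a')$ is drawn from $P\i$ versus $P\j$ under the shared policy is controlled directly by the induced-TV distance defining $\epsilon_p$ (\cref{asmp:ker-het}), and the measure-difference terms require a stationary-distribution perturbation bound. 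Since the two state-action chains share $\pi_\theta$ and differ only through $P\i$ versus $P\j$, \cref{asmp:steady} gives $\|\mu\i_\theta-\mu\j_\theta\|_{\mathrm{TV}} \le \sigma'\epsilon_p$ for a constant $\sigma'$ depending on the mixing parameters $m_i,\rho_i$. Collecting terms and bounding $\|\theta\j_*\|_2\le H$ via the projection radius $\bar G$ in the factor multiplying $A\i-A\j$, one obtains $\|\bar g\i(\theta\j_*)-\bar g\j(\theta\j_*)\|_2 \le R\epsilon_r + H\sigma\epsilon_p$; dividing by $w$ gives the theorem.

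\paragraph{Main obstacle.}
The \textbf{crux} is the stationary-distribution perturbation bound relating $\|\mu\i_\theta-\mu\j_\theta\|_{\mathrm{TV}}$ to $\epsilon_p$: because the steady-state distribution is an implicit fixed point of the transition operator, a small per-step kernel perturbation must be propagated through the entire chain. I would establish this through a coupling/telescoping argument that uses uniform geometric mixing to sum the per-step perturbations into a convergent geometric series, yielding the factor absorbed into $\sigma$. A secondary subtlety is verifying that the monotonicity modulus $w$ is \emph{uniform} over all $\theta$ and stays strictly positive despite the $\theta$-dependence of $A\i(\theta)$ through $\pi_\theta$; this is exactly where the smallness condition $L\le w/(H\sigma)$ in \cref{asmp:lip} is needed, since it keeps the policy-induced variation of $A\i(\theta)$ dominated by its positive-definite core.
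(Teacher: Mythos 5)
Your proposal is correct, but it reaches the bound by a genuinely different mechanism than the paper. The paper argues linear-algebraically: it writes the two fixed-point conditions as $\bar{A}\i_{*}\theta\i_{*} + \bar{b}\i_{*} = 0$, subtracts them to get $\left(\bar{A}\j_{*}-\bar{A}\i_{*}\right)\theta\i_{*} + \bar{A}\j_{*}\left(\theta\j_{*}-\theta\i_{*}\right) = \bar{b}\i_{*}-\bar{b}\j_{*}$, inverts $\bar{A}\j_{*}$, and bounds $\bigl\|(\bar{A}\j_{*})^{-1}\bigr\| \le 1/(2w_j)$ via spectral facts for non-normal matrices (smallest singular value, real parts of eigenvalues, symmetrization). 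Because its operator differences are evaluated at \emph{distinct} parameters $\theta\i_{*}$ and $\theta\j_{*}$, the heterogeneity bound of \cref{lem:td-hetero} picks up a self-referencing policy term $LH\sigma\|\theta\i_{*}-\theta\j_{*}\|$, which must then be absorbed using $LH\sigma \le w_j$ (consuming the factor of $2$). You instead invoke strong monotonicity of the mean-path field---which is exactly the paper's descent-direction lemma (\cref{lem:des-dir}), proved independently of \cref{thm:fix-drift}, so there is no circularity---at the single point $\theta\j_{*}$, obtaining $w\|\theta\i_{*}-\theta\j_{*}\| \le \|\bar{g}\i(\theta\j_{*})-\bar{g}\j(\theta\j_{*})\|$ with both fields evaluated at a common parameter and hence a common policy. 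Consequently only the fixed-$\theta$ halves of \cref{lem:distri-hetero,lem:td-hetero} are needed, no absorption step occurs, and the Lipschitz effect of the policy is encapsulated once inside the modulus $w$, under the same smallness condition $L \le w/(H\sigma)$ the paper imposes. Your route is the more elementary one (no matrix inversion, no appeal to \citet[Theorem 10.28]{zhang2011Matrixtheory}) and reuses machinery the paper needs anyway for \cref{thm}; the paper's route makes the spectral meaning of $w_j$ explicit and directly parallels the linear-equation perturbation analysis of \citet{wang2023FederatedTemporal} lifted to the nonlinear equation \eqref{eq:bellman}. Your handling of the stationary-distribution perturbation via geometric mixing is precisely what the paper imports from \citet{mitrophanov2005SensitivityConvergence}, and both arguments require the radius $G$ to be large enough that $\|\theta\i_{*}\|\le G$ for all agents. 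One cosmetic slip: the norm bound you need on the factor multiplying $A\i-A\j$ is $\|\theta\j_{*}\| \le G$, not $H$; the constant $H = R+(1+\gamma)G$ only emerges after collecting the reward and kernel contributions into $R\epsilon_r + H\sigma\epsilon_p$.
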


We explicitly define the constants in \cref{thm:fix-drift} and show that $w\! = \!O(1\!-\!\gamma)$ in \cref{sec:use-lem}.
In the next subsection, we demonstrate that there exists a parameter $\theta_{*}$ such that $\|\theta_{*}\i - \theta_{*}\| \le \Lambda(\epsilon_{p},\epsilon_{r}) /w$, and \cref{alg} converges to a neighborhood of $\theta_{*}$ whose radius is also of $O(\Lambda(\epsilon_{p},\epsilon_{r}) /(1\!-\!\gamma))$.
Since $\Lambda(\epsilon_p,\epsilon_r) = O(\epsilon_p + \epsilon_r)$, when the environmental heterogeneity is small, these results guarantee that $\theta_{*}$ is near-optimal for all agents.

\cref{thm:fix-drift} is the first perturbation bound on nonlinear projected Bellman fixed points.
\citet{wang2023FederatedTemporal} established similar perturbation bounds for linear projected Bellman fixed points using the perturbation theory of linear equations. 
However, it is crucial to note that their approach does not extend to our setting where \cref{eq:bellman} is nonlinear.

\subsection{Finite-Time Error and Linear Speedup} 

We now provide the main theorem of the paper, which bounds the mean squared error of \cref{alg} recursively, and directly gives several finite-time error bounds.

\begin{theorem}[One-step progress] \label{thm}
	Let $\{ \theta\it \}$ be the parameters returned by Algorithm~\ref{alg} and $\bar{\theta}_t = \frac{1}{N}\sum_{i=1}^{N}\theta\it$.
	Then, there exist positive problem dependent constants $w,C_1,C_2,C_3,C_4$, and a parameter $\theta_{*}$ such that $\max_{i\in[N]}\|\theta_{*}\i - \theta_{*}\| \le \Lambda(\epsilon_{p,}\epsilon_{r})/w$, and for any $t\in\mathbb{N}$, it holds that
	\begin{equation}\label{eq:thm}
		\EE\left\| \bar{\theta}_{t+1}-\theta_{*} \right\|^2
		\le (1 - \alpha _{t}w)\EE\left\| \bar{\theta}_t-\theta_{*} \right\|^2
		+ \alpha  _tC_1\Lambda^2(\epsilon_p,\epsilon_r)
		+ \alpha^2_t C_2 /N
		+ \alpha^3_tC_3
		+ \alpha^4_tC_4.\end{equation}
	Explicit definitions of the constants are provided in \cref{sec:thm-pf}.
\end{theorem}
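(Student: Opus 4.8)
The plan is to track the \emph{virtual averaged iterate} $\bar{\theta}_t = \frac{1}{N}\sum_{i=1}^N \theta^{(i)}_t$ and treat it as a single stochastic-approximation recursion driven by the averaged semi-gradient $\bar{g}_t = \frac{1}{N}\sum_{i=1}^N g^{(i)}_t$. First I would dispose of the projection: on a non-communication step $\bar{\theta}_{t+1} = \bar{\theta}_t + \alpha_t \bar{g}_t$ exactly, while on a communication step $\bar{\theta}_{t+1} = \Pi_{\bar{G}}(\bar{\theta}_t + \alpha_t \bar{g}_t)$. Since $\Pi_{\bar{G}}$ is nonexpansive and $\theta_* \in \bar{G}$, in both cases $\|\bar{\theta}_{t+1} - \theta_*\| \le \|\bar{\theta}_t + \alpha_t\bar{g}_t - \theta_*\|$, so it suffices to bound $\EE\|\bar{\theta}_t + \alpha_t \bar{g}_t - \theta_*\|^2$ uniformly in $t$. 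Expanding the square gives
\[
\EE\|\bar{\theta}_{t+1}-\theta_*\|^2 \le \EE\|\bar{\theta}_t - \theta_*\|^2 + 2\alpha_t\,\EE\langle \bar{g}_t, \bar{\theta}_t - \theta_*\rangle + \alpha_t^2\,\EE\|\bar{g}_t\|^2 ,
\]
and the whole proof reduces to extracting a contraction from the cross term and a variance-reduced bound from the second-order term.

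For the cross term I would take $\theta_*$ to be any point with $\max_i\|\theta^{(i)}_* - \theta_*\| \le \Lambda(\epsilon_p,\epsilon_r)/w$, which exists by \cref{thm:fix-drift} (e.g.\ the centroid, or the root of the averaged mean-path dynamics). I would then replace the sampled $g^{(i)}_t$ by the per-agent \emph{mean-path} semi-gradient $\bar{g}^{(i)}(\bar{\theta}_t)$ and invoke the single-agent negative-drift property that holds under \cref{asmp:lip}: $\langle \bar{g}^{(i)}(\theta), \theta - \theta^{(i)}_*\rangle \le -w\|\theta - \theta^{(i)}_*\|^2$, where $w = O(1-\gamma)$ is precisely the margin left after the policy-drift perturbation (controlled by $L \le w/(H\sigma)$) is subtracted from the TD contraction. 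Because each agent's drift points at its own fixed point $\theta^{(i)}_*$ rather than the common $\theta_*$, I would convert $-w\|\bar{\theta}_t - \theta^{(i)}_*\|^2$ into $-\tfrac{w}{2}\|\bar{\theta}_t - \theta_*\|^2$ at the cost of $\|\theta^{(i)}_* - \theta_*\|^2 = O(\Lambda^2/w^2)$, and bound the residual inner product $\langle \bar{g}^{(i)}(\bar{\theta}_t), \theta^{(i)}_* - \theta_*\rangle$ by Young's inequality using Lipschitzness of $\bar{g}^{(i)}$; after renaming constants this yields the $-\alpha_t w\,\EE\|\bar{\theta}_t-\theta_*\|^2$ contraction and the heterogeneity floor $\alpha_t C_1 \Lambda^2(\epsilon_p,\epsilon_r)$.

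The deviations suppressed above are of three kinds, and bounding them supplies the higher-order terms. \emph{Client drift} $\theta^{(i)}_t \ne \bar{\theta}_t$ I would control with a drift lemma showing $\EE\|\theta^{(i)}_t - \bar{\theta}_t\|^2 = O(\alpha_t^2 K^2)(\EE\|\bar{\theta}_t - \theta_*\|^2 + \text{const})$ over a communication round; passed through the Lipschitz semi-gradient and the cross term it contributes at order $\alpha_t^3$, the $C_3$ term, the error-proportional part being absorbed into the contraction for small step-size. \emph{Markovian and nonstationarity bias} $g^{(i)}_t - \bar{g}^{(i)}(\bar{\theta}_t)$ I would handle by conditioning on the state $\tau$ steps in the past and applying \cref{asmp:steady}: the gap between the conditional law at time $t$ and the current steady-state $\eta^{(i)}_{\pi_{\theta_t}}$ splits into a geometric mixing error $m_i\rho_i^\tau$ and a cumulative policy-drift error, the latter arising because the behavior policy has moved by $O(L\alpha\|g\|)$ at each of the $\tau$ intervening steps. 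Finally, in the second-order term I would split $\bar{g}_t$ into its averaged mean-path part, whose square is $O(\EE\|\bar{\theta}_t-\theta_*\|^2 + \Lambda^2)$ and is absorbed or folded into $C_1$, and a fluctuation part whose cross-agent independence (each agent samples its own MDP) collapses the off-diagonal terms, leaving the $1/N$ variance and hence $\alpha_t^2 C_2/N$; the surviving squared bias contributes at order $\alpha_t^4$, the $C_4$ term.

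The main obstacle is the Markovian bias under a \emph{time-varying} behavior policy, together with its interaction with client drift. Unlike the off-policy federated setting, where a fixed policy induces a time-homogeneous chain with a single steady state, here $\pi_{\theta^{(i)}_t}$ changes every step, so I must bound how far the nonstationary trajectory has wandered from the instantaneous steady-state distribution. The Lipschitz continuity of $\Gamma$ (\cref{asmp:lip}) makes $\eta^{(i)}_{\pi_\theta}$ Lipschitz in $\theta$, and uniform ergodicity (\cref{asmp:steady}) gives the geometric forgetting; the delicate part is choosing the lookback horizon $\tau$ to balance the mixing and drift errors while simultaneously accounting for the deviation of $\theta^{(i)}_t$ from $\bar{\theta}_t$, since these two nonstationarity sources amplify one another. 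Once this coupled drift-and-bias estimate is established, collecting all contributions by powers of $\alpha_t$ yields \eqref{eq:thm}.
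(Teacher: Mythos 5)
Your proposal is correct in outline and reproduces the paper's architecture almost lemma for lemma: averaged iterate with nonexpansive projection, a cross-term/variance expansion, client drift entering at order $\alpha_t^3$, a $\tau$-step lookback that trades the geometric mixing error $m\rho^\tau$ (\cref{asmp:steady}) against an accumulated policy-drift error of order $\tau\alpha L C_{\mathrm{prog}}(\tau)$ (the paper's backtracking/mixing/gradient-progress lemmas, \cref{lem:stat,lem:mix,lem:grad-prog}), conditional independence across agents to collapse off-diagonal variance terms to the $O(H^2/N)$ level with the surviving squared bias at order $\alpha_t^4$ (\cref{lem:grad-norm}), and $\tau\asymp\log\alpha_t^{-1}$ at assembly time. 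The one genuine divergence is your treatment of $\theta_*$ and heterogeneity. The paper constructs the central MDP $\bar{\mathcal M}=\frac{1}{N}\sum_{i}\mathcal M^{(i)}$, proves it is ergodic (\cref{prop:cmdp}), takes $\theta_*$ to be its fixed point, and applies the descent lemma (\cref{lem:des-dir}) \emph{at} $\theta_*$, so heterogeneity enters only through the gradient-level bound $\bigl\|\bar g(\theta)-\frac{1}{N}\sum_i\bar g^{(i)}(\theta)\bigr\|\le\Lambda$ (\cref{lem:grad-hetero}); with Young parameter $\beta\asymp w$ this gives $C_1=O(w^{-1})$. You instead use per-agent descent toward $\theta^{(i)}_*$ and convert via \cref{thm:fix-drift}. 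The quadratic conversion is harmless ($w\|\theta^{(i)}_*-\theta_*\|^2\le\Lambda^2/w$), but your residual term $\langle\bar g^{(i)}(\bar\theta_t),\theta^{(i)}_*-\theta_*\rangle$ has size $O\bigl(\|\bar\theta_t-\theta^{(i)}_*\|\cdot\Lambda/w\bigr)$, and absorbing its linear dependence into the $-w\|\cdot\|^2$ drift by Young unavoidably costs $O(\Lambda^2/w^3)$, degrading $C_1$ from $O((1-\gamma)^{-1})$ to $O((1-\gamma)^{-3})$. This still proves the theorem as stated (it only asserts existence of constants), but it weakens the downstream convergence-region radius from $O(\Lambda/(1-\gamma))$ to $O(\Lambda/(1-\gamma)^2)$; what the central-MDP route buys is exactly that heterogeneity is paid at the gradient level ($\Lambda$) rather than at the fixed-point level ($\Lambda/w$). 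One step your sketch elides and would need to make explicit: in the backtracking cross term, $g^{(i)}_t(\theta^{(i)}_t,O^{(i)}_t)$ is correlated with $\theta^{(i)}_t$, so you cannot condition on $\mathcal F_{t-\tau}$ and multiply first-moment bounds directly; the paper decouples by re-centering at $\theta^{(i)}_{t-\tau}$ (deterministic given $\mathcal F_{t-\tau}$, via its $H_3,H_4,H_5$ split in \cref{sec:thm-pf}) and needs the refined mean-square progress bound of \cref{cor:msv} to keep the resulting term at the right order in $\alpha_t$ — your ``coupled drift-and-bias estimate'' must reproduce precisely this decoupling for the $C_3$, $C_4$ bookkeeping to close.
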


On the right-hand side of \cref{eq:thm}, the first term is a contractive term that inherits its contractivity from the projected Bellman operator; the second term accounts for heterogeneity; the third term captures the effect of noise where the variance gets scaled down by a factor of $N$ (linear speedup) due to collaboration among agents; the last two terms represent higher-order terms, which are negligible, compared to other terms. In the following two corollaries, we study the effects of using constant and decaying step-sizes in the above bound. 
\begin{corollary}[Finite-time error bound for constant step-size]\label{cor:err-con}
  With a constant step-size
  $\alpha_t\equiv \alpha_0\le w/(2120(2K+8+\ln (m/(\rho w))))$,
  for any $T\in\mathbb{N}$, we have
  \[
    \EE\left\| \bar{\theta}_T - \theta_{*}\i \right\|^2 \le 4e^{-\alpha_{0}wT}\left\| \theta_0-\theta_{*}\i \right\|^2 +
    \frac{1}{w}\left( \left(C_1 + \frac{6}{w}\right)\Lambda^2(\epsilon_p,\epsilon_r) + \alpha_0\frac{C_2}{N} + \alpha_{0}^2C_3 + \alpha_{0}^{3}C_4 \right).
\]

\end{corollary}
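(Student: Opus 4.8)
The plan is to specialize the one-step recursion of \cref{thm} to the constant step-size $\alpha_t\equiv\alpha_0$, view it as a scalar linear recurrence in the error $e_t:=\EE\|\bar\theta_t-\theta_*\|^2$, unroll it as a geometric series, and finally translate the resulting $\theta_*$-centered estimate into one centered at each agent's own fixed point $\theta_*\i$.

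First I would set $\alpha_t\equiv\alpha_0$ in \cref{eq:thm}, so that it reads $e_{t+1}\le(1-\alpha_0 w)e_t+b$, where $b:=\alpha_0 C_1\Lambda^2(\epsilon_p,\epsilon_r)+\alpha_0^2 C_2/N+\alpha_0^3 C_3+\alpha_0^4 C_4$ gathers all the now-constant driving terms. The hypothesis $\alpha_0\le w/(2120(2K+8+\ln(m/(\rho w))))$ is precisely what keeps the contraction factor in range, $0<1-\alpha_0 w<1$, and what ensures \cref{thm} is in force at every step, so the recurrence is legitimate and contractive. Unrolling from $t=0$ to $T$ then gives
\[
e_T\le(1-\alpha_0 w)^T e_0+b\sum_{k=0}^{T-1}(1-\alpha_0 w)^k\le(1-\alpha_0 w)^T e_0+\frac{b}{\alpha_0 w},
\]
where I bound the geometric sum by $1/(\alpha_0 w)$. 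Using $1-x\le e^{-x}$ for the transient and dividing $b$ by $\alpha_0 w$ (which cancels one factor of $\alpha_0$ from each driving term), and recalling $\bar\theta_0=\theta_0$, I obtain the intermediate $\theta_*$-centered bound
\[
e_T\le e^{-\alpha_0 w T}\|\theta_0-\theta_*\|^2+\frac{1}{w}\Bigl(C_1\Lambda^2(\epsilon_p,\epsilon_r)+\alpha_0 C_2/N+\alpha_0^2 C_3+\alpha_0^3 C_4\Bigr).
\]

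The remaining step is to re-center the bound at $\theta_*\i$. By \cref{thm} (equivalently \cref{thm:fix-drift}), $\|\theta_*\i-\theta_*\|\le\Lambda(\epsilon_p,\epsilon_r)/w=:\Delta$. Applying $\|x+y\|^2\le 2\|x\|^2+2\|y\|^2$ once to split $\bar\theta_T-\theta_*\i$ through $\theta_*$, and once to split $\theta_0-\theta_*$ through $\theta_*\i$, produces the factor $2\cdot2=4$ multiplying the transient term $e^{-\alpha_0 w T}\|\theta_0-\theta_*\i\|^2$. The heterogeneity terms $\Delta^2$ shed by both splittings, combined with $e^{-\alpha_0 w T}\le 1$ to fold the transient copy into the steady-state part, collapse into a single constant contribution of order $\Delta^2=\Lambda^2(\epsilon_p,\epsilon_r)/w^2$, which is exactly the extra $\tfrac{6}{w}\Lambda^2(\epsilon_p,\epsilon_r)$ appearing inside the bracket. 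Collecting everything and absorbing the harmless numerical factors from the two splittings into $C_1,\dots,C_4$ yields the claimed inequality.

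The \textbf{main obstacle} is not a hard estimate, since \cref{thm} already supplies the contractive recursion; rather it is the careful bookkeeping in two places. The first is verifying that the step-size ceiling genuinely keeps $1-\alpha_0 w\in(0,1)$ and keeps \cref{thm} applicable at \emph{every} iteration, so the geometric unrolling is valid. The second, and more delicate, is the change of center from $\theta_*$ to $\theta_*\i$: one must combine the perturbation bound $\Delta=\Lambda/w$ with the elementary inequality $\|x+y\|^2\le 2\|x\|^2+2\|y\|^2$ and with $e^{-\alpha_0 w T}\le 1$ so that the transient heterogeneity is absorbed into the final (non-decaying) radius instead of being left attached to the exponentially decaying term. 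This is what simultaneously produces the leading constant $4$ and the additive $\tfrac{6}{w}\Lambda^2(\epsilon_p,\epsilon_r)$ enlargement of the convergence region.
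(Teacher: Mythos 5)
Your proposal is correct and follows essentially the same route as the paper: specialize \cref{thm} to a constant step-size, unroll the resulting contractive recursion with the geometric sum bounded by $1/(\alpha_0 w)$ and $1-x\le e^{-x}$, then re-center from $\theta_{*}$ to $\theta_{*}\i$ via \cref{thm:fix-drift} and two applications of $\|x+y\|^2\le 2\|x\|^2+2\|y\|^2$, which is exactly how the paper produces the leading factor $4$ and the additive $6\Lambda^2(\epsilon_p,\epsilon_r)/w^2$ term. Your closing remark about absorbing the residual factor of $2$ on the steady-state bracket into the constants matches the same mild bookkeeping looseness present in the paper's own proof, so there is no substantive gap relative to the published argument.
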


\begin{corollary}[Finite-time error bound for decaying step-size]\label{cor:err-dec}
	With a linearly decaying step-size $\alpha _{t} = 4/(w(1+t+a)),$ where $a>0$ is to guarantee that $\alpha_0\le\min\{ 1 /(8K), w /64 \}$, there exists a convex combination $\widetilde{\theta}_{T}$ of $\{ \bar{\theta}_t \}_{t=0}^{T}$ such that
    \[\resizebox{1\hsize}{!}{$
			\EE\left\| \widetilde{\theta}_{T} - \theta\i_{*} \right\|^2 \!\!=\! \frac{H^2}{(1-\gamma)^2} \!\cdot\! O\!\left(\frac{K^2 + \tau^{5}}{(1-\gamma)^2T^2} \!+\! \frac{\tau}{NT} \!+\! \frac{\Lambda^2(\epsilon_{p},\epsilon_{r})}{H^2}\right) \!=\! \frac{H^2}{(1-\gamma)^2} \!\cdot\! \widetilde{O}\!\left(\frac{1}{NT} \!+\! \frac{\Lambda^2(\epsilon_{p},\epsilon_{r})}{H^2} \right).
    $}\]
\end{corollary}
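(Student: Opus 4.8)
The plan is to convert the one-step contraction in \cref{thm} into a finite-time bound by a weighted-averaging (Abel summation) argument, and then to transfer the guarantee from the common reference point $\theta_*$ to each agent's optimum $\theta^{(i)}_*$. Write $e_t \coloneqq \EE\|\bar\theta_t - \theta_*\|^2$ and note that $\alpha_t w = 4/(1+t+a)$, so the choice of $a$ forces $\alpha_t \le \alpha_0 \le \min\{1/(8K),\, w/64\}$ and hence $1 - \alpha_t w \in (0,1)$ for every $t$. Rearranging \eqref{eq:thm} isolates the per-step progress:
\[
  \alpha_t w\, e_t \le e_t - e_{t+1} + \alpha_t C_1 \Lambda^2(\epsilon_p,\epsilon_r) + \alpha_t^2 \frac{C_2}{N} + \alpha_t^3 C_3 + \alpha_t^4 C_4 .
\]

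Next I would multiply this inequality by the quadratic weights $\omega_t = (1+t+a)^2$ and sum over $t = 0,\dots,T-1$. Writing $b_t = 1+t+a$, the progress coefficient becomes $\omega_t \alpha_t w = 4 b_t$, which grows linearly, while Abel summation of the telescoping part $\sum_t \omega_t (e_t - e_{t+1})$ produces the increments $\omega_t - \omega_{t-1} = 2b_t - 1$. Because the step-size constant is exactly $4$, the net coefficient left on each $e_t$ is $4b_t - (2b_t - 1) = 2b_t + 1 > 0$, so no error-sign issues arise and I retain a clean lower bound of the form $\sum_t 2 b_t e_t \le b_0^2 e_0 + (\text{weighted error sums})$; this is precisely where $cw = 4 \ge 2$ is needed. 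I would then bound the four weighted error sums using $\omega_t \alpha_t^k = 4^k/(w^k b_t^{k-2})$: the heterogeneity sum is $\Theta(T^2 C_1\Lambda^2/w)$, the noise sum is $\Theta(T C_2/(w^2 N))$, and the two higher-order sums are $O(C_3 \ln T/w^3)$ and $O(C_4/w^4)$, using $\sum_t 1/b_t = O(\ln T)$ and $\sum_t 1/b_t^2 = O(1)$.

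Defining $\widetilde\theta_T$ as the convex combination of $\{\bar\theta_t\}_{t=0}^{T-1}$ with weights proportional to $b_t$ and invoking Jensen's inequality for the convex map $\theta \mapsto \|\theta - \theta_*\|^2$, I obtain $\EE\|\widetilde\theta_T - \theta_*\|^2 \le \bigl(\sum_t 2b_t e_t\bigr)/\bigl(\sum_t 2b_t\bigr)$. Since $\sum_{t=0}^{T-1} 2b_t = \Theta(T^2)$, dividing through yields a transient term $\Theta((1+a)^2 e_0/T^2)$, a non-vanishing heterogeneity floor $\Theta(C_1 \Lambda^2/w)$, a linear-speedup noise term $\Theta(C_2/(w^2 N T))$, and negligible $O(\ln T/T^2)$ remainders. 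Finally I would pass from $\theta_*$ to $\theta^{(i)}_*$ via $\|\widetilde\theta_T - \theta^{(i)}_*\|^2 \le 2\|\widetilde\theta_T - \theta_*\|^2 + 2\|\theta_* - \theta^{(i)}_*\|^2$ together with the bound $\|\theta_* - \theta^{(i)}_*\| \le \Lambda(\epsilon_p,\epsilon_r)/w$ from \cref{thm}, absorbing the extra heterogeneity contribution into the floor term.

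The displayed $\widetilde O$-form then follows by substituting the scalings of the problem constants, and this bookkeeping is the main obstacle. I would use $w = \Theta(1-\gamma)$ (established in \cref{sec:use-lem}) and carefully track how $C_1,C_2,C_3,C_4$ and the offset $a$ depend on the mixing time $\tau$, the local-update count $K$, and $H$. Concretely, the constraint $\alpha_0 \le \min\{1/(8K), w/64\}$ makes $a$ scale like $\max\{K/w,\, \tau/w^2\}$-type quantities, so feeding $(1+a)^2 e_0$ into the $1/T^2$ term generates both the $K^2$ factor and the high mixing-time power $\tau^5$ (the latter also drawing on the $\tau$-dependence of $C_3,C_4$), while $C_2 = \Theta(H^2 \tau)$ produces the $H^2\tau/((1-\gamma)^2 N T)$ noise term and $C_1\Lambda^2/w$ produces the $\Lambda^2/(1-\gamma)^2$ floor. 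Collecting these contributions and absorbing the logarithmic factor into $\widetilde O$ gives the claim. The two delicate points are keeping the net iterate coefficients positive under the quadratic weighting — which is exactly what pins down the constant $4$ in the step-size — and correctly propagating the mixing-time dependence through the constants to land the stated $\tau$ powers.
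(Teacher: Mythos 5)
Your proposal is correct and follows essentially the same route as the paper's own proof: the paper likewise turns the recursion of \cref{thm} into a telescoping sum (using linear weights $c_t = a+t$, whose product with $1/(\alpha_t w) = (a+t+1)/4$ yields exactly the quadratic-weight cancellation your Abel summation makes explicit), defines $\widetilde{\theta}_T$ as the $c_t$-weighted convex combination with Jensen's inequality, bounds the four weighted error sums as $\Theta(T^2 C_1\Lambda^2/w)$, $\Theta(T C_2/(w^2 N))$, $O(C_3\log T/w^3)$, and $O(C_4/w^4)$, and passes to $\theta_*^{(i)}$ via $\EE\|\widetilde{\theta}_T-\theta_*^{(i)}\|^2 \le 2\EE\|\widetilde{\theta}_T-\theta_*\|^2 + 2\Lambda^2/w^2$ using \cref{thm:fix-drift}. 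Your constant bookkeeping also mirrors \cref{sec:const-dep} ($a \lesssim K/w^2$ producing the $K^2$ transient, the $\tau$-powers of $C_3,C_4$ producing $\tau^5$, $C_2 = \widetilde{O}(H^2\tau)$, and $1/w = O(1/(1-\gamma))$), including the correct observation that the step-size numerator $4 \ge 2$ is what keeps the net iterate coefficients positive.
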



We now discuss the implications of the above theoretical guarantees.

\vspace{-0.5em}
\paragraph{Convergence region.} From \cref{cor:err-con}, with a constant step-size $\alpha$, \fedsarsa exponentially converges to a ball around the optimal parameter $\theta^*_i$ of each agent. The radius of this ball is governed by two objects: (i) the level of environmental heterogeneity; (ii) the inherent noise in our model. In the absence of heterogeneity, the above guarantee is precisely what one obtains for stochastic approximation algorithms with a constant step-size~\citep{zou2019Finitesampleanalysis, srikant2019Finitetimeerror, bhandari2018FiniteTime}. The presence of heterogeneity manifests itself in the $O(\Lambda(\epsilon_p,\epsilon_r) / (1-\gamma)) = O(\epsilon_p + \epsilon_r)$ term in the convergence region radius. Since the optimal parameters of the agents may not be identical (under heterogeneity), such a term is generally unavoidable. 

\vspace{-0.4em}
\paragraph{Linear speedup.} Turning our attention to Corollary~\ref{cor:err-dec} (where we use a decaying step-size), let us first consider the homogeneous case where $\epsilon_p=\epsilon_r=0$. When $T \geq N$, the $O(1/(NT))$ rate we obtain in this case is the best one can hope for statistically: with $T$ data samples per agent and $N$ agents, one can reduce the variance of our noise model by at most $NT$. Thus, for a homogeneous setting, our rate is optimal, and clearly demonstrates an $N$-fold linear speedup over the single-agent sample-complexity of $O(1/T)$ in~\cite{zou2019Finitesampleanalysis}. In this context, \emph{our work provides the first such bound for a federated on-policy RL algorithm}, and complements results of a similar flavor for the off-policy setting in~\cite{khodadadian2022FederatedReinforcement}. When the agents' MDPs differ, via collaboration, each agent is still able to converge at the \textit{expedited} rate of $O(1/NT)$ to a ball of radius $O(\epsilon_p + \epsilon_r)$ around the optimal parameter of each agent. The implication of this result is simple: by participating in federation, each agent can \textit{quickly} (i.e., with an $N$-fold speedup) find an $O(\epsilon_p + \epsilon_r)$-approximate solution of its optimal parameter; using such an approximate solution as an initial condition, the agent can then fine-tune (personalize) -  based on its own data - to converge to its own optimal parameter \textit{exactly} (in mean-squared sense). \emph{This is the first  result of its kind for federated planning, and complements the plethora of analogous results in federated optimization}~\citep{sahu, khaled1, li, koloskova, woodworth2020minibatch, fedsplit, FedNova, mitraNIPs, proxskip}. Arriving at the above result, however, poses significant challenges relative to prior art. We now provide insights into these challenges and our strategies to overcome them.

\subsection{Proof Sketch: Error Decomposition} 
Our main approach of proving \cref{thm} is to leverage the contraction property of the Bellman equation \eqref{eq:bellman} to identify a primary ``descent direction.'' \cref{alg} then updates the parameters along this direction with multi-sourced stochastic bias.
We provide an informal mean squared error decomposition (formalized in \cref{sec:err-dec} ) to illustrate this idea:
\begin{align*}
	\EE\left\| \bar{\theta}_{t+1}-\theta_{*} \right\|^2 \le& \ \text{recursion} + \text{descent direction} + \text{gradient heterogeneity} + \text{client drift} \\ &+ \text{gradient progress} + \text{mixing} + \text{backtracking} + \text{gradient variance}
.\end{align*}
Some of these terms commonly appears in an FRL analysis: the descent direction is given by the contraction property of the Bellman equation \cref{eq:bellman} when the policy improvement operator is sufficiently smooth (\cref{sec:des-dir}); the client drift represents the deviation of agents' local parameters from the central parameter, which is controlled by the step-size and synchronization period (\cref{sec:drift}); the mixing property (\cref{asmp:steady}) allows a stationary trajectory to rapidly reach to a steady distribution (\cref{sec:mix}). We highlight some unique terms in our analysis.


\emph{Gradient heterogeneity.}
This term accounts for the local update heterogeneity, which scales with the environmental heterogeneity. The effect of time-varying policies coupled with multiple local updates accentuates the effect of such heterogeneity. Thus, particular care is needed to ensure that the bias introduced by heterogeneity does not compound over iterations (\cref{sec:grad-hetero}).


\emph{Backtracking.}
\fedsarsa possesses nonstationary transition kernels.
To deal with this challenge and use the mixing property of stationary MDPs, we \textit{virtually backtrack} a period $\tau$: starting at time step $t\!-\!\tau$, we fix the policy $\Gamma(\theta\ita)$ for agent $i$, and consider a subsequent virtual trajectory following this fixed policy.
The divergence between the updates computed on real and virtual observations is controlled by the step-size $\alpha_t$ and backtracking period $\tau$ (\cref{sec:stat}).


\emph{Gradient progress.} 
Note that the steady distribution in the mixing term corresponds to an \emph{old} policy. Since the backtracking period is small, the discrepancy (progress) between this old policy and the current one is small (\cref{sec:grad-prog}).

\emph{Gradient variance.} While one can directly use the projection radius to bound the semi-gradient variance, such an approach would fall short of establishing the desired linear speedup effect. To achieve the latter, we need a more refined argument that shows how one can obtain a ``variance-reduction" effect by combining data generated from non-identical time-varying Markov chains (\cref{sec:grad-norm}).

\section{Simulations} \label{sec:exp}
We create a finite state space of size $|\mathcal{S}| = 100$, an action space of $|\mathcal{A}|=100$, a feature space of dimension $d = 25$, and set $\gamma = 0.2$ and $R = 10$. The actions determine the transition matrices by shifting the columns of a reference matrix.
The synchronization period is set to $K=10$, and the step-size of $\alpha_0=0.01$.
For the full experiment setup, please refer to \cref{sec:apx-exp}. In \cref{fig:sarsa}, we plot the mean squared error averaged over ten runs for different heterogeneity levels and numbers of agents.
The simulation results are consistent with \cref{cor:err-con} and demonstrate the robustness of our method towards environmental heterogeneity.
Additional simulations, including federated TD(0) and on-policy federated Q-learning covered by our algorithm, can be found in \cref{sec:apx-exp}.

\begin{figure}[ht]
  \centering
  \begin{subfigure}{0.325\textwidth}
      \centering
\includegraphics[width=\textwidth]{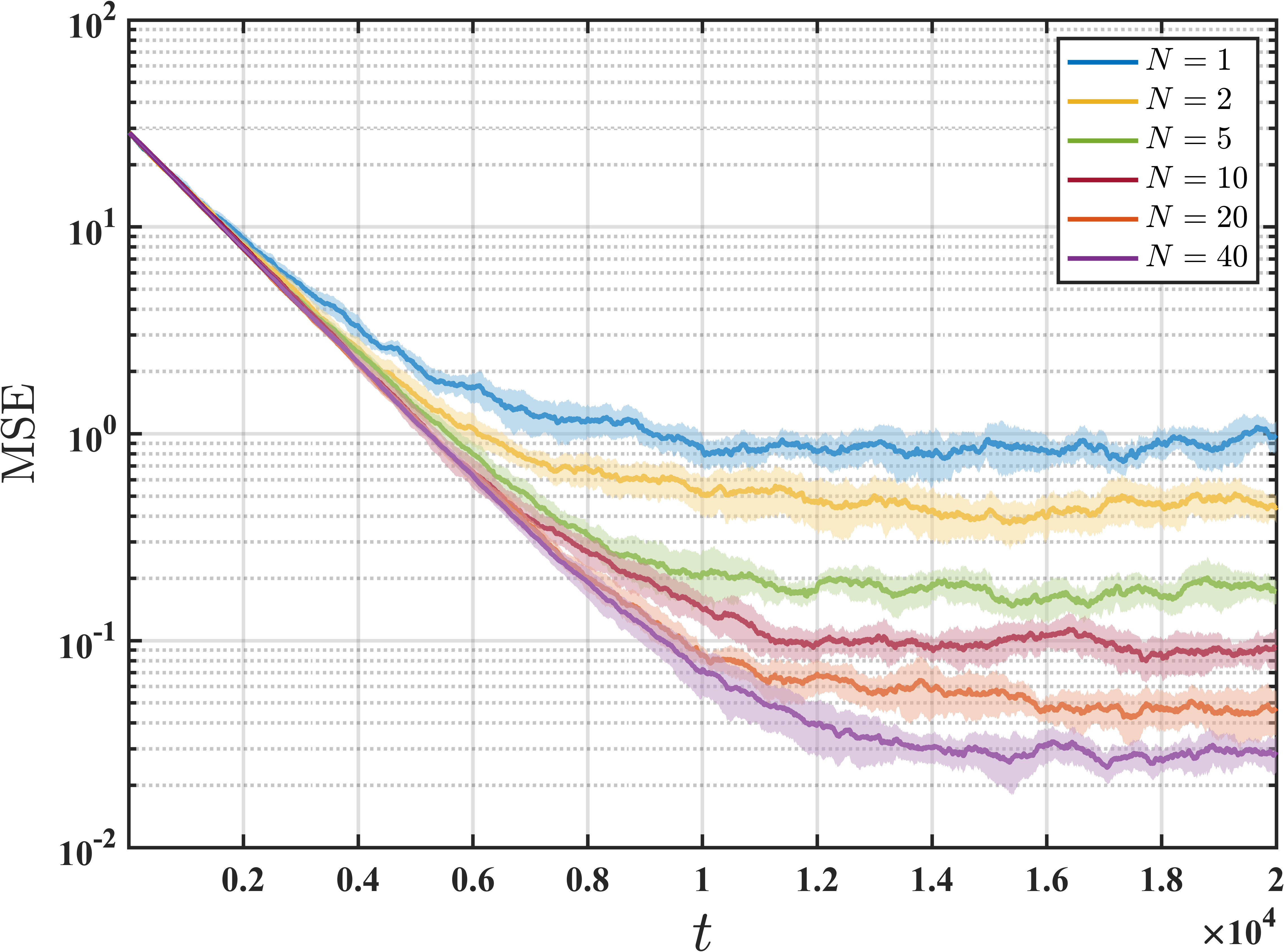}\vspace{-5pt}
      \caption{$\epsilon_p=\epsilon_r = 0$}
  \end{subfigure}
  \hfill
  %
  %
  %
  \begin{subfigure}[b]{0.325\textwidth}
      \centering
      \includegraphics[width=\textwidth]{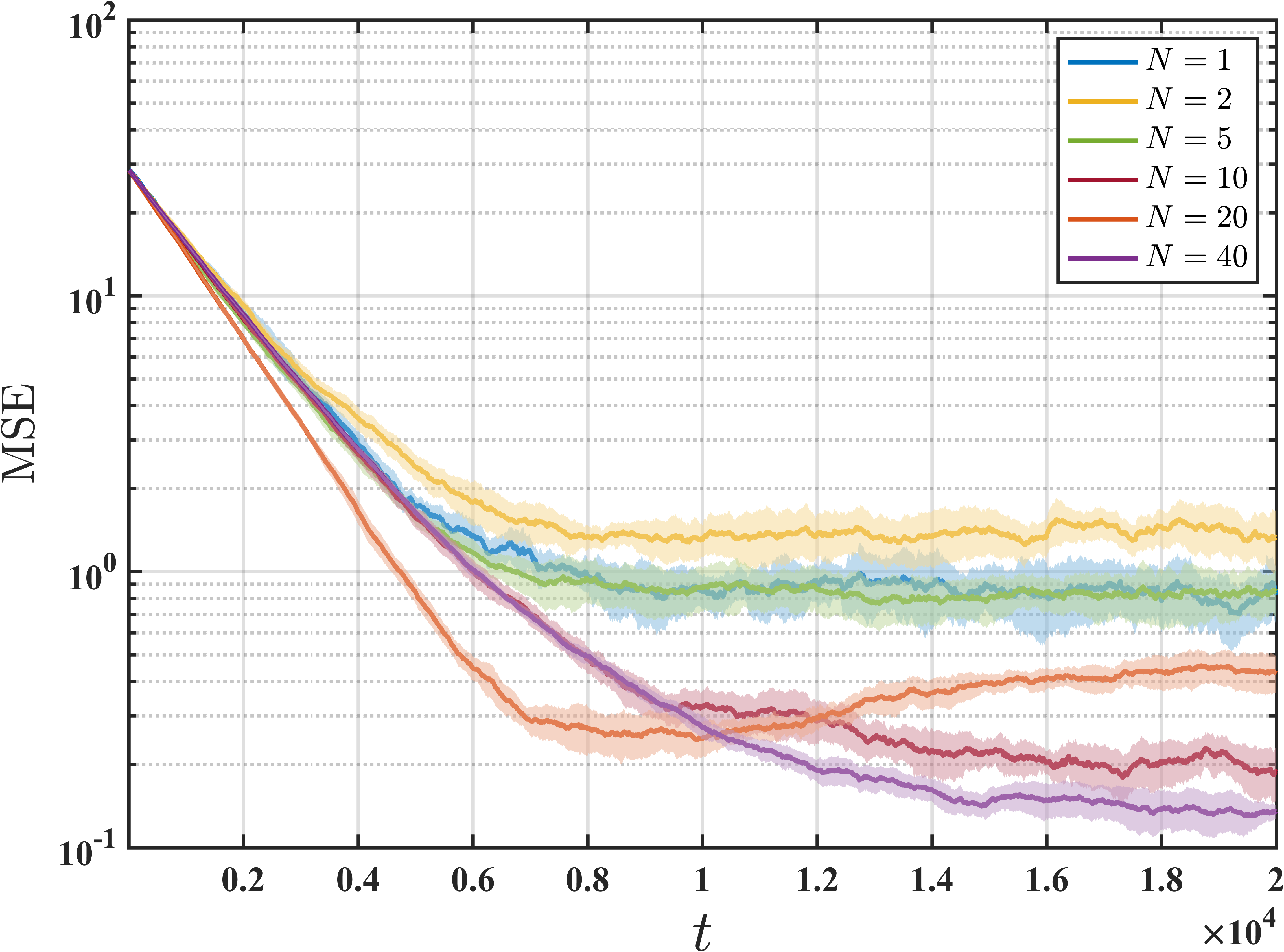}\vspace{-5pt}
\caption{$\epsilon_p=\epsilon_r = 1$}
  \end{subfigure}
  \hfill
  \begin{subfigure}[b]{0.325\textwidth}
      \centering
      \includegraphics[width=\textwidth]{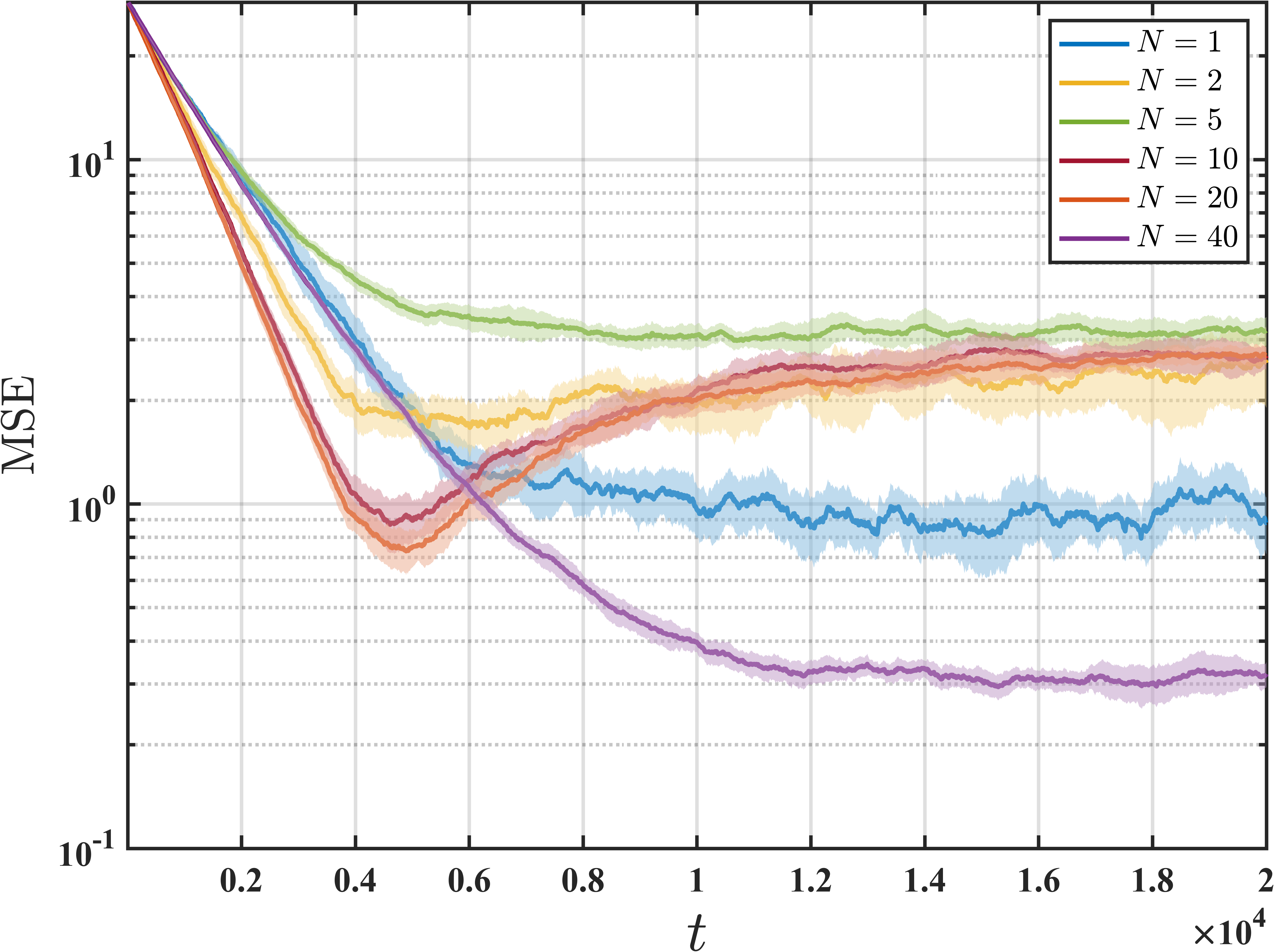}\vspace{-5pt}

      \caption{$\epsilon_p=\epsilon_r = 2$}
  \end{subfigure}
  \caption{Performance of \fedsarsa under Markovian sampling.}
  \label{fig:sarsa}
\end{figure}


\section{Conclusion}
We proposed a straightforward yet powerful on-policy federated reinforcement learning method: \fedsarsa.
Our finite-time analysis of \fedsarsa provides the first theoretically conformation of the statement:
an agent can expedite the process of learning its own near-optimal policy by leveraging information from other agents with potentially different environments.

\subsubsection*{Acknowledgments}
JA is partially supported by Columbia Data Science Institute and NSF grants ECCS 2144634 \& 2231350.

\bibliography{fl.bib,rl.bib}

\newpage
\doparttoc
\faketableofcontents
\appendix
\part{Appendix} 
\parttoc
\renewcommand{\thesection}{\Alph{section}}
\newpage

\section{Organization of Appendix}
	The appendix is organized as follows.
	First, we present an additional comparison of our results with other finite-time results in \cref{sec:comp}, and additional simulation results in \cref{sec:apx-exp}.
	In \cref{sec:cmdp,sec:nota}, we introduce the concept of central MDP and some notation that will assist our analysis.
	In \cref{sec:const}, to aid readability, we list all the constants that appear in the paper for readers' convenience.
	In \cref{sec:aux}, we provide several preliminary lemmas that will be used throughout the analysis.
	Before presenting lemmas for \cref{thm}, we first prove \cref{thm:fix-drift} in \cref{sec:thm-drift-pf}, for it will be used by later lemmas.
	In \cref{sec:use-lem}, we first decompose the mean squared error and then present seven lemmas, each bounding one term in the decomposition.
	Then, we provide the proof of \cref{thm} and \cref{cor:err-con,cor:err-dec} in \cref{sec:thm-pf} and \cref{sec:cor-pf}, respectively.
	To provide insights into our results, we discuss the dependencies of constants in \cref{sec:const-dep}.
	Finally, we reduce \fedsarsa to the tabular case in \cref{sec:tab}, demonstrating the flexibility and efficiency of our algorithm.

\section{Finite-Time Results Comparison} \label{sec:comp}

A comparison of finite-time results on temporal difference methods is provided in \cref{tb:comp-finite}.

\renewcommand*{\thefootnote}{\fnsymbol{footnote}}

\begin{table}[ht]
  \caption{Comparison of finite-time results.
    Results with green background are first provided by our work; results with blue background are covered by our work.
    ``Linear'' indicates the usage of linear function approximation, and ``Hetero'' indicates the presence of environmental heterogeneity.
    All constants are defined in \cref{sec:anlys,sec:use-lem}.
    We show the squared $\ell_{2}$ error for linear settings and squared $\ell_{\infty}$ error for tabular settings. Asymptotic notations are omitted for simplicity.}
  \label{tb:comp-finite}
  \begin{threeparttable}
    \centering
    \centerline{\resizebox{1.2\textwidth}{!}{
        \begin{booktabs}{
          colspec = {ccccccc},
          vline{2-7},
          leftsep = 1pt,
          rightsep = 1pt,
          row{1-3} = {abovesep=0pt},
          stretch = 0,
          cell{1}{2} = {c=4}{c}, 
          cell{1}{6} = {c=2}{c},
          cell{2}{2} = {c=2}{c}, cell{2}{4} = {c=2}{c},
          cell{2}{6} = {r=2}{c}, cell{2}{7} = {r=2}{c},
          cell{4}{1-3,5-6} = {c,cyan!30},
          cell{6}{6} = {c,cyan!30},
          cell{6}{1-5,7} = {c,green!30},
          cell{4}{4,7} = {c,green!30},
            }
          \toprule
          & Federated &&&& Single-Agent & \\\cmidrule[lr]{2-5}\cmidrule[lr]{6-7}
          & Linear && Tabular && Linear & Tabular \\\cmidrule[lr]{2-3}\cmidrule[lr]{4-5}
          & Hetero & Homog & Hetero & Homog &  & \\\midrule
          TD Learning &
          \SetCell{bg=cyan!30} $\frac{H^2}{(1-\gamma)^2NT} + \frac{\Lambda^2}{(1-\gamma)}$ \tnote{\textdagger} &
          $\frac{H^2}{(1-\gamma)^2NT}$ \tnote{\textdaggerdbl} &
          $\frac{SA}{\lambda^2(1-\gamma)^4NT} + \frac{\Lambda^2}{\lambda^2(1-\gamma)^2}$ \tnote{**} &
          $\frac{S^2}{\lambda^{5}(1-\gamma)^{9}NT}$ \tnote{\textdaggerdbl} &
          $\frac{H^2}{(1-\gamma)^2 T}$ \tnote{\P} &
          $\frac{SA}{\lambda^2(1-\gamma)^4T}$ \tnote{**}\\[1ex]
          Q-Learning & -- &
          $\frac{H^2}{(1-\gamma)^2NT}$ \tnote{\textdaggerdbl} &
          $\frac{1}{(1-\gamma)^{6}T^2} + \frac{\Lambda^2}{(1-\gamma)^{4}}$ \tnote{\SSS} &
          $\frac{S^2}{\lambda^{5}(1-\gamma)^{9}NT}$ \tnote{\textdaggerdbl} &
          $\frac{H^2}{(1-\gamma)^2 T}$ \tnote{\P} &
          $\frac{SA}{\lambda(1-\gamma)^5T}$ \tnote{\textbardbl} \\[1ex]
          SARSA &
          $\frac{H^2}{(1-\gamma)^2NT} + \frac{\Lambda^2}{(1-\gamma)^2}$ \tnote{*} &
          $\frac{H^2}{(1-\gamma)^2NT}$ \tnote{*} &
          $\frac{SA}{\lambda^2(1-\gamma)^4NT} + \frac{\Lambda^2}{\lambda^2(1-\gamma)^2}$ \tnote{**} &
          $\frac{SA}{\lambda^2(1-\gamma)^4NT}$ \tnote{**}&
          $\frac{H^2}{(1-\gamma)^2 T}$ \tnote{\#} &
          $\frac{SA}{\lambda^2(1-\gamma)^4T} \tnote{**}$
          \\[1ex]
          \bottomrule
        \end{booktabs}}}
    \begin{tablenotes} \small
      \item[\textdagger] \citep{wang2023FederatedTemporal}
      \item[\textdaggerdbl] \citep{khodadadian2022FederatedReinforcement}
      \item[\SSS] \citep{jin2022Federatedreinforcement}
      \item[\P] \citep{bhandari2018FiniteTime}

      \item[\textbardbl] \citep{qu2020Finitetimeanalysis}
      \item[\#] \citep{zou2019Finitesampleanalysis}
      \item[*] \cref{cor:err-dec}
      \item[**] \cref{sec:tab}
    \end{tablenotes}
  \end{threeparttable}
\end{table}

\renewcommand*{\thefootnote}{\arabic{footnote}}
\setcounter{footnote}{0}

\section{Additional Simulations} \label{sec:apx-exp}

\subsection{Additional Simulations for \fedsarsa} \label{sec:exp-sarsa}

We first restate the simulation setup in more detail.
We index a finite state space by $\mathcal{S} = [100]$ and an action space by $\mathcal{A} = [100]$, where the actions determine the transition matrices by shifting the columns of a reference matrix \( P_0 \):
\[
  P_{a} = \texttt{circ\_shift}(P_0, \texttt{columns} = a),
\]
where $\texttt{circ\_shift}$ denotes a circular shift operator.
We construct the feature extractor as
\[
    \phi(s,a) = e_{(s\ \mathrm{mod}\ d_1) \cdot d_2 + a\ \mathrm{mod}\ d_2} \in \R^{d_1 \times d_2}
,\]
where \( e_i \) is the indicator vector with the \( i \)-th entry being \( 1 \) and the rest being \( 0 \).
We set \( d_1 = 5 \) and \( d_2 = 5 \).
For the policy improvement operator, we employ the softmax function with a temperature of $100$:
\[
    \pi_{\theta}(a|s) = \frac{\exp(\theta^{T}\phi(s,a)/100)}{\sum_{a'\in \mathcal{A}}\exp(\theta^{T}\phi(s,a')/100)}
.\] 
Other parameters are set as follows: the reward cap \( R = 10 \), the discount factor \( \gamma = 0.2 \), the synchronization period \( K = 10 \), and the step-size \( \alpha_0 = 0.01 \).

To construct heterogeneous MDPs,
we first generate a nominal MDP $\mathcal{M}_{1}$ and obtain the remaining MDPs by adding the perturbations to $\mathcal{M}_1$. 
Unlike in \textsf{FedTD} \citep{wang2023FederatedTemporal}, where the optimal parameters can be obtained by solving the linear projected Bellman equation directly, here we get a \textit{reference} parameter \( \theta^{(1)}_{\mathrm{ref}} \) by running a single-agent linear SARSA on $\mathcal{M}_1$ with decaying step-size. As suggested in \cref{cor:err-dec}, the reference parameter converges to the optimal parameter corresponding to $\mathcal{M}_1$.
Then, we calculate the mean squared error with respect to the reference parameter: $\left\|\bar{\theta}_t - \theta^{(1)}_{\mathrm{ref}}\right\|_{2}^2$.
All of our simulations are averaged over ten runs and all graphs are plotted with 95\% confidence region.

In \cref{fig:sarsa}, both kernel heterogeneity and reward heterogeneity are set at the same level.
In \cref{fig:sarsa-ee-p}, we fix the kernel heterogeneity as \( 1.0 \) and vary the reward heterogeneity.
In contrast, we fix the reward heterogeneity as zero and vary the kernel heterogeneity in \cref{fig:sarsa-ee-r}.
Again, these results affirm the robustness of our method towards environmental heterogeneity.
Furthermore, they seemingly suggest that the algorithm is more sensitive to reward heterogeneity than kernel heterogeneity.
However, it is important to note that \( \epsilon_p \) and \( \epsilon_r \) represent upper bounds and may be much larger than the actual heterogeneity level.

Further exploring the effect of heterogeneity on federated collaboration, \cref{fig:sarsa-np,fig:sarsa-nr} illustrate the effect of different reward and kernel heterogeneity levels on the performance of \fedsarsa respectively.
Generally, higher levels of heterogeneity result in larger mean squared error, which aligns with our theoretical results in \cref{sec:anlys}.

\begin{figure}[ht]
  \centering
  \begin{subfigure}{0.325\textwidth}
      \centering
      \includegraphics[width=\textwidth]{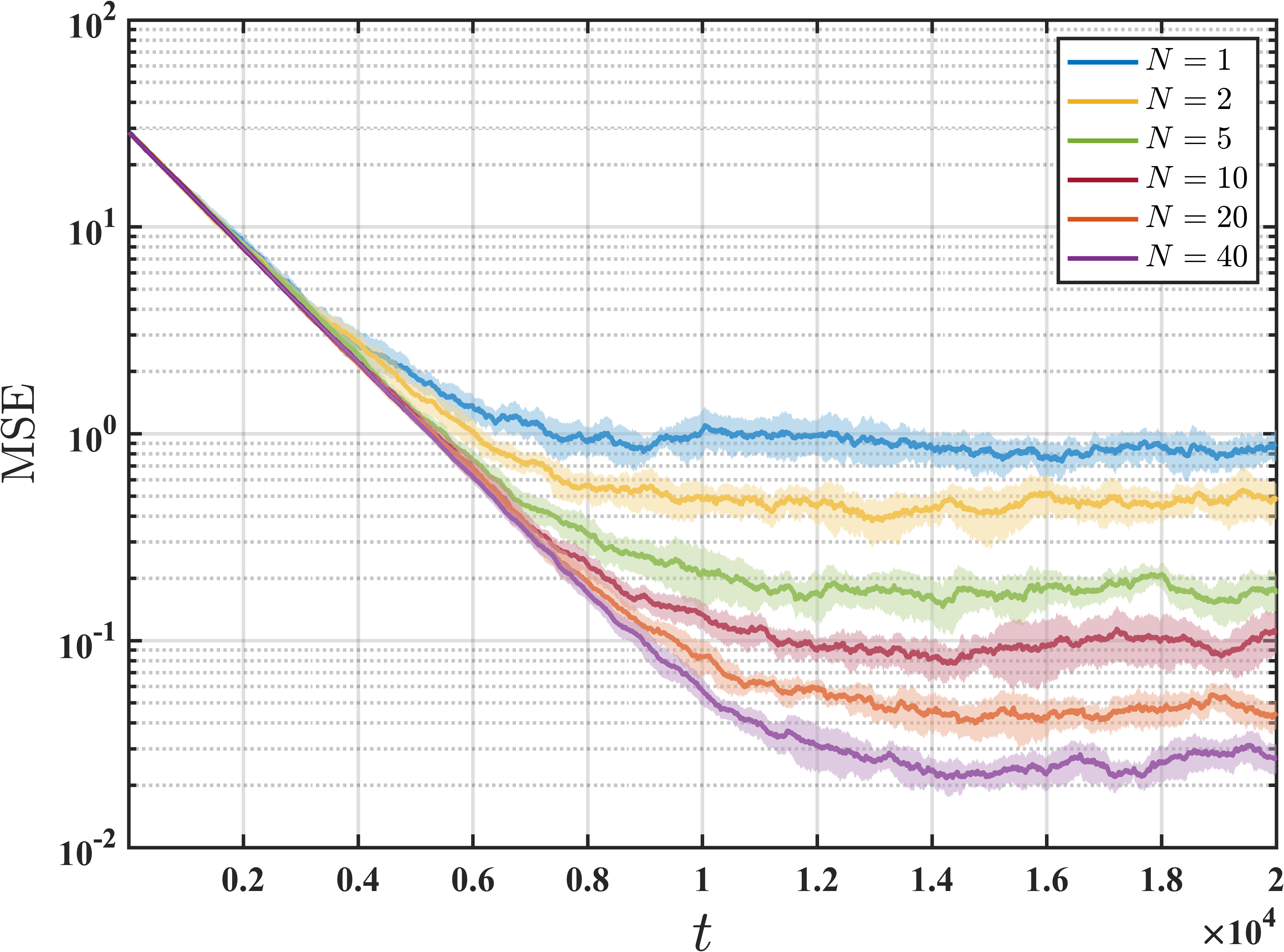}\vspace{-5pt}
      \caption{$\epsilon_r = 0$}
  \end{subfigure}
  \hfill
  \begin{subfigure}[b]{0.325\textwidth}
      \centering
      \includegraphics[width=\textwidth]{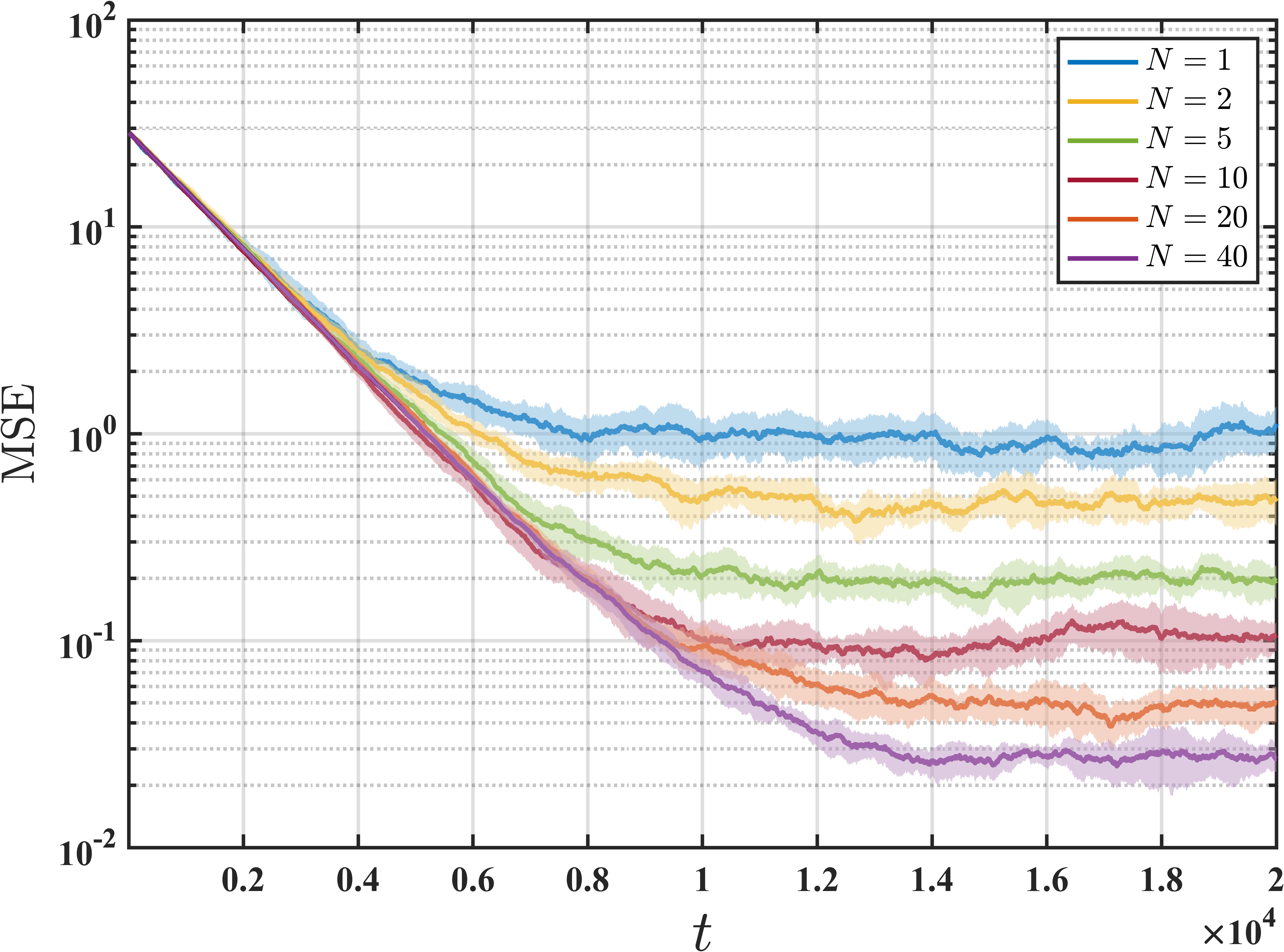}\vspace{-5pt}

      \caption{$\epsilon_r = 0.1$}
  \end{subfigure}
  \hfill
  \begin{subfigure}[b]{0.325\textwidth}
      \centering
      \includegraphics[width=\textwidth]{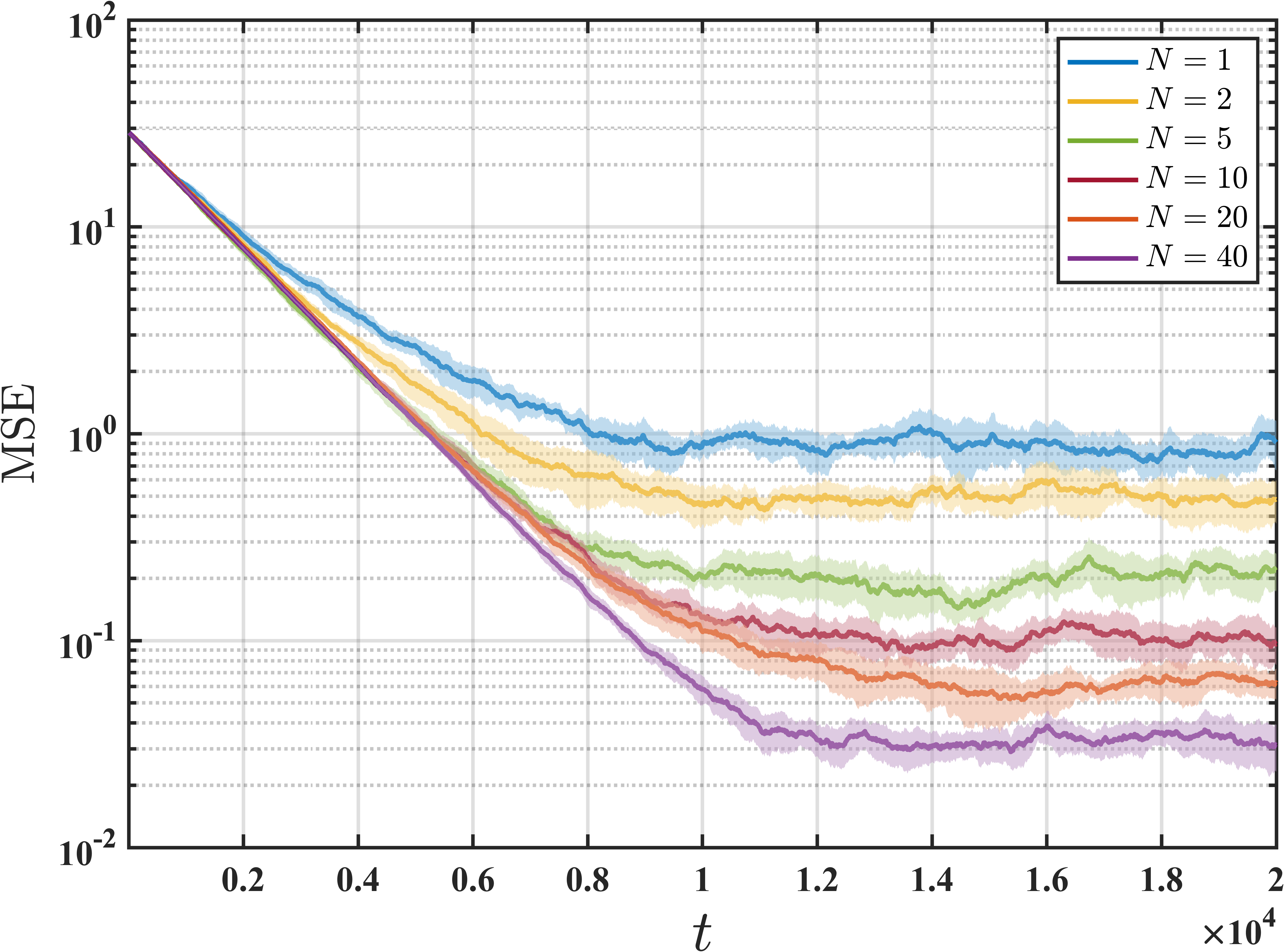}\vspace{-5pt}

      \caption{$\epsilon_r = 0.2$}
  \end{subfigure}
  \begin{subfigure}{0.325\textwidth}
      \centering
      \includegraphics[width=\textwidth]{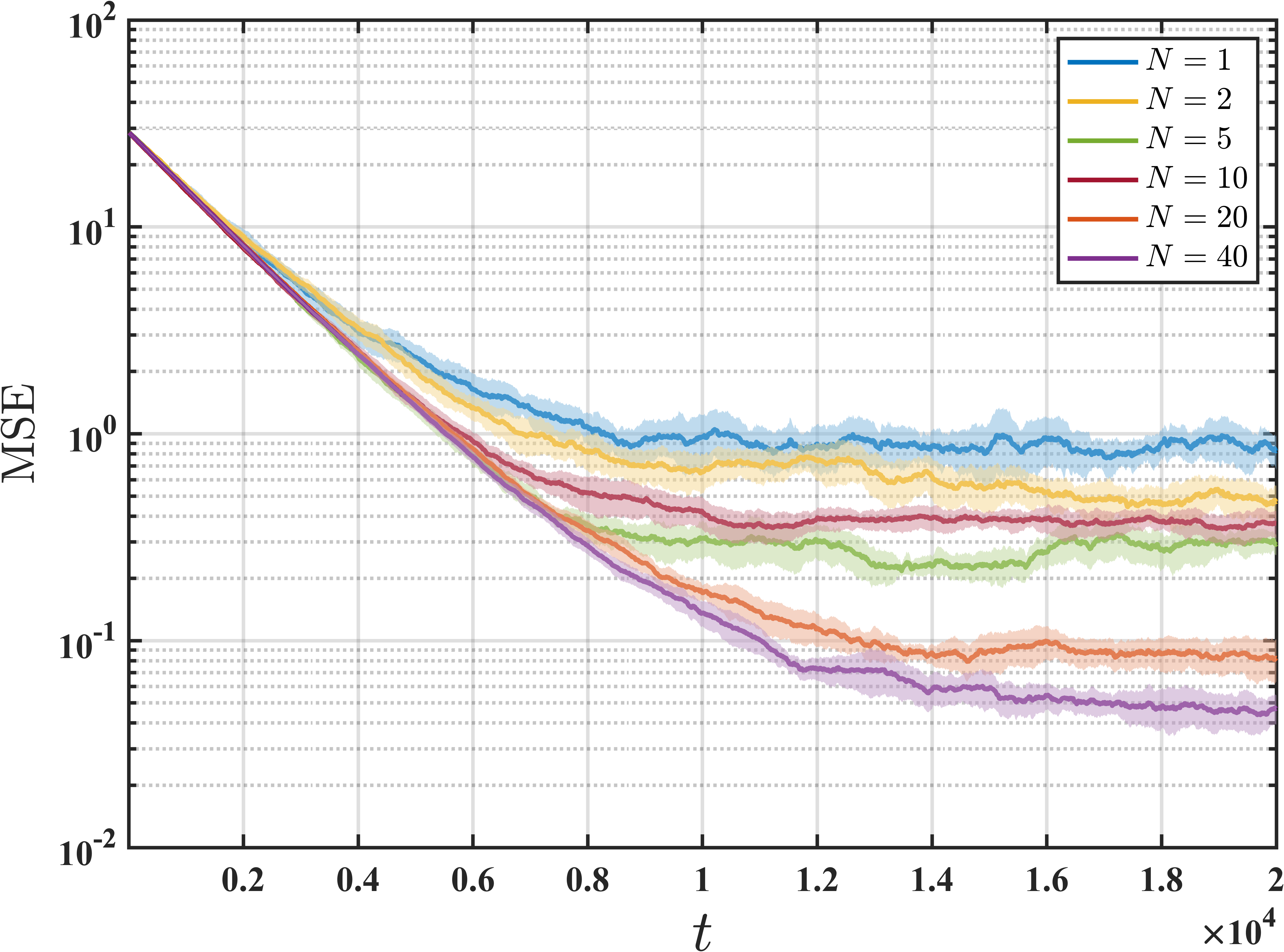}\vspace{-5pt}

      \caption{$\epsilon_r = 0.5$}
  \end{subfigure}
  \hfill
  \begin{subfigure}[b]{0.325\textwidth}
      \centering
      \includegraphics[width=\textwidth]{fig/e/1.0_1.0.png}\vspace{-5pt}

      \caption{$\epsilon_r = 1$}
  \end{subfigure}
  \hfill
  \begin{subfigure}[b]{0.325\textwidth}
      \centering
      \includegraphics[width=\textwidth]{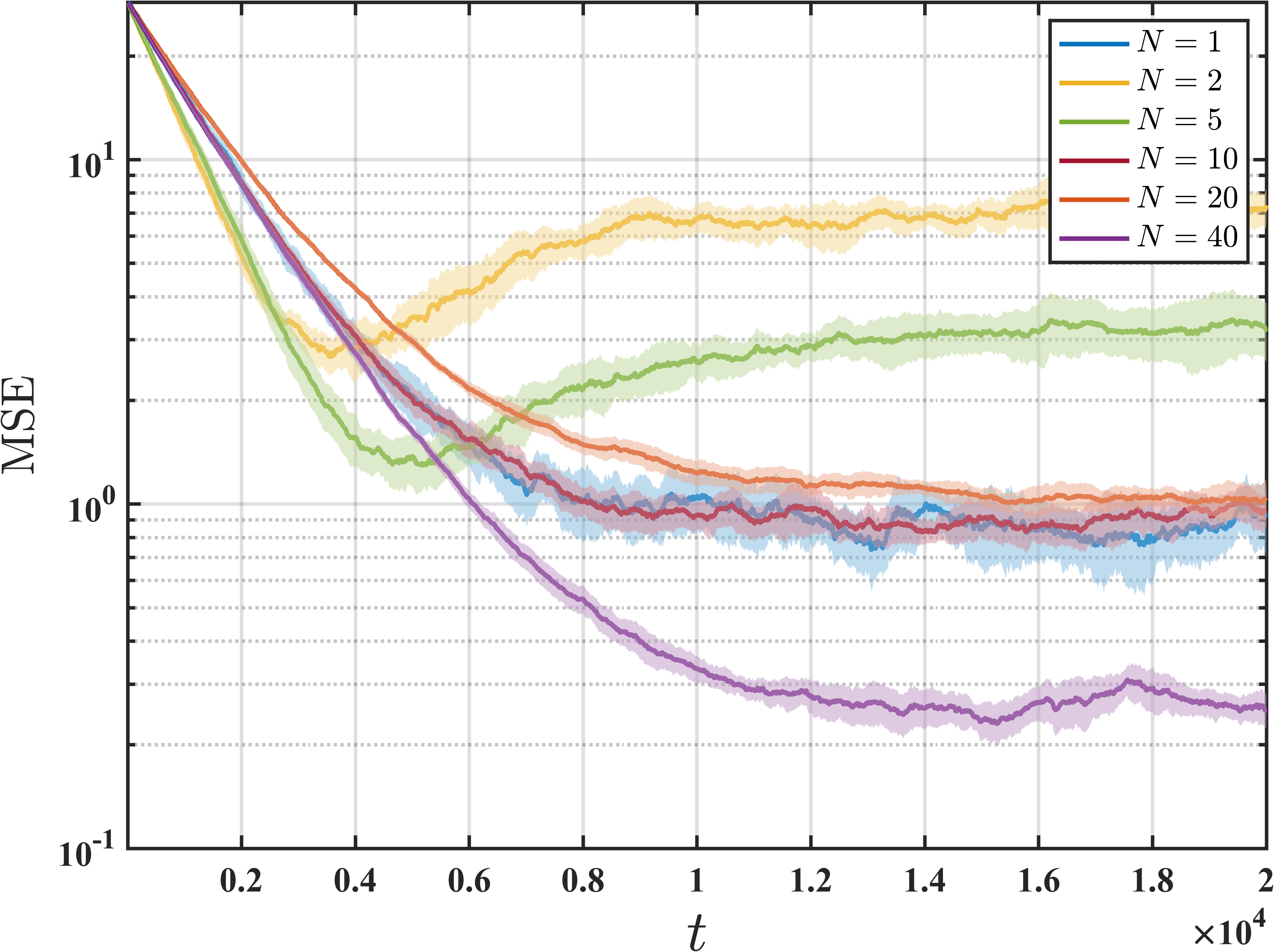}\vspace{-5pt}

      \caption{$\epsilon_r = 2$}
  \end{subfigure}
  \caption{Performance of \fedsarsa under Markovian sampling for varying reward heterogeneity and numbers of agents with fixed kernel heterogeneity (\( \epsilon_p=1 \)).}
  \label{fig:sarsa-ee-p}
\end{figure}

\begin{figure}[ht]
  \centering
  \begin{subfigure}{0.325\textwidth}
      \centering
      \includegraphics[width=\textwidth]{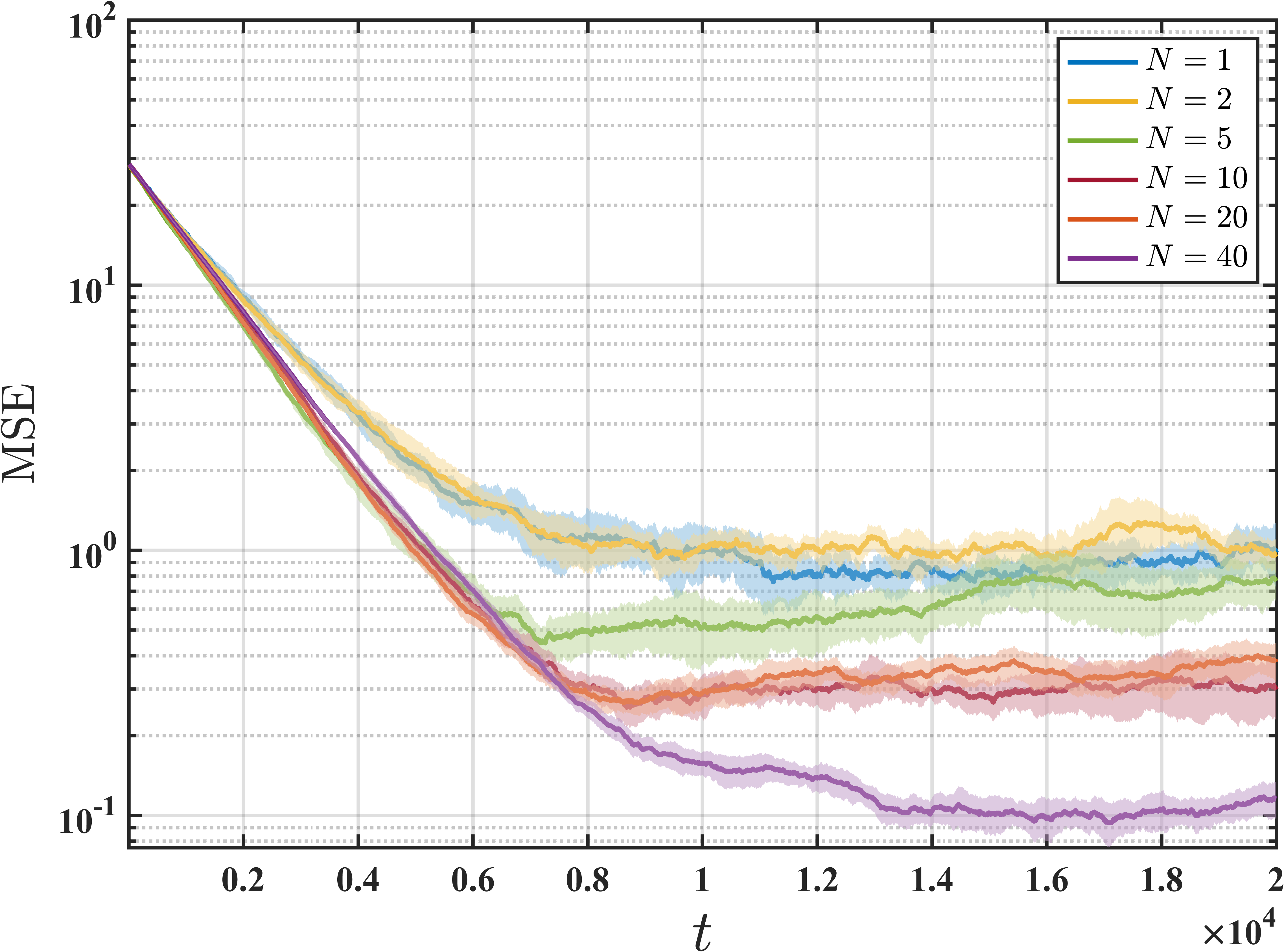}\vspace{-5pt}
      \caption{$\epsilon_p = 0$}
  \end{subfigure}
  \hfill
  \begin{subfigure}[b]{0.325\textwidth}
      \centering
      \includegraphics[width=\textwidth]{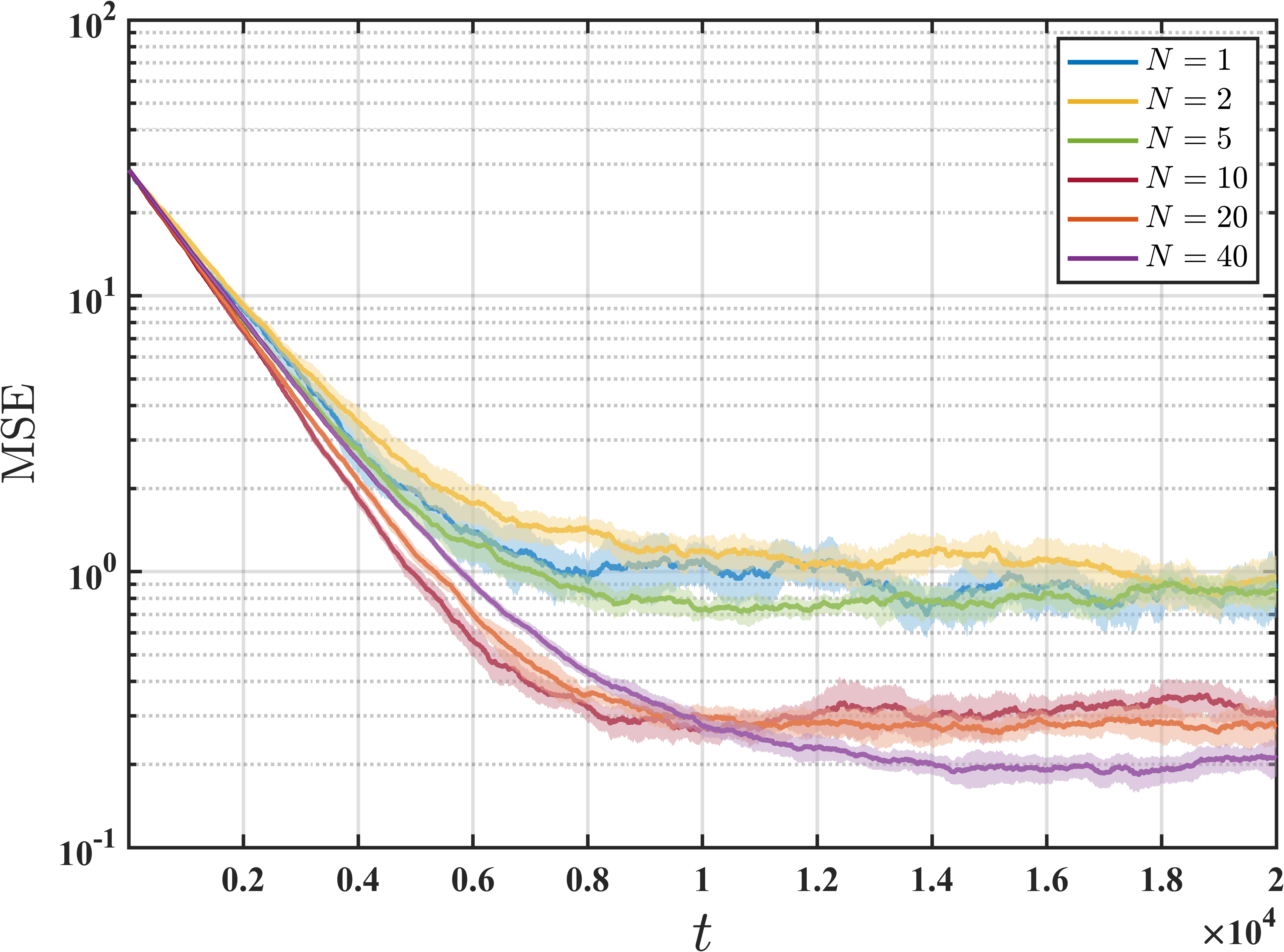}\vspace{-5pt}

      \caption{$\epsilon_p = 0.1$}
  \end{subfigure}
  \hfill
  \begin{subfigure}[b]{0.325\textwidth}
      \centering
      \includegraphics[width=\textwidth]{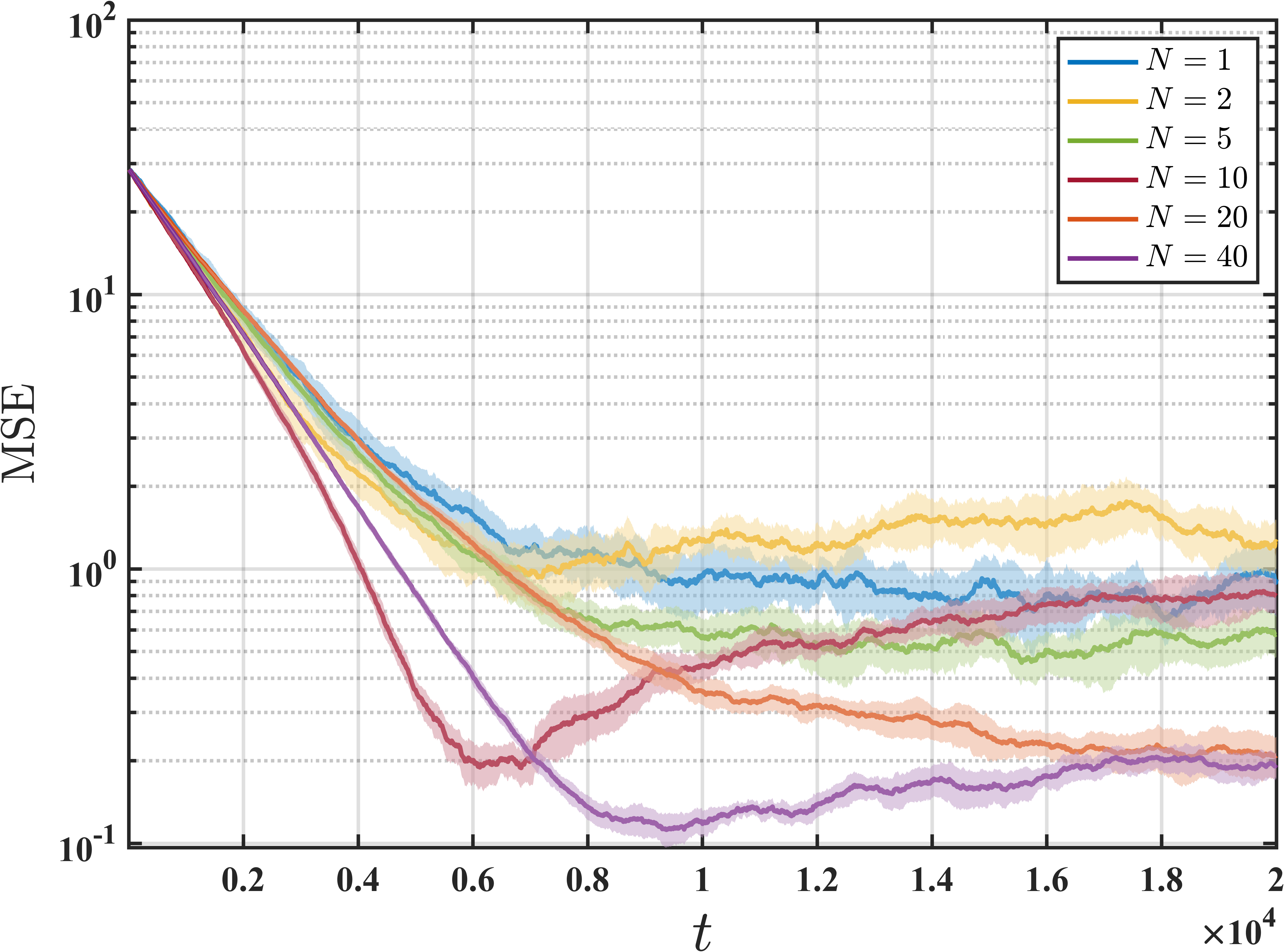}\vspace{-5pt}

      \caption{$\epsilon_p = 0.2$}
  \end{subfigure}
  \begin{subfigure}{0.325\textwidth}
      \centering
      \includegraphics[width=\textwidth]{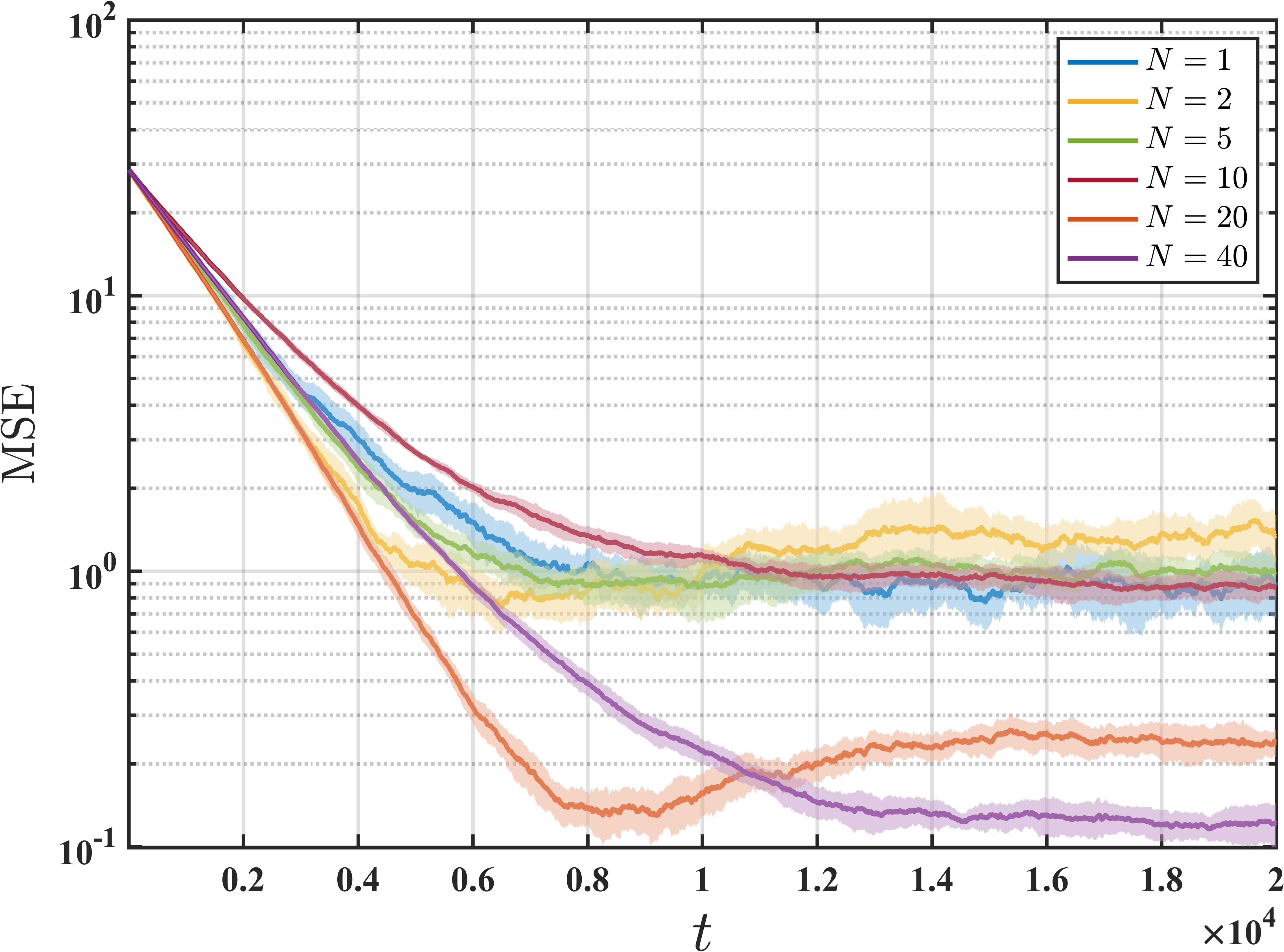}\vspace{-5pt}

      \caption{$\epsilon_p = 0.5$}
  \end{subfigure}
  \hfill
  \begin{subfigure}[b]{0.325\textwidth}
      \centering
      \includegraphics[width=\textwidth]{fig/e/1.0_1.0.png}\vspace{-5pt}

      \caption{$\epsilon_p = 1$}
  \end{subfigure}
  \hfill
  \begin{subfigure}[b]{0.325\textwidth}
      \centering
      \includegraphics[width=\textwidth]{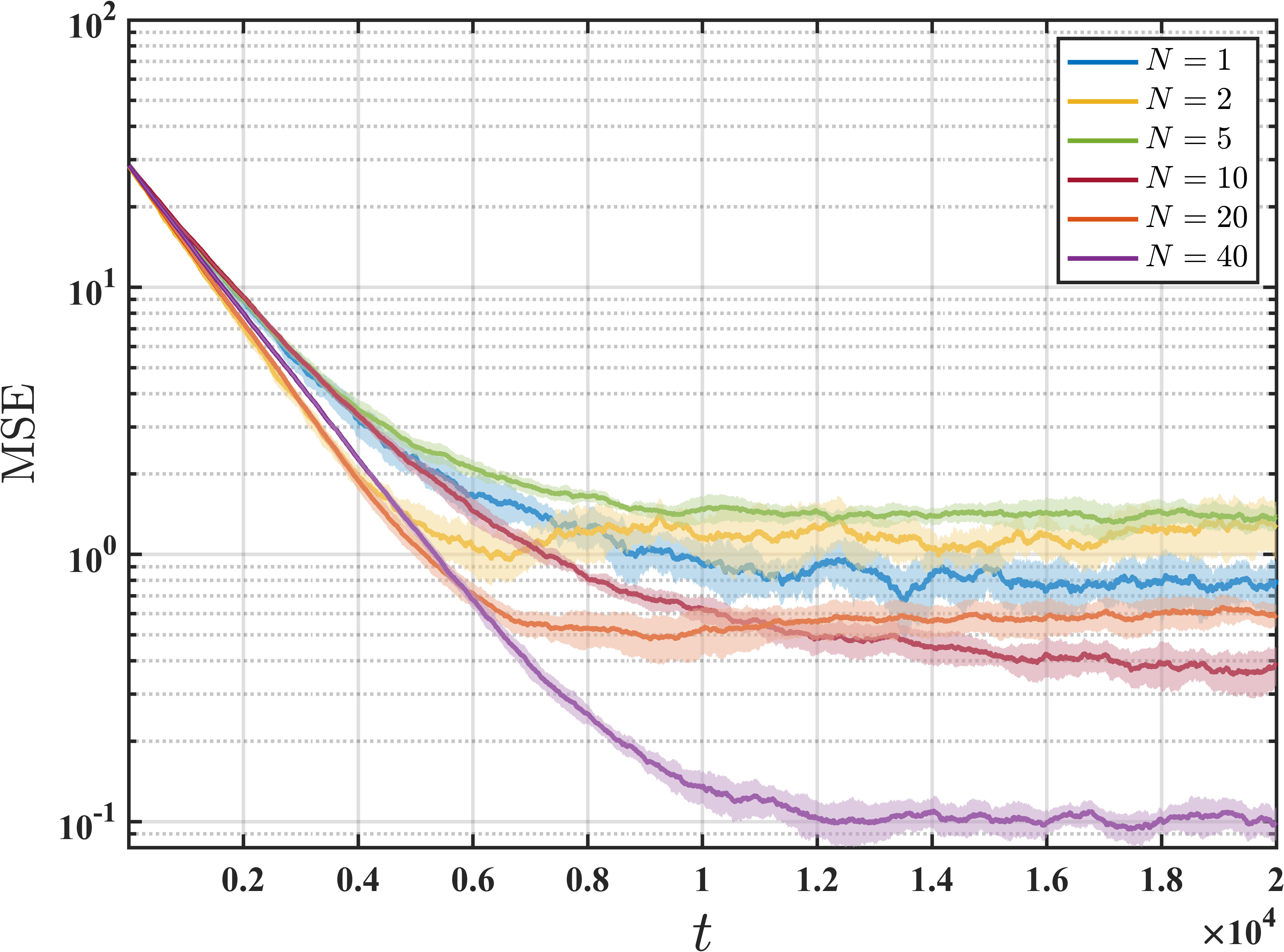}\vspace{-5pt}

      \caption{$\epsilon_p = 2$}
  \end{subfigure}
  \caption{Performance of \fedsarsa under Markovian sampling for varying kernel heterogeneity and numbers of agents with fixed reward heterogeneity (\( \epsilon_r=1 \)).}
  \label{fig:sarsa-ee-r}
\end{figure}

\begin{figure}[ht]
  \centering
  \begin{subfigure}{0.325\textwidth}
      \centering
      \includegraphics[width=\textwidth]{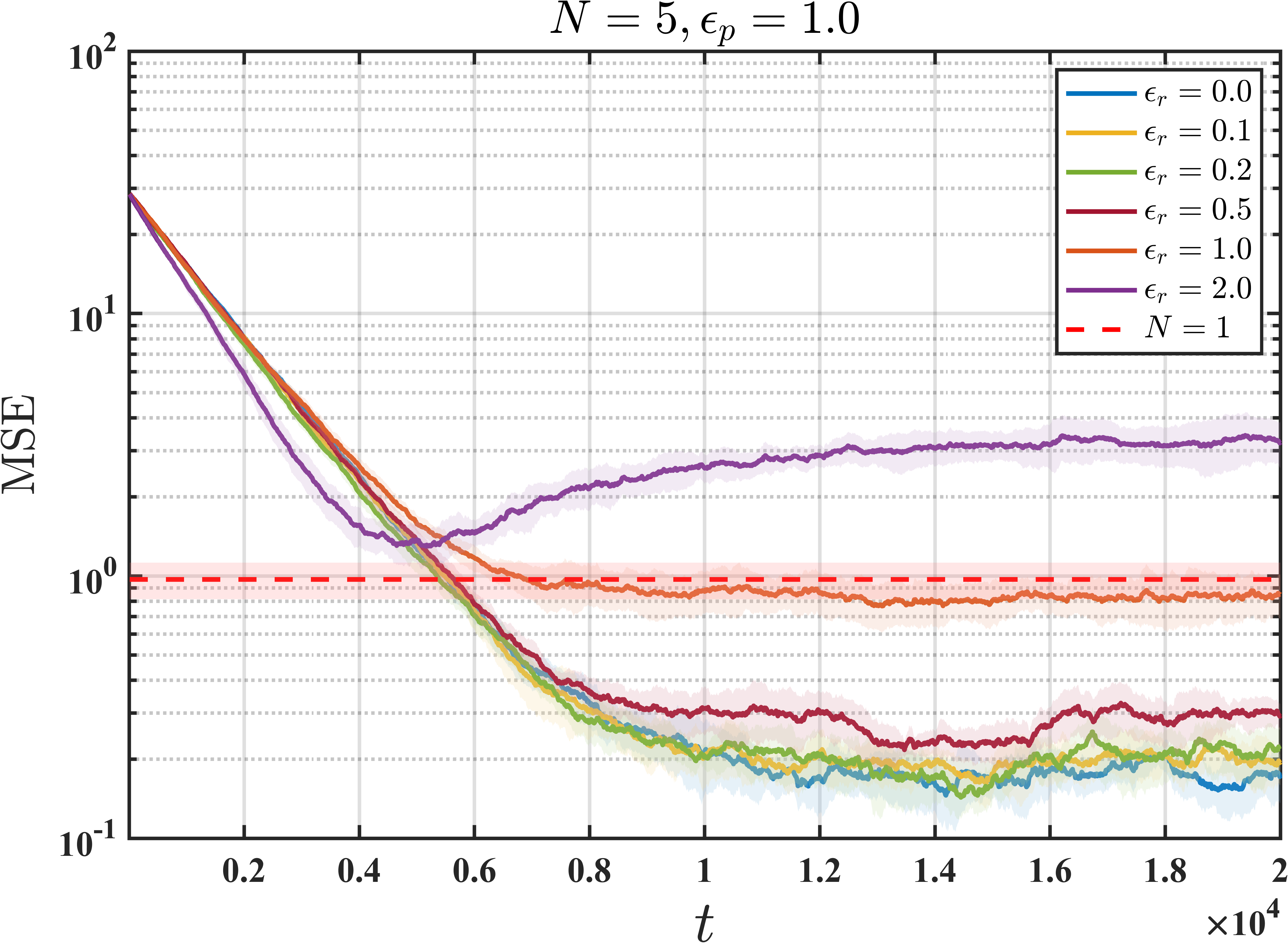}\vspace{-5pt}
      \caption{$N = 5, \epsilon_p = 1.0$}
  \end{subfigure}
  \hfill
  \begin{subfigure}[b]{0.325\textwidth}
      \centering
      \includegraphics[width=\textwidth]{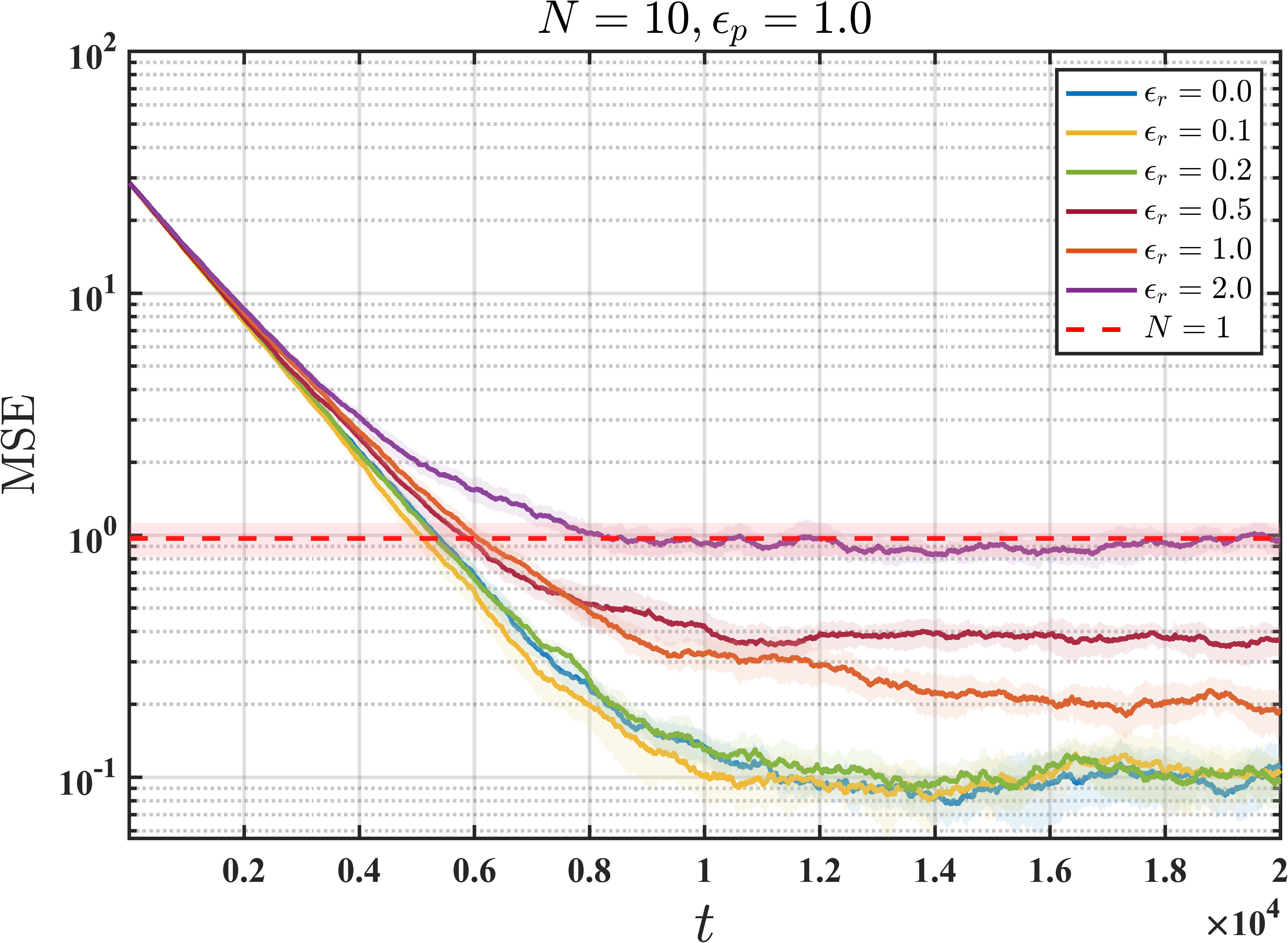}\vspace{-5pt}

      \caption{$N = 10, \epsilon_p = 1.0$}
  \end{subfigure}
  \hfill
  \begin{subfigure}[b]{0.325\textwidth}
      \centering
      \includegraphics[width=\textwidth]{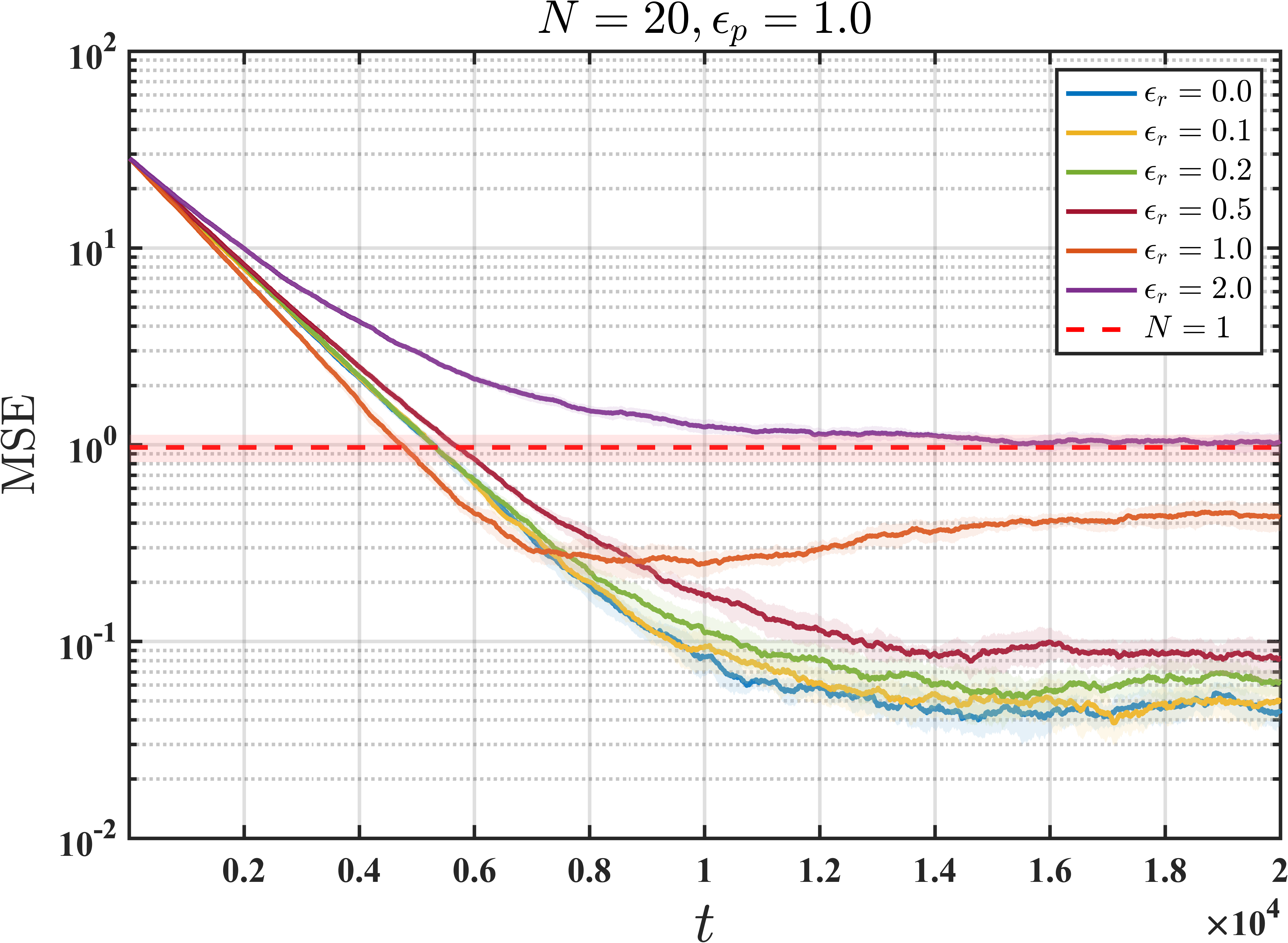}\vspace{-5pt}

      \caption{$N = 20, \epsilon_p = 1.0$}
  \end{subfigure}
  \begin{subfigure}{0.325\textwidth}
      \centering
      \includegraphics[width=\textwidth]{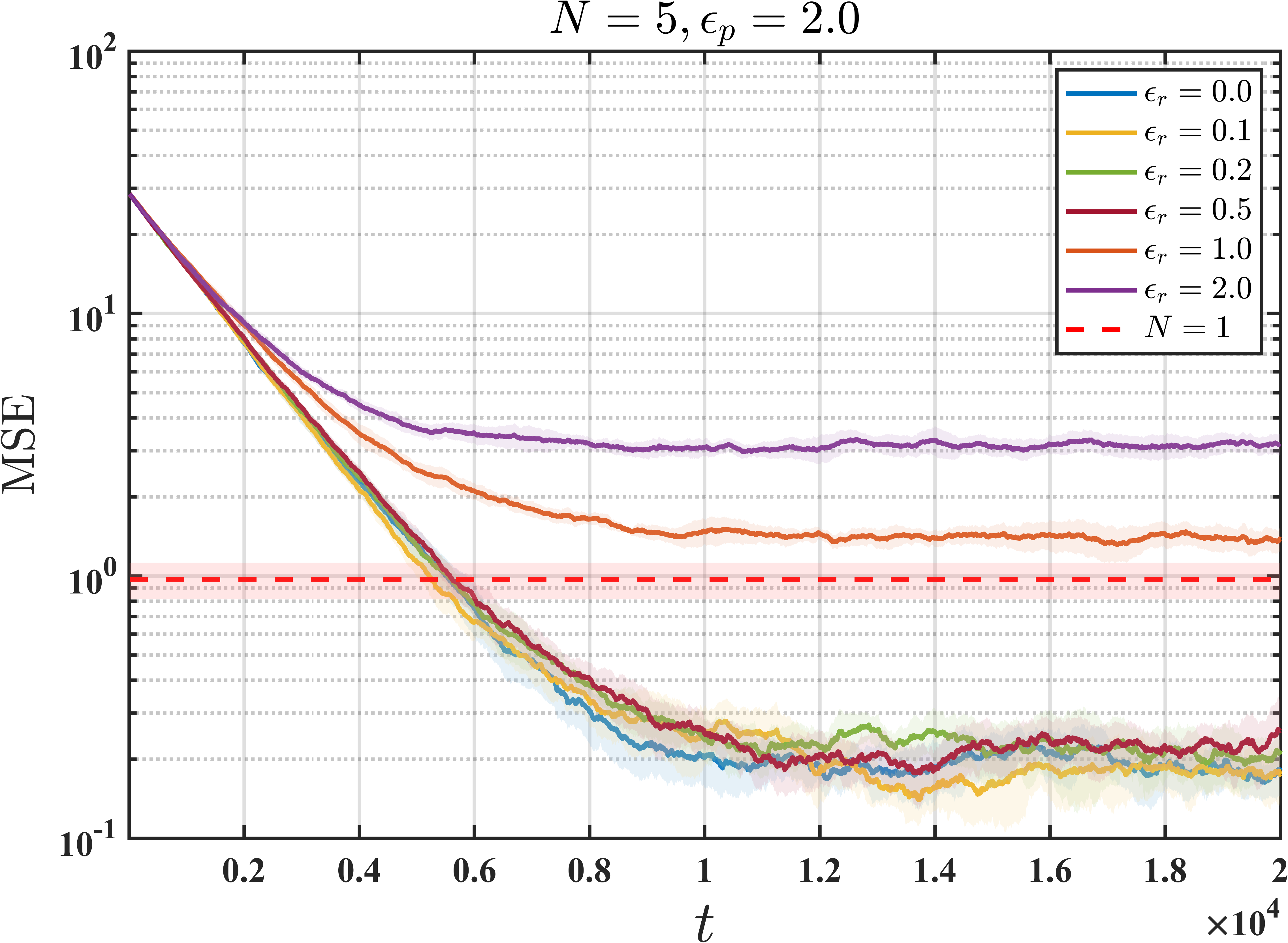}\vspace{-5pt}

      \caption{$N = 5, \epsilon_p = 2.0$}
  \end{subfigure}
  \hfill
  \begin{subfigure}[b]{0.325\textwidth}
      \centering
      \includegraphics[width=\textwidth]{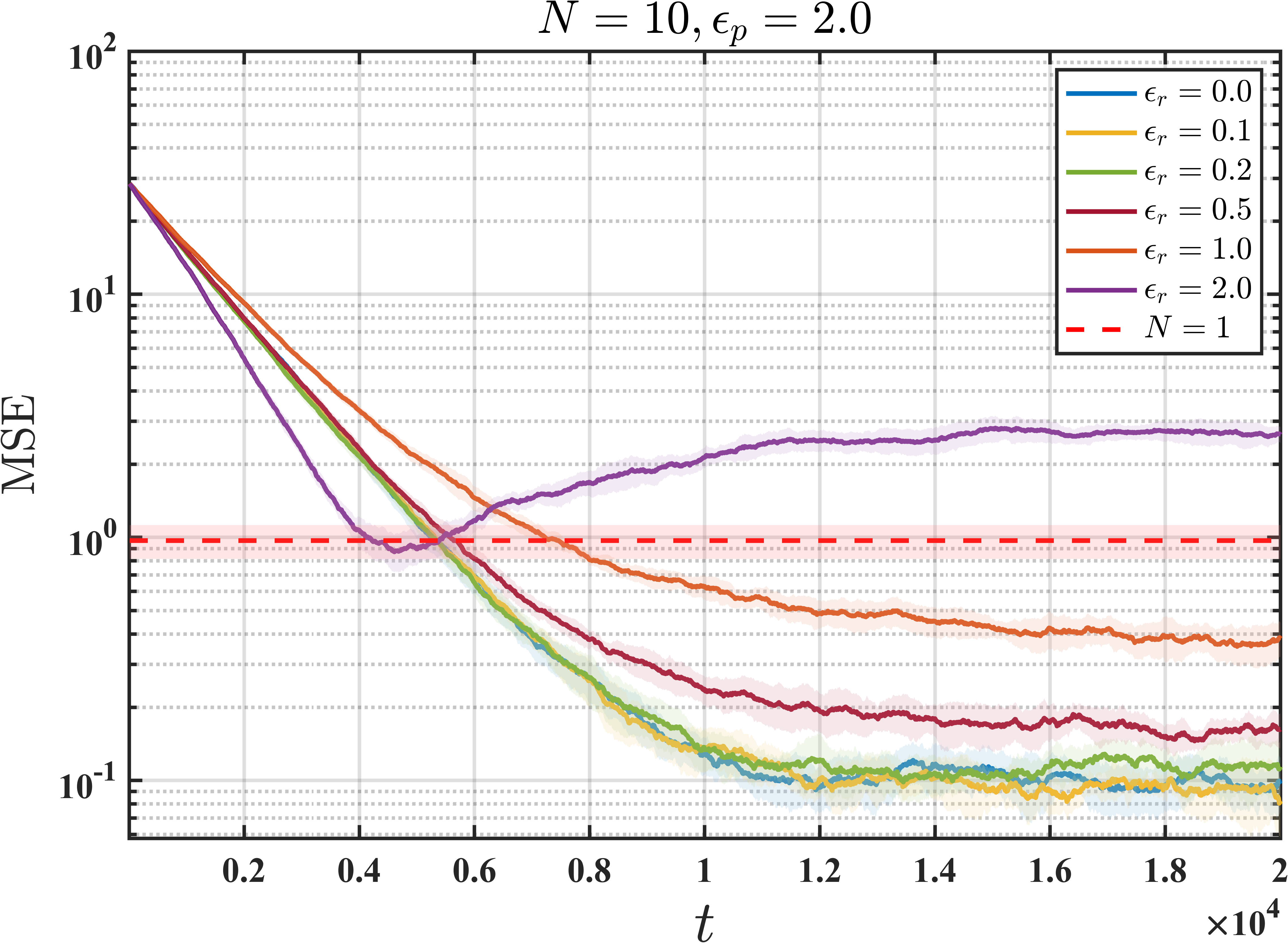}\vspace{-5pt}

      \caption{$N = 10, \epsilon_p = 2.0$}
  \end{subfigure}
  \hfill
  \begin{subfigure}[b]{0.325\textwidth}
      \centering
      \includegraphics[width=\textwidth]{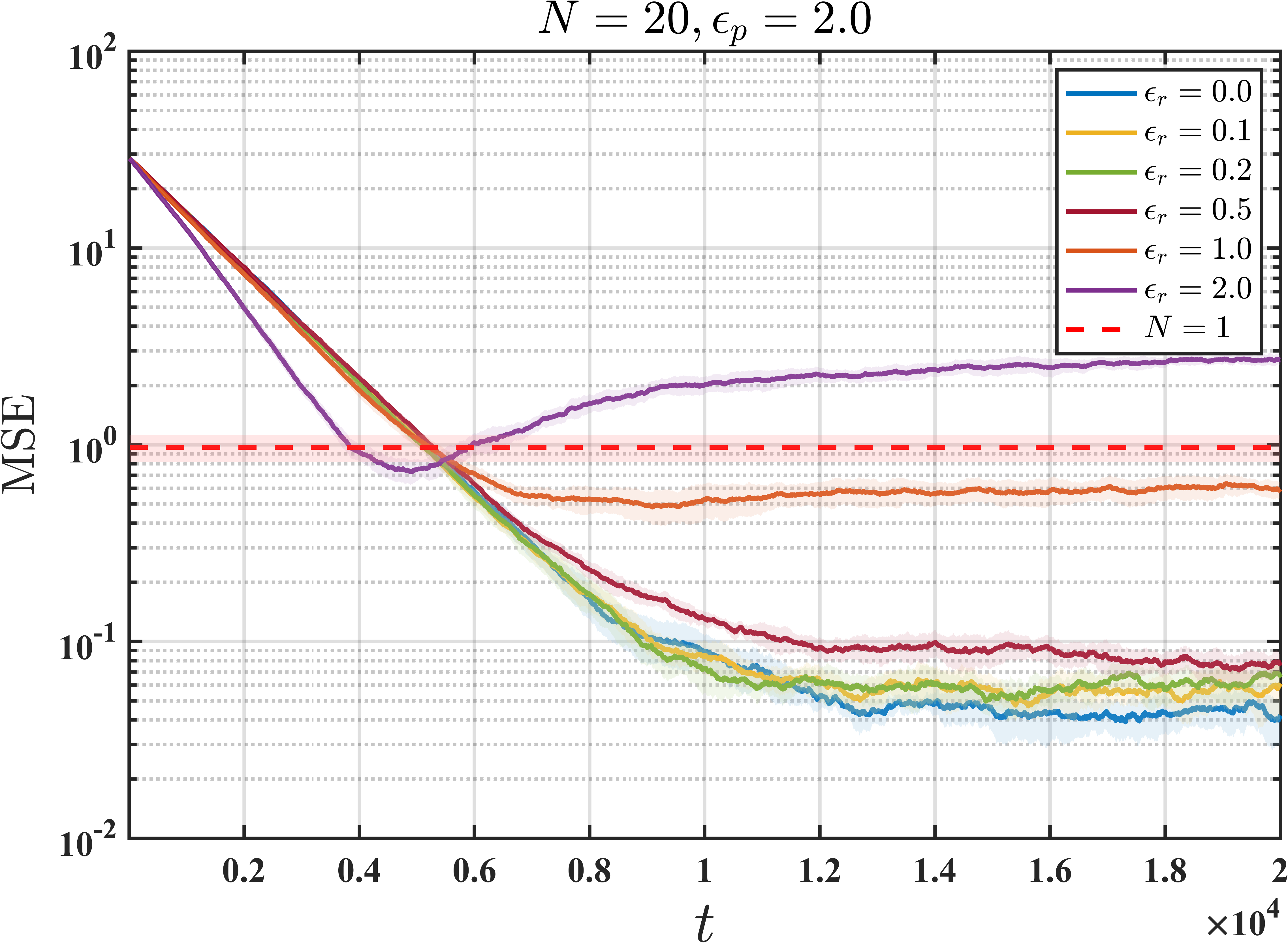}\vspace{-5pt}

      \caption{$N = 20, \epsilon_p = 2.0$}
  \end{subfigure}
  \caption{Effect of the reward heterogeneity on the performance of \fedsarsa.}
  \label{fig:sarsa-np}
\end{figure}

\begin{figure}[ht]
  \centering
  \begin{subfigure}{0.325\textwidth}
      \centering
      \includegraphics[width=\textwidth]{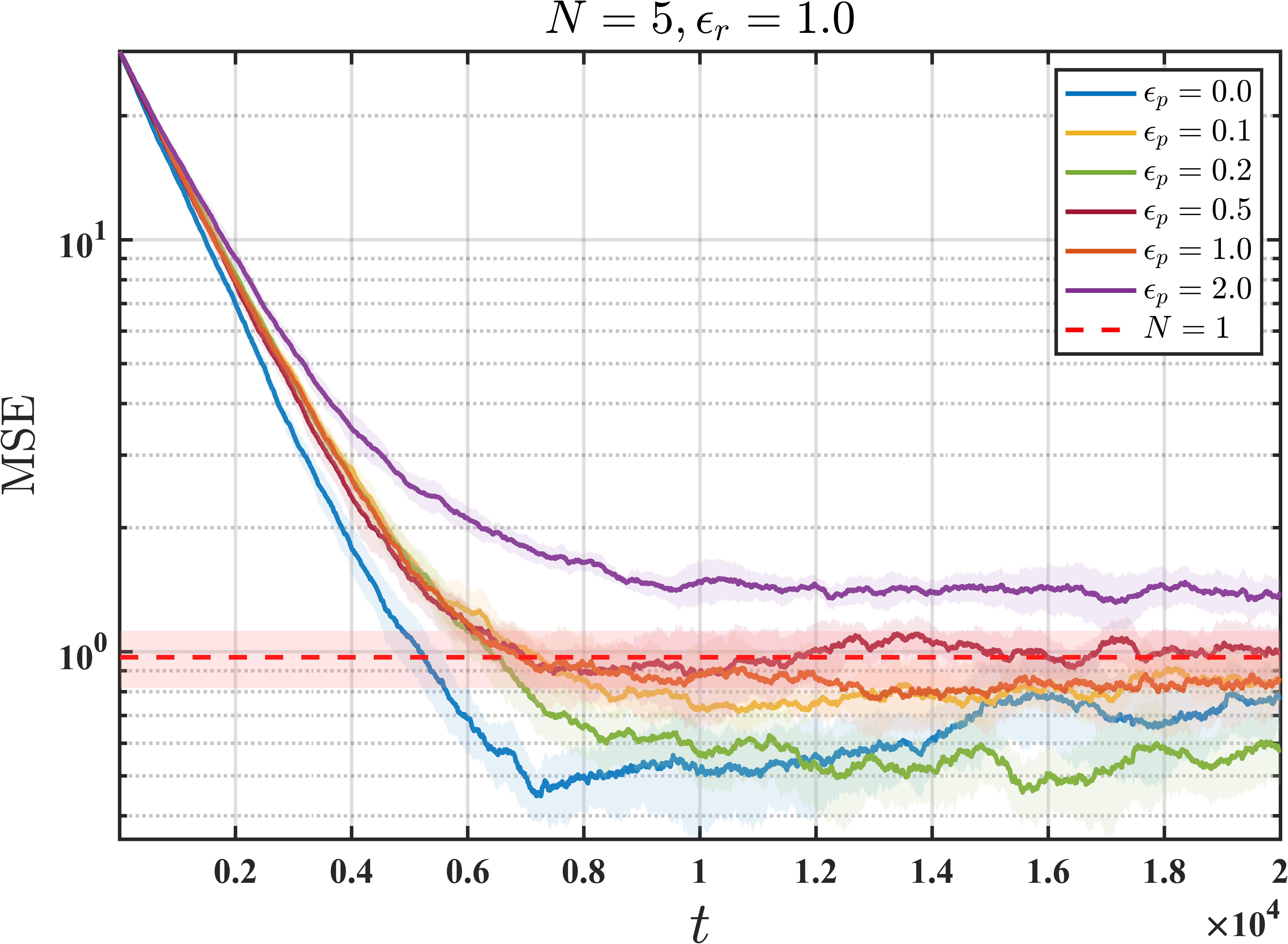}\vspace{-5pt}
      \caption{$N=5,\epsilon_r = 1.0$}
  \end{subfigure}
  \hfill
  \begin{subfigure}[b]{0.325\textwidth}
      \centering
      \includegraphics[width=\textwidth]{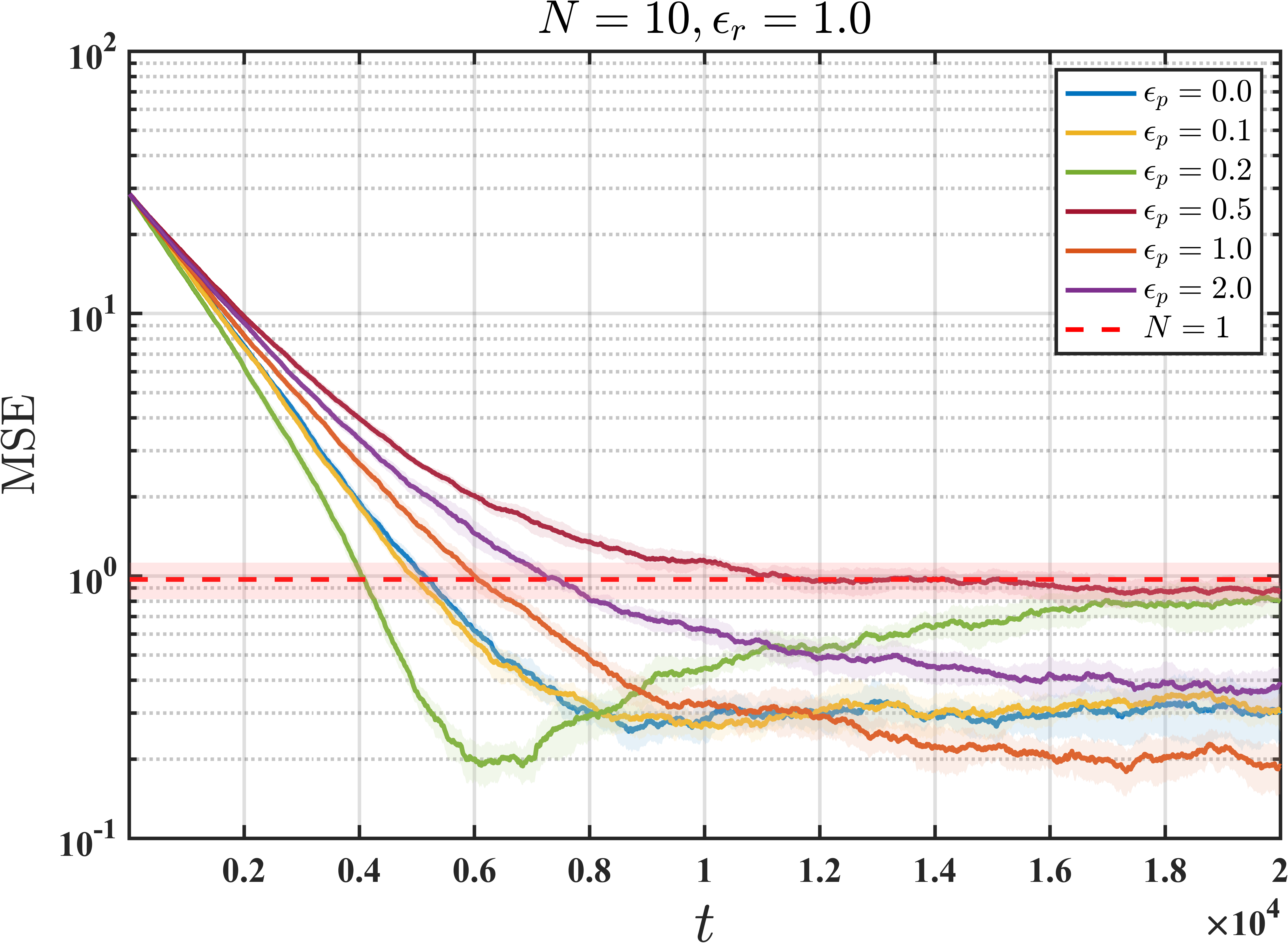}\vspace{-5pt}

      \caption{$N=10,\epsilon_p = 1.0$}
  \end{subfigure}
  \hfill
  \begin{subfigure}[b]{0.325\textwidth}
      \centering
      \includegraphics[width=\textwidth]{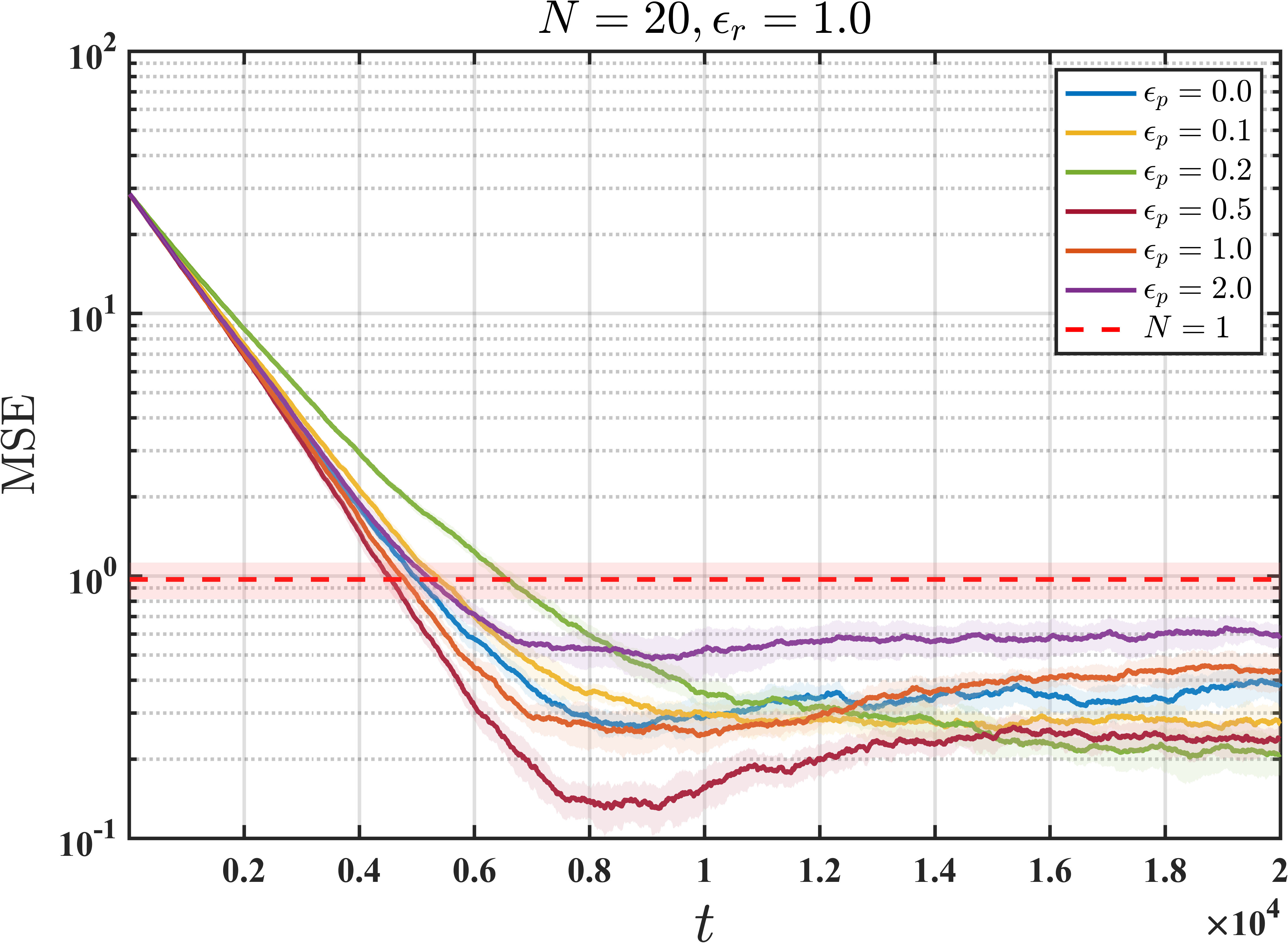}\vspace{-5pt}

      \caption{$\epsilon_p = 0.2$}
  \end{subfigure}
  \begin{subfigure}{0.325\textwidth}
      \centering
      \includegraphics[width=\textwidth]{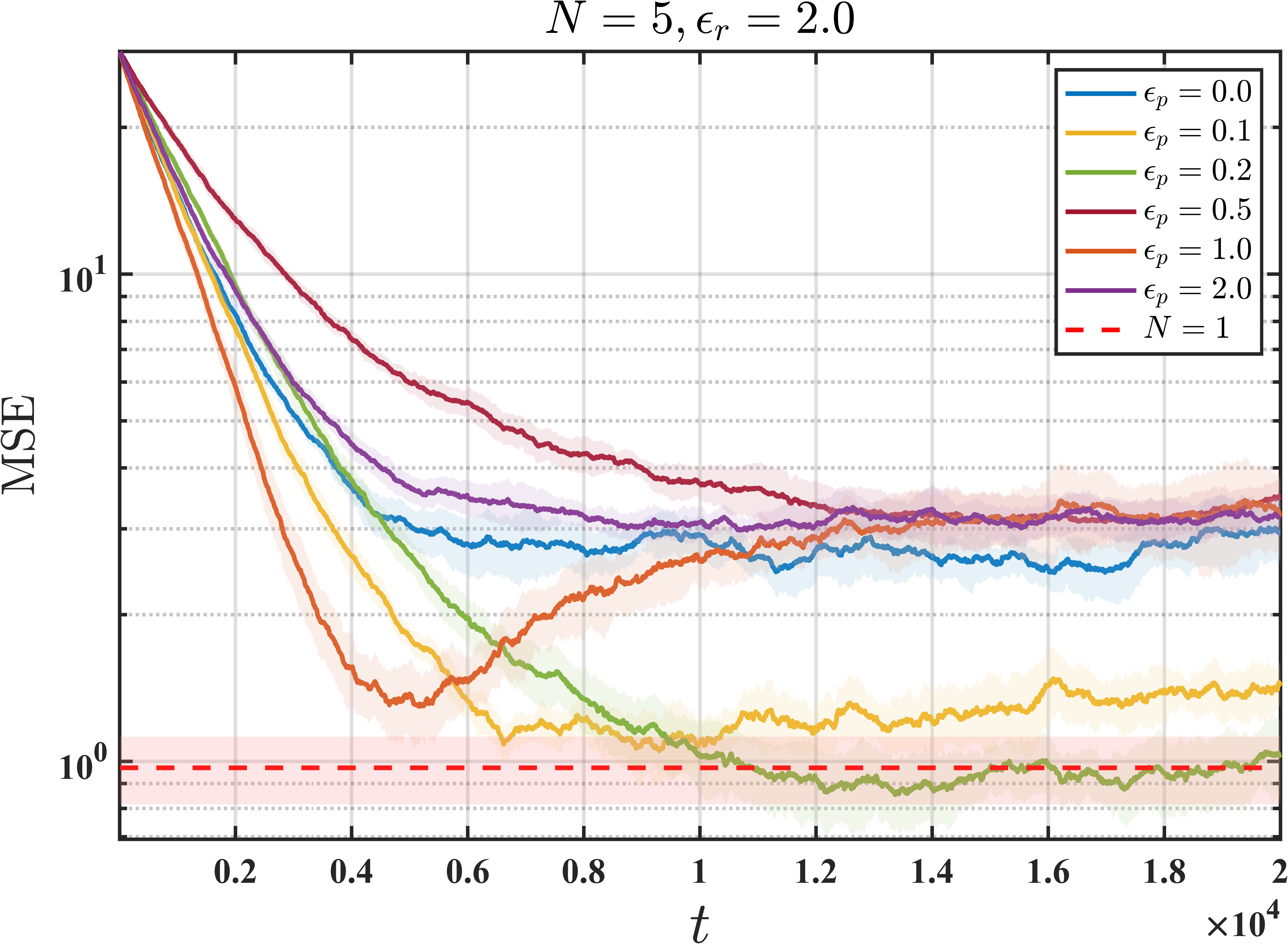}\vspace{-5pt}

      \caption{$N=20,\epsilon_r = 1.0$}
  \end{subfigure}
  \hfill
  \begin{subfigure}[b]{0.325\textwidth}
      \centering
      \includegraphics[width=\textwidth]{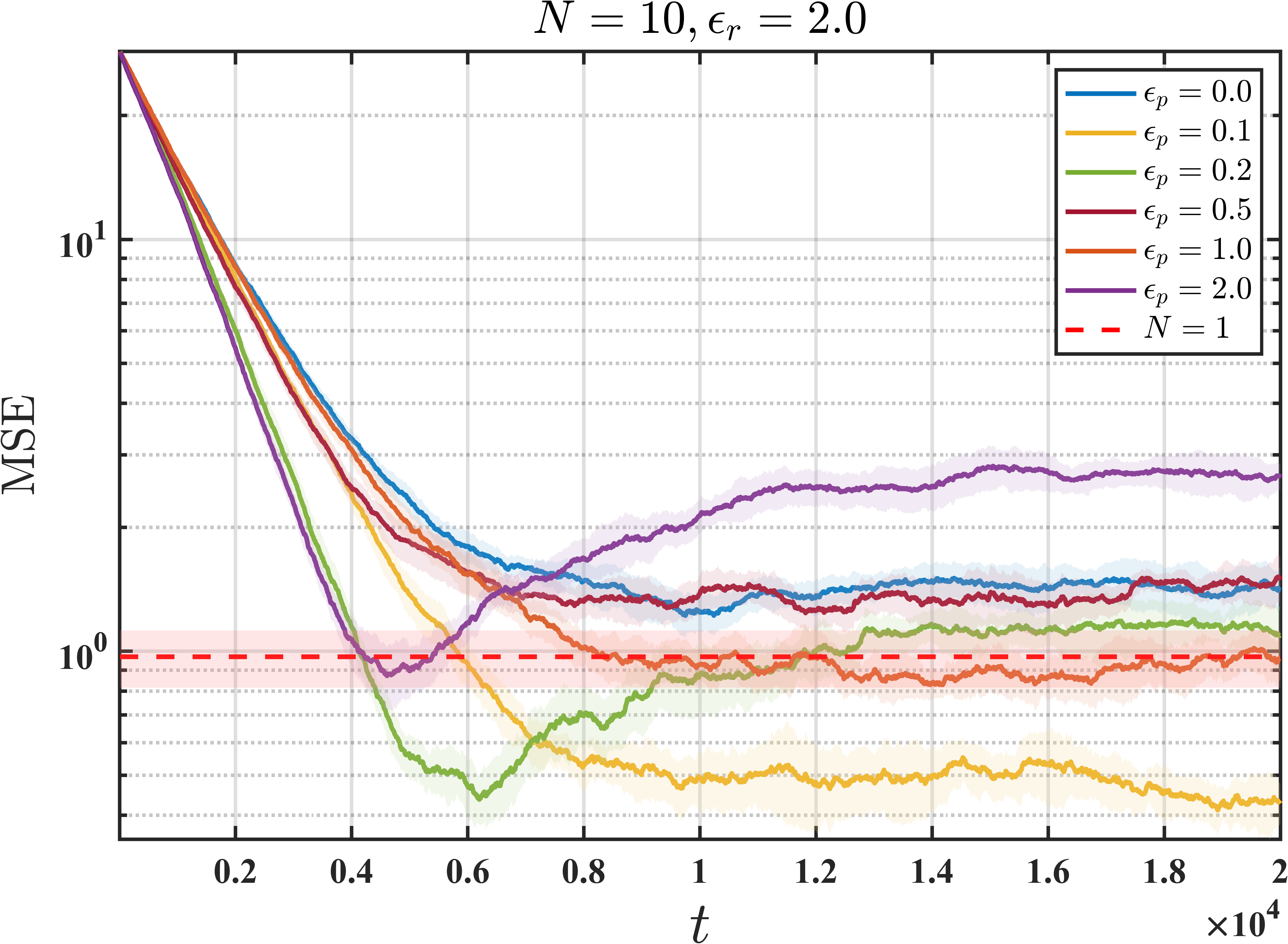}\vspace{-5pt}

      \caption{$N=5,\epsilon_r = 2.0$}
  \end{subfigure}
  \hfill
  \begin{subfigure}[b]{0.325\textwidth}
      \centering
      \includegraphics[width=\textwidth]{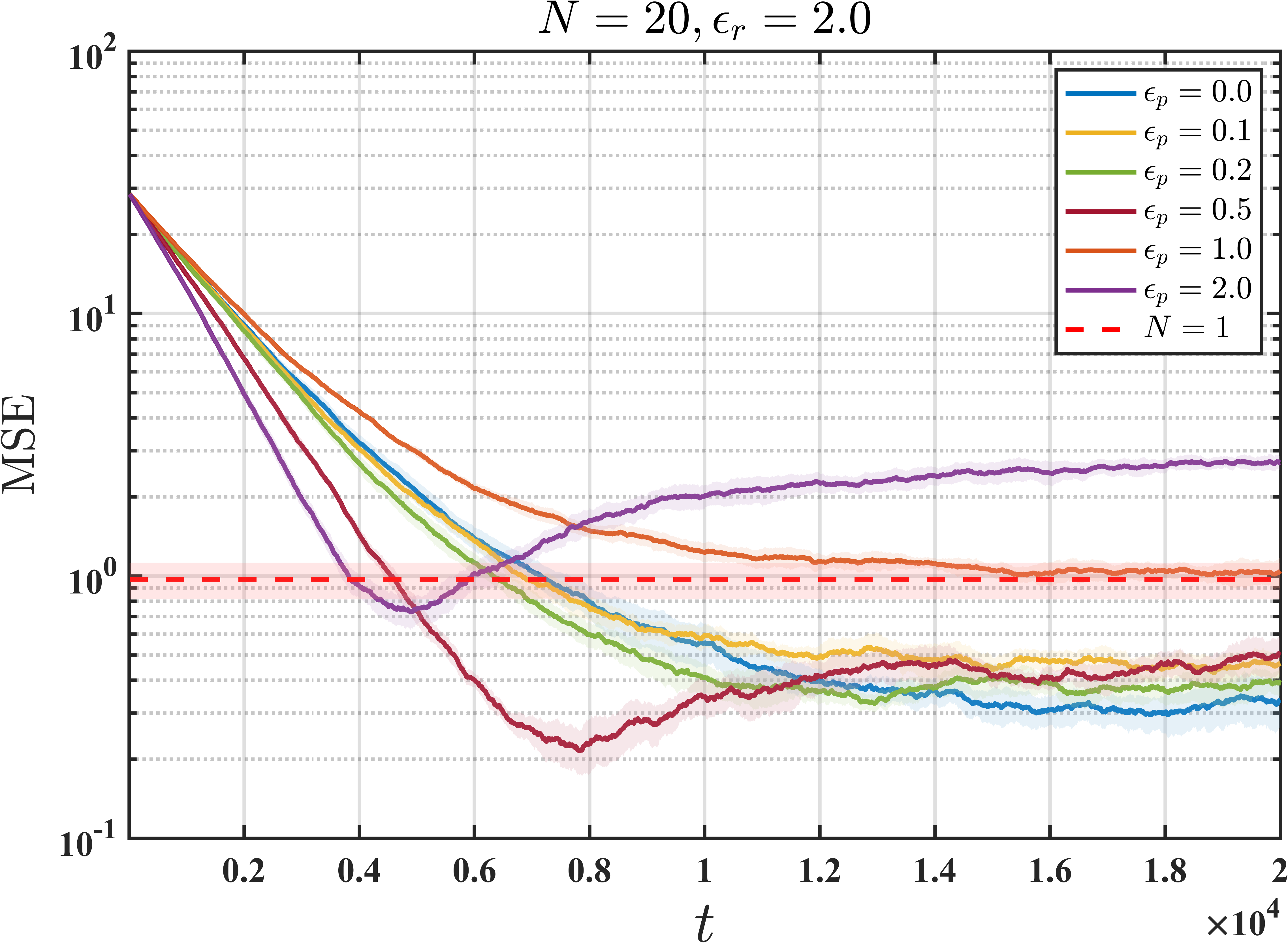}\vspace{-5pt}

      \caption{$N=10, \epsilon_r = 2.0$}
  \end{subfigure}
  \caption{Effect of the kernel heterogeneity on the performance of \fedsarsa.}
  \label{fig:sarsa-nr}
\end{figure}

\subsection{Simulations for Federated TD(0)} \label{sec:exp-td}

As discussed in \cref{sec:alg-local}, \fedsarsa reduces to federated TD(0) \citep{wang2023FederatedTemporal} when the policy improvement operator maps any parameter to a fixed policy \( \pi \).
This corresponds to a fixed transition kernel.
Therefore, we conduct simulations for federated TD(0) to demonstrate the adaptability of \fedsarsa.
We inherit the simulation setup from the previous subsection (\cref{sec:exp-sarsa}), which matches the setup in \citet{wang2023FederatedTemporal}. We fix the behavior policy by fix the transition matrix as the reference matrix \( P_0 \).
The results are presented in \cref{fig:td}, which are similar to the results in \cref{sec:exp}, again validating our theoretical results.

\begin{figure}[ht]
  \centering
  \begin{subfigure}{0.325\textwidth}
      \centering
      \includegraphics[width=\textwidth]{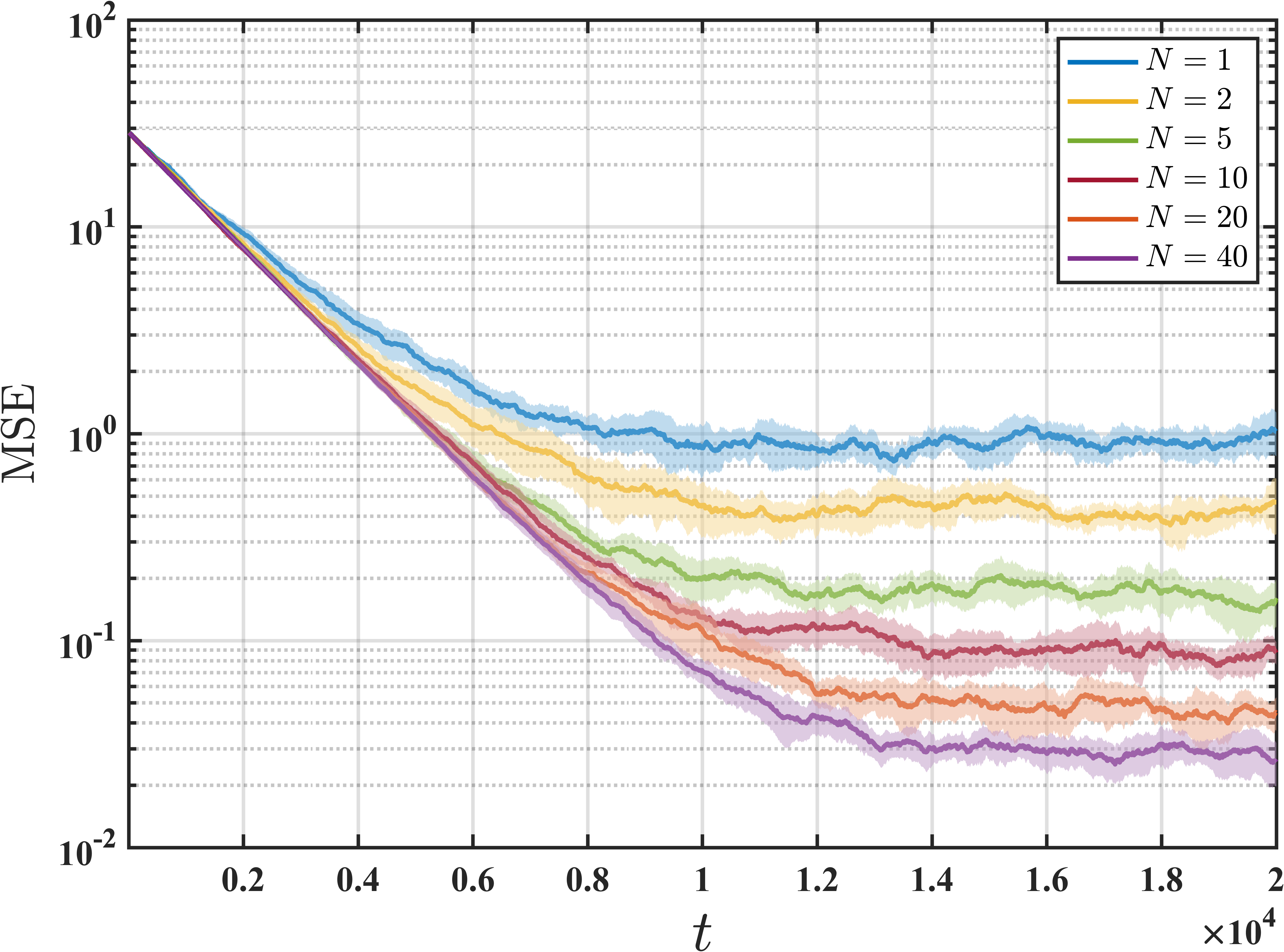}\vspace{-5pt}
      \caption{$\epsilon_p=\epsilon_r = 0$}
  \end{subfigure}
  \hfill
  \begin{subfigure}[b]{0.325\textwidth}
      \centering
      \includegraphics[width=\textwidth]{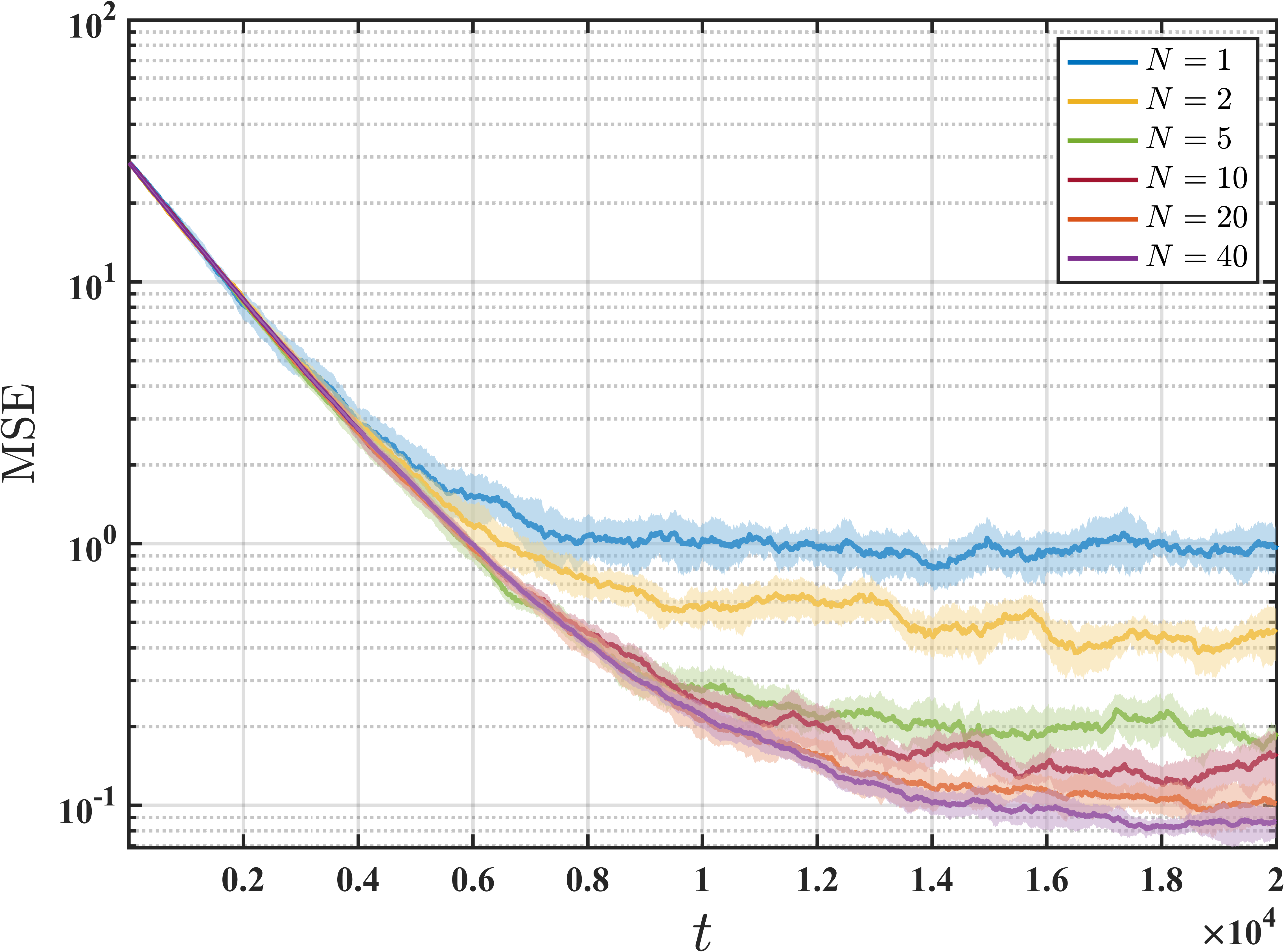}\vspace{-5pt}

      \caption{$\epsilon_p=\epsilon_r = 0.1$}
  \end{subfigure}
  \hfill
  \begin{subfigure}[b]{0.325\textwidth}
      \centering
      \includegraphics[width=\textwidth]{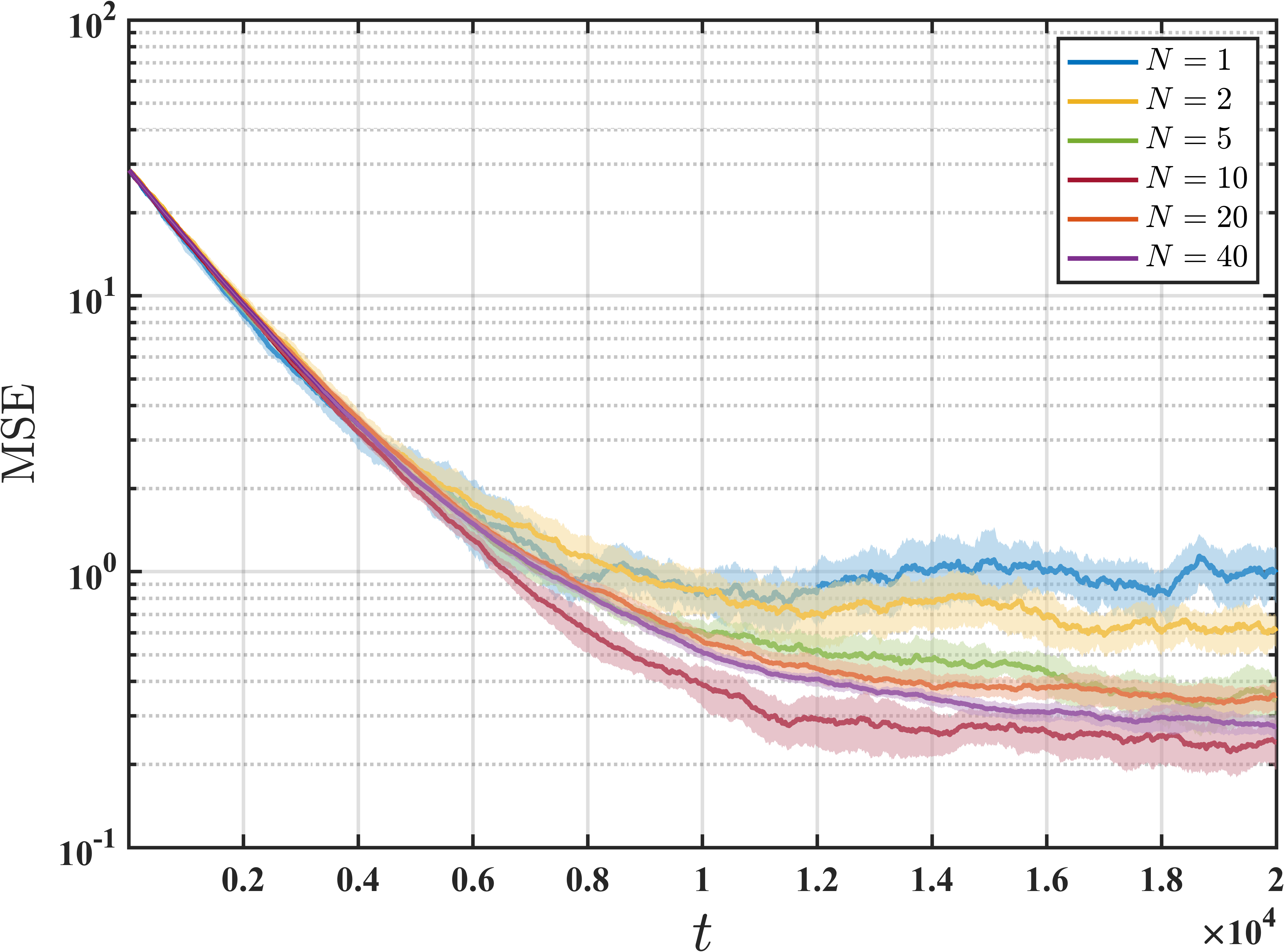}\vspace{-5pt}

      \caption{$\epsilon_p=\epsilon_r = 0.2$}
  \end{subfigure}
  \begin{subfigure}{0.325\textwidth}
      \centering
      \includegraphics[width=\textwidth]{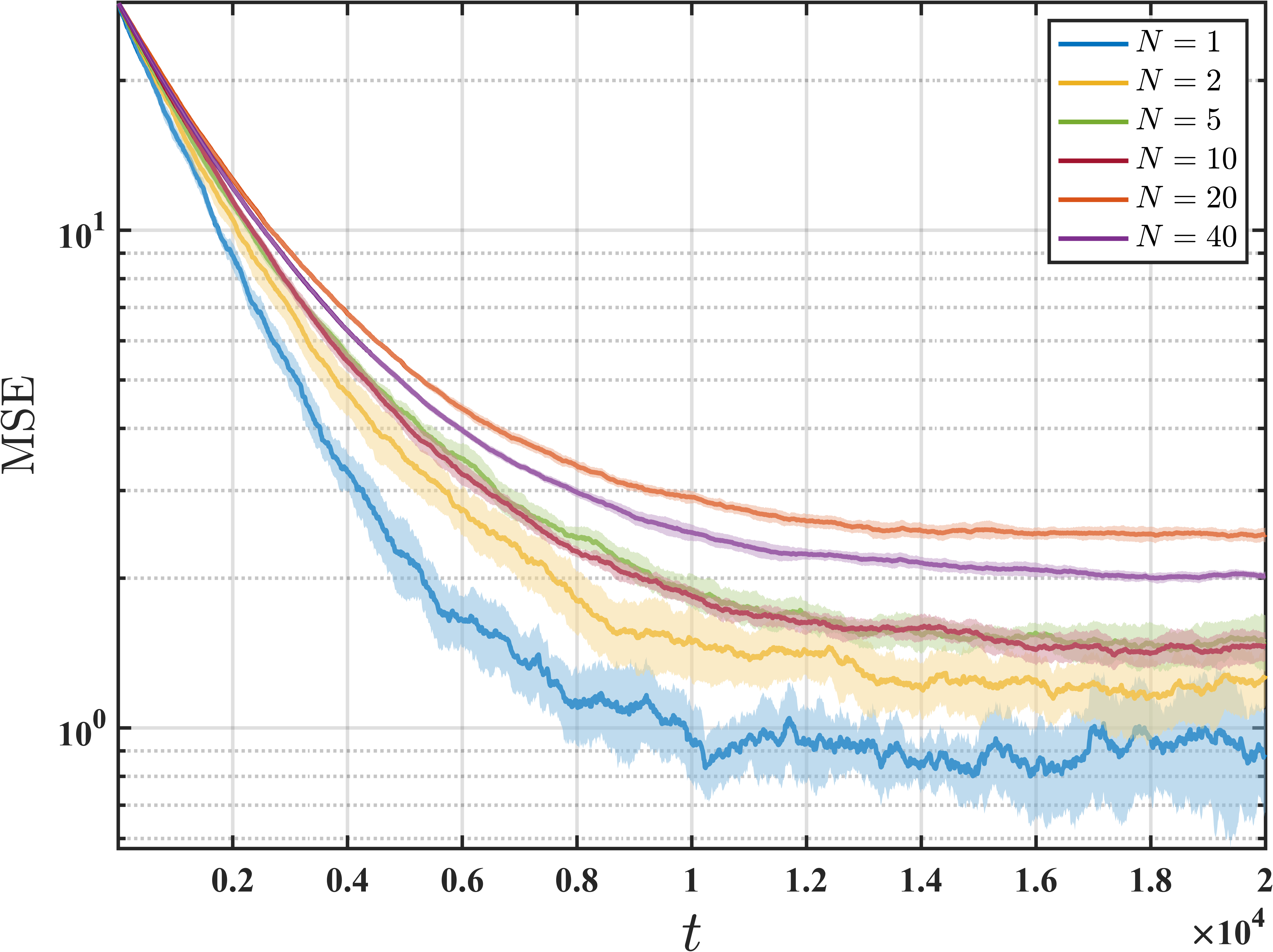}\vspace{-5pt}

      \caption{$\epsilon_p=\epsilon_r = 0.5$}
  \end{subfigure}
  \hfill
  \begin{subfigure}[b]{0.325\textwidth}
      \centering
      \includegraphics[width=\textwidth]{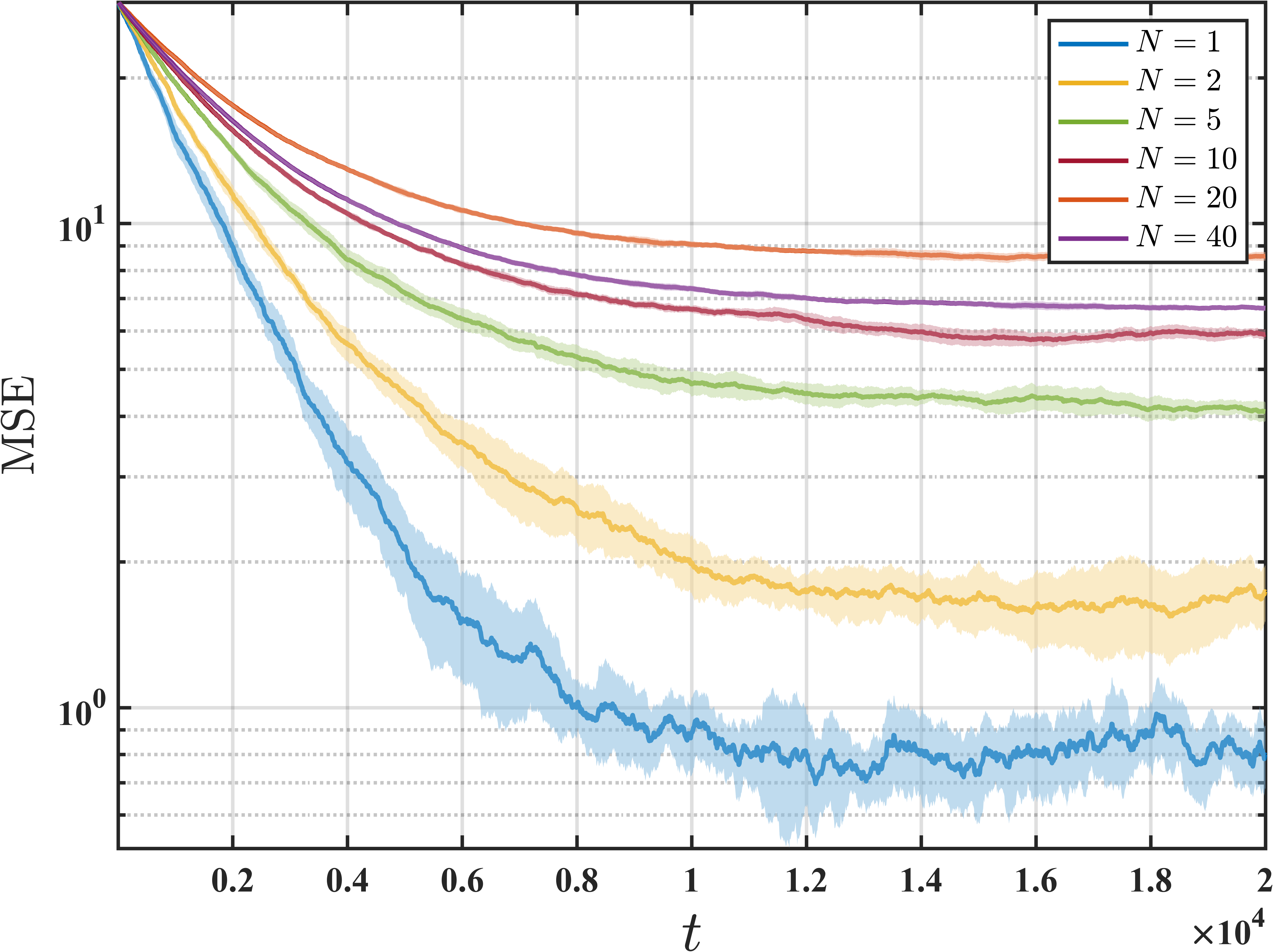}\vspace{-5pt}

      \caption{$\epsilon_p=\epsilon_r = 1$}
  \end{subfigure}
  \hfill
  \begin{subfigure}[b]{0.325\textwidth}
      \centering
      \includegraphics[width=\textwidth]{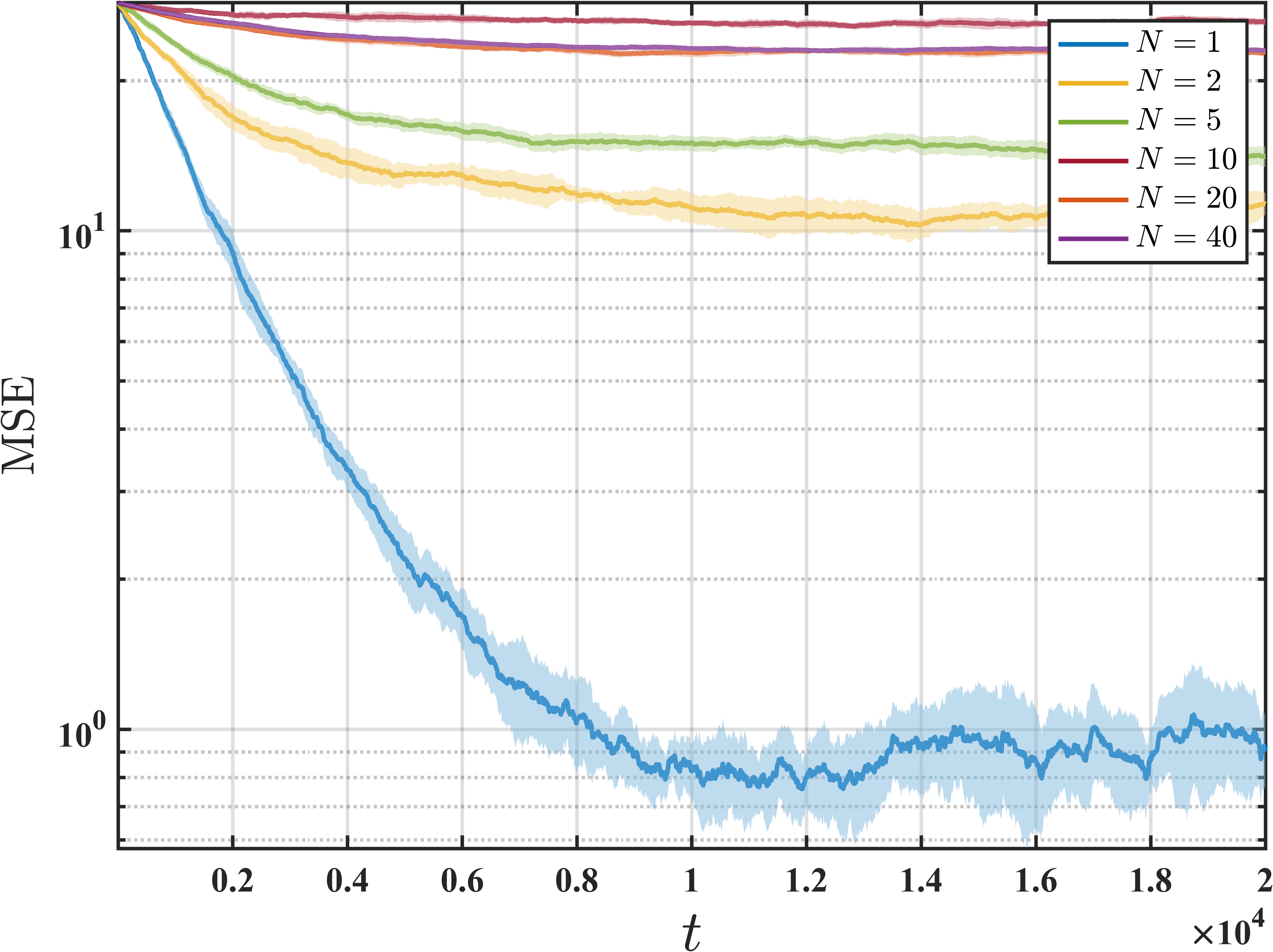}\vspace{-5pt}

      \caption{$\epsilon_p=\epsilon_r = 2$}
  \end{subfigure}
  \caption{Performance of \fedsarsa with a fixed-point policy improvement operator, covering federated TD(0).}
  \label{fig:td}
\end{figure}

\subsection{Simulations for On-Policy Federated Q-Learning}

When equipped with a greedy policy improvement operator, \fedsarsa reduces to on-policy federated Q-Learning.
Specifically, we employ the greedy policy improvement operator:
\[
    \pi_{\theta}(a|s) = \mathbbm{1}\{ a =\argmax_{a'\in \mathcal{A}} \theta^{T}\phi(s,a')\}
,\]
where \( \mathbbm{1} \) is the indicator function.
For the other part of the simulation setup, we inherit the setup from the previous subsection (\cref{sec:exp-sarsa}).
The results are presented in \cref{fig:ql}, which resemble the results in \cref{sec:exp,sec:exp-td}.

\begin{figure}[ht]
  \centering
  \begin{subfigure}{0.325\textwidth}
      \centering
      \includegraphics[width=\textwidth]{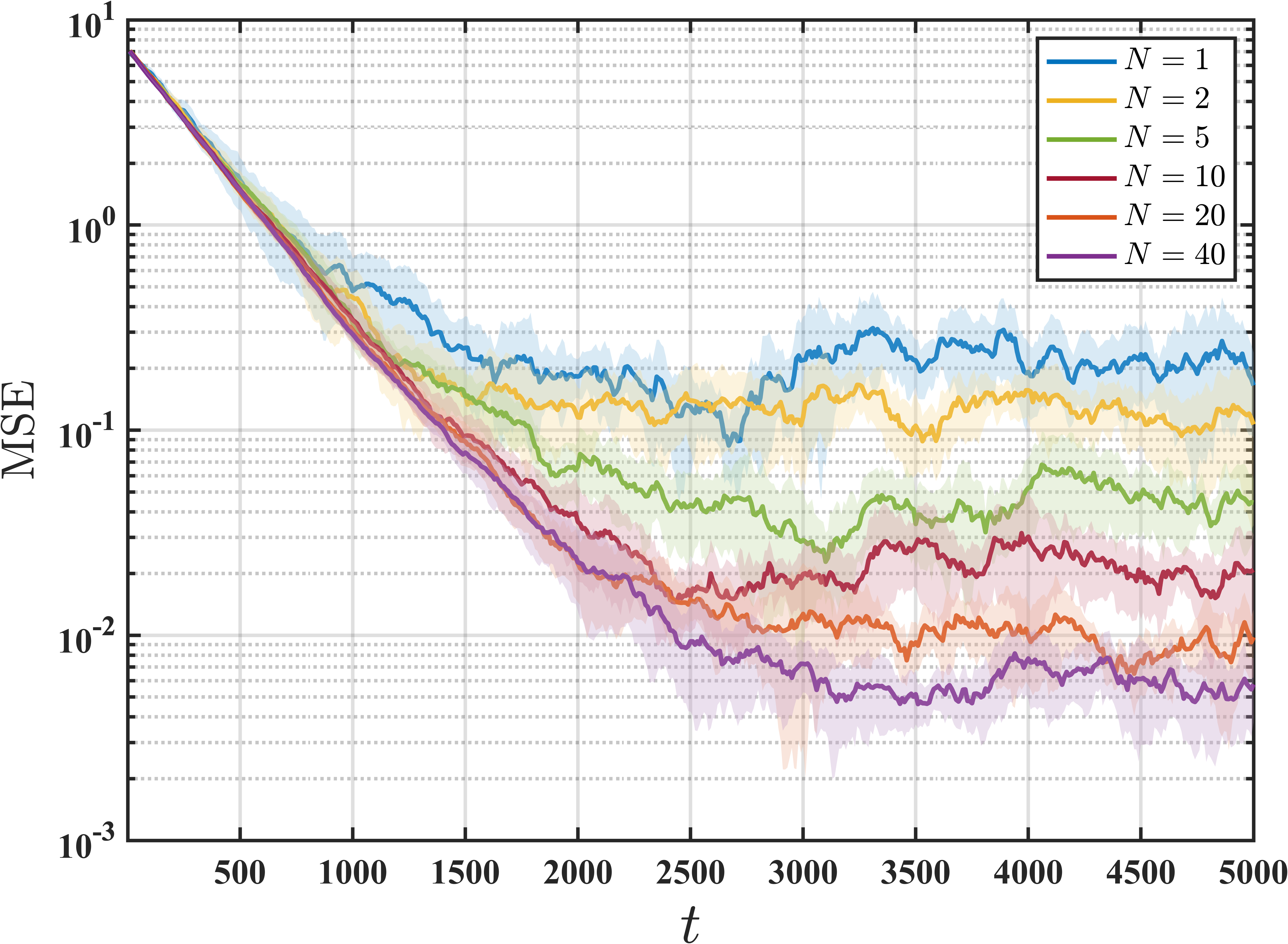}\vspace{-5pt}
      \caption{$\epsilon_p=\epsilon_r = 0$}
  \end{subfigure}
  \hfill
  \begin{subfigure}[b]{0.325\textwidth}
      \centering
      \includegraphics[width=\textwidth]{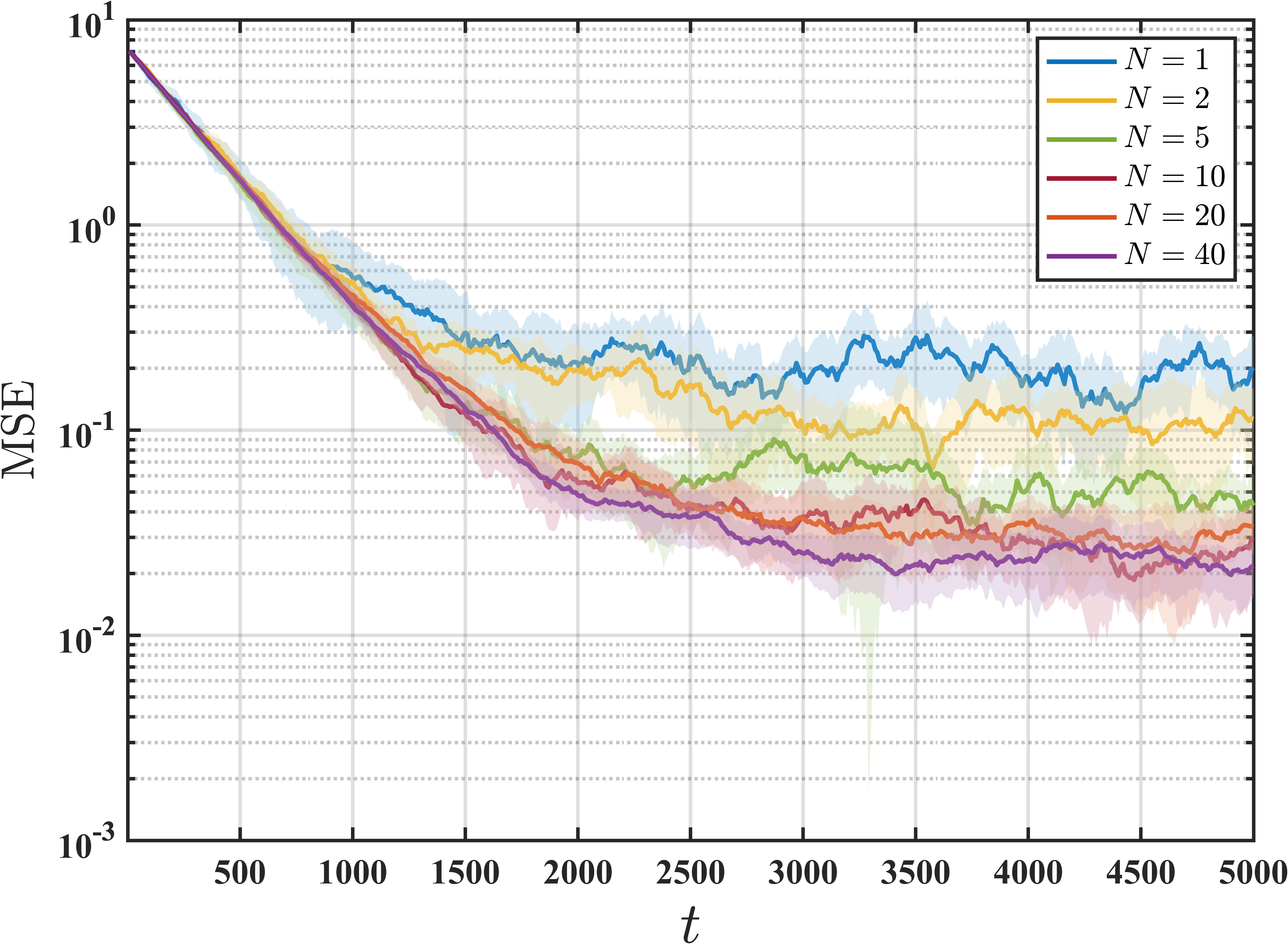}\vspace{-5pt}

      \caption{$\epsilon_p=\epsilon_r = 0.1$}
  \end{subfigure}
  \hfill
  \begin{subfigure}[b]{0.325\textwidth}
      \centering
      \includegraphics[width=\textwidth]{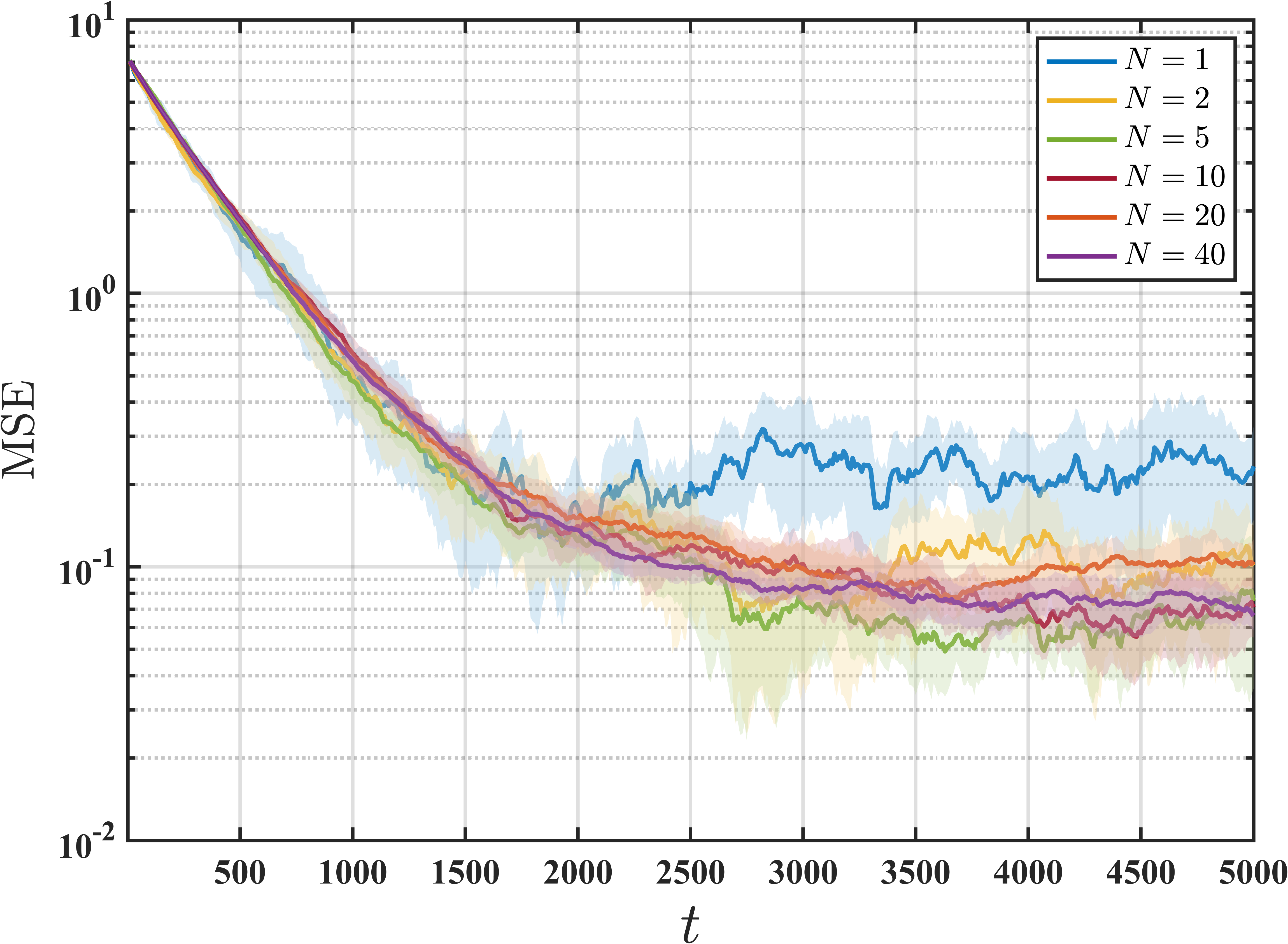}\vspace{-5pt}

      \caption{$\epsilon_p=\epsilon_r = 0.2$}
  \end{subfigure}
  \begin{subfigure}{0.325\textwidth}
      \centering
      \includegraphics[width=\textwidth]{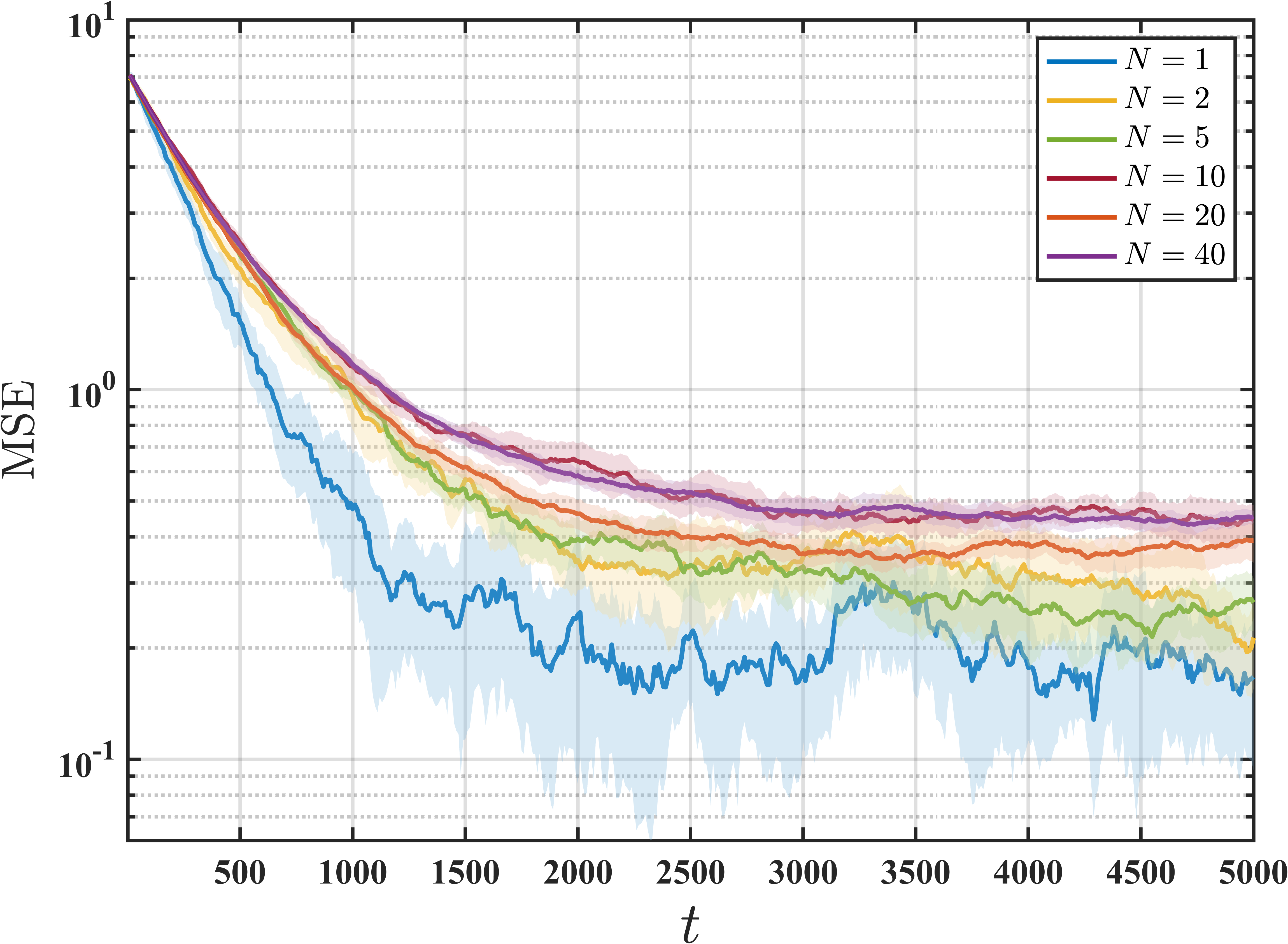}\vspace{-5pt}

      \caption{$\epsilon_p=\epsilon_r = 0.5$}
  \end{subfigure}
  \hfill
  \begin{subfigure}[b]{0.325\textwidth}
      \centering
      \includegraphics[width=\textwidth]{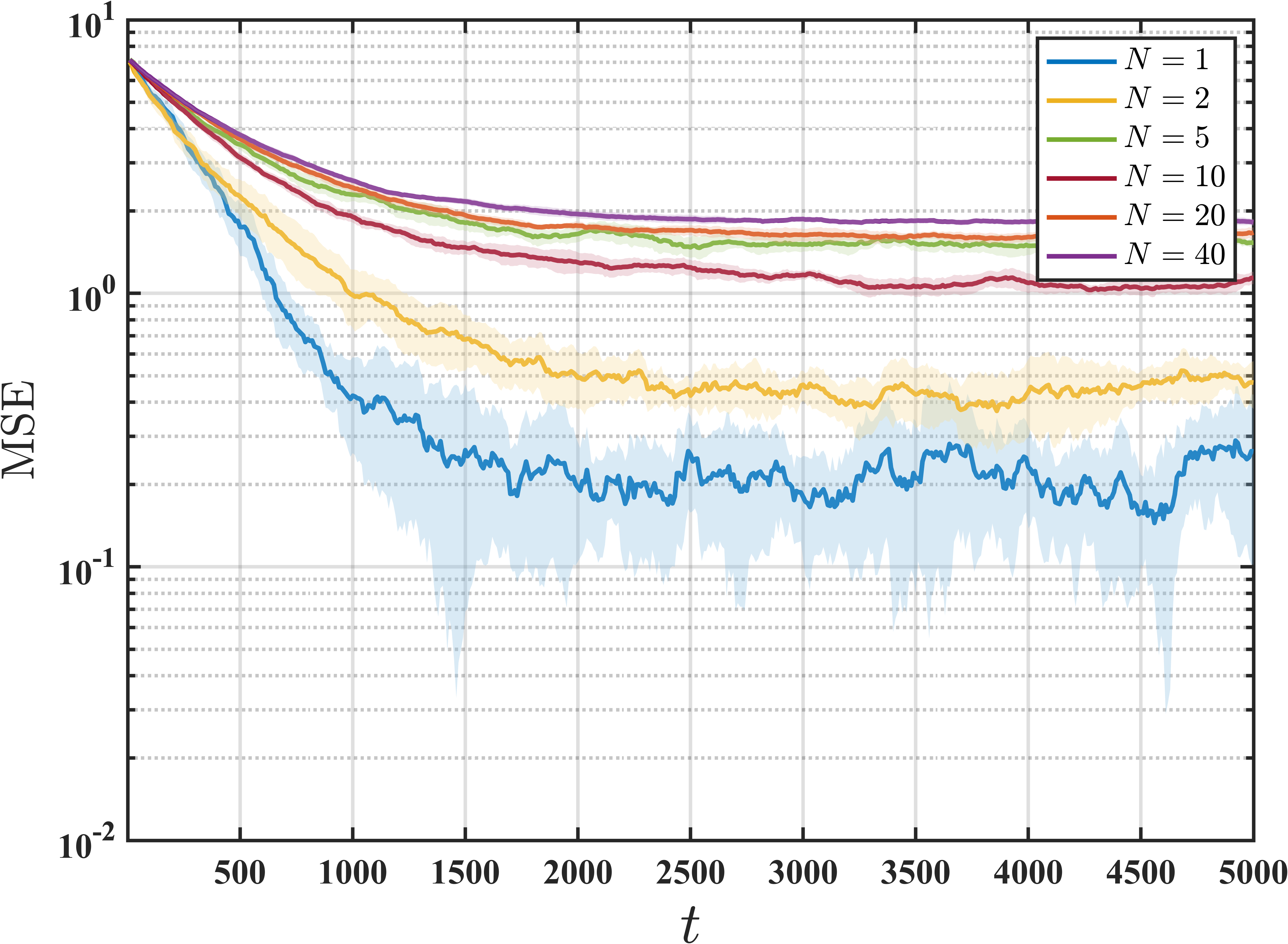}\vspace{-5pt}

      \caption{$\epsilon_p=\epsilon_r = 1$}
  \end{subfigure}
  \hfill
  \begin{subfigure}[b]{0.325\textwidth}
      \centering
      \includegraphics[width=\textwidth]{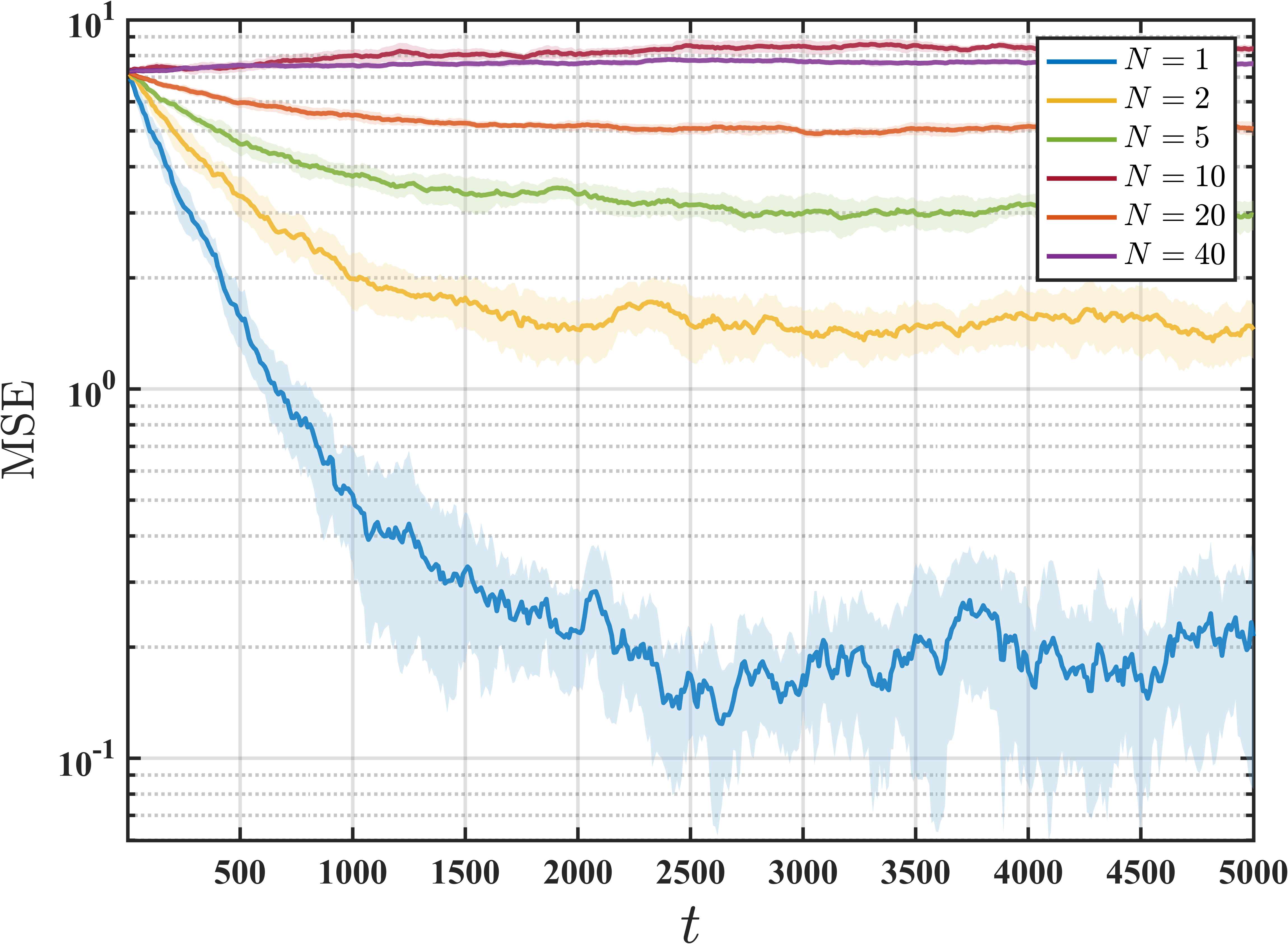}\vspace{-5pt}
      \caption{$\epsilon_p=\epsilon_r = 2$}
  \end{subfigure}
  \caption{Performance of \fedsarsa with a greedy policy improvement operator, covering on-policy federated Q-learning.}
  \label{fig:ql}
\end{figure}

\section{Central MDP} \label{sec:cmdp}

To facilitate our analysis, we introduce a virtual MDP: $\bar{\mathcal{M}} \coloneqq \frac{1}{N}\sumn \mathcal{M}\i$. Specifically, $\bar{\mathcal{M}} = (\mathcal{S},\mathcal{A},\bar{r},\bar{P},\gamma)$, where $\bar{r} = \frac{1}{N}\sumn r\i$, $\bar{P} = \frac{1}{N}\sumn P\i$. We refer to this virtual MDP as the \textit{central MDP}.
The following proposition shows that $\bar{\mathcal{M}}$ is indistinguishable from the collection of actual MDPs and also satisfies \cref{asmp:steady}. 

\begin{proposition}\label{prop:cmdp}
  If MDPs $\{\mathcal{M}\i\}$ are ergodic (aperiodic and irreducible) under a fixed policy $\pi$, the central MDP $\bar{\mathcal{M}}$ is also ergodic under $\pi$.
\end{proposition}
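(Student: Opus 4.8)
The plan is to reduce the statement to a purely Markov-chain fact about averages of transition kernels. The first step is to observe that, under the fixed policy $\pi$, the state-to-state transition kernel induced by the central MDP is exactly the average of the induced kernels of the individual MDPs. Writing $P^{(i)}_\pi(s,\cdot) = \int_{\mathcal{A}} \pi(a|s)\, P^{(i)}_a(s,\cdot)\,\mathrm{d}a$ for the chain that agent $i$'s MDP induces under $\pi$, linearity of the integral in the kernel gives
\[
\bar{P}_\pi(s,\cdot) = \int_{\mathcal{A}} \pi(a|s)\,\bar{P}_a(s,\cdot)\,\mathrm{d}a = \frac{1}{N}\sum_{i=1}^N \int_{\mathcal{A}} \pi(a|s)\, P^{(i)}_a(s,\cdot)\,\mathrm{d}a = \frac{1}{N}\sum_{i=1}^N P^{(i)}_\pi(s,\cdot),
\]
so averaging over agents commutes with marginalizing over actions. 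Thus it suffices to show that a convex combination of irreducible, aperiodic chains is again irreducible and aperiodic.

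Second, I would establish irreducibility via a domination inequality. The key elementary fact is that every term in the $n$-fold product $\bar{P}^n_\pi = \bigl(\frac{1}{N}\sum_i P^{(i)}_\pi\bigr)^n$ is a nonnegative multiple of a product of component kernels, and among these is the single term $N^{-n}(P^{(i)}_\pi)^n$ for each fixed $i$. Hence $\bar{P}^n_\pi \ge N^{-n}(P^{(i)}_\pi)^n$ as measures for every $i$ and $n$. Given any pair of states (more carefully, any target set of positive reference measure), irreducibility of a single component $P^{(i)}_\pi$ supplies an $n$ with positive $n$-step mass, and the domination inequality transfers this positivity to $\bar{P}_\pi$. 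So $\bar{P}_\pi$ is irreducible.

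Third, aperiodicity follows from the same domination. Fixing a state $s$, consider the return-time sets $R_i(s) = \{\, n\ge 1 : (P^{(i)}_\pi)^n(s,\{s\}) > 0 \,\}$ (or the analogous small-set return times in the general state space) and the corresponding set $R(s)$ for $\bar{P}_\pi$. The inequality $\bar{P}^n_\pi \ge N^{-n}(P^{(i)}_\pi)^n$ gives $R(s) \supseteq R_i(s)$; since $\gcd R_i(s) = 1$ by aperiodicity of $\mathcal{M}^{(i)}$, and $\gcd R(s)$ divides every element of $R(s) \supseteq R_i(s)$ hence divides $\gcd R_i(s) = 1$, we get $\gcd R(s) = 1$. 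Combined with irreducibility, this yields ergodicity of $\bar{\mathcal{M}}$ under $\pi$.

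The main obstacle is not the algebra but making the ``positivity is inherited'' argument rigorous on a continuous state-action space, where irreducibility and aperiodicity must be phrased through $\psi$-irreducibility and small sets rather than single-state transition probabilities. The hardest part is therefore selecting the measure-theoretic notion of ergodicity consistent with \cref{asmp:steady} and checking that the domination $\bar{P}^n_\pi \ge N^{-n}(P^{(i)}_\pi)^n$ produces the corresponding positivity statements—e.g.\ a common small set and a shared irreducibility measure for $\bar{P}_\pi$. For a finite state space the entire argument collapses to the reachability and $\gcd$ statements above.
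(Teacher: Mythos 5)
Your proposal is correct and follows essentially the same route as the paper: the paper's proof rests on exactly your domination inequality $\bar{P}^n(s,s) \ge N^{-n}\bigl(P\i\bigr)^n(s,s)$ (its equation~\eqref{eq:cmdp}), deriving aperiodicity from the inherited positivity of return times (the paper argues by contradiction on $\operatorname{gcd}\{n : \bar{P}^n(s,s)>0\}$, which is equivalent to your direct $\gcd$-divisibility argument) and irreducibility from the same positivity transfer, citing \citet{meyn2012Markovchains} for the general-state-space definitions. Your explicit observation that averaging over agents commutes with marginalizing over actions, and your flagging of the $\psi$-irreducibility subtleties, are points the paper leaves implicit, but they do not change the argument.
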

\begin{proof}
  Suppose $\pi$ is given. We first show that $\mathcal{M}$ is also aperiodic. If not, by the definition of aperiodicity \citep[Page 121]{meyn2012Markovchains}, there exists $s\in \mathcal{S}$ such that
  $$
  \bar{d}(s) \coloneqq \operatorname{gcd}\{ n: \bar{P}^{n}(s,s) > 0 \} > 1,
  $$
  where $\operatorname{gcd}$ returns the greatest common divisor and we omit the subscript $\pi$ of $\bar{P}_{\pi}$ since we consider a fixed policy.
  The above inequality indicates that, for any $n\in \mathbb{N}\setminus \{ k \bar{d}(s) \}_{k\in \mathbb{N}}$, it holds that
  \begin{equation}\label{eq:cmdp}
  0 = \bar{P}^{n}(s,s) = \left( \frac{1}{N}\sumn P\i \right)^{n}(s,s) \ge \frac{1}{N^{n}}\left( P\i \right)^{n}(s,s), \quad \forall i\in [N].
  \end{equation}
  Now since $\mathcal{M}\i$ is aperiodic, $d\i(s)\coloneqq \operatorname{gcd}\{ n: (P\i)^{n}(s,s) > 0 \} = 1$. Thus there exists $n\in \mathbb{N}\setminus \{ k \bar{d}(s) \}_{k\in\mathbb{N}}$ such that $(P\i)^{n}(s,s) > 0$ (otherwise $d\i(s) \ge \bar{d}(s)$), which contradicts to \cref{eq:cmdp}.
  Therefore, we conclude that $\bar{d}(s) = 1$ for any $s\in \mathcal{S}$, and thus $\bar{\mathcal{M}}$ is aperiodic.
  We now show that $\bar{\mathcal{M}}$ is irreducible given $\{ \mathcal{M}\i \}$ are irreducible \citep[Page 93]{meyn2012Markovchains}. For any $A\subset \mathcal{B}(\mathcal{S})$ with positive measure, where $\mathcal{B}(\mathcal{S})$ is the Borel $\sigma$-field on $\mathcal{S}$, we have $\min \{ n: (P\i)^{n}(s,A) >0 \} < +\infty$ for any $s\in \mathcal{S}$ and $i\in[N]$. Then again by \cref{eq:cmdp}, we get
  $$
  \min \{ n: \bar{P}^{n}(s,A)>0 \} \le \min_{i\in[N]}\min \left\{ n: \frac{1}{N^{n}}\left( P\i \right)^{n}(s,A) >0 \right\} < +\infty.
  $$
  Therefore, $\bar{\mathcal{M}}$ is irreducible. 
\end{proof}

When the state-action space is finite, \cref{prop:cmdp} covers \citet[Proposition 1]{wang2023FederatedTemporal}.

By \cref{prop:cmdp}, we can confidently regard $\bar{\mathcal{M}}$ as the MDP of a virtual agent, which does not exhibit any distinctive properties in comparison to the actual agents. 
Therefore, we denote $\mathcal{M}^{(0)} \coloneqq \bar{\mathcal{M}}$ and define the extended number set $[\bar{N}] \coloneqq [N] \cup \{ 0 \} = \{ 0,1, \ldots ,N \}$.
When we drop the superscript $(i)$, it should be clear from the context if we are talking about the central MDP $\bar{\mathcal{M}}$ or an arbitrary MDP $\mathcal{M}\i$.
Clearly, the extended MDP set $\{ \mathcal{M}\i \}_{i\in[\bar{N}]}$ still satisfies \cref{asmp:steady,asmp:ker-het,asmp:r-het}, and thus \cref{thm:fix-drift}.
Now we can specify the special parameter $\theta_{*}$ in \cref{thm}: it is the unique solution to \cref{eq:bellman} for $\bar{\mathcal{M}}$.
In other words, \cref{thm} asserts that the algorithm converges to the central optimal parameter.

\section{Notation} \label{sec:nota}

Before presenting lemmas and proofs of our main theorems, we introduce some notation that will aid in our analysis.
We introduce a notation for the unprojected central parameter:
$$
\breve{\theta}_{t+1} = \frac{1}{N} \sumn \left( \theta\it + \alpha_t g\it \right), \quad \text{when } t+1\equiv 0\pmod K.
$$
Then, $\theta\itt = \bar{\theta}_{t+1} = \Pi_{\bar{G}}(\breve{\theta}_{t+1})$ when $t+1\equiv 0\pmod K$.
It's easy to verify that for any $\|\theta\|\le \bar{G}$, we have
\begin{equation}\label{eq:proj-prop}
\|\bar{\theta}_t - \theta\| \le \|\breve{\theta}_t -\theta\|.
\end{equation}

Then we define some notations on the MDPs. Note that all these definitions apply to the extended MDP set $\{ \mathcal{M}\i \}_{i\in[\bar{N}]}$ that includes the central MDP.

\begin{definition}[Steady distributions]
	\cref{asmp:steady} guarantees the existence of a steady state distribution for any MDP and policy $\pi$.
	We denote $\eta_{\theta}\i$ as the steady state distribution with respect to MDP $\mathcal{M}\i$ and policy $\pi_{\theta}$, i.e.,
	\[
		\eta\i_{\theta}(s) \coloneqq \lim_{t\to\infty}P\i_{\pi_{\theta}}(S_t = s|S_0 = s_0).
	\]
	Additionally, given a policy $\pi_{\theta}$, the steady state-action distribution is defined as
	\[\mu\i_{\theta}(s,a) \coloneqq  \eta\i_{\theta}(s)\cdot \pi_{\theta}(a|s).\]
	Then, the two-step steady distribution is defined as
	\[
		\varphi\i_{\th}(s,a,s',a') \coloneqq \mu\i_{\th}(s,a)P_{a}\i(s,s')\pi_{\theta}(a'|s').
	\]
	For a local parameter $\theta\it$, we simplify the above notations as follows:
	\[
		\eta\it \coloneqq \eta\i_{\theta\it},\quad
		\mu\it \coloneqq \mu\i_{\theta\it},\quad
		\varphi\it \coloneqq \varphi\i_{\theta\it}.
	\]
\end{definition}

We are now ready to provide the precise definitions of the semi-gradients discussed in \cref{sec:anlys}.

\begin{definition}[Semi-gradients] \label{def:semi-grad}
	As indicated by \eqref{eq:g}, a semi-gradient is a function of both the parameter $\theta$ and the observation tuple $O = (s,a,s',a')$, while the observation tuple is dependent on the local Markovian trajectory.
	Therefore, the general form of a semi-gradient is
	\[
		g\ita\left(\theta; O\it\right)
		\coloneqq \phi\left(s\it,a\it\right)\left(r\i\left(s\it,a\it\right) + \gamma\phi^T\left(s\itt,a\itt\right)\theta - \phi^T\left(s\it,a\it\right)\theta\right),
	\]
	where $O\it = \left(s\it,a\it,s\itt,a\itt\right)$ is the observation of agent $i$ at time step $t$, and the subscript $t\!-\!\tau$ indicates that the trajectory after time step $t\!-\!\tau$ follows a fixed policy $\pi_{\theta\ita}$, i.e.,
	\[
		a\i_{t-\tau},a\i_{t-\tau+1}, \ldots, a\i_{t+1} \sim \pi_{\theta\i_{t-\tau}}.
	\]
	When $\tau = 0$, the semi-gradient corresponds to an actual SARSA trajectory, and we omit the subscript and the observation argument, i.e.,
	\[
		g\i(\theta) \coloneqq g\it\left(\theta; O\it\right).
	\]
	When $\tau > 0$, the semi-gradient corresponds a virtual trajectory, and we use $\tilde{O}_t$ in place of $O_t$ to indicate it is a virtual observation at the current time step.

	We add a bar to denote the mean-path semi-gradients, i.e.,
	\[
		\bar{g}\ita(\theta) \coloneqq \EE_{\varphi\ita}\left[ g\ita\left( \theta,O \right) \right] = \EE_{\varphi\ita}\left[\phi(s,a)(r\i(s,a) + \gamma\phi^T(s',a')\theta - \phi^T(s,a)\theta)\right],
	\]
	where the expectation is taken over the two-step observation steady distribution:
	\[
		O = (s,a,s',a') \sim \varphi\ita \coloneqq \varphi\i_{\theta\ita}.
	\]
	For mean-path semi-gradients, the randomness of the observation is eliminated, and the parameter $\theta$ is the only argument,
	and we can substitute the subscript $t\!-\!\tau$ with a general parameter $\theta'$; then we define
	\[
		\bar{g}\i_{\theta'}(\theta) \coloneqq \EE_{\varphi\i_{\theta'}}\left[\phi(s,a)(r\i(s,a) + \gamma\phi^T(s',a')\theta - \phi^T(s,a)\theta)\right].
	\]

	We omit the superscript $(i)$ when referring to the central MDP $\bar{\mathcal{M}}$. For instance,
	\[
		\bar{g}\tta(\theta) \coloneqq \EE_{\varphi\tta}\left[\phi(s,a)(\bar{r}(s,a) + \gamma\phi^T(s',a')\theta - \phi^T(s,a)\theta)\right].
	\]

	Finally, for notational simplicity, we use bold symbols to denote the average semi-gradients, e.g.,
	\[
		\bm{g}_{t-\tau}(\bm{\theta}_t) = \frac{1}{N}\sumn g\ita\left(\theta\it\right).
	\]

	The above notations will be used in combination, e.g.,
	\[
		\bar{\bm{g}}(\bm{\theta}_t) = \frac{1}{N}\sumn \bar{g}\i\left(\theta\it\right) = \frac{1}{N}\sumn \bar{g}_t\i\left(\theta\it\right).
	\]

\end{definition}

We can further decompose semi-gradients into TD operators.

\begin{definition}[TD operators]\label{def:td-op}
	A semi-gradient $g\ita\left( \theta,O\it \right)$ can be decomposed into the following two two operators:
	\[
		g\ita\left( \theta,O\it \right) = A\ita\left( O\it \right)\theta + b\ita\left( O\it \right).
	\]
	where
	\[
		\begin{cases}
			A\i_{t-\tau}\left(O\it\right) & = \phi\left(s\it,a\it\right)\left(\gamma\phi^T\left(s\itt,a\itt\right) - \phi^T\left(s\it,a\it\right)\right), \\[2ex]
			b\ita\left(O\it\right)        & = \phi\left(s\it, a\it\right)r\i\left(s\it,a\it\right),
		\end{cases}
		\quad a\it, a\itt \sim \pi_{\theta\i_{t-\tau}}.
	\]
	Similar to \cref{def:semi-grad}, we can define other TD operators for each semi-gradient, e.g., the mean-path TD operators:
	\[
		\begin{cases}
			\bar{A}\i_{\theta} & = \EE_{\varphi\i_{\theta}}[A\i\left(O\right)], \\[1ex]
			\bar{b}\i_{\theta} & = \EE_{\mu\i_{\theta}}[b\i\left(O\right)].     \\
		\end{cases}
	\]
\end{definition}

We summarize the notations defined in this section and other notations used in our analysis in \cref{tb:notation}.

\begin{table}[ht]
	\caption{~Notation}
	\label{tb:notation}
	\renewcommand{\arraystretch}{1.2}
	\centering\begin{tabular}{cl}
		\toprule
		Notation                                           & Definition                                                                  \\
		\midrule
		$[N], [\bar{N}]$                                   & The set of $N$ numbers and the set of $N+1$ numbers including $0$           \\
		$\mathcal{M}\i, \bar{\mathcal{M}}$                 & Markov decision processes                                                   \\
		$\S, \A,\Theta$                                    & State space, action space, and parameter space                              \\
		$r\i, \bar{r}, P\i, \bar{P}$                       & Reward functions and transition kernels                                     \\
		$S\it, U\it, O\it$                                 & Agent $i$'s state, action, and observation random variable at time step $t$ \\
		$s, a, o$                                          & Instances of the state, action, and observation                             \\
		$\pi, \Gamma$                                      & A policy and the policy improvement operator                                \\
		$\|\cdot \|_{\mathrm{TV}}$                         & Total variation distance and its induced norm for transition kernels        \\
		$q, Q$                                             & True Q-value function and estimated Q-value function                        \\
		$\phi, \theta$                                     & Feature map and feature weight (parameter)                                  \\
		$\Pi_{\pi}, \Pi_{\bar{G}}$                         & Orthogonal projection operator                                              \\
		$T_{\pi}$                                          & Bellman operator                                                            \\
		$\pi\i_{*}, \theta\i_{*}$                          & Optimal policies and optimal parameters                                     \\
		$\eta\i_{\theta}, \mu\i_{\theta}, \varphi\i_\theta$ & Steady distributions                                                        \\
		$g$                                                & Semi-gradient                                                               \\
		$A, b, Z$                                          & Temporal difference operators                                               \\
		$h$                                                & $h(\theta) \coloneqq R + (1+\gamma)\|\theta\|$                              \\
		$\Omega_t, \omega_t$                               & Client drift                                                                \\
		$\mathcal{F}_t$                                    & Filtration containing all randomness prior to time step $t$                 \\
		\bottomrule
	\end{tabular}
\end{table}

\section{Constants} \label{sec:const}

We first introduce two important constants that serve as base constants throughout the paper.
The first one is the upper bound of the norm of the central parameter, denoted by $G \ge \|\bar{\theta}\|$.
For this bound to hold, we require the projection radius $\bar{G}$ to be large enough such that $\left\| \theta\i_{*} \right\|\le \bar{G}$ for $i\in[\bar{N}]$.
The explicit expression for $G$ will be given in \cref{cor:G}.
Then, we define
\begin{equation}\label{eq:H}
	H = R + (1 + \gamma)G.
\end{equation}
The constant $H$ can be viewed as the scale of the problem, analogous to $|\S||\A|$ for the tabular setting that will be discussed in \cref{sec:tab}.
For local parameters, we define a similar function $h(\theta) \coloneqq R + (1+\gamma)\|\theta\|$.

We summarized the constants that appear in our analysis in \cref{tb:const}.
Notice that $\tau,\alpha_0$, and $\alpha_{t}$ in \cref{tb:const} refer to the constants in the case with a linearly decaying step-size. For the case with a constant size, these constants are fixed and specified in \cref{cor:err-con}.

\begin{table}[ht]
	\caption{~Constants}
	\label{tb:const}
	\centering\begin{tabular}{cp{0.309\textwidth}p{0.19\textwidth}c}
		\toprule
		Notation                    & Meaning                                                        & Reference                                                & Range or Order                  \\
		\midrule
		$N$                         & Number of agents                                               & \cref{sec:pre-mdp}                                       & $\mathbb{N}$                    \\
		$R$                         & Reward cap                                                     & \cref{sec:pre-mdp}                                       & $(0,+\infty)$                   \\
		$S, A$                      & Measures of the state space and action space                   & \cref{sec:pre-mdp}                                       & $(0,+\infty]$                   \\
		$\gamma$                    & Discount factor                                                & \cref{sec:pre-mdp}                                       & $(0,1)$                         \\
		$m_i, m$                    & Markov chain mixing constant                                   & \cref{asmp:steady}\newline and \cref{lem:distri-hetero}  & $[1,+\infty)$                   \\
		$\rho_i, \rho$              & Markov chain mixing rate                                       & \cref{asmp:steady} \newline and \cref{lem:distri-hetero} & $(0,1)$                         \\
		$\sigma, \sigma'$           & Steady distribution perturbation constant                      & \cref{lem:distri-hetero}                                 & $O(\log m /(1-\rho))$           \\
		$\bar{G}$                   & Algorithm projection radius                                    & Algorithm~\ref{alg}                                      & $(0,+\infty)$                   \\
		$G$                         & Parameter norm upper bound                                     & \cref{cor:G}                                             & $O(\bar{G} + R)$                \\
		$H$                         & Problem scale                                                  & Equation~\cref{eq:H}                                     & $O(\bar{G} + R)$                \\
		$L$                         & Lipschitz constant for the policy improvement operator         & \cref{asmp:lip}                                          & $[0, w /(H\sigma)]$             \\
		$K$                         & Local update period                                            & \cref{alg:agg}                                           & $\mathbb{N}$                    \\
		$\epsilon_{p},\epsilon_{r}$ & Environmental heterogeneity ratio                              & \cref{asmp:ker-het,asmp:r-het}                           & $[0,2]$                         \\
		$\Lambda$                   & Environmental heterogeneity                                    & \cref{thm:fix-drift}                                     & $O(H(\epsilon_p + \epsilon_r))$ \\
		$\lambda\i, \lambda$        & Exploration constant                                           & Equation~\eqref{eq:lam}                                  & $(0,1)$                         \\
		$w_{i}, w$                  & Convergence constant                                           & Equation~\cref{eq:w}                                     & $[(1-\gamma)\lambda/2, 1 /2)$   \\
		$\tau$                      & Backtracking period                                            & \cref{lem:decomp}                                        & $O(\log T)$                     \\
		$\alpha_0$                  & Initial step-size                                              & \cref{sec:alg-local}                                     & $(0, \min\{ 1 /8K, w /64 \}]$   \\
		$\alpha_t$                  & General step-size                                              & \cref{sec:alg-local}                                     & $O(1 /t)$                       \\
		$C_{\mathrm{drift}}$        & Client drift constant                                          & \cref{lem:drift}                                         & $O(KH)$                         \\
		$C_{\mathrm{prog}}$         & Parameter progress constant                                    & \cref{lem:v-para-prog}                                   & $O(H\tau)$                  \\
		$C_{\mathrm{back}}$         & Backtracking constant                                          & \cref{lem:stat}                                          & $O(\tau^2 w)$                   \\
		$C_{\mathrm{var}}$          & Gradient variance constant                                     & \cref{lem:grad-norm}                                     & $O(H^2w^2\tau^4))$       \\
		$\beta$                     & Young's inequality constant                                    & \cref{sec:thm-pf}                                        & $(0, w /7)$                     \\
		$H_{\mathrm{drift}}$        & Another drift constant                                         & \cref{sec:thm-pf}                                        & $O(H)$                          \\
		$C_{\alpha}$                & Step-size constant                                             & \cref{sec:thm-pf}                                        & $O(1)$                          \\
		$C_1$                       & First-order constant                                           & Equation~\cref{eq:per-const}                             & $O((1-\gamma)^{-1})$            \\
		$C_2$                       & Second-order constant                                          & Equation~\cref{eq:per-const}                             & $O(H^2\tau)$                    \\
		$C_3$                       & Third-order constant                                           & Equation~\cref{eq:per-const}                             & $O(H^2w\tau^4)$                 \\
		$C_4$                       & Fourth-order constant                                          & Equation~\cref{eq:per-const}                             & $O(H^2w^2\tau^5)$               \\
		$B$                         & Square of the convergence region radius for constant step-size & \cref{cor:err-con}                                       & see \cref{cor:err-con}          \\
		\bottomrule
	\end{tabular}
\end{table}

\section{Preliminary Lemmas} \label{sec:aux}

In this section, we present two preliminary lemmas that will be used throughout the analysis.

\begin{lemma}[Steady distribution differences] \label{lem:distri-hetero}
	For the same MDP, the TV distance between the steady distributions with regard to two different policies is bounded as follows:
	\[
		\begin{aligned}
			\left\|\eta_{\theta_1} -  \eta_{\theta_2}\right\|_{\mathrm{TV}}       & \le L\sigma'\left\| \theta_1 - \theta_2  \right\|_{2},     \\
			\left\|\mu_{\theta_1} -  \mu_{\theta_2}\right\|_{\mathrm{TV}}         & \le L(1+\sigma')\left\| \theta_1 - \theta_2  \right\|_{2}, \\
			\left\|\varphi_{\theta_1} -  \varphi_{\theta_2}\right\|_{\mathrm{TV}} & \le L(2+\sigma)\left\| \theta_1 - \theta_2  \right\|_{2},  \\
		\end{aligned}
	\]
	where $L$ is the Lipschitz constant of the policy improvement operator specified in \cref{asmp:lip} and $\sigma'$ is a constant determined by $m$ and $\rho$ specified in \cref{asmp:steady}. Letting $\sigma \coloneqq \sigma' + 2$, all three TV distances above are bounded by $L\sigma\|\theta_{1} - \theta_{2}\|_{2}$.
	Next, for a fixed parameter $\theta$, the TV distance between the steady distributions with regard to two MDPs is bounded as follows:
	\[
		\begin{aligned}
			\left\| \eta_{\theta}\i - \eta_{\theta}\j \right\|_{\mathrm{TV}}       & \le \sigma'\epsilon_{p},     \\
			\left\| \mu_{\theta}\i - \mu_{\theta}\j \right\|_{\mathrm{TV}}         & \le \sigma'\epsilon_{p},     \\
			\left\| \varphi_{\theta}\i - \varphi_{\theta}\j \right\|_{\mathrm{TV}} & \le (\sigma'+1)\epsilon_{p}.
		\end{aligned}
	\]
	By the above inequalities, for different MDPs and different parameters, we have
	\[
		\left\| \mu_{\theta\i}\i - \mu_{\theta\j}\j \right\|_\tv \le \sigma'\epsilon_p + L\sigma\|\theta\i - \theta\j\|_{2}.
	\]
\end{lemma}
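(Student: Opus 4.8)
The plan is to isolate a single perturbation inequality for stationary distributions and then derive every line of the lemma by applying it to the appropriate pair of Markov kernels, upgrading from $\eta$ to $\mu$ and then to $\varphi$ by peeling off one factor at a time.

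\textbf{Core tool.} Let $P_1,P_2$ be two state-transition kernels, each induced by some policy on some MDP satisfying \cref{asmp:steady}, with stationary distributions $\eta_1,\eta_2$. Writing $\delta=\eta_1-\eta_2$ and $u=\eta_1(P_1-P_2)$, the stationarity identities $\eta_k=\eta_kP_k$ give $\delta=u+\delta P_2$; and since both kernels are stochastic, $(P_1-P_2)\mathbf 1=0$, so $u$ has zero total mass. Iterating the recursion and using $\delta\mathbf 1=u\mathbf 1=0$ to replace $P_2^{t}$ by $P_2^{t}-\mathbf 1\eta_2$, I would obtain $\delta=\sum_{t\ge 0}u(P_2^{t}-\mathbf 1\eta_2)$. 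Splitting $u$ into its equal-mass Jordan parts and invoking the uniform mixing bound of \cref{asmp:steady} for $P_2$, each summand is at most $m\rho^{t}\|u\|_{\mathrm{TV}}$; combining this with the trivial contraction $\|u(P_2^{t}-\mathbf 1\eta_2)\|_{\mathrm{TV}}\le\|u\|_{\mathrm{TV}}$ and splitting the geometric sum at $t_0\approx\log m/\log(1/\rho)$ yields
\[
  \|\eta_1-\eta_2\|_{\mathrm{TV}}\le\sigma'\,\|\eta_1(P_1-P_2)\|_{\mathrm{TV}},\qquad \sigma'=O\!\left(\tfrac{\log m}{1-\rho}\right).
\]

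\textbf{Same MDP, different policies.} Here $P_1=P_{\pi_{\theta_1}}$ and $P_2=P_{\pi_{\theta_2}}$ share the action kernel $P_a$, so $\eta_{\theta_1}(P_1-P_2)$ integrates $\pi_{\theta_1}(\cdot|s)-\pi_{\theta_2}(\cdot|s)$ against $P_a(s,\cdot)$; bounding the inner TV distance by $\sup_s\|\pi_{\theta_1}(\cdot|s)-\pi_{\theta_2}(\cdot|s)\|_{\mathrm{TV}}\le L\|\theta_1-\theta_2\|_2$ via \cref{asmp:lip} gives the $\eta$-bound with constant $L\sigma'$. For $\mu_\theta(s,a)=\eta_\theta(s)\pi_\theta(a|s)$ I would write $\mu_{\theta_1}-\mu_{\theta_2}=\eta_{\theta_1}(\pi_{\theta_1}-\pi_{\theta_2})+(\eta_{\theta_1}-\eta_{\theta_2})\pi_{\theta_2}$: the first term contributes $L$ (integrating the Lipschitz bound against the probability measure $\eta_{\theta_1}$) and the second contributes $L\sigma'$, giving $L(1+\sigma')$. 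The same telescoping on the three factors of $\varphi_\theta$, with the middle factor $P_a$ identical across $\theta_1,\theta_2$, adds one further $L$ from the trailing $\pi_\theta(a'|s')$ factor, for a total $L(2+\sigma')=L\sigma$; hence all three distances are at most $L\sigma\|\theta_1-\theta_2\|_2$.

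\textbf{Different MDPs, fixed policy, and the combined bound.} Now $P_1=P\i_{\pi_\theta}$, $P_2=P\j_{\pi_\theta}$ use the same policy but different action kernels. The crucial reduction is $\eta_{\theta}\i(P\i_{\pi_\theta}-P\j_{\pi_\theta})=\mu_{\theta}\i(P\i-P\j)$, so the operator-norm definition of $\epsilon_p$ in \cref{asmp:ker-het} bounds it by $\|P\i-P\j\|_{\mathrm{TV}}\le\epsilon_p$ (using $\|\mu_{\theta}\i\|_{\mathrm{TV}}=1$), giving $\|\eta_{\theta}\i-\eta_{\theta}\j\|_{\mathrm{TV}}\le\sigma'\epsilon_p$. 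Since $\pi_\theta$ is shared, $\mu_{\theta}\i-\mu_{\theta}\j=(\eta_{\theta}\i-\eta_{\theta}\j)\pi_\theta$ inherits the same bound. For $\varphi$ I telescope over $\mu$ and $P$: the $\mu$-difference term gives $\sigma'\epsilon_p$, and the $P$-difference term reduces to $\int\mu_{\theta}\j(s,a)\|(P\i_a-P\j_a)(s,\cdot)\|_{\mathrm{TV}}$, which I bound by $\epsilon_p$ after observing that the operator norm equals the pointwise supremum $\sup_{s,a}\|(P\i_a-P\j_a)(s,\cdot)\|_{\mathrm{TV}}$ (attained at point masses), yielding $(\sigma'+1)\epsilon_p$. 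Finally the combined inequality follows from the triangle inequality through the intermediate measure $\mu_{\theta\i}\j$: the pair $(\mu_{\theta\i}\i,\mu_{\theta\i}\j)$ is controlled by $\sigma'\epsilon_p$ and the pair $(\mu_{\theta\i}\j,\mu_{\theta\j}\j)$ by $L(1+\sigma')\|\theta\i-\theta\j\|_2\le L\sigma\|\theta\i-\theta\j\|_2$.

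\textbf{Main obstacle.} The delicate point is the core perturbation inequality: one must verify that the Neumann-type series $\sum_t u(P_2^t-\mathbf 1\eta_2)$ converges in total variation and extract the \emph{logarithmic} constant $\sigma'$ rather than the cruder $m/(1-\rho)$, which requires the two-regime split of the geometric sum and care with the functional-analytic TV convention so that a zero-mass signed measure splits into equal positive and negative parts. A secondary subtlety is the identity $\|P\i-P\j\|_{\mathrm{TV}}=\sup_{s,a}\|(P\i_a-P\j_a)(s,\cdot)\|_{\mathrm{TV}}$, which is precisely what converts the weighted total variation appearing in the $\varphi$ computation into the heterogeneity constant $\epsilon_p$.
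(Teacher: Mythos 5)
Your proposal is correct and reaches every inequality of the lemma with the same constants; the application layer is identical to the paper's proof: the decomposition $\mu_{\theta_1}-\mu_{\theta_2}=\eta_{\theta_1}(\pi_{\theta_1}-\pi_{\theta_2})+(\eta_{\theta_1}-\eta_{\theta_2})\pi_{\theta_2}$, the three-factor telescoping for $\varphi$ with the shared middle kernel $P_a$, the identity $\mu^{(i)}_{\theta}-\mu^{(j)}_{\theta}=(\eta^{(i)}_{\theta}-\eta^{(j)}_{\theta})\pi_{\theta}$, and the final triangle inequality through the intermediate measure $\mu^{(j)}_{\theta^{(i)}}$ all match. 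The one genuine divergence is the core tool: the paper invokes \citet[Corollary 3.1]{mitrophanov2005SensitivityConvergence} as a black box for the stationary-distribution perturbation bound, whereas you re-derive it from the stationarity recursion $\delta=u+\delta P_2$ with $u=\eta_1(P_1-P_2)$, the zero-mass Jordan split, and the two-regime geometric sum split at $\hat{n}\approx\log_{\rho}m^{-1}$ — which recovers exactly the paper's constant $\sigma'=\hat{n}+m\rho^{\hat{n}}/(1-\rho)$ from \cref{eq:sigma}, and in fact yields the marginally sharper intermediate form $\|\eta_1-\eta_2\|_{\mathrm{TV}}\le\sigma'\|\eta_1(P_1-P_2)\|_{\mathrm{TV}}$ before relaxing to the operator norm. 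Your derivation buys self-containedness at the cost of length, and it also makes explicit two steps the paper leaves implicit: the reduction $\eta^{(i)}_{\theta}\bigl(P^{(i)}_{\pi_{\theta}}-P^{(j)}_{\pi_{\theta}}\bigr)=\mu^{(i)}_{\theta}\bigl(P^{(i)}-P^{(j)}\bigr)$, which is what legitimizes bounding the state-kernel difference by the state-action heterogeneity $\epsilon_p$ of \cref{asmp:ker-het}, and the identification $\bigl\|P^{(i)}-P^{(j)}\bigr\|_{\mathrm{TV}}=\sup_{s,a}\bigl\|\bigl(P^{(i)}_a-P^{(j)}_a\bigr)(s,\cdot)\bigr\|_{\mathrm{TV}}$ via Dirac measures, which is needed to bound the weighted integral $\int\mu^{(j)}_{\theta}(s,a)\bigl\|\bigl(P^{(i)}_a-P^{(j)}_a\bigr)(s,\cdot)\bigr\|_{\mathrm{TV}}\,\mathrm{d}s\,\mathrm{d}a$ appearing in the $\varphi$ computation (the paper's one-line step $\le\|\mu^{(i)}_{\theta}-\mu^{(j)}_{\theta}\|_{\mathrm{TV}}+\|P^{(i)}-P^{(j)}\|_{\mathrm{TV}}$ tacitly relies on it, since the integral of pointwise TV norms dominates, rather than is dominated by, $\|\mu(P^{(i)}-P^{(j)})\|_{\mathrm{TV}}$).
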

\begin{proof}
	For the same MDP, by \cite[Corollary 3.1]{mitrophanov2005SensitivityConvergence}, we get
	\[
		\left\|\eta_{\theta_1} -  \eta_{\theta_2}\right\|_{\mathrm{TV}} \le \sigma'\|P_{\theta_1} - P_{\theta_2}\|_{\mathrm{TV}},
	\]
	where $P_{\theta}(s,s') = \int_{\mathcal{A}}P_{a}(s,s')\pi_{\theta}(a|s)\d a$, and
	\[\|P_{\theta}\|_{\mathrm{TV}} = \sup_{\|q\|_{\mathrm{TV}}= 1} \|qP_{\theta}\|_{\mathrm{TV}} = \sup_{\|q\|_{\mathrm{TV}}=1} \left\|\int_{\mathcal{S}}q(s)P_{\theta}(s,\cdot )\d s\right\|_{\mathrm{TV}}.\]
	And the constant $\sigma'$ is defined by
	\begin{equation}\label{eq:sigma}
		\sigma' = \hat{n} + \frac{m\rho^{\hat{n}}}{1-\rho},
	\end{equation}
	where $\hat{n} = \left\lceil \log_{\rho}m^{-1} \right\rceil$, $m \coloneqq \max_{i\in[N]}m_i$, and $\rho \coloneqq \max_{i\in[N]}\rho_i$ with $m_i,\rho_i$ specified in \cref{asmp:steady}.
	Note that in the above inequalities, we actually should use $\sigma'_i$ defined by $m_i$ and $\rho_i$; but $\sigma'_{i}$ is bounded by $\sigma'$ for all $i\in[N]$, so we use this possibly looser bound for notational simplicity.
	Then, by \cref{asmp:lip}, we have
	\[
		\begin{aligned}
			\|P_{\theta_1} - P_{\theta_2}\|_{\mathrm{TV}}
			 & = \sup_{\|q\|_{\mathrm{TV}}=1}\int_{\mathcal{S}}\left| \int_{\mathcal{S}}q(s)(P_{\theta_1}(s,s') - P_{\theta_2}(s,s'))\d s' \right| \d s                                   \\
			 & = \sup_{\|q\|_{\mathrm{TV}}=1}\int_{\mathcal{S}}\left| \int_{\mathcal{S\times \mathcal{A}}}q(s)(P_{a}(s,s')\pi_{\theta_1}(a|s) - P_{a}(s,s')\pi_{\theta_2}(a|s))\d a\d s' \right| \d s                                   \\
			 & \le \sup_{\|q\|_{\mathrm{TV}}=1}\int_{\mathcal{S}^{2}\times \mathcal{A}}\left|q(s)\right| P_{a}(s,s') \left|\pi_{\theta_1}(a|s) - \pi_{\theta_2}(a|s)\right|\d a\d s' \d s \\
			 & = \sup_{\|q\|_{\mathrm{TV}}=1}\int_{\mathcal{S}\times \mathcal{A}}\left|q(s)\right| \left|\pi_{\theta_1}(a|s) - \pi_{\theta_2}(a|s)\right|\d a\d s                         \\
			 & = \sup_{\|q\|_{\mathrm{TV}}=1}\int_{\mathcal{S}}\left|q(s)\right|  \left\| \pi_{\theta_1}(\cdot |s) -  \pi_{\theta_2}(\cdot |s) \right\|_{\mathrm{TV}}\d s                 \\
			 & \le L \|\theta_{1} - \theta_2\|_{2}\sup_{\|q\|_{\mathrm{TV}}=1}\int_{\mathcal{S}}\left|q(s)\right| \d s                                                                    \\
			 & = L\|\theta_1 - \theta_2\|_{2}.
		\end{aligned}
	\]
	Therefore, we get
	\[
		\left\|\eta_{\theta_1} -  \eta_{\theta_2}\right\|_{\mathrm{TV}} \le L\sigma'\|\theta_1-\theta_2\|_{2}.
	\]
	Next, for the state-action distribution, we have
	\[
		\begin{aligned}
			\left\|\mu_{\theta_1} - \mu_{\theta_2}\right\|_{\mathrm{TV}}
			 & = \int_{\mathcal{S}\times \mathcal{A}}\left| \eta_{\theta_1}(s)\pi_{\theta_1}(a|s) - \eta_{\theta_2}(s)\pi_{\theta_2}(a|s) \right| \d s\d a                                                                                                            \\
			 & \le \int_{\mathcal{S}\times \mathcal{A}}\eta_{\theta_1}(s)\left| \pi_{\theta_1}(a|s) - \pi_{\theta_2}(a|s) \right| \d s\d a + \int_{\mathcal{S}\times \mathcal{A}}\left| \eta_{\theta_1}(s) - \eta_{\theta_2}(s) \right| \pi_{\theta_2}(a|s) \d a \d s \\
			 & \le L\|\theta_{1}-\theta_2\|_{2} + \left\|\eta_{\theta_1} - \eta_{\theta_2}\right\|_{\mathrm{TV}}                                                                                                                                                          \\
			 & \le L(1 + \sigma') \|\theta_1-\theta_2\|_{2}.
		\end{aligned}
	\]
	Similarly, we have
	\[
		\left\|\varphi_{\theta_1} -  \varphi_{\theta_2}\right\|_{\mathrm{TV}} \le L(2 + \sigma')\|\theta_1-\theta_2\|_{2}.
	\]

	Also by \cite[Corollary 3.1]{mitrophanov2005SensitivityConvergence}, we get
	\[
		\left\| \eta_{\theta}\i - \eta_{\theta}\j \right\|_{\mathrm{TV}} \le \sigma' \|P_{\theta}\i - P_{\theta}\j\|_{\mathrm{TV}} \le \sigma'\epsilon_{p},
	\]
	where $\epsilon_{p}$ is defined in \cref{asmp:ker-het}. Then, for the state-action distribution, we have
	\[
		\left\| \mu_{\theta}\i - \mu_{\theta}\j \right\|_{\mathrm{TV}} = \left\| \eta_{\theta}\i\cdot \pi_{\theta} - \eta_\theta\j \cdot \pi_{\theta} \right\|_{\mathrm{TV}} = \left\| \eta_{\theta}\i - \eta_{\theta}\j \right\|_{\mathrm{TV}} \le \sigma'\epsilon_p.
	\]
	And similarly, we have
	\[
		\begin{aligned}
			    & \left\| \varphi_{\theta}\i - \varphi_{\theta}\j \right\|_{\mathrm{TV}}                                                                                                                                   \\
			= & \int_{S^{2}\times A^{2}} \left| \mu_{\theta}\i(s,a)\pi_{\theta}(a|s)P\i_{a}(s,s')\pi_{\theta}(a'|s') - \mu_{\theta}\j(s,a)\pi_{\theta}(a|s)P\j_{a}(s,s')\pi_{\theta}(a'|s')  \right| \d s\d s'\d a\d a'  \\
			\le & \int_{S^{2}\times A^{2}} \left| \mu_{\theta}\i(s,a)\pi_{\theta}(a|s)P\i_{a}(s,s')\pi_{\theta}(a'|s') - \mu_{\theta}\j(s,a)\pi_{\theta}(a|s)P\i_{a}(s,s')\pi_{\theta}(a'|s')  \right| \d s\d s'\d a\d a'  \\
			    & +\int_{S^{2}\times A^{2}} \left| \mu_{\theta}\j(s,a)\pi_{\theta}(a|s)P\i_{a}(s,s')\pi_{\theta}(a'|s') - \mu_{\theta}\j(s,a)\pi_{\theta}(a|s)P\j_{a}(s,s')\pi_{\theta}(a'|s')  \right| \d s\d s'\d a\d a' \\
			\le & \left\| \mu_{\theta}\i - \mu_{\theta}\j \right\|_{\mathrm{TV}} + \|P\i - P\j\|_{\mathrm{TV}}                                                                                                             \\
			\le & (\sigma' + 1)\epsilon_{p} \le \sigma \epsilon_{p}
		\end{aligned}
	\]

	Finally, by the triangle inequality, we get
	\[
		\left\| \mu_{\theta\i}\i - \mu_{\theta\j}\j \right\|_{\mathrm{TV}} \le \sigma'\epsilon_{p} + L\sigma\|\theta\i - \theta\j\|_{2}.
	\]
\end{proof}

Similarly, we can bound the differences between TD operators defined in Definition~\ref{def:td-op}.

\begin{lemma}[TD operator differences] \label{lem:td-hetero}
	For the same MDP, the difference between the mean-path TD operators with regard to different parameters is bounded as follows:
	\[
		\begin{cases}
			\left\| \bar{A}_{\theta_1} - \bar{A}_{\theta_2} \right\|
			 & \le (1+\gamma)L\sigma \left\| \theta_{1} - \theta_{2} \right\|_{2}, \\[1ex]
			\left\| \bar{b}_{\theta_1} - \bar{b}_{\theta_2} \right\|
			 & \le RL\sigma \left\| \theta_{1} - \theta_{2} \right\|_{2}.
		\end{cases}
	\]
	Next, for a fixed parameter $\theta$, the difference between the mean-path TD operators with regard to different MDPs is bounded as follows:
	\[
		\begin{cases}
			\left\| \bar{A}\i_{\theta} - \bar{A}\j_{\theta} \right\|
			 & \le (1+\gamma)\sigma \epsilon_{p},    \\[1.5ex]
			\left\| \bar{b}\i_{\theta} - \bar{b}\j_{\theta} \right\|
			 & \le R(\epsilon_r + \sigma\epsilon_{p}).
		\end{cases}
	\]
	Then, by the triangle inequality, we get
	\[
		\begin{cases}
			\left\| \bar{A}\i_{\theta\i} - \bar{A}\j_{\theta\j} \right\|
			 & \le (1+\gamma)\sigma \left( L \left\| \theta\i-\theta\j \right\|_{2}+\epsilon_{p} \right),      \\[1.5ex]
			\left\| \bar{b}\i_{\theta\i} - \bar{b}\j_{\theta\j} \right\|
			 & \le R(\epsilon_r + \sigma \epsilon_{p}) + RL\sigma\left\| \theta\i-\theta\j \right\|_{2}.
		\end{cases}
	\]
\end{lemma}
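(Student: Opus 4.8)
The plan is to mirror the structure of the proof of \cref{lem:distri-hetero}: write each mean-path TD operator as an integral of a fixed, feature-dependent kernel against the relevant steady distribution, and then reduce every operator-norm difference to a total-variation distance between steady distributions, which \cref{lem:distri-hetero} already controls. The only genuinely new ingredient is a uniform bound on the operator/vector norm of the kernels, which follows directly from $\|\phi(s,a)\|_{2}\le 1$ and $\|r\|_\infty\le R$.

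For the $\bar{A}$ bounds, I would start from
\[
\bar{A}\i_\theta = \int_{\S^2\times\A^2} \varphi\i_\theta(s,a,s',a')\,\phi(s,a)\bigl(\gamma\phi^T(s',a')-\phi^T(s,a)\bigr)\,\d s\,\d a\,\d s'\,\d a',
\]
so that both the same-MDP and the same-parameter differences are integrals of the \emph{identical} matrix kernel $M(s,a,s',a')\coloneqq\phi(s,a)(\gamma\phi^T(s',a')-\phi^T(s,a))$ against a signed measure ($\varphi_{\theta_1}-\varphi_{\theta_2}$ in the first case, $\varphi\i_\theta-\varphi\j_\theta$ in the second). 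Since $M$ is an outer product, $\|M\|\le\|\phi(s,a)\|_2\bigl(\gamma\|\phi(s',a')\|_2+\|\phi(s,a)\|_2\bigr)\le 1+\gamma$ uniformly, so the matrix-valued triangle inequality $\|\int f M\|\le\int|f|\,\|M\|$ pulls the constant out and leaves $(1+\gamma)\|\varphi_{\theta_1}-\varphi_{\theta_2}\|_{\mathrm{TV}}$, resp.\ $(1+\gamma)\|\varphi\i_\theta-\varphi\j_\theta\|_{\mathrm{TV}}$. Invoking the two $\varphi$-bounds of \cref{lem:distri-hetero} (namely $L\sigma\|\theta_1-\theta_2\|_2$ and $(\sigma'+1)\epsilon_p\le\sigma\epsilon_p$) then yields the two $\bar{A}$ estimates.

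For the $\bar{b}$ bounds, the kernel is the vector $\phi(s,a)r(s,a)$ of norm at most $R$, and $\bar{b}$ is averaged over $\mu$ rather than $\varphi$. The same-MDP difference is immediate: only the measure changes, giving $R\|\mu_{\theta_1}-\mu_{\theta_2}\|_{\mathrm{TV}}\le RL\sigma\|\theta_1-\theta_2\|_2$. The same-parameter, different-MDP case is the one place needing a new step, since both the measure and the reward differ; I would use the additive split
\[
\mu\i_\theta r\i-\mu\j_\theta r\j = \mu\i_\theta\,(r\i-r\j)+(\mu\i_\theta-\mu\j_\theta)\,r\j,
\]
integrate $\phi$ against each piece, and bound the first term by $\|r\i-r\j\|_\infty\le R\epsilon_r$ (using that $\mu\i_\theta$ is a probability measure) and the second by $R\|\mu\i_\theta-\mu\j_\theta\|_{\mathrm{TV}}\le R\sigma\epsilon_p$, for a total of $R(\epsilon_r+\sigma\epsilon_p)$. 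Finally, the mixed bounds follow by inserting the intermediate operators $\bar{A}\i_{\theta\j}$ and $\bar{b}\i_{\theta\j}$ and applying the triangle inequality to combine the same-MDP (parameter-heterogeneity) estimate with the same-parameter (MDP-heterogeneity) estimate.

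The main obstacle I anticipate is bookkeeping rather than conceptual: I must ensure that the matrix/vector triangle inequality against a \emph{signed} measure, combined with the functional-analytic normalization of $\|\cdot\|_{\mathrm{TV}}$ used in the paper (twice the supremum over measurable sets), produces exactly the stated constants with no spurious factor of two, and that the reward-versus-distribution split in the heterogeneous $\bar{b}$ case cleanly separates the $\epsilon_r$ and $\epsilon_p$ contributions. Everything else is a direct transcription of the corresponding steps for the steady distributions in \cref{lem:distri-hetero}.
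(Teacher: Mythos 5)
Your proposal is correct and follows essentially the same route as the paper's proof: express each operator difference as an integral of the uniformly bounded kernel ($\|M\|\le 1+\gamma$ for $\bar{A}$, norm at most $R$ for $\bar{b}$) against a signed measure, invoke the TV bounds from \cref{lem:distri-hetero}, use exactly the same additive split of $\mu\i_\theta r\i - \mu\j_\theta r\j$ for the heterogeneous $\bar{b}$ case, and obtain the mixed bounds by the triangle inequality through intermediate operators. Your worry about a spurious factor of two is moot under the paper's functional-analytic TV convention (footnote in \cref{asmp:steady}), where $\int |f|\,|\d\mu_1-\d\mu_2|\le \|f\|_\infty\|\mu_1-\mu_2\|_{\mathrm{TV}}$ holds as stated, matching the paper's constants (the paper even retains the slightly sharper $\sigma'$ in place of $\sigma$ for the $\bar{b}$ heterogeneity term).
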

\begin{proof}
	For the same MDP, by \cref{def:td-op}, we have
	\[
		\begin{aligned}
			\left\| \bar{A}_{\theta_1} - \bar{A}_{\theta_2} \right\|
			=   & \left\| \int_{\S^2\times \A^2}\phi(s,a)(\gamma\phi^T(s',a') - \phi^T(a,s))(\d \varphi_{\theta_1}(s,a,s',a') - \d \varphi_{\theta_2}(s,a,s',a')) \right\| \\
			\le & (1+\gamma)\left\| \varphi_{\theta_1} - \varphi_{\theta_2} \right\|_{\tv}                                                                                 \\
			\le & (1+\gamma) L\sigma \left\| \theta_1-\theta_2 \right\|_{2},
		\end{aligned}
	\]
	where the last inequality comes from Lemma~\ref{lem:distri-hetero}. Similarly, we have
	\[
		\begin{aligned}
			\left\| \bar{b}_{\theta_1} - \bar{b}_{\theta_2} \right\|
			=   & \left\| \int_{\S\times \A}\phi(s,a)r(s,a)(\d \mu_{\theta_1}(s,a) - \d \mu_{\theta_2}(s,a)) \right\| \\
			\le & R \left\| \mu_{\theta_1}-\mu_{\theta_2} \right\|_{\tv}                                              \\
			\le & RL\sigma\left\| \theta_1-\theta_2 \right\|_{2}.
		\end{aligned}
	\]

	Then for the same parameter $\theta$, we have
	\[
		\begin{aligned}
			\left\| \bar{A}\i_{\theta} - \bar{A}\j_{\theta} \right\|
			=   & \left\| \int_{\S^2\times \A^2}\phi(s,a)(\gamma\phi^T(s',a') - \phi^T(a,s))(\d \varphi\i_{\theta}(s,a,s',a') - \d \varphi\j_{\theta}(s,a,s',a')) \right\| \\
			\le & (1+\gamma)\left\| \varphi\i_{\theta} - \varphi\j_{\theta} \right\|_{\tv}                                                                                 \\
			\le & (1+\gamma) \sigma \epsilon_{p},
		\end{aligned}
	\]
	where the last inequality comes from Lemma~\ref{lem:distri-hetero}. Similarly, we have
	\[
		\begin{aligned}
			\left\| \bar{b}\i_{\theta} - \bar{b}\j_{\theta} \right\|
			=   & \left\| \int_{\mathcal{S}\times \mathcal{A}}\phi(s,a)\left( r\i(s,a)\d \mu\i_{\theta}(s,a) - r\j(s,a)\d \mu\j_{\theta}(s,a)  \right)\right\|                                                             \\
			\le & \int_{\mathcal{S}\times \mathcal{A}}\left| r\i(s,a) - r\j(s,a)\right|\d \mu\i_{\theta}(s,a) + \int_{\mathcal{S}\times \mathcal{A}}r\j(s,a)\left| \d \mu\i_{\theta}(s,a) - \d \mu\j_{\theta}(s,a) \right| \\
			\le & R\epsilon_r + R\sigma'\epsilon_{p},                                                                                                                                                                           \\
		\end{aligned}
	\]
	where the last inequality comes from \cref{asmp:r-het} and \cref{lem:distri-hetero}.
\end{proof}


\section{Proof of Theorem~\ref{thm:fix-drift}}\label{sec:thm-drift-pf}

\begin{reptheorem}{thm:fix-drift}
  For any $i,j\in[\bar{N}]$, we have
  \[
    \left\| \theta\j_{*} - \theta\i_{*} \right\|_{2}
    \le \frac{1}{w_{j}}\left( R\epsilon_r + H\sigma\epsilon_{p}\right) \le \frac{\Lambda(\epsilon_{p},\epsilon_r)}{w},
  \]
  where $w \coloneqq \min_{i\in[\bar{N}]}w_i$; $w_i$ is defined in \cref{lem:des-dir} and $\Lambda(\epsilon_{p},\epsilon_r)$ is defined in \cref{lem:grad-hetero}.
\end{reptheorem}
\begin{proof}
  First, we formulate the Bellman optimal equation in terms of TD operators defined in \cref{def:td-op}:
  \[
    \bar{A}\i_*\theta\i_{*} + \bar{b}\i_* = 0,
  \]
  for any $i\in[\bar{N}]$, where
  \[
    \bar{A}\i_* \coloneqq \bar{A}\i_{\theta\i_{*}},\quad \bar{b}\i_{*} \coloneqq \bar{b}\i_{\theta\i_{*}}.
  \]
  Then for any $i,j\in[\bar{N}]$, we have
  \[
    \left(\bar{A}\j_{*} - \bar{A}\i_{*}\right)\theta\i_{*} + \bar{A}\j_{*}\left( \theta\j_{*}-\theta\i_{*} \right) = \bar{b}\i_{*} - \bar{b}\j_{*}.
  \]
  By \citet[Theorem 2]{tsitsiklis1997analysistemporaldifference}, $\bar{A}\j_{*}$ is negative definite 
	Therefore, $\bar{A}\j_{*}$ is non-singular, and we get
  \[
    \left\| \theta\j_{*} - \theta\i_{*} \right\|_{2}
    \le \left\| \left( \bar{A}\j_{*} \right)^{-1} \right\| \left\| \left( \bar{A}\i_{*}-\bar{A}\j_{*} \right)\theta\i_{*} + \left( \bar{b}\i_{*} - \bar{b}\j_{*} \right) \right\|_{2}.
  \]
  And we have
  \begin{align}\label{eq:fix-1}
    \left\| \left( \bar{A}\j_{*} \right)^{-1} \right\|
    =   & \sigma_{\min}^{-1}\left( \bar{A}\j_{*} \right)                                         \\\label{eq:fix-2}
    =   & \frac{1}{\left|\lambda_{\max}\left( \bar{A}\j_{*} \right)\right|}                      \\\label{eq:fix-3}
    \le & \frac{1}{-\Re{\lambda_{\max}\left( \bar{A}\j_{*} \right)}}                             \\\label{eq:fix-4}
    \le & \frac{1}{-\lambda_{\max}\left( \operatorname{sym}\left( \bar{A}\j_{*} \right) \right)} \\\label{eq:fix-5}
    =   & \frac{1}{2w_j},
  \end{align}
  where \eqref{eq:fix-1} uses the spectrum norm equality and $\sigma_{\min}$ returns the smallest singular value of a matrix;
  \eqref{eq:fix-2} and \eqref{eq:fix-3} use the fact that $\bar{A}\j_{*}$ is negative definite;
  \eqref{eq:fix-4} is by \cite[Theorem 10.28]{zhang2011Matrixtheory};
  and lastly, \eqref{eq:fix-5} is the definition of $w_j$ (see \cref{lem:des-dir}).

  Therefore, letting $G$ be large enough to contain $\{ \theta\i_{*} \}_{i\in[\bar{N}]}$, we get
  \[
    \left\| \theta\j_{*} - \theta\i_{*} \right\|_{2}
    \le \frac{1}{2w_j}\left( \left\| A\i_{*}-A\j_{*} \right\|G + \left\| b\i_{*} - b\j_{*} \right\| \right).
  \]
  By Lemma~\ref{lem:td-hetero}, we get
  \[
    \begin{aligned}
      \left\| \theta\j_{*} - \theta\i_{*} \right\|_{2}
      \le & \frac{1}{2w_{j}}\left( (1+\gamma)\sigma G\left(\epsilon_{p} + L\left\| \theta\i_{*}-\theta\j_{*} \right\|_{2}  \right) + R(\epsilon_r +\sigma\epsilon_{p})+ RL\sigma \left\| \theta\i_{*}-\theta\j_{*} \right\|_{2}\right) \\
      \le & \frac{1}{2 w_{j}}\left(R\epsilon_r + H\sigma\epsilon_{p} + LH\sigma \left\| \theta\i_{*}-\theta\j_{*} \right\|_{2}  \right).
    \end{aligned}
  \]
  We require that $LH\sigma \le w_j$ (the same restriction \eqref{eq:L} in \cref{lem:des-dir}); then we get
  \[
    \left\| \theta\j_{*} - \theta\i_{*} \right\|_{2}
    \le \frac{1}{2 w_{j}}\left(R\epsilon_r + H\sigma\epsilon_{p}\right) + \frac{w_j}{2 w_{j}} \left\| \theta\i_{*}-\theta\j_{*} \right\|_{2},
  \]
  which gives
  \[
    \left\| \theta\j_{*} - \theta\i_{*} \right\|_{2}
    \le \frac{1}{w_{j}}\left(R\epsilon_r + H\sigma\epsilon_{p}\right) \le \frac{\Lambda(\epsilon_{p},\epsilon_r)}{w},
  \]
  where $w \coloneqq \min_{i\in[\bar{N}]}w_i$ and $\Lambda(\epsilon_{p},\epsilon_r):= R\epsilon_{r} + H\sigma\epsilon_{p}$ (the same definition in \cref{lem:grad-hetero}).

\end{proof}

\section{Key Lemmas} \label{sec:use-lem}

In this section, we first decompose the mean squared error and then present seven lemmas, each bounding one term in the decomposition.

\subsection{Error Decomposition} \label{sec:err-dec}

\begin{lemma}[Error decomposition] \label{lem:decomp}
	The one-step mean squared error can be decomposed recursively as follows:
	\[
		\begin{aligned}\notag
			\EE & \left\| \bar{\theta}_{t+1} - \theta_{*}\right\|^{2} \le \EE\bigl\| \breve{\theta}_{t+1} - \theta_{*} \bigr\|^{2}  = \EE\left\| \bar{\theta}_{t} - \theta_{*} \right\|^{2}                                         \\[1.2ex]
			    & + 2\alpha _{t}\EE\left<\bar{\theta}_{t}-\theta_{*},\bar{g}\left(\bar{\theta}_t\right) - \bar{g}\left(\theta_{*}\right) \right>                                       & ~ & \textup{(descent direction)}      \\
			    & + \frac{2\alpha_{t}}{N}\sumn\EE\left<\bar{\theta}_{t}-\theta_{*},\bar{g}\i\left(\bar{\theta}_t\right) - \bar{g}\left(\bar{\theta}_{t}\right) \right>                 & ~ & \textup{(gradient heterogeneity)} \\
			    & + \frac{2\alpha_{t}}{N}\sumn\EE\left<\bar{\theta}_{t}-\theta_{*},\left( \bar{g}\i\left(\theta\it\right) - \bar{g}\i\left(\bar{\theta}_{t}\right) \right) \right>     & ~ & \textup{(client drift)}           \\
			    & + \frac{2\alpha_{t}}{N}\sumn\EE\left<\bar{\theta}_{t}-\theta_{*},\bar{g}\ita\left(\theta\it\right) - \bar{g}\i\left(\theta\it\right) \right>                         & ~ & \textup{(gradient progress)}      \\
			    & + \frac{2\alpha_{t}}{N}\sumn\EE\left<\bar{\theta}_{t}-\theta_{*},g\i\tta\left(\theta\it, \too\it\right) - \bar{g}\i\tta\left(\theta\it\right) \right>                & ~ & \textup{(mixing)}                 \\
			    & + \frac{2\alpha_{t}}{N}\sumn\EE\left<\bar{\theta}_{t}-\theta_{*},g\it\left(\theta\it, O\it\right) - g\i\tta\left(\theta\it, \too\it\right) \right>                   & ~ & \textup{(backtracking)}           \\
			    & + \al_{t}^{2}\EE\left\| \frac{1}{N}\sumn g\it\left(\theta\it\right) \right\|^{2}.                                                                                    & ~ & \textup{(gradient variance)}      \\
		\end{aligned}
	\]
\end{lemma}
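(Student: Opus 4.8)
The statement is an exact algebraic identity that becomes an inequality only through the aggregation projection, so the plan is to expand the squared norm of the unprojected central iterate and then insert a telescoping chain of intermediate semi-gradients. First I would dispose of the projection. On a non-aggregation step $\bar{\theta}_{t+1}=\breve{\theta}_{t+1}$ trivially, while on an aggregation step $\bar{\theta}_{t+1}=\Pi_{\bar{G}}(\breve{\theta}_{t+1})$; since the projection radius $\bar{G}$ is chosen so that $\|\theta_{*}\|\le\bar{G}$, the nonexpansiveness recorded in \eqref{eq:proj-prop} yields $\|\bar{\theta}_{t+1}-\theta_{*}\|\le\|\breve{\theta}_{t+1}-\theta_{*}\|$ in both cases. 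It therefore suffices to decompose $\EE\|\breve{\theta}_{t+1}-\theta_{*}\|^{2}$.

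Next I would substitute $\breve{\theta}_{t+1}=\bar{\theta}_{t}+\alpha_{t}\bm{g}_{t}(\bm{\theta}_{t})$, with $\bm{g}_{t}(\bm{\theta}_{t})=\frac{1}{N}\sumn g\it(\theta\it)$, and expand
\[
  \|\breve{\theta}_{t+1}-\theta_{*}\|^{2} = \|\bar{\theta}_{t}-\theta_{*}\|^{2} + 2\alpha_{t}\left<\bar{\theta}_{t}-\theta_{*},\bm{g}_{t}(\bm{\theta}_{t})\right> + \alpha_{t}^{2}\|\bm{g}_{t}(\bm{\theta}_{t})\|^{2}.
\]
The first summand is the recursion term and the last is the gradient-variance term, so the entire content lies in the cross term. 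Here I would invoke the defining property of $\theta_{*}$: it is the fixed point of the central MDP's projected Bellman equation (see \cref{sec:cmdp}), equivalently $\bar{g}(\theta_{*})=\bar{A}_{*}\theta_{*}+\bar{b}_{*}=0$ in the TD-operator form used in the proof of \cref{thm:fix-drift} specialized to the central MDP. This lets me rewrite the cross term as $2\alpha_{t}\left<\bar{\theta}_{t}-\theta_{*},\bm{g}_{t}(\bm{\theta}_{t})-\bar{g}(\theta_{*})\right>$ at no cost.

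Writing the inner vector as the agent average $\frac{1}{N}\sumn\bigl(g\it(\theta\it)-\bar{g}(\theta_{*})\bigr)$, I would then interpolate each summand through the chain
\[
  \bar{g}(\theta_{*}),\ \bar{g}(\bar{\theta}_{t}),\ \bar{g}\i(\bar{\theta}_{t}),\ \bar{g}\i(\theta\it),\ \bar{g}\ita(\theta\it),\ g\i\tta(\theta\it,\too\it),\ g\it(\theta\it,O\it),
\]
in which the mean-path object $\bar{g}\ita$ and the virtual semi-gradient $g\i\tta$ are the interpolants under the policy frozen at step $t-\tau$, as specified in \cref{def:semi-grad}. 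Consecutive terms cancel in pairs, and reading off the six successive differences reproduces verbatim the descent-direction, gradient-heterogeneity, client-drift, gradient-progress, mixing, and backtracking terms. Combining with the variance term and taking expectations gives the claimed decomposition.

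Since no inequality is introduced except in the projection step, the decomposition is tight and the argument is pure bookkeeping rather than estimation. The only points requiring care are (i) verifying $\bar{g}(\theta_{*})=0$, which is precisely what allows the telescope to close onto $\bm{g}_{t}(\bm{\theta}_{t})$ rather than $\bm{g}_{t}(\bm{\theta}_{t})-\bar{g}(\theta_{*})$; and (ii) correctly identifying the backtracked interpolants $\bar{g}\ita(\theta\it)$ and $g\i\tta(\theta\it,\too\it)$ as, respectively, the mean-path semi-gradient and the virtual-trajectory semi-gradient associated with the frozen policy $\pi_{\theta\ita}$. The genuine work---bounding each of the seven terms---is then deferred to the lemmas of \cref{sec:use-lem}.
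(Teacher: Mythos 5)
Your proposal is correct and coincides with the paper's own argument: the paper justifies Lemma~\ref{lem:decomp} with exactly this bookkeeping, namely the nonexpansiveness of $\Pi_{\bar{G}}$ via \eqref{eq:proj-prop}, the expansion $\breve{\theta}_{t+1}=\bar{\theta}_{t}+\alpha_{t}\bm{g}_{t}(\bm{\theta}_{t})$, the central fixed-point identity $\bar{g}(\theta_{*})=0$, and the identical telescoping chain of intermediate semi-gradients whose consecutive differences are the six named terms. Your two flagged verification points (that $\bar{g}(\theta_{*})=0$ follows from the central MDP's projected Bellman equation, and the identification of the frozen-policy interpolants per \cref{def:semi-grad}) are precisely the content behind the paper's remark that the decomposition holds ``given $\bar{g}(\theta_{*})=0$,'' so nothing further is needed.
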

One can verify the above decomposition given $\bar{g}(\theta_{*}) = 0$.

\subsection{Descent Direction} \label{sec:des-dir}

\begin{lemma}[Descent direction] \label{lem:des-dir}
	There exist positive constants $\{ w_i \}_{i\in[\bar{N}]}$ such that for any $\|\theta\| \le G$, we have
	\[
		\begin{aligned}
			\left<\theta - \theta_{*}\i, \bar{g}\i(\theta) - \bar{g}\i(\theta_{*}\i) \right> & \le -w_i\left\| \theta - \theta_{*}\i \right\|^{2},\quad \forall i\in[\bar{N}]. \\
		\end{aligned}
	\]
\end{lemma}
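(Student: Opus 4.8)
The plan is to reduce the \emph{on-policy} quantity $\bar{g}\i(\theta) - \bar{g}\i(\theta_{*}\i)$ to a \emph{fixed-policy} TD contraction plus a controllable policy-drift perturbation, following the single-agent template of \citet{melo2008analysisreinforcement,zou2019Finitesampleanalysis}. Writing $\bar{g}\i(\theta) = \bar{A}\i_{\theta}\theta + \bar{b}\i_{\theta}$ via \cref{def:td-op}, and recalling that $\theta_{*}\i$ is the fixed point so that $\bar{g}\i(\theta_{*}\i) = \bar{A}\i_{\theta_{*}\i}\theta_{*}\i + \bar{b}\i_{\theta_{*}\i} = 0$, I would add and subtract $\bar{A}\i_{\theta_{*}\i}\theta$ to obtain
\[
\bar{g}\i(\theta) - \bar{g}\i(\theta_{*}\i) = \bar{A}\i_{\theta_{*}\i}\left(\theta - \theta_{*}\i\right) + \left(\bar{A}\i_{\theta} - \bar{A}\i_{\theta_{*}\i}\right)\theta + \left(\bar{b}\i_{\theta} - \bar{b}\i_{\theta_{*}\i}\right).
\]
Taking the inner product with $\theta - \theta_{*}\i$ then splits the target into a principal contraction term (the first summand) and a drift term (the remaining two summands).

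For the principal term, since only the symmetric part of a matrix contributes to a real quadratic form, I would invoke \citet[Theorem 2]{tsitsiklis1997analysistemporaldifference}: for the frozen policy $\pi_{\theta_{*}\i}$ the TD matrix $\bar{A}\i_{\theta_{*}\i}$ is negative definite, so $\operatorname{sym}(\bar{A}\i_{\theta_{*}\i})$ has strictly negative largest eigenvalue. Defining $w_i := -\tfrac{1}{2}\lambda_{\max}\bigl(\operatorname{sym}(\bar{A}\i_{\theta_{*}\i})\bigr) > 0$ (the same constant used in the proof of \cref{thm:fix-drift}) gives $\left\langle \theta - \theta_{*}\i, \bar{A}\i_{\theta_{*}\i}(\theta - \theta_{*}\i) \right\rangle \le -2w_i\|\theta - \theta_{*}\i\|^2$; quantitatively $w_i = \Theta((1-\gamma)\lambda\i)$ with $\lambda\i$ the exploration/minimum-feature-eigenvalue constant, matching the order recorded in \cref{tb:const}. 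For the drift term, Cauchy--Schwarz together with the same-MDP estimates of \cref{lem:td-hetero}, namely $\|\bar{A}\i_{\theta} - \bar{A}\i_{\theta_{*}\i}\| \le (1+\gamma)L\sigma\|\theta - \theta_{*}\i\|$ and $\|\bar{b}\i_{\theta} - \bar{b}\i_{\theta_{*}\i}\| \le RL\sigma\|\theta - \theta_{*}\i\|$, combined with $\|\theta\| \le G$ and $H = R + (1+\gamma)G$, bounds it by $HL\sigma\|\theta - \theta_{*}\i\|^2$. Finally, since \cref{asmp:lip} enforces $L \le w/(H\sigma)$ with $w = \min_i w_i$, so that $HL\sigma \le w \le w_i$, the drift is absorbed into half of the contraction:
\[
\left\langle \theta - \theta_{*}\i,\ \bar{g}\i(\theta) - \bar{g}\i(\theta_{*}\i) \right\rangle \le \left(-2w_i + HL\sigma\right)\left\| \theta - \theta_{*}\i \right\|^2 \le -w_i\left\| \theta - \theta_{*}\i \right\|^2.
\]

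The main obstacle is conceptual rather than computational: $\bar{g}\i$ is genuinely \emph{nonlinear} in $\theta$, because the two-step steady distribution $\varphi\i_{\theta}$ against which the expectation is taken itself depends on $\theta$ through the policy $\pi_\theta$. Negative definiteness is guaranteed only for each frozen policy, so the whole argument hinges on isolating the policy-induced variation of $\bar{A}\i$ and $\bar{b}\i$ as a perturbation that is first order in $\|\theta - \theta_{*}\i\|$ (the Lipschitz-in-TV estimates of \cref{lem:td-hetero}, which themselves trace back to \cref{asmp:lip} and the steady-distribution sensitivity of \cref{lem:distri-hetero}), and then ensuring, via the smoothness budget $L \le w/(H\sigma)$, that this perturbation is dominated by the fixed-policy contraction. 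This is precisely why a \emph{sufficiently smooth} policy-improvement operator is indispensable: without it the drift term can overwhelm the contraction, and SARSA may fail to be a descent map.
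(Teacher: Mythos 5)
Your proposal is correct and takes essentially the same route as the paper's proof: the identical decomposition $\bar{g}\i(\theta)-\bar{g}\i(\theta\i_{*})=\bar{A}\i_{\theta\i_{*}}(\theta-\theta\i_{*})+(\bar{A}\i_{\theta}-\bar{A}\i_{\theta\i_{*}})\theta+(\bar{b}\i_{\theta}-\bar{b}\i_{\theta\i_{*}})$, negative definiteness of the frozen-policy matrix via \citet[Theorem 2]{tsitsiklis1997analysistemporaldifference} with $w_i$ defined through $\lambda_{\max}(\operatorname{sym}(\bar{A}\i_{\theta\i_{*}}))$, the drift bounded by $HL\sigma\|\theta-\theta\i_{*}\|^2$ via \cref{lem:td-hetero}, and absorption using the budget $L\le w/(H\sigma)$ from \cref{asmp:lip}. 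The only cosmetic difference is that the paper packages the drift as an identity perturbation $\widetilde{A}_{\theta_*}=\bar{A}_{\theta_*}+L\sigma H\cdot I$ and invokes a matrix eigenvalue bound, whereas you absorb it directly via Cauchy--Schwarz---arithmetically the same step.
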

\begin{proof}
	We drop the subscript $(i)$ in this lemma since the following derivation holds for all MDPs. We first denote $\Delta\theta = \theta - \theta_{*}$. Then, we have
	\begin{align}\notag
		\left<\theta - \theta_{*},\bar{g}(\theta) - \bar{g}(\theta_{*}) \right>
		=         & \Delta\theta^{T}\left( \left( \bar{A}_{\theta}\theta + \bar{b}_{\theta} \right) - \left( \bar{A}_{\theta_{*}}\theta_{*} + \bar{b}_{\theta_*} \right) \right)                             \\\notag
		=         & \Delta\theta^{T}\bar{A}_{\theta_{*}}\Delta\theta + \Delta\theta^T\left( \bar{A}_{\theta} - \bar{A}_{\theta_{*}} \right)\theta + \Delta\theta^T (\bar{b}_{\theta} - \bar{b}_{\theta_{*}}) \\\notag
		\le       & \Delta\theta^{T}\bar{A}_{\theta_{*}}\Delta\theta
		+ \|\Delta\theta\| \left\|\bar{A}_{\theta} - \bar{A}_{\theta_{*}}\right\|\|\theta\|
		+ \|\Delta\theta\|\left\|\bar{b}_{\theta} - \bar{b}_{\theta_{*}}\right\|                                                                                                                             \\
		\label{eq:gd-2}
		\le       & \Delta\theta^{T}\bar{A}_{\theta_{*}}\Delta\theta + (1+\gamma)L\sigma\|\theta\|\|\Delta\theta\|^{2} + RL\sigma\|\Delta\theta\|^{2}                                                        \\\notag
		=         & \Delta\theta^T(\bar{A}_{\theta_{*}} + L\sigma(R + (1+\gamma)\|\theta\|)I)\Delta\theta                                                                                                    \\\label{eq:gd-1}
		\le       & \Delta\theta^T\left( \bar{A}_{\theta_{*}} + L\sigma H\cdot I\right)\Delta\theta                                                                                                          \\\notag
		\eqqcolon & \Delta\theta^T\widetilde{A}_{\theta_{*}}\Delta\theta,
	\end{align}
	where \eqref{eq:gd-2} uses Lemma \ref{lem:td-hetero}, and \cref{eq:gd-1} uses the fact that $\|\theta\|\le G$ and $H\coloneqq R + (1+\gamma)G$.
	By \cite[Theorem 2]{tsitsiklis1997analysistemporaldifference}, $\bar{A}_{\theta_{*}}$ is negative definite in the sense that $x^*Ax < 0$ for any vector $x\in \R^{d}$. Specifically, for any nonzero $x\in\R^{d}$, we denote $u = x^T\phi(S,A)$ and $u' = x^T\phi(S',A')$. Then, for any $x\neq 0$, by \cref{def:td-op}, we have
	\begin{equation}\label{eq:gd-3}
		x^T\bar{A}_{\theta}x = \EE_{\varphi_{\theta}}\left[ \gamma uu' - u^2\right]
		= \gamma\EE[uu'] - \EE[u^2]
		\le \frac{\gamma}{2}\left( \EE[u^2] + \EE[u'^2] \right) - \EE[u^2]
		= (\gamma - 1)\EE[u^2]
		< 0,
	\end{equation}
	where we use the fact that $\EE[u^2] = \EE[u'^2]$ under a steady distribution.
	Let $\Phi\i_{\theta} \coloneqq \EE_{\mu\i_{\theta}}[\phi(S,A)\phi^T(S,A)]\succ 0$. We define
	\begin{equation}\label{eq:lam}
		\lambda\i \coloneqq \lambda_{\min}\left(\Phi\i_{\theta\i_{*}}\right), \quad
		\lambda \coloneqq \min_{i\in[\bar{N}]}\lambda\i.
	\end{equation}
	By \cref{eq:gd-3}, we have
	\begin{equation}\label{eq:w}
		-w_i \coloneqq \frac{1}{2}\lambda_{\max}\left( \operatorname{sym}\left( \bar{A}\i_{\theta\i_{*}} \right)\right) \le \frac{1}{2}(\gamma-1) \lambda_{\min}\left( \Phi\i_{\theta\i_{*}} \right) = \frac{\gamma-1}{2}\lambda\i.
	\end{equation}
	where $\operatorname{sym}(A) \coloneqq \frac{1}{2}(A + A^*)$ maps general matrices to Hermitian matrices.
	Due to the positive definiteness of $\Phi\i_{\theta\i_{*}}$, We know $w_i > 0$ for any $i\in[\bar{N}]$.
	By \citet[Theorem 10.21]{zhang2011Matrixtheory} and the linearity of the $\operatorname{sym}$ function, we know
	\[
		\lambda_{\max}\left(\operatorname{sym}\left(\widetilde{A}\i_{\theta\i_{*}}\right)\right)
		\le \lambda_{\max}\left(\operatorname{sym}\left(\bar{A}\i_{\theta\i_{*}}\right)\right) + \lambda_{\max}(\operatorname{sym}(L\sigma H \cdot I)) = -2w_{i} + L\sigma H.
	\]
	Let $w = \min_{i\in[\bar{N}]}\{ w_i \}$. Then, we can choose $L$ to be small enough such that
	\begin{equation}\label{eq:L}
		L \le \frac{w}{\sigma H},
	\end{equation}
	which gives
	\[
		-w_i \ge \lambda_{\max}\left( \operatorname{sym}\left( \widetilde{A}\i_{\theta\i_{*}} \right) \right).
	\]
	Therefore, for any $i\in[\bar{N}]$, we have
	\begin{equation}\label{eq:des-dir}
		\begin{aligned}
			\left<\theta - \theta_{*}\i, \bar{g}\i(\theta) - \bar{g}\i(\theta_{*}\i) \right>
			\le \lambda_{\max}\left( \operatorname{sym}\left( \widetilde{A}\i_{\theta_{*}\i} \right) \right) \left\|\theta-\theta\i_*\right\|^2
			 & \le -w_i\left\| \theta - \theta_{*}\i \right\|^{2}.
		\end{aligned}
	\end{equation}
\end{proof}

\begin{remark}[Convergence constant]
	\Cref{eq:des-dir} mirrors the result of stochastic gradient descent (SGD) \citep{bottou2018Optimizationmethods}, with $w$ being analogous to the Lipschitz constant of a function's gradient. Therefore, similar to SGD, $w$ controls the convergence rate of our algorithm.
\end{remark}

\begin{remark}[Exploration constant]\label{rmk:lam}
	The value of $w$ depends on $\lambda$, a constant that reflects the \textit{exploration difficulty} of the environment. We can see this by considering a simple tabular setting, where the feature map $\phi$ is simply the indicator function (see \cref{sec:tab} for detailed definitions).
	Then $\EE_{\mu}[\phi(S,A)\phi^T(S,A)]$ reduces to $\operatorname{diag} \{ \mu(s,a) \}_{(s,a)\in\S\times \A}$.
	In this case, the minimal eigenvalue of $\Phi$ is $\min_{(s,a)\in\S\times \A}\mu(s,a)$, i.e., the probability of visiting the least probable state-action pair under the steady distribution.

	We say an environment is \textit{hard to explore} if some state-action pairs have a very small probability of being visited under the steady distribution, then $\lambda$ is small.
	Conversely, $\lambda$ is large when the environment is easy to explore.
	Intuitively, an environment that is hard to explore requires more samples to learn an optimal policy.

	In the context of LFA, the value of $\lambda$, and consequently $w$, is determined by the conditions of both the MDPs and the feature map $\phi$.
	If the environments in the feature space are easy to explore under the MDPs, $\lambda$ and $w$ will take on larger values, and the algorithm converges faster.
\end{remark}

\subsection{Gradient Heterogeneity} \label{sec:grad-hetero}

\begin{lemma}[Gradient heterogeneity] \label{lem:grad-hetero}
	For $\|\theta\|\le G$, we have
	\[
		\left\| \bar{g}(\theta) - \frac{1}{N}\sumn \bar{g}\i(\theta) \right\| \le
		H\sigma\epsilon_{p} + R\epsilon_r
		\eqqcolon \Lambda(\epsilon_{p},\epsilon_r)
	\]
\end{lemma}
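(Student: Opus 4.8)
The plan is to exploit the fact that the central MDP $\bar{\mathcal{M}}$ is simply the extended MDP $\mathcal{M}^{(0)}$, which—as established in the Central MDP section—satisfies the same heterogeneity bounds $\epsilon_p$ and $\epsilon_r$ relative to every $\mathcal{M}^{(i)}$. Since $\bar g(\theta)$ (with the superscript dropped) denotes the mean-path semi-gradient of $\bar{\mathcal{M}}$, I would first rewrite the quantity of interest as an average of pairwise differences and apply the triangle inequality:
\[
\left\| \bar g(\theta) - \frac{1}{N}\sum_{i=1}^{N} \bar g^{(i)}(\theta) \right\|
= \left\| \frac{1}{N}\sum_{i=1}^{N}\bigl(\bar g^{(0)}(\theta) - \bar g^{(i)}(\theta)\bigr) \right\|
\le \frac{1}{N}\sum_{i=1}^{N} \left\| \bar g^{(0)}(\theta) - \bar g^{(i)}(\theta) \right\|.
\]
This reduces the claim to bounding a single term $\|\bar g^{(0)}(\theta) - \bar g^{(i)}(\theta)\|$ uniformly over $i$.

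For each fixed $i$, I would decompose the mean-path semi-gradients into their TD-operator components via Definition~\ref{def:td-op}, writing $\bar g^{(0)}(\theta) = \bar A^{(0)}_{\theta}\theta + \bar b^{(0)}_{\theta}$ and $\bar g^{(i)}(\theta) = \bar A^{(i)}_{\theta}\theta + \bar b^{(i)}_{\theta}$, so that
\[
\bar g^{(0)}(\theta) - \bar g^{(i)}(\theta) = \bigl(\bar A^{(0)}_{\theta} - \bar A^{(i)}_{\theta}\bigr)\theta + \bigl(\bar b^{(0)}_{\theta} - \bar b^{(i)}_{\theta}\bigr).
\]
Then I would invoke the fixed-parameter part of Lemma~\ref{lem:td-hetero} applied to the pair $(0,i)$, which gives $\|\bar A^{(0)}_{\theta} - \bar A^{(i)}_{\theta}\| \le (1+\gamma)\sigma\epsilon_p$ and $\|\bar b^{(0)}_{\theta} - \bar b^{(i)}_{\theta}\| \le R(\epsilon_r + \sigma\epsilon_p)$. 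Using $\|\theta\| \le G$ together with the triangle inequality then yields
\[
\left\| \bar g^{(0)}(\theta) - \bar g^{(i)}(\theta) \right\|
\le (1+\gamma)G\sigma\epsilon_p + R\epsilon_r + R\sigma\epsilon_p
= H\sigma\epsilon_p + R\epsilon_r,
\]
where the final equality is the definition $H = R + (1+\gamma)G$ of Equation~\eqref{eq:H}. Averaging this uniform bound over $i$ preserves it and produces exactly $\Lambda(\epsilon_p,\epsilon_r)$.

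The conceptual crux—and the only place where care is genuinely needed—is the very first step. One cannot write $\bar A_{\theta} = \frac{1}{N}\sum_i \bar A^{(i)}_{\theta}$, because the steady-state distribution $\varphi_{\theta}$ of the central MDP (whose kernel is $\bar P = \frac{1}{N}\sum_i P^{(i)}$) is \emph{not} the average $\frac{1}{N}\sum_i \varphi^{(i)}_{\theta}$ of the individual steady distributions; averaging occurs at the level of the raw kernels and rewards, not at the level of the stationary measures, which depend on the kernels nonlinearly. This is precisely why the perturbation machinery of Lemma~\ref{lem:td-hetero} (which already absorbs the sensitivity of stationary distributions through the constant $\sigma$) is the right tool, and why it suffices to observe that $\|\bar P - P^{(i)}\|_{\mathrm{TV}} \le \frac{1}{N}\sum_j \|P^{(j)} - P^{(i)}\|_{\mathrm{TV}} \le \epsilon_p$ and $\|\bar r - r^{(i)}\|_\infty \le R\epsilon_r$ by convexity—so that $\mathcal{M}^{(0)}$ genuinely behaves like one more heterogeneous agent and inherits the bounds of Lemma~\ref{lem:td-hetero}.
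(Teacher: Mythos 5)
Your proof is correct and follows essentially the same route as the paper's: the paper likewise pulls the average inside via the triangle inequality, decomposes each pairwise difference $\bar{g}(\theta)-\bar{g}\i(\theta)$ into the TD operators of \cref{def:td-op}, applies the fixed-parameter bounds of \cref{lem:td-hetero} with the central MDP playing the role of agent $0$, and concludes with $\|\theta\|\le G$ and $H=R+(1+\gamma)G$. Your closing remark—that the central MDP inherits $\epsilon_p$ and $\epsilon_r$ by convexity of the TV and sup norms, while its stationary distributions are \emph{not} averages of the agents'—is exactly the justification the paper asserts without proof in \cref{sec:cmdp}, so it is a welcome elaboration rather than a deviation.
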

\begin{proof}
	Directly applying the decomposition in \cref{def:td-op} and \cref{lem:td-hetero} gives
	\[
		\begin{aligned}
			\left\| \bar{g}(\theta) - \frac{1}{N}\sumn \bar{g}\i(\theta) \right\|
			= & \left\| (\bar{A}_{\theta}\theta + \bar{b}_{\theta}) - \frac{1}{N}\sumn (\bar{A}_{\theta}\i\theta + \bar{b}\i_{\theta}) \right\|              \\
			\le & \frac{1}{N}\sumn \left( \left\| \bar{b}_{\theta} - \bar{b}_{\theta}\i \right\| +  \|\bar{A}_{\theta} - \bar{A}\i_{\theta}\|\|\theta\|\right) \\
			\le & \sigma\epsilon_{p}\left(R + (1+\gamma)\|\theta\|\right) + R\epsilon_r,
		\end{aligned}
	\]
\end{proof}

\subsection{Client Drift} \label{sec:drift}

Before bounding the gradient progress, we first bound the client drift.

\begin{lemma}[Client drift] \label{lem:drift}
	If $\|\bar{\theta}_{t}\| \le {G}$ holds for all $t\in \mathbb{N}$, then
	\[
		\frac{1}{N}\sumn \left\| \bar{g}\i(\theta\it) - \bar{g}\i(\bar{\theta}_{t}) \right\|^{2}
		\le \alpha _{t-k}^2 \left( 1+\gamma + \sigma LH \right)^2 C^2_{\mathrm{drift}},
	\]
	where $k$ is the smallest integer such that $t-k \equiv 0 \pmod{K}$, and
	\[
		C^2_{\mathrm{drift}} = 4K^2H^2.
	\]
\end{lemma}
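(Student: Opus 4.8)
The plan is to factor the bound into two independent pieces: a \emph{Lipschitz factor} relating the mean-path semi-gradient gap $\|\bar{g}\i(\theta\it)-\bar{g}\i(\bar{\theta}_t)\|$ to the parameter gap $\|\theta\it-\bar{\theta}_t\|$, and a \emph{drift factor} controlling how far local parameters wander from their average between synchronization rounds. First I would establish that $\theta\mapsto\bar{g}\i(\theta)$ is Lipschitz on the ball of radius $G$. Writing $\bar{g}\i(\theta)=\bar{A}\i_{\theta}\theta+\bar{b}\i_{\theta}$ via \cref{def:td-op} and decomposing $\bar{A}\i_{\theta\it}\theta\it-\bar{A}\i_{\bar{\theta}_t}\bar{\theta}_t=\bar{A}\i_{\theta\it}(\theta\it-\bar{\theta}_t)+(\bar{A}\i_{\theta\it}-\bar{A}\i_{\bar{\theta}_t})\bar{\theta}_t$, I combine three estimates: the uniform operator bound $\|\bar{A}\i_{\theta}\|\le 1+\gamma$ (from $\|\phi\|\le1$), and the two same-MDP perturbation bounds of \cref{lem:td-hetero}, namely $\|\bar{A}_{\theta_1}-\bar{A}_{\theta_2}\|\le(1+\gamma)L\sigma\|\theta_1-\theta_2\|$ and $\|\bar{b}_{\theta_1}-\bar{b}_{\theta_2}\|\le RL\sigma\|\theta_1-\theta_2\|$. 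Using $\|\bar{\theta}_t\|\le G$ and $H=R+(1+\gamma)G$, these collapse to the Lipschitz estimate
\[
\left\|\bar{g}\i(\theta\it)-\bar{g}\i(\bar{\theta}_t)\right\|\le(1+\gamma+\sigma LH)\left\|\theta\it-\bar{\theta}_t\right\|.
\]

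Second, I would bound the drift $\tfrac{1}{N}\sum_{i=1}^{N}\|\theta\it-\bar{\theta}_t\|^2$ by $4K^2H^2\alpha_{t-k}^2$. The key observation is that at the last synchronization index $t-k$ (where $t-k\equiv0\pmod K$, so $0\le k<K$) all agents are reset to the common value $\bar{\theta}_{t-k}$. Unrolling the local update $\theta\itt=\theta\it+\alpha_t g\it$ over the $k$ steps since then gives $\theta\it=\bar{\theta}_{t-k}+\sum_{l=t-k}^{t-1}\alpha_l g\i_l$, and averaging over agents gives the same expression with the semi-gradients averaged; subtracting telescopes the common starting point $\bar{\theta}_{t-k}$ away, leaving
\[
\theta\it-\bar{\theta}_t=\sum_{l=t-k}^{t-1}\alpha_l g\i_l-\frac{1}{N}\sum_{j=1}^{N}\sum_{l=t-k}^{t-1}\alpha_l g\j_l.
\]
Using the pointwise semi-gradient bound $\|g\i_l\|\le R+(1+\gamma)\|\theta\i_l\|\le H$, the monotonicity of the step-size ($\alpha_l\le\alpha_{t-k}$ for $l\ge t-k$), and $k<K$, the triangle inequality yields $\|\theta\it-\bar{\theta}_t\|\le 2K\alpha_{t-k}H$ for every $i$, hence the averaged squared bound with $C^2_{\mathrm{drift}}=4K^2H^2$. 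Squaring the Lipschitz factor, multiplying by this drift bound, and averaging over $i$ then gives exactly the claimed inequality.

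The main obstacle is justifying the uniform parameter bound $\|\theta\it\|\le G$ (equivalently $\|g\i_l\|\le H$) throughout the local phase, since the projection $\Pi_{\bar{G}}$ is applied only at synchronization rounds; between rounds the iterates are unprojected and could in principle grow. This must be controlled by choosing the projection radius $\bar{G}$ large enough to absorb the bounded $K$-step local growth, as reflected in $G=O(\bar{G}+R)$ in the constants table. I would invoke the established parameter-norm bound (\cref{cor:G}) to close this gap. Everything else reduces to the triangle inequality, the operator-perturbation estimates of \cref{lem:td-hetero}, and step-size monotonicity, so the remaining work is routine.
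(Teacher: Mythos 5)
Your overall architecture matches the paper's: the same Lipschitz factor $(1+\gamma+\sigma LH)$, extracted exactly as in the paper's first step via \cref{lem:td-hetero} and the hypothesis $\|\bar{\theta}_t\|\le G$, followed by a bound on the drift $\Omega_t=\frac{1}{N}\sum_{i=1}^{N}\|\theta^{(i)}_t-\bar{\theta}_t\|^2$. But your drift bound has a genuine gap, and it sits exactly where you flagged it. Your telescoping argument needs the uniform bound $\|g^{(i)}_l(\theta^{(i)}_l)\|\le H$, i.e.\ $\|\theta^{(i)}_l\|\le G$ for the \emph{unprojected local} iterates inside the round, and your proposed fix---invoking \cref{cor:G}---fails twice over. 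First, \cref{cor:G} bounds only the central parameter $\|\bar{\theta}_t\|$, not the local iterates $\theta^{(i)}_l$, so it does not supply what you need. Second, and more seriously, it is circular: \cref{cor:G} is a corollary of \cref{lem:v-para-prog}, whose proof uses \cref{cor:drift}, which is itself extracted from the proof of \cref{lem:drift}. The paper's dependency order is \cref{lem:drift} $\to$ \cref{cor:drift} $\to$ \cref{lem:v-para-prog} $\to$ \cref{cor:G}, so the last cannot be used to prove the first.

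The paper closes this hole without ever bounding local norms: it drops the empirical mean via the variance inequality $\frac{1}{N}\sum_{i}\|g^{(i)}-\bm{g}\|^2\le\frac{1}{N}\sum_{i}\|g^{(i)}\|^2$, then splits $g^{(i)}_{t-l}(\theta^{(i)}_{t-l})$ around $g^{(i)}_{t-l}(\bar{\theta}_{t-l})$---legitimate because the Markovian semi-gradient is linear in $\theta$ with slope at most $1+\gamma$, while $\|g^{(i)}_{t-l}(\bar{\theta}_{t-l})\|\le H$ follows from the lemma's stated hypothesis on the \emph{central} parameter. This yields the recursion $\Omega_t\le 2k\alpha_{t-k}^2\bigl(kH^2+(1+\gamma)^2\sum_{l=1}^{k}\Omega_{t-l}\bigr)$, unrolled using $\Omega_{t-k}=0$ and the step-size condition $4K\alpha_0\le 1$ to give $\Omega_t\le 4k^2H^2\alpha_{t-k}^2$. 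Alternatively, your direct route can be repaired non-circularly: at the synchronization step the parameters are projected, so $\|\bar{\theta}_{t-k}\|\le\bar{G}$, and the within-round growth estimate $\|\theta^{(i)}_{l+1}\|\le(1+2\alpha_l)\|\theta^{(i)}_l\|+\alpha_l R$ together with $4K\alpha_0\le 1$ gives $\|\theta^{(i)}_l\|\le 2(\bar{G}+K\alpha_0 R)\le G$ (exactly the computation the paper performs later inside \cref{cor:para-prog}); with that in hand your bound $\|\theta^{(i)}_t-\bar{\theta}_t\|\le 2K\alpha_{t-k}H$, and hence $C^2_{\mathrm{drift}}=4K^2H^2$, goes through. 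Note finally that your write-up never states the step-size condition $4K\alpha_0\le 1$, which both repairs require and which the paper imposes explicitly in this very lemma.
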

\begin{proof}
	Similar to \cref{eq:gd-2} in the proof of Lemma~\ref{lem:des-dir}, we have
	\begin{equation}\label{eq:Lip-mean-path}
		\left\| \bar{g}\i(\theta_t\i) - \bar{g}\i(\bar{\theta}_t) \right\| \le
		\left( 1+\gamma + L\sigma\left( R + (1+\gamma)\|\bar{\theta}_t\| \right) \right)\left\| \theta\it - \bar{\theta}_t \right\|
	\end{equation}
	Then, since $\|\bar{\theta}_{t}\|\le G$, we have
	\begin{equation}\label{eq:drift-0}
		\frac{1}{N}\sumn\left\| \bar{g}\i(\theta_t\i) - \bar{g}\i(\bar{\theta}_t) \right\|^{2}
		\le \left(1+\gamma+\sigma LH\right)^{2} \cdot \frac{1}{N}\sumn\left\| \theta\it - \bar{\theta}_t \right\|^{2}.
	\end{equation}
	Let $\Omega_{t} \coloneqq \frac{1}{N}\sumn\left\| \theta\it - \bar{\theta}_t \right\|^{2}$. We then need to bound $\Omega_t$.
	First, if $t\equiv 0\pmod K$, we have $\Omega_t = 0$.
	Now suppose $t\not\equiv 0\pmod K$.
	Let $k$ be the smallest integer such that $t-k \equiv 0\pmod K$. Then we know that there is no aggregation step between time step $t-k$ and $t$, and $\bar{\theta}_{t-l} = 1/N\sumn \theta\i_{t-l}$ for $0\le l\le k$.
	Therefore, we have
	\[
		\begin{aligned}
			\left\| \theta\it - \bar{\theta}_t \right\|^{2}
			 & = \left\| \theta_{t-k}\i - \bar{\theta}_{t-k} + \sum_{l=1}^k\alpha _{t-l}\left( g\i_{t-l}(\theta\i_{t-l}) - \bm{g}_{t-l}(\bm{\theta}_{t-l}) \right) \right\|^2 \\
			 & \le k\alpha _{t-k}^{2}\sum_{l=1}^{k}\left\| g_{t-l}\i(\theta\i_{t-l}) - \bm{g}_{t-l}(\bm{\theta}_{t-l}) \right\|^{2},
		\end{aligned}
	\]
	where $\bm{g}_{t}(\bm{\theta}_{t}) = \frac{1}{N}\sumn g\it(\theta\it)$, and we choose $\alpha$ to be non-increasing.
	Since for a random vector $X$, $\operatorname{Var}(X)\le \EE\|X\|^{2}$, we have
	\begin{align}\notag
		\Omega_{t}
		 & \le k\alpha _{t-k}^{2} \sum_{l=1}^{k}\frac{1}{N}\sumn \left\| g_{t-l}\i(\theta_{t-l}\i) - \bm{g}_{t-l}(\bm{\theta}_{t-l}) \right\|^{2}                                                                    \\\notag
		 & \le k\alpha _{t-k}^{2} \sum_{l=1}^{k}\frac{1}{N} \sumn \left\| g_{t-l}\i(\theta_{t-l}\i) \right\|^{2}                                                                                                     \\\notag
		 & \le k\alpha _{t-k}^{2} \sum_{l=1}^{k}\frac{1}{N} \sumn 2\left( \left\| g_{t-l}\i(\theta_{t-l}\i) - g_{t-l}\i(\bar{\theta}_{t-l})\right\|^{2} + \left\| g_{t-l}\i(\bar{\theta}_{t-l}) \right\|^{2} \right)
	\end{align}
	where we also used Jensen's inequality.
	By the definition of the Markovian semi-gradients (see \cref{def:semi-grad}), they are linear and Lipschitz continuous with the Lipschitz constant bounded by $\|A\i_{t-l}\| \le 1+\gamma$. 
	However, it is worth emphasizing that the mean-path semi-gradients are non-linear and non-Lipschitz continuous (unless $\|\bar{\theta}_t\|$ is bounded; see \cref{eq:Lip-mean-path}).
	Given the Lipschitz continuity, we have
	\begin{align}\notag
		\Omega_{t}
		 & \le 2k\alpha _{t-k}^{2} \sum_{l=1}^{k}\frac{1}{N} \sumn \left( (1+\gamma)^2\left\|\theta_{t-l}\i - \bar{\theta}_{t-l} \right\|^{2} + H^2\right) \\\label{eq:drift-01}
		 & = 2k\alpha _{t-k}^{2}\left( kH^2 + (1+\gamma)^2\sum_{l=1}^{k}\Omega_{t-l} \right).
	\end{align}
	Recursively applying \eqref{eq:drift-01} gives
	\begin{align}\notag
		\Omega_{t}
		\le & 2k\alpha _{t-k}^{2} \left( kH^{2} + (1+\gamma)^2\sum_{l=2}^{k}\Omega_{t-l} \right)
		+ 2k\alpha _{t-k}^{2} (1+\gamma)^2 \cdot 2(k-1)\alpha _{t-k}^{2}\left( kH^{2} + (1+\gamma)^2\sum_{l=2}^{k}\left( \Omega_{t-l} \right) \right) \\\notag
		\le & 2k\alpha _{t-k}^{2} \left( 1 + 8(k-1)\alpha _{t-k}^{2} \right)\left( kH^{2} + (1+\gamma)^2\sum_{l=2}^{k}\Omega_{t-l} \right)            \\\notag
		\le & 2k\alpha _{t-k}^{2}\prod_{j=1}^{k} \left( 1 + 8(k-j)\alpha _{t-k}^{2} \right)\left( kH^{2} + (1+\gamma)^2\Omega_{t-k} \right)           \\\notag
		\le & 2k\alpha _{t-k}^{2}\left( 1 + 8k\alpha _{t-k}^{2} \right)^{k}\cdot kH^{2},
	\end{align}
	where we use the fact that $\Omega_{t-k} = 0$.
	To continue, we impose a constraint on the initial step-size by
	requiring $4K\alpha_0 \le 1$, which gives $16k^2\alpha _{t-k}^2 \le 1$. Then, we have
	\begin{equation}\label{eq:bi}
		(1 + 8k\alpha _{t-k}^2)^{k}
		\le 1 + \sum_{l=1}^{k}k^{l}(8k\alpha _{t-k}^2)^{l}
		\le 1 + \frac{8k^2\alpha _{t-k}^2}{1-8k^2\alpha _{t-k}^2}
		\le 1 + 16k^2\alpha^2_{t-k} \le 2.
	\end{equation}
	Therefore, we get
	\begin{align}\label{eq:drift-1}
		\Omega_{t}
		\le 2k^2H^2 \alpha_{t-k}^2 (1 + 16k^2\alpha _{t-k}^2) 
		\le 4k^2H^2\alpha_{t-k}^2 \le \alpha _{t-k}^2 C^2_{\mathrm{drift}}.
	\end{align}
	Plugging \eqref{eq:drift-1} back into \eqref{eq:drift-0} gives the final result.
\end{proof}

\begin{corollary-lem}\label{cor:drift}
For future reference, we extract two bounds on the client drift from the proof of Lemma \ref{lem:drift}:
\[
	\Omega_t \le \alpha _{t-k}^2C^2_{\mathrm{drift}}, \quad \omega_t \le \alpha _{t-k} C_{\mathrm{drift}},
\]
where $\omega_t \coloneqq \frac{1}{N}\sumn\|\theta\it-\bar{\theta}_t\|$.
\end{corollary-lem}


\subsection{Gradient Progress} \label{sec:grad-prog}

To bound the gradient progress, we first need to bound the parameter progress.
Instead of directly bounding the client parameter progress, we bound the central parameter progress, which then gives the client parameter progress combining Lemma~\ref{lem:drift}.

\begin{lemma}[Central parameter progress]\label{lem:v-para-prog}
	If $\left\| \bar{\theta}_l \right\| \le G$ for any $l \le t$, then we have
	\[
		\left\| \bar{\theta}_{t} - \bar{\theta}_{t-\tau} \right\|
		\le \alpha_{sK}C_{\mathrm{prog}}(\tau),
	\]
	where $s$ is the largest integer such that $sK \le t-\tau$ and
	$$C_{\mathrm{prog}}(\tau) =  2(\tau+2K) (H + 2\alpha_{0}C_{\mathrm{drift}}) = O(\tau).$$
\end{lemma}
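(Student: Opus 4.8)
The plan is to telescope the central-parameter increments over the window $[t-\tau,t]$ and bound each one-step change, treating aggregation steps separately from ordinary local-update steps. First I would establish a uniform bound on the averaged semi-gradient. Since $\|\phi\|\le 1$ and $r^{(i)}\le R$, each semi-gradient obeys $\|g_l^{(i)}(\theta_l^{(i)})\|\le R+(1+\gamma)\|\theta_l^{(i)}\| = h(\theta_l^{(i)})$. Averaging and using $\|\theta_l^{(i)}\|\le\|\bar\theta_l\|+\|\theta_l^{(i)}-\bar\theta_l\|\le G+\|\theta_l^{(i)}-\bar\theta_l\|$ together with the drift bound $\omega_l\le\alpha_{l-k}C_{\mathrm{drift}}\le\alpha_0 C_{\mathrm{drift}}$ from Corollary~\ref{cor:drift} and $1+\gamma\le 2$, I obtain $\|\bm{g}_l\|\le\frac1N\sum_i\|g_l^{(i)}\|\le H+2\alpha_0 C_{\mathrm{drift}}\eqqcolon M$, where $H=R+(1+\gamma)G$.

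Next I would bound a single increment $\|\bar\theta_{l+1}-\bar\theta_l\|$. On a non-aggregation step ($l+1\not\equiv 0\pmod K$) the update is linear, $\bar\theta_{l+1}-\bar\theta_l=\alpha_l\bm{g}_l$, so the increment is at most $\alpha_l M$. On an aggregation step ($l+1\equiv 0\pmod K$) I write $\bar\theta_{l+1}=\Pi_{\bar{G}}(\breve\theta_{l+1})$ with $\breve\theta_{l+1}=\bar\theta_l+\alpha_l\bm{g}_l$ and split $\|\bar\theta_{l+1}-\bar\theta_l\|\le\|\Pi_{\bar{G}}(\breve\theta_{l+1})-\breve\theta_{l+1}\|+\alpha_l\|\bm{g}_l\|$. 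The projection residual equals $\max\{0,\|\breve\theta_{l+1}\|-\bar{G}\}$; since the cycle start $m\coloneqq l+1-K$ is a multiple of $K$, hence a projected (or initial) point with $\|\bar\theta_m\|\le\bar{G}$, and no projection occurs inside the cycle, I have $\breve\theta_{l+1}=\bar\theta_m+\sum_{j=m}^{l}\alpha_j\bm{g}_j$, so $\|\breve\theta_{l+1}\|\le\bar{G}+K\alpha_m M$. Thus the residual is at most $K\alpha_m M$, and the full aggregation increment is at most $\alpha_l M+K\alpha_m M$.

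Finally I would sum over $l=t-\tau,\dots,t-1$. Because $sK$ is the largest multiple of $K$ with $sK\le t-\tau$, every cycle start $m=l+1-K\ge sK$ appearing above satisfies $\alpha_m\le\alpha_{sK}$, and likewise $\alpha_l\le\alpha_{sK}$ for all $l\ge t-\tau$; there are $\tau$ increments in total and at most $\tau/K+1$ aggregation steps inside the window. Hence $\|\bar\theta_t-\bar\theta_{t-\tau}\|\le\tau\alpha_{sK}M+(\tau/K+1)K\alpha_{sK}M=(2\tau+K)\alpha_{sK}M\le 2(\tau+2K)\alpha_{sK}M=\alpha_{sK}C_{\mathrm{prog}}(\tau)$, as claimed. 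The main obstacle is precisely the aggregation step: a careless treatment of the projection could let it move $\bar\theta$ arbitrarily, so the crux is to control the projection residual by the within-cycle gradient accumulation $K\alpha_m M$ (exploiting that each cycle begins inside the ball of radius $\bar{G}$), and then to carry out the index bookkeeping that keeps every step-size dominated by $\alpha_{sK}$ while capping the number of aggregation steps by $\tau/K+1$, so that the projection contributions add only an $O(\tau+K)$ term rather than blowing up.
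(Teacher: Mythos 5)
Your proof is correct and even recovers the paper's constant exactly (your intermediate bound $(2\tau+K)\alpha_{sK}M$ is slightly tighter than $2(\tau+2K)\alpha_{sK}M$), but your treatment of the projection is genuinely different from the paper's. Both arguments telescope one-step increments over the window and do the same index bookkeeping ($m=l+1-K\ge sK$ for every cycle start, hence every step-size dominated by $\alpha_{sK}$). The paper, however, never incurs a projection residual at all: it invokes the non-expansiveness property \eqref{eq:proj-prop}, comparing the projected iterate to the \emph{previous aggregation point} $\bar{\theta}_{t-k}$, which lies in the ball of radius $\bar{G}$, so $\|\bar{\theta}_t-\bar{\theta}_{t-k}\|\le\|\breve{\theta}_t-\bar{\theta}_{t-k}\|$ and the projection simply disappears; you instead bound the displacement explicitly via the residual $\max\{0,\|\breve{\theta}_{l+1}\|-\bar{G}\}\le K\alpha_m M$, exploiting that each cycle starts inside the ball --- an equally valid, more elementary route (note both arguments implicitly need $\|\bar{\theta}_0\|\le\bar{G}$ for the cycle starting at $m=0$). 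The second difference is how the semi-gradient is controlled: you get the uniform bound $M=H+2\alpha_0 C_{\mathrm{drift}}$ directly from the hypothesis $\|\bar{\theta}_l\|\le G$ together with Corollary~\ref{cor:drift}, whereas the paper re-derives norm growth through the recursion $\|\bar{\theta}_t\|\le(1+2\alpha_{t-1})\|\bar{\theta}_{t-1}\|+\alpha_{t-1}(R+2\omega_{t-1})$ and products of $(1+2\alpha_l)$ factors (\eqref{eq:v-prog-0}--\eqref{eq:v-prog-1}). This is where each approach buys something: your version is shorter and cleaner \emph{given} the hypothesis, but the paper's growth estimate \eqref{eq:v-prog-1} is reused downstream in Corollary~\ref{cor:G} to establish the a priori bound $G$ itself (a purpose for which one cannot presuppose $\|\bar{\theta}_l\|\le G$), so your streamlined argument proves the lemma but would not supply that byproduct.
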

\begin{proof}
	Bounding the central parameter progress is harder than bounding the client parameter progress since
	\[
		\|\bar{\theta}_{t} - \bar{\theta}_{t-1}\| \not\le \alpha_{t-1}(R + (1+\gamma)\|\bar{\theta}_{t-1}\|).
	\]

	Therefore we need to introduce the client drift, and then bound the parameter progress using Lemma \ref{lem:drift}.
	First, for any $t$, we have
	\begin{align}\notag
		\|\bar{\theta}_t\| \le \|\breve{\theta}_t\|
		 & = \left\| \bar{\theta}_{t-1} + \frac{\alpha _{t-1}}{N}\sumn g\i_{t-1}\left( \theta\i_{t-1} \right) \right\|                                                                                                                                    \\\notag
		 & = \left\| \bar{\theta}_{t-1} + \frac{\alpha _{t-1}}{N}\sumn \left( A\i\left( O\i_{t-1} \right)\bar{\theta}_{t-1} + A\i\left( O\i_{t-1} \right)\left( \theta\i_{t-1} - \bar{\theta}_{t-1} \right) + b\i\left( O\i_{t-1} \right)\right) \right\| \\
		 & \le (1+\alpha _{t-1}(1+\gamma))\|\bar{\theta}_{t-1}\| + \alpha_{t-1}(R + 2 \omega_{t-1}), \label{eq:v-prog-0}
	\end{align}
	where $\omega_{t-1}$ is defined in \cref{cor:drift}.
	Let $k$ be the smallest positive integer such that $t-k \equiv 0 \pmod K$ (if $t\equiv0\pmod K$, then $k=K$). Recursively applying \cref{eq:v-prog-0} gives
	\begin{align}\notag
		\|\bar{\theta}_t\|
		\le & \prod_{l=t-k}^{t-1}(1 + 2\alpha _{l}) \left\| \bar{\theta}_{t-k} \right\| + \sum_{j=0}^{k-1}(1 + 2\alpha _{t-j})^{j}\alpha _{t-j-1}(R + 2\omega_{t-1})           \\
		\label{eq:v-prog-02}
		\le & (1 + 2\alpha_{t-k})^{k}\left\| \bar{\theta}_{t-k} \right\| + \alpha _{t-k}(R + 2\alpha _{t-k}C_{\mathrm{drift}}) \frac{(1+2\alpha _{t-k})^{k}-1}{2\alpha _{t-k}} \\
		\label{eq:v-prog-03}
		\le & (1 + 4k\alpha_{t-k})\left\| \bar{\theta}_{t-k} \right\| + 2k\alpha _{t-k}(R + 2\alpha _{t-k}C_{\mathrm{drift}})                                                  \\
		\label{eq:v-prog-1}
		\le & 2\|\bar{\theta}_{t-k}\| + 2k\alpha _{t-k}(R + 2\alpha _{t-k}{C_{\mathrm{drift}}}),
	\end{align}
	where \eqref{eq:v-prog-02} uses \cref{cor:drift} and we require $\alpha$ to be non-increasing;
	and in \eqref{eq:v-prog-03} and \eqref{eq:v-prog-1}, we require that $4\alpha_{0}K \le 1$, which gives \((1 + 2 \alpha _{t-k})^{k} \le 1 + 4k\alpha _{t-k}\) with the similar reasoning in \cref{eq:bi}.


	Now we are ready to bound any central parameter progress between two aggregation steps.
	Since $t-k\equiv 0\pmod K$, we have $\|\bar{\theta}_{t-k}\|\le \bar{G}$. Then by \cref{eq:proj-prop}, we get
	\begin{align}\notag
		\|\bar{\theta}_t - \bar{\theta}_{t-k}\|
		\le                                & \left\| \breve{\theta}_t - \bar{\theta}_{t-k} \right\|
		\le \sum_{l=t-k}^{t-1}\|\bar{\theta}_{l+1}-\bar{\theta}_l\|                                                                                                                                \\\notag
		\stackrel{\cref{eq:v-prog-0}}{\le} & \sum_{l=t-k}^{t-1} \alpha_l\left( R + (1+\gamma)\|\bar{\theta}_l\| + 2\omega_l \right)                                                                \\\notag
		\stackrel{\cref{eq:v-prog-1}}{\le} & k\alpha _{t-k}(R + 2(1+\gamma)\|\bar{\theta}_{t-k}\| + 4k \alpha _{t-k}(R + 2\alpha _{t-k}{C_{\mathrm{drift}}}) + 2\alpha _{t-k}{C_{\mathrm{drift}}}) \\\label{eq:v-prog-2}
		\le                                & 2k\alpha _{t-k}\left( R + (1+\gamma)\|\bar{\theta}_{t-k}\| + 2\alpha _{t-k}C_{\mathrm{drift}} \right),
	\end{align}
	where we use the fact that $\gamma<1$ and $4k\alpha _{t-k}\le 1$.

	Finally, we need to bound the central parameter progress for general time period $\tau$. For any $t>\tau >1$, let $s$ be the largest integer such that $sK \le t-\tau$. And let $s'$ be the largest integer such that $s'K \le t$. Then we have
	\begin{align}\notag
		\|\bar{\theta}_t - \bar{\theta}_{t-\tau}\|
		\le                                & \sum_{j=1}^{s'-s}\|\bar{\theta}_{(s+j)K} - \bar{\theta}_{(s+j-1)K}\| + \|\bar{\theta}_{t} - \bar{\theta}_{s'K}\| + \|\bar{\theta}_{t-\tau} - \bar{\theta}_{sK}\| \\\notag
		\stackrel{\cref{eq:v-prog-2}}{\le} & 2(\tau+2K)\alpha _{sK}(R + 2\alpha _{sK}C_{\mathrm{drift}})                                                                                                      \\\notag
		                                   & + 2(1+\gamma)\alpha _{sK}\left( \sum_{j=1}^{s'-s}K\|\bar{\theta}_{(s+j-1)K}\| + (t-s'K)\|\bar{\theta}_{s'K}\| + (t-\tau-sK)\|\bar{\theta}_{sK}\|\right)          \\\notag
		\le                                & 2\alpha _{sK}(\tau+2K)(R + 2\alpha _{sK}C_{\mathrm{drift}}) + 2\alpha _{sK}(\tau + 2K)(1+\gamma)G                                                                \\\notag
		\le                                & 2\alpha _{sK}(\tau + 2K)\left( R + (1+\gamma)G + 2\alpha _{sK}C_{\mathrm{drift}} \right)                                                                         \\\label{eq:v-prog-3}
		\le                                & \alpha _{sK} C_{\mathrm{prog}}(\tau),
	\end{align}
	where $C_{\mathrm{prog}}(\tau) \coloneqq 2(\tau+2K) (H + 2\alpha_{0}C_{\mathrm{drift}}) = O(\tau)$.




\end{proof}

\begin{corollary-lem}[Client parameter progress] \label{cor:para-prog}
If $\left\| \bar{\theta}_{l} \right\| \le G$ holds for all $l\le t$, we also have
\[
	\left\| \theta\it - \theta\ita \right\| \le \alpha _{sK}C_{\mathrm{prog}}(\tau),
\]
where $s$ is the largest integer such that $sK\le t-\tau$.
\end{corollary-lem}
\begin{proof}
	If $t\equiv 0\pmod K$ and $t-\tau\equiv 0\pmod K$, then $\theta\it = \bar{\theta}_t$ and $\theta\ita = \bar{\theta}\tta$, and the result directly follows \cref{lem:v-para-prog}.
	Without loss of generality, we assume $t\not\equiv 0\pmod K$ and $t-\tau\not\equiv 0\pmod K$.
	Let $s$ be the largest integer such that $sK < t-\tau$. And let $s'$ be the largest integer such that $s'K < t$. Similar to \eqref{eq:v-prog-3}, we have
	\[
		\begin{aligned}
			\left\|\theta\it - \theta\ita\right\|
			\le & \|\bar{\theta}_{s'K} - \bar{\theta}_{sK}\| + \left\|\theta\it - \bar{\theta}_{s'K}\right\| + \left\|\theta\ita - \bar{\theta}_{sK}\right\|. \\
		\end{aligned}
	\]
	By \cref{lem:v-para-prog}, we have
	\[
		\left\| \bar{\theta}_{s'K} - \bar{\theta}_{sK} \right\| \le \alpha_{sK}C_{\mathrm{prog}}(s'K-sK-2K),
	\]
	where we subtract $2K$ to offset the addition of $2K$ in \cref{lem:v-para-prog} for general $t$ and $\tau$.

	Then to bound the client parameter progress after a synchronization, we first notice that when $t\not\equiv 0\pmod K$, we have
	\[
		\left\| \theta\it - \theta\i_{t-1} \right\|
		\le 2\alpha _{t-1}\left\| \theta\i_{t-1} \right\| + \alpha _{t-1}R.
	\]
	Similar to \eqref{eq:v-prog-0}-\eqref{eq:v-prog-1}, we have
	\[
		\begin{aligned}
			\left\| \theta\it \right\|
			\le & (1 + 2\alpha _{t-1})\left\| \theta\it \right\| + \alpha _{t-1}R                                                                      \\
			\le & \prod_{l=t-k}^{t-1}(1 + 2\alpha _{l}) \left\| \bar{\theta}_{t-k} \right\| + R\sum_{j=0}^{k-1}(1 + 2\alpha _{t-j})^{j}\alpha _{t-j-1} \\
			\le & 2\left( \|\bar{\theta}_{t-k}\| + k\alpha _{t-k}R \right),
		\end{aligned}
	\]
	where $k$ is the smallest integer such that $t-k\equiv 0\pmod K$.
	Then, we get
	\[
		\begin{aligned}
			\left\| \theta\it - \theta\i_{t-k} \right\|
			\le & \sum_{l=t-k}^{t-1} \left\| \theta\i_{l+1} - \theta\i_{l} \right\| \\		\le & \sum_{l=t-k}^{k-1} \alpha_l\left( (1+\gamma)\left\| \theta_l\i \right\| + R \right)                  \\		\le & k\alpha _{t-k} \left( 2(1+\gamma)\left( \|\bar{\theta}_{t-k}\| + k\alpha _{t-k}R \right) + R \right) \\
			\le & 2k\alpha _{t-k}\left(R + (1+\gamma)\|\bar{\theta}_{t-k}\|\right),
		\end{aligned}
	\]
	where we use the fact that $4k\alpha _{t-k}\le 1$.
	Therefore, we have
	\[
		\begin{aligned}
			\left\| \theta\it - \theta_{s'K} \right\| & \le 2(t-s'K) \alpha _{s'K}H \le 2\alpha _{sK}KH,    \\
			\left\| \theta\ita - \theta_{sK} \right\| & \le 2(t-\tau-sK) \alpha _{sK}H \le 2\alpha _{sK}KH. \\
		\end{aligned}
	\]
	Putting all together gives
	\[
		\left\| \theta\it - \theta\ita \right\| \le \alpha _{sK}(C_{\mathrm{prog}}(s'K-sK-2K) + 4KH) \le \alpha _{sK}C_{\mathrm{prog}}(\tau).
	\]
\end{proof}

With the above corollary, we are ready to bound the gradient progress.

\begin{corollary-lem}[Graident progress]\label{lem:grad-prog}
If $\left\| \bar{\theta}_{l} \right\| \le G$ holds for all $l\le t$, then for any $\theta$, we have
\[
	\left\| \bar{g}\ita(\theta) - \bar{g}\it (\theta) \right\|
	\le L\sigma h(\theta)\alpha _{sK}C_{\mathrm{prog}}(\tau),
\]
where $s$ is the largest integer such that $sK\le t-\tau$.
\end{corollary-lem}
\begin{proof}
	\[
		\left\| \bar{g}\ita(\theta) - \bar{g}\it (\theta) \right\|
		= \left\| \left(\bar{A}\ita - \bar{A}\it \right)\theta + \bar{b}\ita - \bar{b}\it \right\|
		\le L\sigma\left( R + (1+\gamma)\|\theta\| \right)\left\| \theta\ita - \theta\it \right\|,
	\]
	where the inequality uses Lemma~\ref{lem:td-hetero}.
	Then we get the desired result by plugging in \cref{cor:para-prog}.
\end{proof}

The third corollary of Lemma~\ref{lem:v-para-prog} is the expression of $G$, which was stated as an assumption in previous lemmas.

\begin{corollary-lem}[Parameter bound]\label{cor:G}
Given the explicit projection $\Pi_{\bar{G}}$, for any $t\in\NN$, we have
\[
	\left\| \bar{\theta}_t \right\| \le G \coloneqq \frac{2(2\bar{G} + R)}{1-16\alpha_0 ^2K^2\gamma} \le \frac{2(2\bar{G}+R)}{1-\gamma}.
\]
\end{corollary-lem}
\begin{proof}
	For any $t\in\mathbb{N}$, by \eqref{eq:v-prog-1} in Lemma~\ref{lem:v-para-prog}, we have
	\[
		\left\| \bar{\theta}_t \right\|
		\le 2\left( \bar{G} + K\alpha_0 (R + 2\alpha_0 C_{\mathrm{drift}}) \right).
	\]
	Plugging the expression of $C_{\mathrm{drift}}$ in \cref{lem:drift} into the above inequality gives the recursive definition:
	$$
		G = 2(\bar{G} + \alpha_{0}K(R + 2\alpha_0 \cdot 2K(R + (1+\gamma)G))).
	$$
	Note that we require $4K\alpha_0\le 1$ in \cref{lem:v-para-prog}. Thus, we have
	$$
		G \le 2\bar{G} + R + 8\alpha_{0}^2K^2(1+\gamma)G,
	$$
	which gives
	$$
		G \le \frac{2(2\bar{G} + R)}{1-16\alpha^2_{0}K^2\gamma}.
	$$
	Therefore, we let $G\coloneqq 2(2\bar{G} + R) /(1-16\alpha_0 ^2K^2\gamma)$; and then, we have
	$$
		\left\| \bar{\theta}_t \right\| \le 2\bar{G} + R + 8\alpha_0 ^2 K^2(1+\gamma)G \le  G.
	$$
\end{proof}

\subsection{Mixing}\label{sec:mix}

Unlike stationary MDPs in TD(0) and off-policy Q-learning, the mixing process in our algorithm is a virtual process.
After backtracking, we fixed the policy as $\Gamma(\theta\ita)$, which then introduces a virtual stationary MDP. We denote $\tilde{O}\it = (\tilde{S}\it,\tilde{U}\it,\tilde{S}\itt,\tilde{U}\itt)$ the observation of this virtual MDP at time step $t$.

\begin{lemma}[Mixing]\label{lem:mix} Let $\mathcal{F}\tta$ denote the filtration containing all preceding randomness up to time step $t-\tau$. For any deterministic $\theta$ conditioned on $\mathcal{F}\tta$---such as a constant parameter or a parameter determined by $\mathcal{F}\tta$---we have
	\[
		\left\|\EE\left[ g\ita(\theta,\tilde{O}\it)- \bar{g}\ita(\theta) \given \mathcal{F}\tta \right]\right\|
		\le m_{i}\rho_{i}^{\tau}h\left(\theta\right)
	\]
\end{lemma}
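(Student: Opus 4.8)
The plan is to exploit the fact that, conditioned on $\mathcal{F}\tta$, the backtracked virtual trajectory is a \emph{time-homogeneous} Markov chain driven by the fixed policy $\pi_{\theta\ita}$ and kernel $P\i$, so the only discrepancy between its law at time $t$ and the stationary law $\varphi\ita$ lives in the transient distribution of the first state after $\tau$ steps. First I would make the two laws explicit. Since conditioning on $\mathcal{F}\tta$ renders both the policy $\pi_{\theta\ita}$ and the argument $\theta$ deterministic, the virtual observation $\tilde{O}\it = (\tilde{S}\it,\tilde{U}\it,\tilde{S}\itt,\tilde{U}\itt)$ has conditional law
\[
  \nu(s,a,s',a') = d_{\tau}(s)\,\pi_{\theta\ita}(a|s)\,P\i_a(s,s')\,\pi_{\theta\ita}(a'|s'),
\]
where $d_{\tau}(\cdot) = P\i_{\pi_{\theta\ita}}\!\left(\tilde{S}\it = \cdot \given \tilde{S}\ita = S\ita\right)$ is the $\tau$-step state distribution started from the $\mathcal{F}\tta$-measurable state $\tilde{S}\ita = S\ita$, while the stationary measure is $\varphi\ita(s,a,s',a') = \eta\ita(s)\,\pi_{\theta\ita}(a|s)\,P\i_a(s,s')\,\pi_{\theta\ita}(a'|s')$. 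The two measures share the identical downstream kernel and differ only through the first marginal, $d_{\tau}$ versus $\eta\ita$.

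Writing $f(O) \coloneqq g\ita(\theta; O)$, I would then express the quantity of interest as an integral against a signed measure, $\EE\!\left[g\ita(\theta,\tilde{O}\it) - \bar{g}\ita(\theta)\given\mathcal{F}\tta\right] = \int f\,(d\nu - d\varphi\ita)$, and apply the triangle inequality for (vector-valued) integrals together with the functional-analytic TV normalization adopted in the paper to get
\[
  \left\| \int f\,(d\nu - d\varphi\ita) \right\| \le \|f\|_{\infty}\,\|\nu - \varphi\ita\|_{\mathrm{TV}}.
\]
The sup-norm factor is controlled directly from the definition of the semi-gradient: using $\|\phi(s,a)\|_2 \le 1$, $|r\i(s,a)| \le R$, and $|\phi^T\theta| \le \|\theta\|$, one obtains $\|f\|_{\infty} \le R + (1+\gamma)\|\theta\| = h(\theta)$, which is the constant appearing in the statement and is genuinely deterministic precisely because $\theta$ is fixed given $\mathcal{F}\tta$.

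The crux is the factorization of the TV distance. Because $\nu - \varphi\ita = \big(d_{\tau}(s) - \eta\ita(s)\big)\,\pi_{\theta\ita}(a|s)\,P\i_a(s,s')\,\pi_{\theta\ita}(a'|s')$ and the trailing kernel integrates to one over $(a,s',a')$ for each $s$, integrating out the downstream variables collapses $\|\nu - \varphi\ita\|_{\mathrm{TV}}$ exactly to the one-step state discrepancy $\|d_{\tau} - \eta\ita\|_{\mathrm{TV}}$; crucially, no additional mixing error is injected by the action sampling or the one-step transition, since those factors are shared. Finally, since conditioning on $\mathcal{F}\tta$ fixes the policy, \cref{asmp:steady} applied to the $\tau$-step virtual chain yields $\|d_{\tau} - \eta\ita\|_{\mathrm{TV}} \le m_i\rho_i^{\tau}$, and chaining the three bounds gives $h(\theta)\,m_i\rho_i^{\tau}$, as claimed. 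The hard part will be purely the bookkeeping in this factorization: one must verify that the transition structure beyond the first state is literally identical under $\nu$ and $\varphi\ita$, and take care that the functional-analytic TV convention (which equals the $L^1$ distance) produces $\|\int f\,d(\nu-\varphi\ita)\| \le \|f\|_{\infty}\|\nu-\varphi\ita\|_{\mathrm{TV}}$ without a spurious factor of two.
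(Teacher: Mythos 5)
Your proposal is correct and takes essentially the same approach as the paper's own proof, which likewise writes $\EE\left[ g\ita(\theta,\tilde{O}\it)- \bar{g}\ita(\theta) \given \mathcal{F}\tta \right]$ as an integral of the semi-gradient against the signed measure $P\ita\left(\tilde{O}\it = \cdot \given \mathcal{F}\tta\right) - \varphi\ita$, bounds the integrand uniformly by $h(\theta) = R + (1+\gamma)\|\theta\|$, collapses the observation-law TV distance to the state-marginal discrepancy $\left\| P\ita(\tilde{S}\it=\cdot \given\mathcal{F}\tta) - \eta\ita\right\|_{\mathrm{TV}}$ because the action and transition factors are shared, and concludes via \cref{asmp:steady}. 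Your attention to the functional-analytic TV normalization (which avoids a spurious factor of two) matches the paper's stated convention and is precisely what yields the clean constant $m_i\rho_i^{\tau}h(\theta)$.
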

\begin{proof}
	We define a new TD operator:
	\begin{equation*}
		Z\ita\left(\theta,\too\it\right)
		\coloneqq g\ita\left( \theta,\too\it \right) - \bar{g}\ita\left( \theta \right).
	\end{equation*}
	Then, we have
	\[
		\begin{aligned}
			    & \left\|\EE \left[Z\ita\left( \theta, \tilde{O}\it \right) \given \mathcal{F}_{t-\tau} \right]\right\|                                                                                                                           \\
			=   & \left\|\EE \left[ g\ita\left( \theta,\too\it \right) \given \mathcal{F}_{t-\tau} \right] - \bar{g}\ita(\theta)\right\|                                                                                                          \\
			=   & \left\| \int_{\mathcal{S}^2\times \mathcal{A}^2}  \phi(s,a)\left( r\i(s,a) + \gamma\phi^T(s',a')\theta - \phi^T(s,a)\theta\right) \left( P\ita\left( \too\it=O\given\mathcal{F}\tta\right) - \varphi\ita(O) \right)\d O\right\| \\
			\le & \left(R + (1+\gamma)\left\| \theta \right\|\right) \cdot \left\| P\ita(\tilde{S}\it=\cdot \given\mathcal{F}\tta) -  \eta\ita\right\|_{\mathrm{TV}}                                                                              \\
			\le & m_{i}\rho_{i}^{\tau}\left( R + (1+\gamma)\left\| \theta\right\|  \right),
		\end{aligned}
	\]
	where the last inequality is by Assumption \ref{asmp:steady}.

\end{proof}

\subsection{Backtracking}\label{sec:stat}

\begin{lemma}[Backtracking] \label{lem:stat}
	If $\|\bar{\theta}_l\| \le G$ for all $l\le t$, then for any deterministic $\theta$ conditioned on $\mathcal{F}\tta$, we have
	\[
		\left\|\EE\left[ g\it(\theta, O\it) - g\i\tta(\theta, \tilde{O}\it)\given \mathcal{F}\tta \right]\right\|
		\le  \alpha _{sK}C_{\mathrm{back}}(\tau)h\left(\theta\right),
	\]
	where
	$$C_{\mathrm{back}}(\tau) = \tau LC_{\mathrm{prog}}(\tau) = O(\tau^2).$$
\end{lemma}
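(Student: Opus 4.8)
The plan is to reduce the claim to a total-variation bound between the conditional law of the real observation tuple $O\it$ and that of the virtual (backtracked) tuple $\tilde{O}\it$, both given $\mathcal{F}\tta$, and then to control that total-variation distance by the policy divergence accumulated over the backtracking window. First I would observe that the semi-gradient is a uniformly bounded function of its observation argument: since $\|\phi(s,a)\|_2\le 1$, for any tuple $O=(s,a,s',a')$ we have $\|g\i(\theta;O)\| = \|\phi(s,a)(r\i(s,a)+\gamma\phi^{T}(s',a')\theta-\phi^{T}(s,a)\theta)\| \le R + (1+\gamma)\|\theta\| = h(\theta)$. Because the \emph{same} map is integrated against two different laws, the difference of conditional expectations is bounded by the sup-norm of the integrand times the total-variation distance between the laws:
\[
  \bigl\|\EE[g\it(\theta,O\it) - g\i\tta(\theta,\tilde{O}\it)\mid\mathcal{F}\tta]\bigr\| \le h(\theta)\,\bigl\|\mathcal{L}(O\it\mid\mathcal{F}\tta) - \mathcal{L}(\tilde{O}\it\mid\mathcal{F}\tta)\bigr\|_{\mathrm{TV}}.
\]

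Next I would bound this total-variation distance. Both trajectories start from the common, $\mathcal{F}\tta$-measurable state $S\ita$, and crucially both evolve under the \emph{identical} transition kernel $P\i$; the only discrepancy is that the real trajectory draws its actions from the time-varying policies $\pi_{\theta\i_l}$ while the virtual trajectory uses the frozen policy $\pi_{\theta\ita}$. I would therefore construct a step-by-step maximal coupling in which the shared kernel $P\i$ keeps the two trajectories identical for as long as their sampled actions agree; the trajectories can only decouple at a step $l$ whose two policies disagree at the current state. Hence the total decoupling probability---and thus the total-variation distance above---is at most $\sum_{l} \sup_{s\in\S}\|\pi_{\theta\i_l}(\cdot|s) - \pi_{\theta\ita}(\cdot|s)\|_{\mathrm{TV}}$, summed over the $O(\tau)$ steps separating $t-\tau$ from $t+1$. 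Each summand is then controlled by \cref{asmp:lip} and \cref{cor:para-prog}:
\[
  \sup_{s\in\S}\|\pi_{\theta\i_l}(\cdot|s) - \pi_{\theta\ita}(\cdot|s)\|_{\mathrm{TV}} \le L\|\theta\i_l - \theta\ita\| \le L\alpha_{sK}C_{\mathrm{prog}}(\tau),
\]
so summing yields $\|\mathcal{L}(O\it\mid\mathcal{F}\tta) - \mathcal{L}(\tilde{O}\it\mid\mathcal{F}\tta)\|_{\mathrm{TV}} \le \tau L\alpha_{sK}C_{\mathrm{prog}}(\tau)$, which combined with the first display gives the claim with $C_{\mathrm{back}}(\tau)=\tau LC_{\mathrm{prog}}(\tau)$.

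The main obstacle is the coupling step, precisely because the real trajectory is \emph{not} a time-inhomogeneous Markov chain with deterministic kernels: the policy $\pi_{\theta\i_l}$ at step $l$ depends on the entire realized history through the random iterate $\theta\i_l$. What rescues the argument is that \cref{cor:para-prog} supplies an \emph{almost-sure} bound on $\|\theta\i_l - \theta\ita\|$, so the per-step decoupling probability is uniformly controlled no matter which trajectory is realized; the coupling then telescopes cleanly over the window without ever invoking a Markov property for the real process. A minor bookkeeping point is that $O\it$ reaches up to step $t+1$, so the sum runs over $\tau+1$ rather than $\tau$ steps; the extra step (and the analogous extension of \cref{cor:para-prog} to $\theta\itt$) only inflates constants and is absorbed into the slack already present in $C_{\mathrm{prog}}(\tau)$.
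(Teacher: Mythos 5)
Your proposal is correct, and it replaces the paper's machinery for the key step with a genuinely different argument. You and the paper agree on the opening move: bound the gap by $h(\theta)$ times the total-variation distance between the conditional laws of $O_t^{(i)}$ and $\tilde{O}_t^{(i)}$ given $\mathcal{F}_{t-\tau}$ (this is exactly the paper's first display). Where you diverge is in how that TV distance is controlled. The paper never couples anything: it writes out both conditional densities explicitly, marginalizing over the random iterates $\theta_{t-1}^{(i)},\theta_t^{(i)}$ that index the last two actions (using that these marginalizations integrate to one), inserts a hybrid measure $Q_t$ to split the discrepancy into two one-step terms $S_1$ and $S_2$, and then recurses the residual state-marginal discrepancy $\|P_{t-\tau}(\tilde{S}_l=\cdot\mid\mathcal{F}_{t-\tau})-P_l(S_l=\cdot\mid\mathcal{F}_{t-\tau})\|_{\mathrm{TV}}$ down to zero at time $t-\tau$, each unrolling step contributing one policy-TV term bounded by $L\alpha_{sK}C_{\mathrm{prog}}(l)$ via \cref{asmp:lip} and \cref{cor:para-prog}. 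Your sequential maximal coupling is the probabilistic packaging of that same analytic unrolling: the per-step decoupling probability is exactly the per-step policy-TV term, and your key observation---that \cref{cor:para-prog} is a \emph{pathwise} bound under the lemma's norm hypothesis, so the disagreement probability is uniformly controlled even though $\theta_l^{(i)}$ is history-dependent (including other agents' randomness through aggregation)---is precisely what makes the paper's marginalization over the random iterates harmless. What each route buys: your coupling is shorter and makes the ``at most one policy disagreement per step, about $\tau$ relevant steps'' accounting transparent, at the cost of having to construct the coupling sequentially conditional on full histories (which you correctly flag, since neither chain is Markov with deterministic kernels); the paper's explicit density decomposition avoids formalizing any coupling and stays inside the TV calculus it uses elsewhere in the appendix.

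Two bookkeeping remarks, neither a gap. First, on your off-by-one worry: under the paper's indexing the action at step $l$ is drawn from $\pi_{\theta_{l-1}^{(i)}}$, and the paper's own accumulation consists of the windows $C_{\mathrm{prog}}(1),\dots,C_{\mathrm{prog}}(\tau-2)$ from the state recursion plus $C_{\mathrm{prog}}(\tau-1)$ and $C_{\mathrm{prog}}(\tau)$ from $S_2$ and $S_1$---exactly $\tau$ terms, matching $C_{\mathrm{back}}(\tau)=\tau L C_{\mathrm{prog}}(\tau)$; your count of $\tau+1$ is only slightly conservative and, as you anticipated, is absorbed by the slack in $C_{\mathrm{prog}}$. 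Second, the paper's TV is the functional-analytic one (twice the supremum over sets), whereas the maximal-coupling disagreement probability corresponds to the sup-over-sets convention, so your chain of inequalities carries benign factors of $2$ relative to the paper's statements of \cref{asmp:lip} and the coupling inequality; these again affect only constants.
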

\begin{proof}
	First, we have
	\[
		\begin{aligned}
			    & \left\|\EE\left[ g\it(\theta, O\it) - g\i\tta(\theta, \tilde{O}\it)  \given \mathcal{F}\tta\right]\right\|                                                                                                         \\
			\le & \left( R + (1+\gamma)\left\| \theta \right\|\right) \left\| P\i_{\theta\it}(O\it = \cdot \mid \mathcal{F}_{t-\tau}) -  P\i_{\theta\i_{t-\tau}}(\tilde{O}\it = \cdot \mid \mathcal{F}_{t-\tau}) \right\|_{\mathrm{TV}}.
		\end{aligned}
	\]
	For a specific client, for notation simplicity, we omit the superscript $(i)$ and denote $P_{\theta_t}$ by $P_{t}$. Let $O = (s,a,s',a')$; then we have
	\[
		\begin{aligned}
			P_{t}(O_t=O|\mathcal{F}_{t-\tau})
			= & \int_{\Theta^2}P_t(S_t=s,U_t=a,S_{t+1}=s',U_{t+1}=a',\theta_{t-1}=\theta,\theta_t=\theta'|\mathcal{F}\tta)\d\theta\d\theta'           \\
			= & \int_{\Theta^2}P_t(S_t=s|\mathcal{F}\tta)
			\cdot P_t(\theta_{t-1}=\theta|\mathcal{F}\tta,S_t=s)                                                                                      \\
			  & \quad\ \cdot P_t(U_t=a|\mathcal{F}\tta,S_t=s,\theta_{t-1}=\theta)                                                                     \\
			  & \quad\ \cdot P_t(S_{t+1}=s'|\mathcal{F}\tta,S_t=s,\theta_{t-1}=\theta,a_t=a)                                                          \\
			  & \quad\ \cdot P_t(\theta_t=\theta'|\mathcal{F}_{t-\tau},S_t = s,\theta_{t-1}=\theta,U_t = a,S_{t+1}=s')                                \\
			  & \quad\ \cdot P_t(U_{t+1}=a'|\mathcal{F}_{t-\tau},S_t = s,\theta_{t-1}=\theta,U_t = a,S_{t+1}=s',\theta_t=\theta') \d \theta\d \theta' \\
			= & \int_{\Theta^2}P_t(S_t=s|\theta_{t-\tau},S_{t-\tau})
			\cdot P_t(\theta_{t-1}=\theta|\theta_{t-\tau},S_{t-\tau},S_t=s)
			\cdot \pi_{\theta}(a|s)                                                                                                                   \\
			  & \cdot P_{a}(s,s')\cdot P_t(\theta_t=\theta'|\theta_{t-\tau},S_{t-\tau},\theta_{t-1}=\theta,S_t = s,U_t = a)
			\cdot \pi_{\theta'}(a'|s')\d \theta\d \theta',
		\end{aligned}
	\]
	where we use that fact that $U_t$ is dependent on $\theta_{t-1}$ instead of $\theta_t$; and when $\theta_{t-1}$ is determined, $\theta_{t}$ is not dependent on $S_{t+1}$. Notice that for any $(s,s',a)\in\S^2\times \A$, we have
	$$
		\int_{\Theta^2}P_t(\theta_{t-1}=\theta|\mathcal{F}_{t-\tau},S_t=s)\cdot P_t(\theta_t=\theta'|\mathcal{F}_{t-\tau},\theta_{t-1}=\theta,S_t=s,U_t=a)\d\theta\d\theta' = 1.
	$$
	Thus, for $P_{t-\tau}(\tilde{O}|\mathcal{F}\tta)$, we have a similar expression:
	\begin{equation*}
		\begin{aligned}
			P_{t-\tau}(\tilde{O}_t=O|\mathcal{F}_{t-\tau})
			= & \int_{\Theta^2}P_{t-\tau}(\tilde{S}_t=s,\tilde{U}_t=a,\tilde{S}_{t+1}=s',\tilde{U}_{t+1}=a'|\mathcal{F}\tta) \cdot P_t(\theta_{t-1}=\theta|\mathcal{F}_{t-\tau},S_t=s) \\
			  & \cdot P_t(\theta_t=\theta'|\mathcal{F}_{t-\tau},\theta_{t-1}=\theta,S_t=s,U_t=a)\d\theta\d\theta'                                                                      \\
			= & \int_{\Theta^2}P_{t-\tau}(\tilde{S}_t=s|\theta_{t-\tau},S_{t-\tau})\cdot \pi_{\theta_{t-\tau}}(a|s) \cdot  P_{a}(s,s') \cdot \pi_{\theta_{t-\tau}}(a'|s')              \\
			  & \cdot P_t(\theta_{t-1}=\theta|\theta_{t-\tau},S_{t-\tau},S_t=s)
			\cdot P_t(\theta_t=\theta'|\theta_{t-\tau},S_{t-\tau},\theta_{t-1}=\theta,S_t=s,U_t=a)\d\theta\d\theta'
		\end{aligned}
	\end{equation*}

	Therefore, we decompose the observation distribution discrepancy as follows:
	\[
		\left\|P_t(O_t|\mathcal{F}_{t-\tau})- P_{t-\tau}(\tilde{O}_t|\mathcal{F}_{t-\tau})\right\|_{\tv}
		\le\underset{\scriptscriptstyle{\S^2\times \A^2}}{\int}\biggl( \underbrace{\Bigl| P_{t}(O_t = O|\mathcal{F}_{t-\tau})\!-\!Q_t(O) \Bigr|}_{S_1}\!
		+\!\underbrace{\left| Q_t(O)\!-\!P_{t-\tau}(\tilde{O}_t = O|\mathcal{F}_{t-\tau}) \right|}_{S_2}  \biggr)\!\d O,
	\]
	where
	\[
		\begin{aligned}
			Q_l(O) \coloneqq & \int_{\Theta^2}P_{t-\tau}(\tilde{S}_l=s|\theta_{t-\tau},S_{t-\tau})\cdot \pi_{\theta_{t-\tau}}(a|s) \cdot  P_{a}(s,s') \cdot \pi_{\theta'}(a'|s') \\
			                 & \cdot P_l(\theta_{l-1}=\theta|\theta_{t-\tau},S_{t-\tau},S_l=s)
			\cdot P_l(\theta_l=\theta'|\theta_{t-\tau},S_{t-\tau},\theta_{l-1}=\theta,S_l=s,U_l=a)\d\theta\d\theta'.
		\end{aligned}
	\]

	For $S_1$, we have
	\[
		\begin{aligned}
			    & \int_{\S^2\times \A^2}\left| P_{t-\tau}(\tilde{O}_t=O|\mathcal{F}_{t-\tau}) - Q_t(O) \right|\d O                                                                                                               \\
			\le & \int_{\S^2\times \A^2\times \Theta^2} P_{t-\tau}(\tilde{S}_t=s|\theta_{t-\tau},S_{t-\tau}) \pi_{\theta_{t-\tau}}(a|s) P_{a}(s,s') P_t(\theta_{t-1}=\theta|\theta_{t-\tau},S_{t-\tau},S_t=s)                    \\
			    & \cdot P_t(\theta_t=\theta'|\theta_{t-\tau},S_{t-\tau},\theta_{t-1}=\theta,S_t=s,U_t=a) \left| \pi_{\theta_{t-\tau}}(a'|s') - \pi_{\theta'}(a'|s') \right| \d O\d\theta\d\theta'                                \\
			=   & \int_{\S^2\times \A\times \Theta^2} P_{t-\tau}(\tilde{S}_t=s|\theta_{t-\tau},S_{t-\tau}) \pi_{\theta_{t-\tau}}(a|s)   P_{a}(s,s') P_t(\theta_{t-1}=\theta|\theta_{t-\tau},S_{t-\tau},S_t=s)                    \\
			    & \cdot P_t(\theta_t=\theta'|\theta_{t-\tau},S_{t-\tau},\theta_{t-1}=\theta,S_t=s,U_t=a) \cdot \left\|\pi_{\theta_{t-\tau}}(\cdot |s') - \pi_{\theta'}(\cdot |s') \right\|_{\tv} \d s\d s'\d a\d\theta\d\theta'.
		\end{aligned}
	\]
	By the Lipschitzness of the policy improvement operator (see \cref{asmp:lip}), we know
	\[
		\sup_{s'\in\S}\left\| \pi_{\theta_{t-\tau}}(\cdot |s') -  \pi_{\theta'}(\cdot |s') \right\|_{\mathrm{TV}} \le L\left\| \theta_{t-\tau} - \theta' \right\|.
	\]
	Then for any $\theta'\in\Theta$ for which $P_t(\theta_t=\cdot |\mathcal{F}_{t-\tau})$ has non-zero density, meaning that $\theta'$ is reachable at time step $t$, Corollary \ref{cor:para-prog} implies
	\[
		\left\| \theta_{t-\tau}\i - \theta' \right\| \le \alpha _{sK}C_{\mathrm{prog}}(\tau),
	\]
	where $s$ is the largest integer such that $sK \le t-\tau$.

	Therefore, we have
	\[
		\int_{\S^2\times \A^2}\left| P_{t-\tau}(\tilde{O}_t=O|\mathcal{F}_{t-\tau}) - Q_t(O) \right|\d O
		\le \alpha _{sK}LC_{\mathrm{prog}}(\tau).
	\]

	For $S_2$, we have
	\[
		\begin{aligned}
			    & \int_{\S^2\times \A^2} \left| P_{t}(O_t=O|\mathcal{F}_{t-\tau}) - Q_t(O) \right| \d O                                                                                                     \\
			\le & \int_{\S^2\times \A^2\times \Theta^2}P_t(\theta_{t-1}=\theta|\theta\tta,S\tta,S_t=s)P_t(\theta_t=\theta'|\theta\tta,S\tta,S_t=s,U_t=a,\theta_{t-1}=\theta)\pi_{\theta'}(a'|s')P_{a}(s,s') \\
			    & \cdot \left| P\tta(\tilde{S}_t=s|\theta\tta,S\tta)\pi_{\theta\tta}(a|s) - P_t(S_t=s|\theta\tta,S\tta)\pi_{\theta}(a|s)\right| \d O\d\theta\d\theta'                                       \\
			=   & \int_{\S\times \A\times \Theta}P_t(\theta_{t-1}=\theta|\theta\tta,S\tta,S_t=s)                                                                                                            \\
			    & \cdot \left| P\tta(\tilde{S}_t=s|\theta\tta,S\tta)\pi_{\theta\tta}(a|s) - P_t(S_t=s|\theta\tta,S\tta)\pi_{\theta}(a|s)\right| \d s\d a\d\theta                                            \\
			\le & \int_{\S\times \A\times \Theta}P_t(\theta_{t-1}=\theta|\theta\tta,S\tta,S_t=s)                                                                                                            \\
			    & \cdot \left(\left| P\tta(\tilde{S}_t=s|\theta\tta,S\tta)\pi_{\theta\tta}(a|s) - P\tta(\tilde{S}_t=s|\theta\tta,S\tta)\pi_{\theta}(a|s)\right|\right.                                      \\
			    & + \left.\left| P\tta(\tilde{S}_t=s|\theta\tta,S\tta)\pi_{\theta}(a|s) - P_{t}(S_t=s|\theta\tta,S\tta)\pi_{\theta}(a|s)\right| \right)\d s\d a\d\theta                             \\
			\le & \sup_{s\in\S}\|\pi\tta(\cdot |s)-\pi_{\theta}(\cdot |s)\|_{\mathrm{TV}} + \left\| P\tta(\tilde{S}_{t} = \cdot |\mathcal{F}\tta) -  P_t(S_t=\cdot |\mathcal{F}\tta) \right\|_{\mathrm{TV}} \\
			\le & \alpha _{sK} LC_{\mathrm{prog}}(\tau-1)+ \left\| P\tta(\tilde{S}_{t} = \cdot |\mathcal{F}\tta) -  P_t(S_t=\cdot |\mathcal{F}\tta) \right\|_{\mathrm{TV}}.                                 \\
		\end{aligned}
	\]
	Substituting $S_1$ and $S_2$ with the above bounds gives
	\begin{equation}\label{eq:back-1}
		\left\|P_t(O_t|\mathcal{F}_{t-\tau})- P_{t-\tau}(\tilde{O}_t|\mathcal{F}_{t-\tau})\right\|_{\tv}
		\le \left\| P\tta(\tilde{S}_{t} = \cdot |\mathcal{F}\tta) -  P_t(S_t=\cdot |\mathcal{F}\tta) \right\|_{\mathrm{TV}} + \alpha _{sK}L\sum_{l=\tau-1}^{\tau}C_{\mathrm{prog}}(l).
	\end{equation}

	Applying a similar decomposition as $S_1$ and $S_2$, we can obtain an analogous bound to \cref{eq:back-1} for the state distribution discrepancy:
	\[
		\begin{aligned}
			    & \left\| P\tta(\tilde{S}_{t} = \cdot |\mathcal{F}\tta) -  P_t(S_t=\cdot |\mathcal{F}\tta) \right\|_{\mathrm{TV}}                                                 \\
			\le & \left\| P\tta(\tilde{S}_{t-1} = \cdot |\mathcal{F}\tta) -  P_{t-1}(S_{t-1}=\cdot |\mathcal{F}\tta) \right\|_{\mathrm{TV}} + \alpha _{sK} LC_{\mathrm{prog}}(\tau-2) \\
			\le & \left\| P\tta(\tilde{S}_{t-\tau} = \cdot |\mathcal{F}\tta) -  P\tta(S_{t-\tau}=\cdot |\mathcal{F}\tta) \right\|_{\mathrm{TV}}
			+ \sum_{l=1}^{\tau-2}\alpha _{sK} LC_{\mathrm{prog}}(l)                                                                                                               \\
			\le   & (\tau-2)\alpha _{sK} LC_{\mathrm{prog}}(\tau).
		\end{aligned}
	\]
	Putting this bound back into \cref{eq:back-1} gives
	\[
		\begin{aligned}
			\left\| P_t(O_t|\mathcal{F}\tta) -  P\tta(\tilde{O}_t|\mathcal{F}\tta) \right\|_{\mathrm{TV}}
			\le & \tau a_{sK}LC_{\mathrm{prog}}(\tau).
		\end{aligned}
	\]

	Finally, we get
	\begin{equation*}
		\begin{aligned}
			\left\|\EE\left[g\it(\theta, O\it) - g\i\tta(\theta, \tilde{O}\it) \given \mathcal{F}\tta\right]\right\|
			\le & \tau \alpha _{sK}LC_{\mathrm{prog}}(\tau)\left( R + (1+\gamma)\left\| \theta \right\|  \right).
		\end{aligned}
	\end{equation*}

\end{proof}

\subsection{Gradient Variance}\label{sec:grad-norm}

\begin{lemma}[Gradient variance] \label{lem:grad-norm}
	\[
		\EE\left\| \bm{g}_t(\bm{\theta}_t) \right\|^2
		\le 64\left( \EE\left\| \bar{\theta}_t - \theta_{*} \right\|^2 + \frac{\Lambda^2(\epsilon_p,\epsilon_r)}{w ^2}\right) + \alpha _{sK}^2C_{\mathrm{var}}(\tau) + 4m^2\rho^{2\tau}H^2 + \frac{32H^2}{N},
	\]
	where
	\[
		C_{\mathrm{var}}(\tau) = 4 \left( 4(3 + H^2L^2\sigma^2)C^2_{\mathrm{drift}} + 4H^2L^2\sigma^2C_{\mathrm{prog}}^2(\tau) + H^2C_{\mathrm{back}}^2(\tau) \right).
	\]
\end{lemma}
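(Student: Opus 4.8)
The plan is to expand the averaged semi-gradient $\bm{g}_t(\bm{\theta}_t) = \frac{1}{N}\sumn g\it(\theta\it)$ around each agent's \emph{own} fixed point $\theta_{*}\i$, exploiting $\bar{g}\i(\theta_{*}\i)=0$, and to isolate the single genuinely stochastic term that will carry the $1/N$ speedup. Concretely, I would write, for each agent $i$,
\[
g\it(\theta\it,O\it) = \underbrace{\bigl[g\it(\theta\it,O\it) - \bar{g}\ita(\theta\it)\bigr]}_{W_i\ (\text{noise})}
+ \underbrace{\bigl[\bar{g}\ita(\theta\it) - \bar{g}\i(\theta\it)\bigr]}_{\text{progress}}
+ \underbrace{\bigl[\bar{g}\i(\theta\it) - \bar{g}\i(\bar{\theta}_t)\bigr]}_{\text{drift}}
+ \underbrace{\bigl[\bar{g}\i(\bar{\theta}_t) - \bar{g}\i(\theta_{*}\i)\bigr]}_{\text{descent}},
\]
average over $i$, peel off the noise part $\frac1N\sumn W_i$ first (it is the only term needing the collaboration argument), and on the remainder apply the elementary bound $\|\sum_j v_j\|^2 \le n\sum_j\|v_j\|^2$, with $n$ the number of grouped terms.

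For the three deterministic pieces I would reuse the machinery already built. The descent part is controlled by the mean-path Lipschitz estimate from the proof of \cref{lem:drift} (cf.\ \eqref{eq:Lip-mean-path}), giving $\|\bar{g}\i(\bar{\theta}_t)-\bar{g}\i(\theta_{*}\i)\|\le (1+\gamma+L\sigma H)\|\bar{\theta}_t-\theta_{*}\i\|$; then $\|\bar{\theta}_t-\theta_{*}\i\|^2\le 2\|\bar{\theta}_t-\theta_{*}\|^2+2\Lambda^2(\epsilon_p,\epsilon_r)/w^2$ by \cref{thm:fix-drift}, which produces exactly the $\EE\|\bar{\theta}_t-\theta_{*}\|^2$ and $\Lambda^2/w^2$ terms of the claim. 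The progress part is bounded by \cref{lem:grad-prog} ($\le L\sigma H\,\alpha_{sK}C_{\mathrm{prog}}(\tau)$) and the averaged square of the drift part by \cref{lem:drift} ($\le \alpha_{sK}^2(1+\gamma+\sigma LH)^2 C^2_{\mathrm{drift}}$); squaring and collecting these yields the $C_{\mathrm{prog}}^2$ and $C^2_{\mathrm{drift}}$ contributions sitting inside $\alpha_{sK}^2 C_{\mathrm{var}}(\tau)$.

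The crux is the noise term $\frac1N\sumn W_i$, where the $1/N$ must be extracted. I would condition on $\mathcal{F}\tta$ and split $W_i = \mu_i + (W_i-\mu_i)$ with $\mu_i := \EE[W_i\mid\mathcal{F}\tta]$. The conditional mean $\mu_i$ is bounded by chaining the backtracking estimate (\cref{lem:stat}, contributing $\alpha_{sK}C_{\mathrm{back}}(\tau)H$, hence the $C^2_{\mathrm{back}}$ term of $C_{\mathrm{var}}$) with the mixing estimate (\cref{lem:mix}, contributing $m\rho^{\tau}H$, hence the $4m^2\rho^{2\tau}H^2$ term); this requires first swapping the non-$\mathcal{F}\tta$-measurable argument $\theta\it$ for the measurable $\theta\ita$ so that those lemmas apply, absorbing the discrepancy via the progress bound. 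For the centered fluctuations I would use that, conditioned on the common past, the agents' fresh transition randomness is independent across $i$, so the cross terms $\EE\langle W_i-\mu_i,\,W_j-\mu_j\rangle$ vanish for $i\ne j$; only the $N$ diagonal terms survive, each $\le 4H^2$, giving the $32H^2/N$ term after the $1/N^2$ prefactor.

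The main obstacle is precisely this variance-reduction step: showing that the per-agent noises decorrelate, so that averaging over $N$ non-identical, time-varying (SARSA) Markov chains buys a factor $1/N$. Two features make it delicate --- the behavior policies drift in time, so one cannot invoke a single stationary chain and must backtrack to a common $\mathcal{F}\tta$, and the periodic aggregation couples the agents, so the conditional-independence claim must be set up against the right filtration with the $\theta\it\!\to\!\theta\ita$ measurability swap handled carefully. Once this structural decomposition is in place, reconciling the resulting $\alpha_{sK}$-powers and lining up the numerical constants (the $64$, $32$, $4$, and the coefficients inside $C_{\mathrm{var}}$) is routine but tedious bookkeeping.
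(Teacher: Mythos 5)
Your proposal follows essentially the same route as the paper's proof: the paper likewise backtracks to $\mathcal{F}\tta$, splits $\bm{g}_t(\bm{\theta}_t)$ into a Lipschitz/descent piece, a gradient-progress piece, a backtracking piece (\cref{lem:stat}), and a mixing piece (\cref{lem:mix}), and extracts the $1/N$ via the moment inequality $\EE\|\frac{1}{N}\sumn g_i(x_i)\|^2 \le a^2/N + b^2$ for conditionally independent observations with small conditional means --- which is exactly your mean/fluctuation split. The only structural difference is where the measurability swap occurs: the paper evaluates the stochastic noise terms at the deterministic per-agent fixed points $\theta\i_{*}$ (its $G_1$ term absorbs the argument shift, producing the $\EE\|\bar{\theta}_t-\theta_{*}\|^2$, drift, and $\Lambda^2/w^2$ contributions), so that the cross terms are bounded by products of conditional means rather than exactly vanishing --- the same device you propose via the swap to $\theta\ita$, at the cost of an extra progress term.
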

\begin{proof}
	Similar to Lemma \ref{lem:decomp}, we first decompose the gradient variance and establish the linear speedups for the backtracking and mixing terms.
	\begin{align}\label{eq:grad-norm-1}
		\left\| \bm{g}_t(\bm{\theta}_t) \right\|^2
		=   & \left\| \bm{g}_t(\bm{\theta}_t) - \bar{\bm{g}}(\bm{\theta}_{*}) \right\|^2                                                                                                  \\\notag
		=   & \left\| \bm{g}_t(\bm{\theta}_t) - \bm{g}_t(\bm{\theta}_{*}, \bm{O}_t)
		+ \bm{g}_t(\bm{\theta}_{*}, \bm{O}_t) - \bm{g}_{{t-\tau}}(\bm{\theta}_{*}, \tilde{\bm{O}}_{t})\right.                                                                             \\\label{eq:grad-norm-2}
		    & \left.\ + \bm{g}\tta(\bm{\theta}_{*}, \tilde{\bm{O}}_{t}) - \bar{\bm{g}}\tta(\bm{\theta}_{*})
		+ \bar{\bm{g}}\tta(\bm{\theta}_{*}) - \bar{\bm{g}}(\bm{\theta}_{*}) \right\|^2                                                                                                    \\\notag
		\le & \frac{4}{N} \sumn \Biggl(
		\underbrace{\left\| g\it\left(\theta\it\right) - g\it\left( \theta_{*}\i \right) \right\|^2}_{G_1}
		+\underbrace{\left\| \bar{g}\i\tta\left( \theta_{*}\i \right) - \bar{g}\left( \theta\i_{*} \right)\right\|^2}_{G_2, \text{ gradient progress}} \Biggr)                            \\\notag
		    & + 4\underbrace{\left\| \bm{g}_t\left(\bm{\theta}_{*}, \bm{O}_t\right) - \bm{g}\tta\left( \bm{\theta}_{*},\tilde{\bm{O}}_{t} \right) \right\|^2}_{G_3, \text{ backtracking}}
		+ 4\underbrace{\left\| \bm{g}\tta\left( \bm{\theta}_{*}, \tilde{\bm{O}}_{t} \right) - \bar{\bm{g}}\tta(\bm{\theta}_{*})\right\|^2}_{G_4, \text{ mixing}},
	\end{align}
	where \eqref{eq:grad-norm-1} uses the fact that $\bar{\bm{g}}(\bm{\theta}_{*}) = \frac{1}{N}\sumn \bar{g}\i\left( \theta\i_{*} \right) = 0$; and in \eqref{eq:grad-norm-2}, we denote $\bm{g}\tta(\bm{\theta}_{*},\tilde{\bm{O}}_{t}) = \frac{1}{N}\sumn g\i\tta\left( \theta\i_{*},\tilde{O}\it \right)$, and the same notation applies to other semi-gradients.

	By the Lipschitzness of semi-gradient $g\it$, $G_1$ is bounded by
	\[
		\begin{aligned}
			\left\| g\it\left(\theta\it\right) - g\it\left(\theta\i_{*}\right) \right\|^2
			\le & 4\left\| \theta\it - \theta\i_{*} \right\|^2
			\\
			\le & 12 \left( \left\| \theta\it - \bar{\theta}_{t} \right\|^2 + \left\| \bar{\theta}_t - \theta_{*} \right\|^2 + \left\| \theta\i_{*} - \theta_{*} \right\|^2 \right).
		\end{aligned}
	\]

	By \cref{lem:distri-hetero}, $G_2$ is bounded by
	\[
		\begin{aligned}
			\left\| \bar{g}\i\tta\left( \theta\i_{*} \right) - \bar{g}\left( \theta\i_{*} \right) \right\|^2
			\le & \left(\left( R + (1+\gamma)\left\| \theta\i_{*} \right\|  \right)\left\| \mu\ita - \mu\i_{*} \right\|_{\mathrm{TV}}\right)^2                                                                                                         \\
			\le & H^2L^2\sigma^2\left\| \theta\i\tta - \theta\i_{*}\right\|^2                                                                                                                                                                          \\
			\le & 4H^2L^2\sigma^2 \left( \left\| \theta\ita -\bar{\theta}\tta \right\|^2 + \left\| \bar{\theta}\tta - \bar{\theta}_t \right\|^2 + \left\| \bar{\theta}_t - \theta_{*} \right\|^2 + \left\| \theta_{*}-\theta\i_{*} \right\|^2 \right).
		\end{aligned}
	\]

	Now we are left with $G_3$ and $G_4$. However, we only have the bound of their first moment by Lemma \ref{lem:stat} and \ref{lem:mix}.
	We first note that for a set of functions $\{ g_i \}_{i=1}^{N}$ such that $\|g_i\|_{\infty}\le a$ and independent random variables $\{ x_{i} \}_{i=1}^{N}$ such that $\|\EE g_i(x_i)\| \le b$, we have
	\begin{align}\notag
		\EE\|\bm{g}(\bm{x})\|^2
		= & \EE\left<\frac{1}{N}\sumn g_i(x_i), \frac{1}{N}\sumn g_i(x_i) \right>
		\\=&\frac{1}{N^2} \sumn \EE \|g_i(x_i)\|^2 + \frac{1}{N^2}\sum_{i\ne j}\left<\EE g_i(x_i), \EE g_j(x_j) \right>
		\notag
		\\\le& \frac{a^2}{N} + \frac{1}{N^2}\sumn\sum_{j=1}^{N}\|\EE g_i(x_i)\|\|\EE g_j(x_j)\|
		\notag
		\\\le & \frac{a^2}{N} + b^2. \label{eq:grad-norm-3}
	\end{align}

	By \eqref{eq:grad-norm-3} and Lemma \ref{lem:stat}, the expectation of $G_3$ is bounded by
	\begin{equation}\label{eq:grad-norm-4}
		\EE\left[\left\| \bm{g}_t\left(\bm{\theta}_{*}, \bm{O}_t\right) -\bm{g}\tta\left( \bm{\theta}_{*},\tilde{\bm{O}}_{t} \right) \right\|^2 \given \mathcal{F}_{t-\tau}\right]
		\le \frac{4H^2}{N} + \alpha _{sK}^2C_{\mathrm{back}}^2H^2.
	\end{equation}

	By \eqref{eq:grad-norm-3} and Lemma \ref{lem:mix}, the expectation of $G_4$ is bounded by
	\[
		\EE\left[\left\| \bm{g}\tta\left( \bm{\theta}_{*}, \tilde{\bm{O}}_{t} \right) - \bar{\bm{g}}\tta(\bm{\theta}_{*})\right\|^2\given \mathcal{F}_{t-\tau}\right]
		= \EE\left[\left\| \bm{Z}\tta(\bm{\theta}_{*},\tilde{\bm{O}}_{t}) \right\|^2\given \mathcal{F}_{t-\tau}\right]
		\le \frac{4H^2}{N} + m^2\rho^{2\tau}H^2.
	\]
	Combining all together with Lemma~\ref{lem:drift}, \ref{lem:v-para-prog}, and Theorem~\ref{thm:fix-drift}, we get
	\[
		\begin{aligned}
			\EE\left[\|\bm{g}_t(\bm{\theta}_t)\|^2|\mathcal{F}_{t-\tau}\right]
			\le & 4\left( 4(3 + H^2L^2\sigma^2)\left( \EE\left[ \left\| \bar{\theta}_t-\theta_{*} \right\|^2 |\mathcal{F}\tta\right] + \alpha _{sK}^2C^2_{\mathrm{drift}} + \frac{\Lambda^2(\epsilon_p,\epsilon_r)}{w^2} \right)  \right.   \\
			    & \left. + 4H^2L^2\sigma^2 \alpha _{sK}^2C_{\mathrm{prog}}^2 + \frac{8H^2}{N} + \left( \alpha _{sK}^2C_{\mathrm{back}}^2 + m^2\rho^{2\tau} \right)H^2\right)                                                                \\
			\le & 64\left( \EE\left[ \left\| \bar{\theta}_t - \theta_{*} \right\|^2 |\mathcal{F}\tta\right] + \frac{\Lambda^2(\epsilon_p,\epsilon_r)}{w^2}\right) + \alpha _{sK}^2C_{\mathrm{var}} + 4m^2\rho^{2\tau}H^2 + \frac{32H^2}{N},
		\end{aligned}
	\]
	where we use the fact that $LH\sigma \le w \le 1$ required by \eqref{eq:L}, and
	\[
		C_{\mathrm{var}} = 4 \left( 4(3 + H^2L^2\sigma^2)C^2_{\mathrm{drift}} + 4H^2L^2\sigma^2C_{\mathrm{prog}}^2 + H^2C_{\mathrm{back}}^2 \right).
	\]
	Finally, we get
	$$
		\EE\|\bm{g}_t(\bm{\theta}_t)\|^2 = \EE\left[ \EE\left[ \|\bm{g}_t(\bm{\theta}_t)\|^2 | \mathcal{F}_{t-\tau} \right] \right] \le
		64\left( \EE\left\| \bar{\theta}_t - \theta_{*} \right\|^2 + \frac{\Lambda^2(\epsilon_p,\epsilon_r)}{w^2}\right) + \alpha _{sK}^2C_{\mathrm{var}} + 4m^2\rho^{2\tau}H^2 + \frac{32H^2}{N}.
	$$
\end{proof}

Recall that \cref{lem:v-para-prog} bounds the central parameter progress, which gives a \textit{naive} bound of the mean square central parameter progress $\EE\|\bar{\theta}_t - \bar{\theta}\tta\|^2\le \alpha^2_{sK}C^2_{\mathrm{prog}}(\tau)$ for any $\tau\le t$, where $s$ is the largest integer such that $sK\le t-\tau$.
However, with the help of \cref{lem:grad-norm}, we can derive a tighter bound of the mean square central parameter progress, which is essential for proving \cref{thm} later.

\begin{corollary-lem}[Mean square central parameter progress]\label{cor:msv}
For any $\tau\le t$, we have
$$
	\EE \left\| \bar{\theta}_t-\bar{\theta}\tta \right\|^2 \le 4(\tau+K)(\tau+3K)\alpha^2_{sK}
	\left( 64\EE\left\| \bar{\theta}_{sK} - \theta_{*} \right\|^2 + V(\tau) \right),
$$
where $s$ is the largest integer such that $sK\le t-\tau$ and
$$
	V(\tau) \coloneqq \frac{64 \Lambda^2(\epsilon_{p},\epsilon_{r})}{w^2} + \alpha^2_{sK}C_{\mathrm{var}}(\tau) + 4m^2\rho^{2\tau}H^2 + \frac{32H^2}{N}
$$
is part of the gradient variance bound in \cref{lem:grad-norm}.
\end{corollary-lem}
\begin{proof}
	Recall in \cref{lem:v-para-prog}, we utilize a \textit{naive} bound of $\|\bm{g}_t(\bm{\theta}_t)\|$ by $(R +(1+\gamma)\|\bar{\theta}_t\| + 2\omega_l)$; the key difference in this proof is that we will bound $\EE\|\bm{g}_t(\bm{\theta}_t)\|^2$ using \cref{lem:grad-norm}.
	Therefore, similar to \cref{eq:v-prog-3}, let $s$ and $s'$ be the largest integer such that $sK\le t-\tau$ and $s'K\le t$ respectively. Then we have
	\begin{align}\notag
		\EE\|\bar{\theta}_t - \bar{\theta}_{t-\tau}\|^2
		\le & (s'-s+2)\left(\EE\|\bar{\theta}_{t} - \bar{\theta}_{s'K}\|^2 + \sum_{j=1}^{s'-s}\EE\|\bar{\theta}_{(s+j)K} - \bar{\theta}_{(s+j-1)K}\|^2 + \EE\|\bar{\theta}\tta - \bar{\theta}_{sK}\|^2 \right)       \\\notag
		\le & (s'-s+2)\left(\EE\|\breve{\theta}_{t} - \bar{\theta}_{s'K}\|^2 + \sum_{j=1}^{s'-s}\EE\|\breve{\theta}_{(s+j)K} - \bar{\theta}_{(s+j-1)K}\|^2 + \EE\|\breve{\theta}\tta - \bar{\theta}_{sK}\|^2 \right) \\\notag
		\le & 2(s'-s+2) K \sum_{l=sK}^{t-1}\alpha^2_{l}\EE \|\bm{g}_{l}(\bm{\theta}_{l})\|^2                                                                                                                              \\\notag
		\le & 2(\tau + 3K)\alpha^2_{sK} \sum_{l=sK}^{t-1}\EE\|\bm{g}_{l}(\bm{\theta}_{l})\|^2.
	\end{align}
	By \cref{lem:grad-norm}, we get
	\begin{equation}\label{eq:msv-1}
		\EE\|\bar{\theta}_t - \bar{\theta}_{t-\tau}\|^2
		\le 2(\tau+3K) \alpha^2_{sK}\sum_{l=sK}^{t-1} \left( 64\EE\|\bar{\theta}_{l}-\theta_{*}\|^2 + V(l-sK) \right).
	\end{equation}
	Then, similar to \cref{eq:v-prog-1}, we want to bound $\EE\|\bar{\theta}_{l}-\theta_{*}\|^2$ by $\EE\|\bar{\theta}_{sK}-\theta_{*}\|^2$ for $sK< l\le t-1$. We have
	\begin{align}\notag
		\mathbb{E}\left\|\bar{\theta}_{l}-\theta_*\right\|^2
		 & \le \mathbb{E}\left\|\breve{\theta}_{l}-\theta_*\right\|^2                                                    \\\notag
		 & =\mathbb{E}\| \bar{\theta}_{l\!-\!1}-\theta_*+\alpha_{l-1} \bm{g}_{l-1}\left(\bm{\theta}_{l-1}\right) \|^2         \\\notag
		 & =\mathbb{E}\| \bar{\theta}_{l\!-\!1}-\theta_*\left\|^2
		+2 \alpha_{l\!-\!1} \mathbb{E}\left\langle\bar{\theta}_{l-1}-\theta_*, \bm{g}_{l-1}\left(\bm{\theta}_{l-1}\right)\right\rangle
		+\alpha_{l\!-\!1}^2 \mathbb{E}\right\| \bm{g}_{l-1}\left(\bm{\theta}_{l-1}\right) \|^2                                \\\label{eq:msv-2}
		 & \le \left(1+\alpha_{l\!-\!1}\right)\EE\left\|\bar{\theta}_{l-1}-\theta_{*}\right\|^2
		+\alpha_{l\!-\!1}\left(1+\alpha_{l-1}\right) \mathbb{E}\left\|\bm{g}_{l-1}\left(\bm{\theta}_{l-1}\right)\right\|^2    \\\label{eq:msv-3}
		 & \le\left(1+\alpha_{l\!-\!1}\right)\left(1+64 \alpha_{l-1}\right)\mathbb{E}\| \bar{\theta}_{l-1}-\theta_* \|^2
		+\alpha_{l\!-\!1}\left(1+\alpha_{l-1}\right) V(l-1-sK),
	\end{align}
	where \cref{eq:msv-2} uses Young's inequality and \cref{eq:msv-3} uses \cref{lem:grad-norm}.
	We require $64\alpha_{sK} \le 1$, which gives $(1+\alpha_{l-1})(1+64\alpha _{l-1}) \le (1+66\alpha _{l-1})$. Recursively applying \cref{eq:msv-3} gives
	\begin{align}\notag
		\mathbb{E}\left\|\bar{\theta}_{l}-\theta_*\right\|^2
		 & \le\left(1+66\alpha_{sK}\right)^{l-sK}\mathbb{E}\| \bar{\theta}_{sK}-\theta_* \|^2
		+\alpha_{sK}\left(1+\alpha_{sK}\right) V(\tau)\sum_{j=0}^{l-1-sK}(1+66\alpha _{sK})^{j},
	\end{align}
	where we use the fact that $V$ is monotonically increasing.
	We further requires that $132(\tau+K)\alpha_{sK}\le 1$. Then, similar to \cref{eq:bi}, we get
	\begin{align}\label{eq:msv-4}
		\mathbb{E}\left\|\bar{\theta}_{l}-\theta_*\right\|^2
		\le 2\mathbb{E}\| \bar{\theta}_{sK}-\theta_* \|^2
		+2\alpha_{sK}\left(1+\alpha_{sK}\right)(\tau + K) V(\tau).
	\end{align}
	Combining \cref{eq:msv-1,eq:msv-4} gives
	\begin{align}\notag
		\EE\|\bar{\theta}_t - \bar{\theta}_{t-\tau}\|^2
		\le & 2(\tau+3K) \alpha^2_{sK}\sum_{l=sK}^{t-1} \left(128
		\mathbb{E}\| \bar{\theta}_{sK}-\theta_* \|^2
		+128\alpha_{sK}\left(1+\alpha_{sK}\right)(\tau+K) V(\tau)
		+ V(\tau) \right)                                         \\\label{eq:msv-5}
		\le & 2(\tau+3K) \alpha^2_{sK}\sum_{l=sK}^{t-1} \left(128
		\mathbb{E}\| \bar{\theta}_{sK}-\theta_* \|^2
		+\left(\frac{128}{132}\cdot \frac{133}{132} + 1\right)V(\tau)
		\right)                                                   \\\notag
		\le & 4(\tau+K)(\tau+3K) \alpha^2_{sK}\left(64
		\mathbb{E}\| \bar{\theta}_{sK}-\theta_* \|^2
		+V(\tau)
		\right),
	\end{align}
	where \cref{eq:msv-5} uses our requirement that $\alpha_{t-\tau}\le (\tau+K) \alpha _{sK}\le 1/132$.
\end{proof}

\section{Proof of Theorem~\ref{thm}}\label{sec:thm-pf}

\begin{reptheorem}{thm}
	If $\left\| \bar{\theta}_l \right\| \le G$ holds for all $l\le t$, then
	\begin{equation*}
		\EE\left\| \bar{\theta}_{t+1}-\theta_{*} \right\|^2
		\le (1 - \alpha _{t}w)\EE\left\| \bar{\theta}_t-\theta_{*} \right\|^2
		+ \alpha  _t C_1\Lambda^2(\epsilon_p,\epsilon_r)
		+ \alpha^2_t \frac{C_2}{N}
		+ \alpha^3_t C_3
		+ \alpha^4_t C_4,
	\end{equation*}
	where $C_1, C_2, C_3, C_4$ are constants defined in \eqref{eq:per-const}.
\end{reptheorem}
\begin{proof}
	We need to pre-process the results from \cref{lem:grad-hetero,lem:drift,lem:mix,lem:stat,lem:v-para-prog} before plugging them back into Lemma \ref{lem:decomp}.
	Throughout this proof, let $s$ and $s'$ be the largest integers such that $sK \le t-\tau$ and $s'K \le t$.
	First, for Lemma \ref{lem:grad-hetero}, by Young's inequality $ab \le \frac{1}{2}\left( \beta a^2 + \frac{1}{\beta}b^2 \right)$, for any positive $\beta$, we have
	\begin{equation}\label{eq:per-1}
		2\EE\left<\bar{\theta}_t-\theta_{*}, \frac{1}{N}\sumn \bar{g}\i\left( \bar{\theta}_t \right) - \bar{g}\left( \bar{\theta}_t \right) \right>
		\le \beta\EE\left\| \bar{\theta}_t - \theta_{*} \right\|^2 + \frac{\Lambda^2(\epsilon_p,\epsilon_r)}{\beta}.
	\end{equation}

	Similarly, for Lemma \ref{lem:drift} and \ref{lem:grad-prog}, we have
	\begin{align}\label{eq:per-2}
		2\EE\left<\bar{\theta}_{t}-\theta_{*},\frac{1}{N}\sumn\left( \bar{g}\i(\theta\it) - \bar{g}\i(\bar{\theta}_{t}) \right) \right>
		\le & \beta\EE\left\| \bar{\theta}_t - \theta_{*} \right\|^2
		+ \frac{1}{\beta}\alpha _{s'K}^2(1+\gamma+\sigma LH)^2C^2_{\mathrm{drift}}, \\\label{eq:per-3}
		\frac{1}{N}\sumn 2\EE\left<\bar{\theta}_{t}-\theta_{*},\bar{g}\ita(\theta\it) - \bar{g}\i(\theta\it) \right>
		\le & \beta\EE\left\| \bar{\theta}_t - \theta_{*} \right\|^2
		+ \frac{1}{\beta}\alpha _{sK}^2C_{\mathrm{prog}}^2L^2\sigma^2\EE h^2\left( \bm{\theta}_t \right),
	\end{align}
	where $h^2(\bm{\theta}_t) = \frac{1}{N}\sumn h^2\left( \theta\it \right)$, and
	\[
		\EE h^2\left( \bm{\theta}_t \right) = 2H^2 + 2(1+\gamma)^2\EE\left[ \Omega_t  \right] \le 2H^2 + 8\alpha _{s'K}^2C^2_{\mathrm{drift}} \le H^2_{\mathrm{drift}},
	\]
	where we define $H_{\mathrm{drift}} \coloneqq \sqrt{2H^2 + 8\alpha_{0}^2C^2_{\mathrm{drift}}}$.

	Then, for Lemma \ref{lem:mix}, we have
	\[
		\begin{aligned}
			    & \frac{1}{N}\sumn\EE\left<\bar{\theta}_t-\theta_{*},{g}\ita\left( \theta\it \right)-\bar{g}\ita\left( \theta\it \right) \right>                                                                                                          \\
			=   & \frac{1}{N}\sumn\EE\left<\bar{\theta}_t-\bar{\theta}\tta,{g}\ita\left( \theta\it \right)-\bar{g}\ita\left( \theta\it \right) \right>
			+ \frac{1}{N}\sumn\EE\left<\bar{\theta}\tta-\theta_{*},{g}\ita\left( \theta\it \right)-\bar{g}\ita\left( \theta\it \right) \right>                                                                                                            \\
			\le & \underbrace{\EE\left[\frac{1}{N}\sumn\EE\left[ \left< \bar{\theta}_t-\bar{\theta}\tta,  {g}\ita\left( \theta\it \right)-\bar{g}\ita\left( \theta\it \right) \right>  \given \mathcal{F}\tta \right]\right]}_{H_1}                       \\
			    & + \underbrace{\EE\left[ \frac{1}{N}\sumn\left\| \bar{\theta}\tta-\theta_{*} \right\|\left\|\EE\left[  {g}\ita\left( \theta\it \right)-\bar{g}\ita\left( \theta\it \right)  \given \mathcal{F}_{t-\tau} \right]\right\|  \right]}_{H_2}.
		\end{aligned}
	\]
	For $H_1$, since both $g\ita$ and $\bar{g}\ita$ are independent of $\theta\it$ conditioned on $\mathcal{F}\tta$, Lemma \ref{lem:v-para-prog} and \ref{lem:mix} give
	\begin{align}\notag
		H_1 = & \EE\left[ \frac{1}{N}\sumn \left<\EE[\bar{\theta}_t - \bar{\theta}\tta \mid \mathcal{F}\tta], \EE\left[ g\ita\left( \theta\it\right) - \bar{g}\ita\left( \theta\it \right)  \given \mathcal{F}\tta \right] \right> \right] \\\notag
		\le   & \EE\left[ \frac{1}{N}\sumn\left\| \EE[\bar{\theta}_t - \bar{\theta}\tta\mid \mathcal{F}\tta ]\right\| \left\| \EE \left[Z\ita\left( \theta\it \right) \given \mathcal{F}\tta\right] \right\|  \right]                      \\\label{eq:per-3.5}
		\le   & \alpha _{sK}C_{\mathrm{prog}} \cdot m \rho^{\tau} \EE h(\bm{\theta}_t)                                                                                                                                                       \\\notag
		\le   & \alpha _{sK}m\rho^{\tau}C_{\mathrm{prog}}H_{\mathrm{drift}}.
	\end{align}
	Similarly, for $H_2$, we have
	\[
		\begin{aligned}
			H_2 = & \EE\left[ \left\| \bar{\theta}\tta - \theta_{*} \right\|\frac{1}{N}\sumn \left\| \EE\left[ {g}\ita\left( \theta\it \right)-\bar{g}\ita\left( \theta\it \right) \given \mathcal{F}\tta \right] \right\|\right] \\
			\le   & \EE\left[m\rho^{\tau}\EE h\left( \bm{\theta}_t \right)\left( \left\| \bar{\theta}\tta - \bar{\theta}_t \right\|+\left\| \bar{\theta}_t - \theta_{*} \right\| \right)  \right]                                 \\
			\le   & m\rho^{\tau}H_{\mathrm{drift}}\left( \EE\left\| \bar{\theta}_t - \theta_{*} \right\| + \alpha _{sK}C_{\mathrm{prog}} \right)                                                                                  \\
			\le   & \frac{1}{2}\left( \beta\EE\left\| \bar{\theta}_t - \theta_{*} \right\|^2
			+ \frac{1}{\beta} m^2\rho^{2\tau}H_{\mathrm{drift}}^2 \right)
			+ \alpha _{sK}m\rho^{\tau}C_{\mathrm{prog}}H_{\mathrm{drift}}.
		\end{aligned}
	\]
	Substituting $H_1$ and $H_2$ with the above bounds gives
	\begin{equation}\label{eq:per-4}
		\frac{1}{N}\sumn 2\EE\left<\bar{\theta}_t-\theta_{*},{g}\ita\left( \theta\it \right)-\bar{g}\ita\left( \theta\it \right) \right>
		\le \beta\EE\left\| \bar{\theta}_t - \theta_{*} \right\|^2
		+ m\rho^{\tau}H_{\mathrm{drift}}\left( \frac{1}{\beta} m\rho^{\tau}H_{\mathrm{drift}} + 4\alpha _{sK}C_{\mathrm{prog}}\right).
	\end{equation}

	For \cref{lem:stat}, the trick we applied in \cref{eq:per-3.5} is no longer valid because $g\it$ and $\theta\it$ are correlated. Notice that $\theta\ita$ is deterministic given $\mathcal{F}\tta$, we first apply the following decomposition:
	$$
		\begin{aligned}
			  & \frac{1}{N}\sumn 2\EE\left<\bar{\theta}_t-\theta_{*},g\it(\theta\it, O\it) - g\i\tta(\theta\it, \too\it) \right>                                                                                                 \\
			= & \underbrace{\frac{1}{N}\sumn 2\EE\left<\bar{\theta}_t-\theta_{*},\left(g\it(\theta\it, O\it) - g\it(\theta\ita, O\it)\right) + \left(g\ita(\theta\ita, \too\it) - g\ita(\theta\it, \too\it)\right)\right>}_{H_3} \\
			  & + \underbrace{\frac{1}{N}\sumn 2\EE\left<\bar{\theta}\tta-\theta_{*},g\it(\theta\ita, O\it) - g\i\tta(\theta\ita, \too\it)  \right>}_{H_4}                                                                       \\
			  & + \underbrace{\frac{1}{N}\sumn 2\EE\left<\bar{\theta}_t-\bar{\theta}\tta,g\it(\theta\ita, O\it) - g\i\tta(\theta\ita, \too\it)  \right>}_{H_5}.
		\end{aligned}
	$$
	By the Lipschitzness of semi-gradient $g\it$ and $g\ita$ and \cref{cor:para-prog}, we have
	$$
		H_3 \le \frac{1}{N}\sumn 2\EE \left[\left\| \bar{\theta}_t-\theta_{*} \right\| \cdot 4\left\| \theta\it - \theta\ita \right\|  \right]
		\le \beta\EE\|\bar{\theta}_t-\theta_{*}\|^2 + \frac{4}{\beta}\alpha_{sK}^2C^2_{\mathrm{prog}}(\tau)
		.$$
	By \cref{lem:stat}, we have
	$$
		\begin{aligned}
			H_4 \le & \frac{2}{N}\sumn\EE\left[ \left\| \bar{\theta}\tta-\theta_{*} \right\|
			\EE\left[ \left\| g\it(\theta\ita, O\it) - g\i\tta(\theta\ita, \too\it) \right\|  \given \mathcal{F}_{t-\tau} \right] \right] \\\notag
			\le     & \beta\EE\left\| \bar{\theta}_t - \theta_{*} \right\|^2
			+ \frac{1}{\beta} \left( \alpha _{sK}C_{\mathrm{back}}\EE h(\bm{\theta}\tta) \right) ^2 + 2 \alpha _{sK}^{2} C_{\mathrm{prog}}C_{\mathrm{back}}\EE h(\bm{\theta}_{t-\tau})
			.
		\end{aligned}
	$$

	Finally, for $H_5$, by Young's inequality, we have
	$$
		\begin{aligned}
			H_5 \le & \frac{\tau+2K}{\alpha _{sK}(\tau+K)(\tau+3K)}\EE\|\bar{\theta}_t-\bar{\theta}\tta\|^2                                                                                                                  \\
			        & + \frac{\alpha _{sK}(\tau+K)(\tau+3K)}{\tau+2K}\EE\left[ \EE\left[ \|\bm{g}_{t}(\bm{\theta}\tta, \bm{O}_t) - \bm{g}\tta(\bm{\theta}_{t-\tau}, \bm{\too}_t)\|^2 \given \mathcal{F}\tta \right] \right].
		\end{aligned}
	$$
	Since $\bm{\theta}\tta$ is deterministic given $\mathcal{F}\tta$, we can apply a similar argument to \cref{eq:grad-norm-4} here, which gives
	$$
		H_5 \le \frac{(\tau+2K)}{\alpha _{sK}(\tau+K)(\tau+3K)}\EE\|\bar{\theta}_t-\bar{\theta}\tta\|^2
		+ \alpha _{sK}(\tau+2K)\left( \frac{4H^2}{N} + \alpha^2_{sK}C^2_{\mathrm{back}}(\tau)H^2 \right).
	$$
	By \cref{cor:msv,lem:v-para-prog}, we get
	$$
		\begin{aligned}
			H_5 \le & \alpha _{sK}(\tau+2K)\left( 256 \EE\left\| \bar{\theta}_{sK}-\theta_{*} \right\|^2 + 4V(\tau)+ \frac{4H^2}{N} + \alpha^2_{sK}C^2_{\mathrm{back}}(\tau)H^2 \right)                                             \\
			\le     & \alpha _{sK}(\tau+2K)\Bigl( 256 (1 + 1/32) \EE\left\| \bar{\theta}_{t}-\theta_{*} \right\|^2 + 256 (1+32) \EE\left\| \bar{\theta}_{sK}-\bar{\theta}_{t} \right\|^2                                     \\
			        & \phantom{\alpha _{sK}(\tau+2K)\Bigl(}+ 4V(\tau)+ \frac{4H^2}{N} + \alpha^2_{sK}C^2_{\mathrm{back}}(\tau)H^2 \Bigr)                                                                                                                         \\
			\le     & \alpha _{sK}(\tau+2K)\left( 264 \EE\left\| \bar{\theta}_{t}-\theta_{*} \right\|^2 + 8448 \alpha^2_{sK}C^2_{\mathrm{prog}}(\tau)+ 4V(\tau)+ \frac{4H^2}{N} + \alpha^2_{sK}C^2_{\mathrm{back}}(\tau)H^2 \right) \\
		\end{aligned}
	$$
	We further require that $132 \alpha_{sK} (\tau+2K)\le \beta$.
	Then, plugging $H_3,H_4,H_5$, and $V(\tau)$ back gives
	\begin{align}\notag
		    & \frac{1}{N}\sumn 2\EE\left<\bar{\theta}_t-\theta_{*},g\it(\theta\it, O\it) - g\i\tta(\theta\it, \too\it) \right>                                                                             \\\notag
		\le & 4\beta\EE\|\bar{\theta}_t-\theta_{*}\|^2
		+ \frac{1}{\beta}\alpha_{sK}^2\left( 4C^2_{\mathrm{prog}} + 2\beta C_{\mathrm{prog}}C_{\mathrm{back}}H_{\mathrm{drift}} + C^2_{\mathrm{back}}H^2_{\mathrm{drift}}\right) + \frac{2\beta\Lambda^2}{w^2}                                                           \\\label{eq:per-5}
		    & + \alpha _{sK}(\tau+2K)\left(8448 \alpha^2_{sK}C^2_{\mathrm{prog}} + 4\alpha^2_{sK}C_{\mathrm{var}} + 16m^2\rho^{2\tau}H^2 + \alpha^2_{sK}C^2_{\mathrm{back}}H^2 + \frac{132H^2}{N} \right).
	\end{align}


	Putting \cref{eq:per-1,eq:per-2,eq:per-3,eq:per-4,eq:per-5} and \cref{lem:des-dir,lem:grad-norm} back into \cref{lem:decomp}, we get
	\[
		\begin{aligned}
			    & \EE\left\| \bar{\theta}_{t+1} - \theta_{*} \right\|^2
			\le \EE\left\| \breve{\theta}_{t+1} - \theta_{*} \right\|^2                                                                                                                                                                  \\
			\le & (1 - 2\alpha _{t}w)\left\| \bar{\theta}_t  -\theta_{*} \right\|^2
			+ 8\alpha _{t}\beta\EE\left\| \bar{\theta}_t - \theta_{*} \right\|^2                                                                                                                                                         \\
			    & + \alpha _{t}\left(
			\frac{\Lambda^2(\epsilon_p,\epsilon_r)}{\beta}+\frac{2\beta\Lambda^2(\epsilon_{p},\epsilon_{r})}{w^2}
			+ \frac{1}{\beta}\alpha^2_{s'K}(1+\gamma+\sigma LH)^2C^2_{\mathrm{drift}}
			+ \frac{1}{\beta}\alpha^2_{sK}C_{\mathrm{prog}}^2L^2H^2_{\mathrm{drift}}\sigma^2
			\right.                                                                                                                                                                                                                      \\
			    & +
			m\rho^{\tau}H_{\mathrm{drift}}\left( \frac{1}{\beta} m\rho^{\tau}H_{\mathrm{drift}} + 4\alpha _{sK}C_{\mathrm{prog}} \right)
			+ \frac{1}{\beta}\alpha_{sK}^2\left( 4C^2_{\mathrm{prog}} + 2 \beta C_{\mathrm{prog}}C_{\mathrm{back}}H_{\mathrm{drift}}+ C^2_{\mathrm{back}}H^2_{\mathrm{drift}}\right)
			                                                                                                                                                                                                 \\
			    & + \alpha _{sK}(\tau+2K)\left(8448\alpha^2_{sK}C^2_{\mathrm{prog}}
			\left. + 4\alpha^2_{sK}C_{\mathrm{var}} + 16m^2\rho^{2\tau}H^2 + \alpha^2_{sK}C^2_{\mathrm{back}}H^2 + \frac{132H^2}{N} \right)
			\right)                                                                                                                                                                                                                      \\
			    & + \alpha _{t}^2\left(64\left( \EE\left\| \bar{\theta}_t - \theta_{*} \right\|^2 + \frac{\Lambda^2(\epsilon_p,\epsilon_r)}{w^2}\right) + \alpha _{sK}^2C_{\mathrm{var}} + 4m^2\rho^{2\tau}H^2 + \frac{32H^2}{N}\right).
		\end{aligned}
	\]

	Note that $\tau$ is a virtual time range that we backtrack, and we have not determined it yet. Now we require it to be large enough such that $m\rho^{\tau} \le \alpha _{t}$. We also do not want $\tau$ to be too large. Thus, we fix
	\begin{equation}\label{eq:tau}
		\tau = \left\lceil (\log \alpha _{t} - \log m) / \log \rho \right\rceil \asymp \log \alpha^{-1}_{t}.
	\end{equation}
	We also require that the decay rate of $\alpha_t$ is non-increasing and $\sum_{t=0}^{\infty}\alpha _{t} = +\infty$. Then, there exists $T_1 > 0$ such that for any $t \ge T_1$, it holds that
	\[
		sK \ge t - \tau - K = t - \left\lceil  \frac{\log\alpha _{t} - \log m}{\log \rho} \right\rceil - K \ge \frac{t}{2}.
	\]
	The requirement on the step-size also gives $\limsup_{t \to \infty} \alpha_{t/2} / \alpha_{t} < +\infty$.
	Then, there exists $C'_{\alpha}, C_\alpha > 0$ such that for any $t \ge 0$, we have
	\[
		\frac{\alpha_{sK}}{\alpha_{t}}
		\le C'_\alpha\cdot \limsup_{t \to \infty}\frac{\alpha _{t /2}}{\alpha_{t}}
		= C_\alpha
		.\]
	Thus, after some rearrangement, we get
	\[
		\begin{aligned}
			    & \EE\left\| \bar{\theta}_{t+1} -\theta_{*} \right\|^2                                                           \\
			\le & (1 - 2\alpha_{t}w + 8\alpha_{t}\beta + 64\alpha^2_{t}) \EE\left\| \bar{\theta}_{t} -\theta_{*} \right\|^2
			+ 4\alpha^2_{t}(33C_{\alpha}(\tau+2K) + 8)\frac{H^2}{N}                                                              \\
			    & + \alpha^3_{t}C^2_{\alpha}\left(
			\frac{1}{\beta}\left(
			(1+\gamma+\sigma LH)^2C^2_{\mathrm{drift}}
			+ C_{\mathrm{prog}}^2L^2H^2_{\mathrm{drift}}\sigma^2
			+ H^2_{\mathrm{drift}}
			+ 4C^2_{\mathrm{prog}}
			+ 2 \beta C_{\mathrm{prog}}C_{\mathrm{back}}H_{\mathrm{drift}}
			+ C_{\mathrm{back}}^2H^2_{\mathrm{drift}}
			\right)\right.                                                                                                       \\
			    & +4C_{\mathrm{prog}}H_{\mathrm{drift}}\Bigr)
			+ \alpha^4_{t}C^3_{\alpha}(
			(\tau+2K)(8448C^2_{\mathrm{prog}} + 4C_{\mathrm{var}} + 16H^2 + C^2_{\mathrm{back}}H^2)
			+ C_{\mathrm{var}} + 4H^2
			)                                                                                                                    \\
			    & + \alpha _{t}\left( \frac{1}{\beta} + \frac{2\beta+64\alpha _{t}}{w^2}\right)\Lambda^2(\epsilon_p,\epsilon_r).
		\end{aligned}
	\]
	Now we let $\beta$ and $\alpha_0$ small enough such that
	\[
		8\beta + 64\alpha_0 \le w.
	\]
	Then we get the final form
	\[
		\EE\left\| \bar{\theta}_{t+1}-\theta_{*} \right\|^2
		\le (1 - \alpha _{t}w)\EE\left\| \bar{\theta}_t-\theta_{*} \right\|^2
		+ \alpha  _t C_1\Lambda^2(\epsilon_p,\epsilon_r)
		+ \alpha^2_t \frac{C_2}{N}
		+ \alpha^3_t C_3
		+ \alpha^4_t C_4,
	\]
	where
	\begin{equation}\label{eq:per-const}
		\begin{aligned}
			C_1 = & \beta^{-1} + (2\beta+64\alpha_0)w^{-2},                                                                                           \\
			C_2 = & 4(33C_{\alpha}(\tau+2K)+8)H^2,                                                                                                    \\
			C_3 = & C^2_{\alpha}\left(\frac{1}{\beta}\left( (1\!+\!\gamma\!
			+\!\sigma LH)^2C^2_{\mathrm{drift}}\! + C_{\mathrm{prog}}^2L^2H^2_{\mathrm{drift}}\sigma^2\!
			+ H^2_{\mathrm{drift}} + 5C^2_{\mathrm{prog}}
	+ 2C_{\mathrm{back}}^2H^2_{\mathrm{drift}}\right)\right.                                                                                   \\
			      & + 4C_{\mathrm{prog}}H_{\mathrm{drift}}\Bigr)                                                                                      \\
			C_4 = & C_{\alpha}^3((\tau+2K)(8448 C^2_{\mathrm{prog}} + 4C_{\mathrm{var}} + 16H^2 + C^2_{\mathrm{back}}H^2) + C_{\mathrm{var}} + 4H^2).
		\end{aligned}
	\end{equation}
\end{proof}

\section{Proof of Corrolaries~\ref{cor:err-con} and \ref{cor:err-dec}} \label{sec:cor-pf}

In this section, we provide the proofs of \cref{cor:err-con,cor:err-dec}. Combining with the constant dependencies discussed in \cref{sec:const-dep}, we get the final results presented in \cref{sec:anlys}.

\begin{repcorollary}{cor:err-con}
  With a constant step-size
  $\alpha_t\equiv \alpha_0\le w/(2120(2K+8+\ln (m/(\rho w))))$,
  for any $T\in\mathbb{N}$, we have
  \[
    \EE\left\| \bar{\theta}_T - \theta_{*}\i \right\|^2 \le 4e^{-\alpha_{0}wT}\left\| \theta_0-\theta_{*}\i \right\|^2 + B,
  \]
  where $B$ is the squared convergence region radius defined by
  $$
    B \coloneqq \frac{1}{w}\left( \left(C_1 + \frac{6}{w}\right)\Lambda^2(\epsilon_p,\epsilon_r) + \alpha_0\frac{C_2}{N} + \alpha_{0}^2C_3 + \alpha_{0}^{3}C_4 \right).
  $$
\end{repcorollary}
\begin{proof}
  Let $\theta_{*}$ be the central optimal parameter.
  By \cref{thm}, for any $T\in\mathbb{N}$, we have
  \[
    \EE\|\bar{\theta}_{T} - \theta_{*}\|^2 \le (1 - \alpha_0 w)^{T}\EE\|\theta_0 - \theta_{*}\|^2 + \alpha_{0}w\left(B - \frac{6\Lambda^2}{w^2}\right) \sum_{t=0}^{T-1}(1 - \alpha_0 w)^{t}
    \le e^{-\alpha_0 wT}\|\theta_0 - \theta_{*}\|^2 + B - \frac{6\Lambda^2}{w^2},
  \]
  where the last inequality uses the fact that $(1 - \alpha_0 w) \le e^{-\alpha_0 w}$ and $\sum_{t=0}^{\infty}(1 - \alpha_0 w)^{t} = (\alpha_0 w)^{-1}$.
  Then by \cref{thm:fix-drift}, we get
  $$
  \EE\left\|\bar{\theta} - \theta_*\i\right\|^2 \le 2\EE\|\bar{\theta}-\theta_*\|^2 + 2 \frac{\Lambda^2}{w^2}  \le 4e^{-\alpha_0 wT}\|\theta_0-\theta_*\i\|^2 + B - \frac{6\Lambda^2}{w^2} + \frac{6\Lambda^2}{w^2}.
  $$
\end{proof}

\begin{repcorollary}{cor:err-dec}
  With a linearly decaying step-size $\alpha _{t} = 4/(w(1+t+a)),$ where $a>0$ is to guarantee that $\alpha_0\le \min \{ 1 /(8K), w /64\}$, there exists a convex combination $\widetilde{\theta}_{T}$ of $\{ \bar{\theta}_t \}_{t=0}^{T}$ such that
  \[
    \EE\bigl\| \widetilde{\theta}_T - \theta_{*}\i\bigr\|^2 \le \frac{1}{w}O\left(\frac{C_4}{w^3T^2} + \frac{C_3 \log T}{w^2T^2} + \frac{C_2}{wNT} + C_1\Lambda^2(\epsilon_{p},\epsilon_{r})\right).
  \]
\end{repcorollary}
\begin{proof}
  Let $c_t = a + t$ and $C = \sum_{t=0}^{T}c_t \ge (T+1)^2/2$. We define
  \[
    \widetilde{\theta}_{T} = \frac{1}{C}\sum_{t=0}^{T}c_t \bar{\theta}_t,
  \]
  which is a convex combination of $\left\{ \bar{\theta}_t \right\}_{t=0}^{T}$. Then, by Jensen's inequality, we have
  \begin{equation}\label{eq:cor-1}
    \EE\left\| \widetilde{\theta}_{T} - \theta_{*} \right\|^2
    \le \frac{1}{C} \sum_{t=0}^{T} c_{t} \EE \left\| \bar{\theta}_t - \theta_{*} \right\|^2.
  \end{equation}
  Let $\theta_{*}$ be the central optimal parameter.
  By \cref{thm}, we have
  \[
    \frac{1}{2}\EE\left\| \bar{\theta}_t - \theta_{*} \right\|^2
    \le \left(\frac{1}{\alpha_t w} - \frac{1}{2}\right)\EE\left\| \bar{\theta}_t - \theta_{*} \right\|^2
    - \frac{1}{\alpha_t w}\EE\left\| \bar{\theta}_{t+1} - \theta_{*} \right\|^2
    + B(\alpha _{t}),
  \]
  where $B(\alpha) = (C_1 \Lambda^2 + \alpha C_2/N + \alpha^2C_3 + \alpha^3C_4) / w$.
  Recall our choice of the step-size $\alpha_t = 4 /(w(a+t+1))$; then we have $1 /(\alpha_t w) = (a+t+1)/4$.
  Plugging this back into \eqref{eq:cor-1} gives
  \[
    \begin{aligned}
      \EE\left\| \widetilde{\theta}_{T}\!-\!\theta_{*} \right\|^2
      \le & \frac{1}{C} \sum_{t=0}^{T} c_t\left( \frac{a\!+\!t\!-\!1}{2}\EE\left\| \bar{\theta}_t\! -\! \theta_{*} \right\|^2
      - \frac{\!a\!+\!t\!+\!1}{2}\EE\left\| \bar{\theta}_{t+1}\!-\!\theta_{*} \right\|^2
      + 2B(\alpha_t)  \right)                                                                                                                                  \\
      =   & \frac{1}{2C}\sum_{t=0}^{T}\left( (a+t-1)(a+t)\EE\left\| \bar{\theta}_{t}-\theta_{*} \right\|^2
      - (a+t)(a+t+1)\EE\left\| \bar{\theta}_{t+1}-\theta_{*} \right\|^2  \right)                                                                               \\
          & + \frac{2C_1\Lambda^2}{w}
      + \frac{8C_2}{CNw^2}\sum_{t=0}^{T}\frac{a+t}{a+t+1}
      + \frac{32C_3}{Cw^3}\sum_{t=0}^{T}\frac{a+t}{(a+t+1)^2}
      + \frac{128C_4}{Cw^4}\sum_{t=0}^{T}\frac{a+t}{(a+t+1)^3}
      \\
      \le & \frac{1}{2C}\left( a(a-1)\left\| \bar{\theta}_{0} - \theta_{*}\right\|^2 - (a+T)(a+T+1)\EE\left\| \bar{\theta}_{T+1}-\theta_{*} \right\|^2 \right) \\
          & + \frac{2C_1\Lambda^2}{w}
      + \frac{8C_2(T+1)}{CNw^2}
      + \frac{32C_3}{Cw^3}\sum_{t=0}^{T}\frac{1}{t+1}
      + \frac{128C_4}{Cw^4}\sum_{t=0}^{T}\frac{1}{(t+1)^2}                                                                                                     \\
      \le & \frac{a^2\left\| \theta_{0} - \theta_{*} \right\|^2}{T^2}
      + \frac{2C_1\Lambda^2}{w} + \frac{8C_2}{w^2NT} + \frac{32C_3}{w^3T^2}O(\log(T)) + \frac{256C_4}{w^4 T^2}                                                 \\[1ex]
      =   & O\left(\frac{a^2}{T^2} + \frac{C_4}{w^4T^2} + \frac{C_3 \log T}{w^3T^2} + \frac{C_2}{w^2NT} + \frac{C_1\Lambda^2}{w}\right).
    \end{aligned}
  \]
  Then by \cref{thm:fix-drift} and the fact that $1/w\lesssim C_1$ (see \cref{sec:const-dep}) and $a \lesssim K / w^2$, we get
  $$
  \EE\left\| \widetilde{\theta}_{T}\!-\!\theta_{*}\i \right\|^2
  \le 2\EE\left\| \widetilde{\theta}_{T}\!-\!\theta_{*} \right\|^2 + 2\frac{\Lambda^2}{w^2}
  =  O\left(\frac{K^2 + C_4}{w^4T^2} + \frac{C_3 \log T}{w^3T^2} + \frac{C_2}{w^2NT} + \frac{C_1\Lambda^2}{w}\right).
  $$

\end{proof}

\section{Constant Dependencies} \label{sec:const-dep}

In this section, we establish explicit dependencies between the constants. We begin by introducing problem constants that are independent of other parameters: the reward cap $R > 0$, discount factor $\gamma \in (0,1)$, projection radius $\bar{G} > 0$,\footnotemark{} local update period $K$, and kernel-related constants, $m \ge 1,\rho \in (0,1)$, and $\lambda\coloneqq \min_{i\in[\bar{N}]}\lambda\i \in (0,1]$.
Throughout this paper, we use asymptotic notation as $R,\bar{G},K,m \to \infty$ and $\gamma,\rho \to 1$. We also use the nonasymptotic notation $a \lesssim b$ and $b \gtrsim a$ to indicate that there exists $C \ge 0$ such that $a \le Cb$, and $a\asymp b$ to indicate that both $a\lesssim b$ and $b\gtrsim a$ hold.

\footnotetext{One can choose $\bar{G} = R /w$ as suggested in \cite{zou2019Finitesampleanalysis}. Here, we make it a pre-defined algorithm constant.}

We first give the dependencies of $\sigma'$ defined in \eqref{eq:sigma}. By its definition, we have $\sigma' \ge 0$, and
$$\sigma' \le \frac{\log m}{-\log \rho} + \frac{1}{1-\rho} \le \frac{\log m+1}{1-\rho} = O\left( \frac{\log m}{1-\rho} \right),$$
where the asymptotic notation holds as $\rho\to 1$ and $m\to\infty$. 
We also get $\sigma = \sigma' + 2 = O(\log m / (1-\rho))$.
We will now use $\sigma$ as a base constant.

$w$ is an important MDP constant and plays a critical role in the convergence rate. By its definition~\eqref{eq:w}, we get $w \le 1 /2$ and
$$
w = \min_{i\in[\bar{N}]}w_i \ge \frac{1-\gamma}{2}\min_{i\in[\bar{N}]}\lambda\i = \frac{1-\gamma}{2}\lambda,
$$
which gives
$$
w^{-1} = O((1-\gamma)^{-1}).
$$

We then consider $G$ and $H$. By \cref{cor:G}, we get
$$
G = \frac{2(2\bar{G} + R)}{1-16\alpha_0 ^2K^2\gamma} = O\left( \frac{\bar{G} + R}{1-\gamma} \right).
$$
When $\gamma$ is near 1, the above bound is undesirable. Thus, when $\gamma$ is large, we can further require $4\alpha_0K < \sqrt{0.5}$, which gives $G \le 4(2\bar{G} + R)$. Without loss of generality, we have
$$
G \asymp \bar{G} + R
$$
And by the definition of $H$, we get
$$
H = R + (1+\gamma)G \asymp \bar{G} + R.
$$
We now use $H$ as a base constant and replace $\bar{G} + R$ with $H$ for simplicity.
$H$ can be viewed as the scale of the problem. If we choose $\bar{G}$ according to \citep{zou2019Finitesampleanalysis}, then $H = O(R /(1-\gamma))$.

By \eqref{eq:L}, we get the dependencies of the policy improvement operator's Lipschitz constant $L$:
\[
	L \le \frac{w}{\sigma H}.
\]

We now address the constants in \cref{sec:use-lem}.
By \cref{lem:drift}, we directly have
\[
	C_{\mathrm{drift}} = O\left(KH\right).
\]
We now consider $\alpha_0$. There are two requirements on $\alpha_0$ throughout the proof: $4K\alpha_0 < 1$ in \cref{lem:drift,lem:v-para-prog}, and $64\alpha_0 \le w$ in \cref{sec:thm-pf}. Combining these conditions gives
\[
	\alpha_0 \le \min\left\{ \frac{1}{4K}, \frac{w}{64}\right\} \lesssim \min \left\{K^{-1}, w\right\}.
\]
Therefore, $C_{\mathrm{prog}}$ in \cref{lem:v-para-prog} has the following dependencies:
\[
	C_{\mathrm{prog}} = O((\tau + K)(H + K^{-1}\cdot KH)) = O((\tau + K) H).
\]
And $C_{\mathrm{back}}$ in \cref{lem:stat} has the following dependencies:
\[
	C_{\mathrm{back}} = O(\tau^2 LH) = O(\tau^2w).
\]
Then, $C_{\mathrm{var}}$ in \cref{lem:grad-norm} is controlled by
\[
	C_{\mathrm{var}} = O(C_{\mathrm{drift}}^2 + w^2 C^2_{\mathrm{prog}} + H^2C^2_{\mathrm{back}})
	= O(H^2(K^2 + w^2 \tau^4)).
\]

Next, we give the dependencies of constants in \cref{sec:thm-pf}. By definition, we have
$$
H_{\mathrm{drift}} = O(H + \alpha_0 C_{\mathrm{drift}}) = O(H).
$$
By the requirement of $\beta$, we have
$$
\beta \asymp w.
$$
And we have $C_{\alpha} = O(1)$.
Therefore, we get
\[
	\begin{aligned}
		C_1 & = O(w^{-1}) = O((1-\gamma)^{-1}),\\
		C_2 & = O(H^2(\tau+K)),\\
		C_3 & = O\left( H^2(w^{-1}(\tau^2 + K^2) + w\tau^{4}) \right), \\
		C_4 &= O(H^2(\tau+K)(\tau^2 + K^2 + w^2 \tau^4)).
	\end{aligned}
\]

Finally, we give the dependencies of constants in \cref{cor:err-con,cor:err-dec}. In \cref{cor:err-con}, we choose a constant step-size $\alpha _{t} = \alpha_0$.
There are two requirements on $\alpha_t$ throughout the proof: $132(\tau+K) \alpha_{sK} \le 1$ in \cref{cor:msv} and $132\alpha _{sK}(\tau+2K)\le \beta$ in \cref{sec:thm-pf}.
A concrete condition satisfying these requirements is $
\alpha_0 \le w/(2120\left(2K + \ln  (2120 m)/(\rho w)\right)).$
Furthermore, if we choose a small enough initial step size such that $\alpha _{0}^{-1} \asymp \tau \gtrsim \max \left\{ K, w^{-1}  \right\}$,
then $C_2,C_3$, and $C_4$ becomes
\begin{equation}\label{eq:C}
C_2 = O(H^2\tau) = \widetilde{O}(H^2),\ 
C_3 = O(H^2w\tau^{4}) = \widetilde{O}(H^2w),\ 
C_4 = O(H^2w^{2}\tau^{5}) = \widetilde{O}(H^2w^2),
\end{equation}
where $\widetilde{O}$ omits the logarithmic dependencies on $\tau$.
Then the convergence region radius in \cref{cor:err-con} becomes
$$
\begin{aligned}
B = O\left(\alpha_{0}^2 H^2 \tau^{4} + \frac{\alpha_{0}H^2\tau}{N(1-\gamma)} + \frac{\Lambda^2}{(1-\gamma)^2}\right)
= \widetilde{O}\left(\alpha_{0}^2 H^2 + \frac{\alpha_{0}H^2}{N(1-\gamma)} + \frac{\Lambda^2}{(1-\gamma)^2}\right)
\end{aligned}
$$

With the linearly decaying step-size in \cref{cor:err-dec}, \eqref{eq:tau} gives
$$
\tau \asymp\log T
$$
as the total number of iterations $T \to \infty$. And the requirements on $\alpha _{t}$ in previous discussion 
automatically hold for large enough $t$.
Omitting the logarithmic dependencies on $T$, $C_2, C_3$, and $C_4$ in this case are the same as \cref{eq:C}.
Therefore, the finite-time error bound in \cref{cor:err-dec} becomes
$$
\begin{aligned}
\EE\left\| \widetilde{\theta}_{T} - \theta_{*} \right\|^2 
= \frac{H^2}{(1-\gamma)^2} \cdot O\!\left( \frac{\tau^{5}}{T^2} + \frac{\tau}{NT} + \frac{\Lambda^2(\epsilon_{p},\epsilon_{r})}{H^2}\right) = \frac{H^2}{(1-\gamma)^2} \cdot \widetilde{O}\!\left(\frac{1}{NT} + \frac{\Lambda^2(\epsilon_{p},\epsilon_{r})}{H^2} \right).
\end{aligned}
$$

\section{Tabular \fedsarsa} \label{sec:tab}

In this section, we reduce our algorithm and analysis to the tabular setting.
Recall that $S$ and $A$ are the measures of the state space $\S$ and action space $\A$, respectively. For the tabular setting, $S$ and $A$ are the numbers of states and actions.
Then, we choose the feature map to be an indicator vector function, i.e., 
$$
	\phi: \S \times \A \to \R^{SA},\quad
	[\phi(s,a)]_{(s',a')} \mapsto \mathbbm{1}\{ (s',a') = (s,a) \},
$$
where we treat $\phi(s,a)$ as a vector and use a two-dimensional index such that $[\phi(s,a)]_{(s',a')}$ is the $(s',a')$-th element of $\phi(s,a)$; $\mathbbm{1}$ is the indicator function.
Using this feature map, the parameter $\theta$ is indeed the estimated value function table:
$$
Q_{\theta}(s,a) = \phi^T(s,a)\theta = [\theta]_{(s,a)}
$$
Therefore, the local update rule in Algorithm~\ref{alg} reduces to the tabular SARSA update rule \cref{eq:sarsa}.

We now show a natural bound $G$ for $\|\theta\|_{2}$ without an explicit projection. First, the true value function \eqref{eq:q} is bounded by
$$
|q_{\pi}(s,a)| \le \sum_{t=0}^{\infty}\gamma^{t}R = \frac{R}{1-\gamma} \eqqcolon G_{\infty}.
$$
Suppose current estimated value function satisfies that $|Q_t(s,a)| \le G_{\infty}$ for any state-action pair, then we have
$$
\begin{aligned}
|Q_{t+1}(s,a)| =& |Q_{t}(s,a) + \alpha (r(s,a) + \gamma Q_{t}(s',a') - Q_{s,a})|\\
=& |(1 - \alpha)Q_{t}(s,a) + \alpha\gamma Q_{t}(s',a') + \alpha r(s,a)|\\
\le& (1-\alpha)G_{\infty} + \alpha\gamma G_{\infty} + \alpha R\\
=& (1-\alpha+\alpha\gamma) \frac{R}{1-\gamma} + \alpha R\\
=& \frac{R}{1-\gamma} = G_{\infty}.
\end{aligned}
$$
Therefore, if the bound holds for the initial estimated value function, it holds for all sequential, local or central, estimated value functions.
However, $G_{\infty}$ is a upper bound for $\|\theta\|_{\infty}$. For 2-norm, we have
$$
\|\theta\|_{2} \le \sqrt{SA} \|\theta\|_{\infty} \le \frac{\sqrt{SA}R}{1-\gamma} \eqqcolon G,
$$
which further gives
$$
H = O\left( \frac{\sqrt{SA}R}{1-\gamma} \right).
$$
Also, for tabular \fedsarsa, \cref{rmk:lam} tells us that
$$
w^{-1} = O\left(\frac{1}{\lambda(1-\gamma)}\right),
$$
 $\lambda$ is the probability of visiting the least probable state-action pair under the steady distribution of the optimal policy across all agents.
Then, \cref{cor:err-dec} can be translated into the following corollary.

\begin{corollary}[Finite-time error bound for tabular \fedsarsa with decaying step-size]
	With a linearly decaying step-size $\alpha _{t} = 4/(w(1+t+a)),$ where $a>0$ is to guarantee that $\alpha_0\le\min\{ 1 /(8K), w /64 \}$, there exists a convex combination $\widetilde{\theta}_{T}$ of $\{ \bar{\theta}_t \}_{t=0}^{T}$ such that
	\[
	\EE\bigl\| \widetilde{\theta}_T - \theta\i_{*}\bigr\|_{2}^2 \le
	\frac{1}{\lambda^2(1-\gamma)^2} \cdot \widetilde{O}\left(\frac{SAR^2}{\lambda^2(1-\gamma)^4 T^2}+\frac{SAR^2}{(1-\gamma)^2N T}+\Lambda^2\left(\epsilon_p, \epsilon_r\right)\right).
	\]
	where the asymptotic notation suppresses the logarithmic factors.
	Since $\|\theta\|_{\infty} \le \|\theta\|_{2}$, we also get the finite-time error bound under the infinity norm.
\end{corollary}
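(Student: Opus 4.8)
The plan is to treat this corollary as a direct specialization of the general decaying step-size bound in \cref{cor:err-dec} to the tabular setting, so that no new convergence argument is needed: the entire content is instantiating the problem-dependent constants $G$, $H$, and $w$ with their explicit tabular values and substituting them into the established bound. First I would confirm that the tabular scheme genuinely falls within the linear function approximation framework analyzed earlier. With the indicator feature map, $\|\phi(s,a)\|_2 = 1$ for every $(s,a)$, so the normalization convention $\|\phi\|_2 \le 1$ holds, and $Q_\theta(s,a) = \phi^T(s,a)\theta = [\theta]_{(s,a)}$, which makes the local update in Algorithm~\ref{alg} coincide exactly with the tabular SARSA rule \eqref{eq:sarsa}. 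Consequently \cref{thm}, \cref{thm:fix-drift}, and \cref{cor:err-dec} all apply, and it remains only to specialize the constants.

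Second, I would establish the two key constant identities. For the parameter magnitude, I would run the contraction argument already sketched in \cref{sec:tab}: since the true $Q$-values are bounded by $R/(1-\gamma)$ and each tabular update is a convex combination of the current estimate, a reward, and a $\gamma$-discounted estimate, the invariant $\|\theta\|_\infty \le R/(1-\gamma)$ is preserved for all local and central iterates, giving the \emph{explicit} norm bound $G = \sqrt{SA}\,R/(1-\gamma)$ via $\|\theta\|_2 \le \sqrt{SA}\,\|\theta\|_\infty$, and hence $H = O(\sqrt{SA}\,R/(1-\gamma))$ by \eqref{eq:H}. Notably this bound holds without invoking the projection step, so the hypothesis $\|\bar\theta_l\|\le G$ required by \cref{thm} is met automatically. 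For the convergence constant, I would apply \cref{rmk:lam}: under the indicator map, $\Phi_\theta = \EE_\mu[\phi\phi^T] = \operatorname{diag}\{\mu(s,a)\}$, so $\lambda = \lambda_{\min}(\Phi) = \min_{s,a}\mu(s,a)$ is the minimum steady-state visitation probability, and \eqref{eq:w} then yields $w^{-1} = O(1/(\lambda(1-\gamma)))$.

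Third, I would propagate these values through the constant dependencies collected in \cref{sec:const-dep}. Using $C_1 = O(w^{-1})$, $C_2 = \widetilde O(H^2)$, $C_3 = \widetilde O(H^2 w)$, and $C_4 = \widetilde O(H^2 w^2)$ from \eqref{eq:C}, and substituting $H^2 = O(SA\,R^2/(1-\gamma)^2)$ together with $w^{-2} = O(1/(\lambda^2(1-\gamma)^2))$ into the bound of \cref{cor:err-dec}, the sampling term $\widetilde O(C_2/(w^2 NT))$ becomes $\widetilde O(SA\,R^2/(\lambda^2(1-\gamma)^4 NT))$, the heterogeneity floor $O(C_1\Lambda^2/w)$ becomes $O(\Lambda^2/(\lambda^2(1-\gamma)^2))$, and the higher-order $1/T^2$ terms are dominated by the stated first term. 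Factoring out the common prefactor $1/(\lambda^2(1-\gamma)^2)\asymp w^{-2}$ then reproduces the claimed bound (the first term being retained in its loosest form consistent with the $\le$ in the statement); the infinity-norm version follows immediately from $\|\theta\|_\infty \le \|\theta\|_2$.

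The calculation is essentially bookkeeping, and the one place to be careful is the explicit tracking of $SA$ and the exploration constant $\lambda$. In the general analysis of \cref{sec:const-dep}, $\lambda$ is absorbed into an $O(1)$ constant and only the $(1-\gamma)$ dependence is surfaced; for the tabular statement, by contrast, $\lambda = \min_{s,a}\mu(s,a)$ can be as small as $\Theta(1/(SA))$, so it must be kept explicit throughout, and the identity $w^{-1} = O(1/(\lambda(1-\gamma)))$ must be threaded consistently through every constant. Keeping the $\lambda$-powers and $(1-\gamma)$-powers aligned across the four terms is the only real source of error; conceptually there is no obstacle, since all of the convergence work has already been carried out in \cref{cor:err-dec}.
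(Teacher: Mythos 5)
Your proposal is correct and follows essentially the same route as the paper's own treatment in \cref{sec:tab}: instantiate the indicator feature map so the update reduces to tabular SARSA, establish the projection-free invariant $\|\theta\|_\infty \le R/(1-\gamma)$ by the same convex-combination argument, convert to $G = \sqrt{SA}\,R/(1-\gamma)$ and $H = O(\sqrt{SA}\,R/(1-\gamma))$, read off $\lambda = \min_{s,a}\mu(s,a)$ and $w^{-1} = O(1/(\lambda(1-\gamma)))$ from \cref{rmk:lam} and \eqref{eq:w}, and substitute into \cref{cor:err-dec} via the constant dependencies of \cref{sec:const-dep}. Your added care in keeping $\lambda$ explicit (rather than absorbed as in the general analysis) and your observation that the stated $T^{-2}$ term is a deliberately loosened upper bound are both consistent with what the paper does implicitly.
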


\end{document}